\documentclass[a4paper, 11pt, twoside]{article}
\usepackage{fullpage}

\usepackage[utf8]{inputenc} 
\usepackage[T1]{fontenc}    
\input{preamble.tex}

\newcommand{\squeeze}{}


\usepackage[authoryear]{natbib}

\begin{document}

\title{\textbf{Bernoulli-LoRA: A Theoretical Framework  \\ for Randomized Low-Rank Adaptation}}

\author{Igor Sokolov$^{1}$ \quad
 Abdurakhmon Sadiev$^1$ \quad
 Yury Demidovich$^1$ \\ 
 Fawaz S Al-Qahtani$^2$ \quad 
 Peter Richt\'{a}rik$^1$}
\date{$^1$ Center of Excellence for Generative AI, \\ King Abdullah University of Science and Technology (KAUST), Saudi Arabia\\
$^2$ Saudi Data \& AI Authority (SDAIA) \& National Center of AI (NCAI), Saudi Arabia
}

\maketitle
\begin{abstract}
Parameter-efficient fine-tuning (PEFT) has emerged as a crucial approach for adapting large foundational models to specific tasks, particularly as model sizes continue to grow exponentially. Among PEFT methods, \algname{Low-Rank Adaptation (LoRA)}~\citep{hu2021loralowrankadaptationlarge} stands out for its effectiveness and simplicity, expressing adaptations as a product of two low-rank matrices. While extensive empirical studies demonstrate \algname{LoRA}'s practical utility, theoretical understanding of such methods remains limited. Recent work on \algname{RAC-LoRA}~\citep{malinovsky2024randomizedasymmetricchainlora} took initial steps toward rigorous analysis. In this work, we introduce \algname{Bernoulli-LoRA}, a novel theoretical framework that unifies and extends existing \algname{LoRA} approaches. Our method introduces a probabilistic Bernoulli mechanism for selecting which matrix to update. This approach encompasses and generalizes various existing update strategies while maintaining theoretical tractability. Under standard assumptions from non-convex optimization literature, we analyze several variants of our framework: \algname{Bernoulli-LoRA-GD}, \algname{Bernoulli-LoRA-SGD}, \algname{Bernoulli-LoRA-PAGE}, and \algname{Bernoulli-LoRA-MVR}, \algname{Bernoulli-LoRA-QGD}, \algname{Bernoulli-LoRA-MARINA}, \algname{Bernoulli-LoRA-EF21}, establishing convergence guarantees for each variant. Additionally, we extend our analysis to convex non-smooth functions, providing convergence rates for both constant and adaptive (Polyak-type) stepsizes. Through extensive experiments on various tasks, we validate our theoretical findings and demonstrate the practical efficacy of our approach. This work is a step toward developing theoretically grounded yet practically effective PEFT methods.
\end{abstract}

\tableofcontents

\section{Introduction}
Fine-tuning is a transfer learning method, where a pre-trained neural network is trained on a new dataset. In modern deep learning, adapting large models to specific tasks via fine-tuning has become central, especially in natural language processing~\citep{peters2018, devlin2019}. While full fine-tuning often yields strong results, it is computationally intensive for large models. Parameter-Efficient Fine-Tuning (PEFT)~\citep{he2022unifiedviewparameterefficienttransfer} addresses this by updating only a small subset of parameters\citep{coordinatedescent2016, demidovich2023mastmodelagnosticsparsifiedtraining}, often with task-specific layers trained from scratch. PEFT offers performance close to full fine-tuning with reduced training time and resource use~\citep{Radford2019LanguageMA, LMFewShot, han2024parameterefficientfinetuninglargemodels}, making it widely adopted in practice. Research on PEFT, especially for large foundation and language models, is rapidly growing.

Pre-trained models are known to have an inherently low intrinsic dimensionality~\citep{li2018measuringintrinsicdimensionobjective, aghajanyan2020intrinsicdimensionalityexplainseffectiveness}. This means that fine-tuning can be effectively achieved within a reduced-dimensional subspace rather than the full parameter space. Among the various methods for utilizing this property, \algname{Low-Rank Adaptation} (\algname{LoRA})~\citep{hu2021loralowrankadaptationlarge} stands out as the most prominent reparameterization technique. \algname{LoRA} minimizes the need to update an entire large, dense weight matrix by leveraging the product of two trainable low-rank matrices. This method significantly reduces the number of parameters required for fine-tuning. The low-rank matrices are optimized so that their scaled product serves as the update applied to the weight matrix:
\begin{equation*}
\squeeze
    W = W^0 + \frac{\alpha}{r}BA,
\end{equation*}
where $W^0\in\R^{m\times n},$ $B\in\R^{m\times r},$ and $A\in\R^{r\times n}.$ The pre-trained weight matrix $W^0$ remains fixed, while $A$ and $B$ are the trainable matrices. Typically, $A$ is initialized randomly using a Gaussian distribution, while $B$ is set to zero to ensure that $\Delta W = 0$ at the start. Various alternative initialization strategies have been investigated by~\citep{zhu2024asymmetrylowrankadaptersfoundation, hayou2024impactinitializationlorafinetuning, meng2024pissaprincipalsingularvalues, wang2024miloraharnessingminorsingular}. 
The parameters of \algname{LoRA} include the low rank $r$ and the scaling factor $\alpha$. Since the dimensions $m$ and $n$ in deep learning models are usually large, selecting $r \ll \min\{m, n\}$ drastically reduces the number of trainable parameters. The scaling factor $\alpha$ acts as the stepsize. While \algname{LoRA} may not always achieve the performance of full fine-tuning, it is more effective at mitigating forgetting when compared to traditional regularization techniques like weight decay and dropout. Additionally, it enhances diversity in generated outputs~\citep{biderman2024loralearnsforgets}. Furthermore, \algname{LoRA} is straightforward to implement and achieves performance comparable to full fine-tuning across a wide range of downstream tasks~\citep{hu2021loralowrankadaptationlarge}. Complementing its algorithmic utility, research has also focused on enhancing the computational efficiency of \algname{LoRA}; for instance, \citet{cherniuk2023run} demonstrated that by optimizing the computation graph of LoRA operations based on layer dimensions and rank, significant speedups and memory savings can be achieved without sacrificing accuracy. For a detailed summary of recent advancements in \algname{LoRA}, refer to~\citep{MaoSurvey}.

To bridge the gap between full fine-tuning and \algname{LoRA}, \citet{xia2024chainloraefficientfinetuning} introduced \algname{Chain of LoRA (COLA)}, an iterative optimization framework that enhances model weights via higher-rank representations composed of multiple low-rank components—without added computational or memory cost. \algname{COLA} incrementally refines low-rank approximations by training a sequence of \algname{LoRA} modules through structured fine-tuning, merging, and extension. Each iteration adds a new low-rank component, forming a chain whose length reflects the number of optimized modules. The core idea involves applying \algname{LoRA} updates iteratively over $T$ steps: training a module, integrating its updates into fixed parameters, re-initializing, and repeating. This cyclic process builds higher-rank augmentations efficiently. In essence, \algname{COLA} applies successive \algname{LoRA} updates as: \begin{equation*} \squeeze W = W^0 + \frac{\alpha}{r}\sum \limits_{t=0}^{T-1}B^tA^t. \end{equation*} Each pair $(A^t, B^t)$ is initialized like standard \algname{LoRA}. Unlike traditional \algname{LoRA}, which may struggle with non-low-rank adaptations, \algname{COLA} uses sequential low-rank decompositions to approximate updates of intermediate-to-high rank. This leads to more accurate and efficient adaptation while simplifying optimization by avoiding a direct high-rank fit.

\section{Problem Statement}
Supervised machine learning typically frames the training process as an optimization problem, aiming to minimize a loss function that quantifies the discrepancy between model predictions and true targets. This research focuses on the intricacies of this optimization challenge within the fine-tuning paradigm, where a pre-trained model is adapted to a new, specific task or dataset. Effective fine-tuning hinges on making precise and efficient modifications to the model's parameters to enhance its performance on the target task. We explore a generalized formulation of this problem, which is independent of the specific architecture of the underlying model:
\begin{equation}\label{eq:problem-statement}
\squeeze
    \min_{\Delta W\in\R^{m \times n}}f(W^0+\Delta W).
\end{equation}
Here, $W^0 \in \mathbb{R}^{m \times n}$ represents the parameters of the pre-trained model (or, for instance, the parameters of a single linear layer if other layers are kept constant), and $\Delta W \in \mathbb{R}^{m \times n}$ is the adaptation term whose optimal value we seek. The function $f: \mathbb{R}^{m \times n} \to \mathbb{R}$ denotes the empirical loss computed over the specific target dataset. Given that the dimensionality $m \times n$ is typically very large in contemporary deep learning models, the adjustment $\Delta W$ must possess a sufficiently simple structure to be practically trainable and applicable.

For the stochastic optimization methods developed and analyzed in this paper, we consider objective functions with one of the following specific structures:
\begin{itemize}
    \item \textbf{Finite-Sum Setting:} The objective is an average of individual loss functions, a structure we address with methods like \algname{Bernoulli-LoRA-PAGE}:
    \begin{equation}\label{eq:problem-finite-sum}
    \squeeze
        f(W^0 + \Delta W) = \frac{1}{N}\sum_{i=1}^N f_i(W^0 + \Delta W),
    \end{equation}
    where each $f_i$ corresponds to the loss on a single data sample, and $N$ is the total number of data points in the training set.
    \item \textbf{Expectation Setting:} The objective is an expectation over a data distribution $\mathcal{D}$, relevant for methods such as \algname{Bernoulli-LoRA-MVR}:
    \begin{equation}\label{eq:problem-stoch-setting}
    \squeeze
        f(W^0 + \Delta W) = \mathbb{E}_{\xi\sim \mathcal{D}}\left[f_{\xi}(W^0+\Delta W)\right],
    \end{equation}
    where $f_{\xi}$ is the loss function associated with a data sample $\xi$ drawn from $\mathcal{D}$.
\end{itemize}

Moreover, this paper extends its investigation to the \textbf{distributed optimization setting}, which is central to the Federated Learning (FL) algorithms we propose (e.g., \algname{Fed-Bernoulli-LoRA-QGD}, \algname{Fed-Bernoulli-LoRA-MARINA}, and \algname{Fed-Bernoulli-LoRA-EF21}). In this context, we address problems formulated as:
\begin{equation}\label{eq:problem-distributed-setting}
\squeeze
    f(W^0 + \Delta W) = \frac{1}{M}\sum_{l=1}^M f_l(W^0 + \Delta W),
\end{equation}
where $M$ is the total number of participating clients, and $f_l$ represents the local loss function for client $l$, defined over its private dataset. The goal is to find a common adaptation $\Delta W$ that minimizes this global, federated objective.

\section{Motivation}
Despite the widespread adoption and empirical success of \algname{Low-Rank Adaptation (LoRA)} and its variants like \algname{Chain of LoRA (COLA)}, a comprehensive theoretical understanding underpinning these prevalent fine-tuning methods remains largely undeveloped. Several critical issues highlight this gap. Firstly, as pointed out by~\citet{sun2024improvingloraprivacypreservingfederated}, the \algname{LoRA} re-parameterization inherently transforms a smooth Lipschitz loss into a non-smooth one. This alteration introduces significant theoretical complexities beyond those associated with managing the low-rank structure of updates, forming a key barrier to establishing robust theoretical frameworks. Secondly, the existing theoretical analysis of \algname{COLA} by~\citet{xia2024chainloraefficientfinetuning} sidesteps the core mechanism of low-rank updates by focusing on full-rank matrix optimization ($\Delta W$). Such an approach is unsatisfactory as it fails to model or explain the very essence of \algname{LoRA}'s efficiency.

Consequently, most methods based on \algname{LoRA} are, in essence, heuristics, developed through empirical investigation without strong theoretical convergence guarantees. This is problematic, as these methods can be highly sensitive to hyperparameter choices~\citep{Khodak2021FederatedHT, Kuang2024}, and their reliability beyond current empirical validation is not assured. In fact, \citet{malinovsky2024randomizedasymmetricchainlora} provided a concrete example of \algname{COLA}'s potential divergence, further underscoring its heuristic nature. Their work introduced \algname{RAC-LoRA}, the first comprehensive optimization framework designed to rigorously evaluate and establish convergence rates for methods utilizing \algname{LoRA}-style updates, marking a significant step towards theoretically grounded PEFT.

However, while \algname{RAC-LoRA} provides a foundational theoretical lens, its scope does not encompass several critical aspects of modern optimization, particularly for non-convex problems and distributed settings. Specifically, the \algname{RAC-LoRA} framework does not utilize optimal variance-reduced techniques for non-convex optimization, nor does it delve into sophisticated Federated Learning (FL) setting that incorporate crucial practical techniques such as communication compression~\citep{alistarh2018convergence, Terngrad, Horvth2019NaturalCF,panferov2024correlatedquantizationfasternonconvex} and error feedback. Federated learning~\citep{Konecn2016FederatedLS,FEDOPT, McMahan2016CommunicationEfficientLO, Kairouz2019AdvancesAO} is a decentralized paradigm where multiple clients collaboratively train a model on their local, private data. The growing demand for training massive  deep neural networks with billions of parameters on vast datasets~\citep{LMFewShot, Kolesnikov2019BigT} has intensified the ML community's interest in distributed optimization. To achieve feasible training times~\citep{ChuanLi}, distributing computation, especially stochastic gradient evaluations, is essential, driving the adoption of scalable algorithms~\citep{Goyal2017AccurateLM, You2019LargeBO, le2023bloom}. Our work is motivated by the need to bridge this gap by extending a theoretically sound LoRA framework to these advanced and practically vital optimization scenarios.

\section{Contributions}
The performance of \algname{LoRA}-based methods is notably sensitive to the selection of hyperparameters~\citep{Khodak2021FederatedHT, Kuang2024}, and a robust theoretical understanding to guide their application is still developing. While recent work by~\citet{malinovsky2024randomizedasymmetricchainlora} on \algname{RAC-LoRA} provided initial steps towards a rigorous analytical framework, we aim to further advance the theoretical foundations and practical versatility of low-rank adaptation techniques.

In PEFT approaches based on low-rank adaptation, two matrices, $A$ and $B,$ are typically updated. Existing methods may update only $A,$ only $B,$ or alternate between them deterministically~\citep{malinovsky2024randomizedasymmetricchainlora, xia2024chainloraefficientfinetuning, zhu2024asymmetrylowrankadaptersfoundation}. Our primary contribution is the introduction of \algname{Bernoulli-LoRA}, a novel and generic low-rank adaptation framework. \algname{Bernoulli-LoRA} is characterized by its unique probabilistic update mechanism: at each step of the adaptation process, a Bernoulli trial (akin to a coin flip) determines which of the two matrices ($A$ or $B$) is selected for optimization, while the other matrix is sampled from a predefined distribution and remains fixed for that step. This randomized selection not only provides a flexible approach but also unifies and generalizes several existing update strategies within a single theoretical construct. Much like the iterative design of \algname{COLA}~\citep{xia2024chainloraefficientfinetuning}, the \algname{Bernoulli-LoRA} framework operates by applying a sequence of such probabilistically chosen low-rank updates.

Our theoretical analysis is grounded in standard assumptions common in non-convex optimization literature, such as $L$-smoothness of the objective function. We instantiate the \algname{Bernoulli-LoRA} framework by developing and analyzing several distinct algorithmic variants. These variants span a range of optimization techniques, from foundational gradient-based methods to more advanced stochastic, variance-reduced, and federated learning algorithms, each designed to address specific challenges in modern machine learning. For every proposed method within the \algname{Bernoulli-LoRA} framework, we establish rigorous convergence guarantees. Our key contributions, which advance the theoretical understanding and practical applicability of LoRA-type methods, include:

\renewcommand\labelitemi{\ding{117}}
\begin{itemize}[leftmargin=*]
    \item \textbf{Foundational Algorithmic Variants:} We begin by establishing the theoretical properties of \algname{Bernoulli-LoRA} with two fundamental optimization schemes. These methods lay the groundwork for understanding how the randomized selection of $A$ or $B$ interacts with standard descent procedures in the context of low-rank updates.
    \begin{itemize}[leftmargin=*, itemsep=0pt, topsep=3pt, parsep=3pt]
        \item \algname{Bernoulli-LoRA-GD} (Algorithm~\ref{alg:Bernoulli-LoRA-GD-smooth}) serves as the simplest instantiation, employing full gradient descent to update the trainable low-rank matrix. While computing the full gradient is often impractical for large-scale models, this variant provides crucial foundational understanding of the framework's convergence behavior under idealized conditions, navigating the optimization landscape defined by the LoRA reparameterization.
        \item \algname{Bernoulli-LoRA-SGD} (Algorithm~\ref{alg:Bernoulli-LoRA-SGD}) offers a more practical and widely applicable alternative by utilizing stochastic gradients. This variant addresses the computational burden of full gradient methods and is a cornerstone for larger-scale learning tasks, providing insights into the interplay of stochasticity and randomized matrix adaptation.
    \end{itemize}

    \item \textbf{Advanced Variance Reduction Techniques for Non-Convex Optimization:} Stochastic gradients, while efficient, introduce variance that can impede convergence. Integrating variance reduction (VR) into the \algname{LoRA} structure, particularly with the additional Bernoulli randomization, presents unique analytical challenges. Our work addresses this by developing specific VR-enhanced variants for \algname{Bernoulli-LoRA}. To the best of our knowledge, we provide the first theoretical analyses demonstrating provable benefits for LoRA-type methods incorporating advanced VR schemes in $L$-smooth non-convex settings. Specifically, we propose:
    \begin{itemize}[leftmargin=*, itemsep=0pt, topsep=3pt, parsep=3pt]
        \item \algname{Bernoulli-LoRA-PAGE} (Algorithm~\ref{alg:Bernoulli-LoRA-PAGE}): Tailored for the finite-sum setting~\eqref{eq:problem-finite-sum}, this method integrates the \algname{ProbAbilistic Gradient Estimator (PAGE)}~\citep{Li2020PAGE}. \algname{PAGE} is recognized for achieving optimal non-convex convergence rates and implementation simplicity, and we successfully adapt it to the \algname{Bernoulli-LoRA} context.
        \item \algname{Bernoulli-LoRA-MVR} (Algorithm~\ref{alg:Bernoulli-LoRA-MVR}): For the infinite-sum (expectation) setting, this variant employs \algname{Momentum Variance Reduction} techniques inspired by \algname{STORM}~\citep{STORM}. \algname{MVR} offers an efficient batch-free approach to VR, and our work demonstrates its compatibility and effectiveness within the \algname{Bernoulli-LoRA} paradigm.
    \end{itemize}

    \item \textbf{Communication-Efficient Federated Learning Extensions:} The application of PEFT methods like \algname{LoRA} in Federated Learning (FL) is promising but requires careful consideration of communication overhead and data heterogeneity. We extend \algname{Bernoulli-LoRA} to FL by designing three specialized algorithms that combine our randomized adaptation with established FL communication-saving techniques. To the best of our knowledge, this constitutes the first comprehensive theoretical analysis of LoRA-type methods integrated with established communication-efficient FL techniques such as quantization, gradient difference compression, and error feedback. Our FL extensions include:
    \begin{itemize}[leftmargin=*, itemsep=0pt, topsep=3pt, parsep=3pt]
        \item \algname{Fed-Bernoulli-LoRA-QGD} (Algorithm~\ref{alg:Fed-Bernoulli-LoRA-QGD}): This method tackles high communication bandwidth by incorporating \algname{QSGD}-style quantization~\citep{AlistrahQGD, Terngrad, Horvth2019NaturalCF, panferov2024correlatedquantizationfasternonconvex}, enabling clients to transmit compressed gradient information, a crucial feature for practical FL deployments.
        \item \algname{Fed-Bernoulli-LoRA-MARINA} (Algorithm~\ref{alg:Fed-Bernoulli-LoRA-MARINA}): We adapt the \algname{MARINA} communication compression strategy~\citep{MARINA}, which efficiently compresses gradient differences, to the \algname{Bernoulli-LoRA} framework. This is particularly beneficial for non-convex distributed learning over potentially heterogeneous datasets.
        \item \algname{Fed-Bernoulli-LoRA-EF21} (Algorithm~\ref{alg:Fed-Bernoulli-LoRA-EF21}): This algorithm integrates the modern \algname{EF21} error feedback mechanism~\citep{richtarik2021ef21}. Error feedback is vital for stabilizing training with contractive compressors, and we show how \algname{Bernoulli-LoRA} can leverage this for robust distributed fine-tuning.
    \end{itemize}
    
    \item \textbf{Analysis for Non-Smooth Convex Functions:} Recognizing that not all machine learning objectives are smooth, we broaden the applicability of our framework. To the best of our knowledge, we present the first theoretical analysis of LoRA-type methods specifically for the important class of non-smooth convex optimization problems. For this setting, we provide versions of \algname{Bernoulli-LoRA-GD} (Algorithm~\ref{alg:Bernoulli-LoRA_nonsmooth}) and establish their convergence rates for both constant stepsize policies and adaptive Polyak-type stepsizes, showcasing the versatility of the \algname{Bernoulli-LoRA} approach beyond smooth, non-convex settings.
\end{itemize}

\begin{table*}[t]
    \centering
    \small
    \caption{\small Summary of the convergence rates for the proposed methods, presented for smooth non-convex functions (``NC'') and for functions satisfying the P{\L}-condition (``P\L''). Absolute constant factors are omitted.
    Notation: $\Delta^0 \eqdef f(W^0) - f^{*}$; $\cG^0 \eqdef \sqfnorm{G^{0} - \nabla f(W^{0})}$; $\hat{\cG}^0 \eqdef  \frac{1}{M}\sum^M_{l=1}\sqfnorm{G^{0}_l - \nabla f_l(W^{0})}$; $T$ is the chain length; $\omega$ is the compression parameter; $\Delta^{*} \eqdef f^{*} - \frac{1}{M}\sum_{l=1}^M f_{l}^{*}$; $C_1$ is a constant from Asm.~\ref{as:ABC_assumption}; $q$ is the probability of a full gradient computation; $\beta$ is the contractive compression parameter; $b$ is the momentum parameter; ${\color{blue}\lambda_{\min}} =\lambda_{\min}^{p} := p\lambda_{\min}^{H_B} + (1-p)\lambda_{\min}^{H_A}$, and ${\color{red}\lambda_{\max}}= \lambda_{\max}^{p} := p\lambda_{\max}^{H_B} + (1-p)\lambda_{\max}^{H_A} $.}
    \label{tab:comparison}    
    \begin{threeparttable}
        \begin{tabular}{|l|l|c|c|}
            \hline
            \textbf{Setting} & \textbf{Method} & \textbf{NC convergence rate} & \textbf{P\L{} convergence rate}\\ 
            \hline\hline
            \eqref{eq:problem-statement} & \makecell[l]{\algname{Bernoulli-LoRA-GD} \\ (Alg. \ref{alg:Bernoulli-LoRA-GD-smooth})} & $\fr{\Delta^0}{\gamma \textcolor{blue}{\lambda_{\min}} T}$ & $\left(1 - {\gamma\mu\textcolor{blue}{\lambda_{\min}}}\right)^T\Delta^0$ \\
            \hline
            \eqref{eq:problem-statement} & \makecell[l]{\algname{Bernoulli-LoRA-SGD} \\ (Alg. \ref{alg:Bernoulli-LoRA-SGD})} & $\fr{\Delta^0}{\gamma \textcolor{blue}{\lambda_{\min}} T} + \fr{\gamma LC_1\textcolor{red}{\lambda_{\max}}}{\textcolor{blue}{\lambda_{\min}}}$ & $\left(1 - {\gamma\mu\textcolor{blue}{\lambda_{\min}}}\right)^T\Delta^0 + \fr{\gamma LC_1\textcolor{red}{\lambda_{\max}}}{\mu\textcolor{blue}{\lambda_{\min}}}$ \\
            \hline
            \eqref{eq:problem-statement}+\eqref{eq:problem-stoch-setting} & \makecell[l]{\algname{Bernoulli-LoRA-MVR} \\ (Alg. \ref{alg:Bernoulli-LoRA-MVR})} & $\fr{\Phi_1}{\gamma \textcolor{blue}{\lambda_{\min}} T} + \fr{b\sigma^2\textcolor{red}{\lambda_{\max}}}{(2-b)\textcolor{blue}{\lambda_{\min}}}$ \tnote{{\color{blue}(1)}} & $(1-\gamma\mu\textcolor{blue}{\lambda_{\min}})^T\Phi_1 + \fr{b\sigma^2\textcolor{red}{\lambda_{\max}}}{(2-b)\mu\textcolor{blue}{\lambda_{\min}}}$\tnote{{\color{blue}(1)}} \\
            \hline
            \eqref{eq:problem-statement}+\eqref{eq:problem-finite-sum} & \makecell[l]{\algname{Bernoulli-LoRA-PAGE} \\ (Alg. \ref{alg:Bernoulli-LoRA-PAGE})} & $\fr{\Phi_2}{\gamma \textcolor{blue}{\lambda_{\min}} T} $ \tnote{{\color{blue}(2)}} & $(1-\gamma\mu\textcolor{blue}{\lambda_{\min}})^T\Phi_2$\tnote{{\color{blue}(2)}} \\       
            \hline\hline
            \eqref{eq:problem-statement}+\eqref{eq:problem-distributed-setting} & \makecell[l]{\algname{Fed-Bernoulli-LoRA-QGD} \\ (Alg. \ref{alg:Fed-Bernoulli-LoRA-QGD})} & $\fr{\Delta^0}{\gamma \textcolor{blue}{\lambda_{\min}} T} + \fr{\gamma L\omega\Delta^{*}\textcolor{red}{\lambda_{\max}}}{M\textcolor{blue}{\lambda_{\min}}}$ &  $\left(1 - {\gamma\mu\textcolor{blue}{\lambda_{\min}}}\right)^T\Delta^0 + \fr{\gamma L^2\omega\textcolor{red}{\lambda_{\max}}}{M\mu\textcolor{blue}{\lambda_{\min}}}$ \\
            \hline
            \eqref{eq:problem-statement}+\eqref{eq:problem-distributed-setting} & \makecell[l]{\algname{Fed-Bernoulli-LoRA-MARINA} \\ (Alg. \ref{alg:Fed-Bernoulli-LoRA-MARINA})} & $\fr{\Phi_2}{\gamma \textcolor{blue}{\lambda_{\min}} T}$ \tnote{{\color{blue}(2)}} & $(1-\gamma\mu\textcolor{blue}{\lambda_{\min}})^T\Phi_2$\tnote{{\color{blue}(2)}} \\
            \hline
            \eqref{eq:problem-statement}+\eqref{eq:problem-distributed-setting} & \makecell[l]{\algname{Fed-Bernoulli-LoRA-EF21} \\ (Alg. \ref{alg:Fed-Bernoulli-LoRA-EF21})} & $\fr{\Phi_3}{\gamma \textcolor{blue}{\lambda_{\min}} T}$ \tnote{{\color{blue}(3)}} & $(1-\gamma\mu\textcolor{blue}{\lambda_{\min}})^T\Phi_3$\tnote{{\color{blue}(3)}} \\
            \hline
        \end{tabular}
        \begin{tablenotes}
            \scriptsize
            \item [{\color{blue}(1)}] $\Phi_1 \eqdef \Delta^0 + \frac{\gamma}{b(2-b)}\cG^0$;      
            \item [{\color{blue}(2)}] $\Phi_2 \eqdef \Delta^0 + \frac{\gamma}{q}\cG^0$;
            \item [{\color{blue}(3)}] $\Phi_3 \eqdef \Delta^0 + \frac{\gamma}{1 - \sqrt{1-\beta}}\hat{\cG}^0$.
        \end{tablenotes}
    \end{threeparttable}
\end{table*}
\section{Notation}
For matrices $W \in \mathbb{R}^{m \times n}$, where $m$ and $n$ denote the input and output dimensions respectively, we employ the Frobenius norm $\fnorm{\cdot}$, defined as $\fnorm{W} = \sqrt{\tr(W^\top W)}$, where $\tr(\cdot)$ denotes the matrix trace. The inner product between two matrices $A$ and $B$ is denoted by $\langle A, B \rangle = \tr(A^\top B)$. In our low-rank adaptation framework, $B \in \mathbb{R}^{m \times r}$ and $A \in \mathbb{R}^{r \times n}$ represent the factors of rank $r \ll \min\{m,n\}$. We use $\mathcal{O}(\cdot)$ to hide absolute constants. We denote
$\Delta^0 \eqdef f(W^0) - f^{*},$ $\cG^0 \eqdef \sqfnorm{G^{0} - \nabla f(W^{0})}$ and $\hat{\cG}^0 \eqdef  \frac{1}{M}\sum^M_{l=1}\sqfnorm{G^{0}_l - \nabla f_l(W^{0})}$.  
For differentiable functions $f$, the gradient $\nabla f(W) \in \mathbb{R}^{m \times n}$ is computed with respect to the trace inner product, while for non-smooth functions, the subgradient $\partial f(W) \in \mathbb{R}^{m \times n}$ is similarly defined. The superscript $\dagger$ denotes the Moore-Penrose pseudoinverse.

\section{Bernoulli-LoRA Framework}
In this section, we introduce the \algname{Bernoulli-LoRA} framework, a novel and generic approach for low-rank adaptation. The core idea is to perform a sequence of low-rank updates, where at each step, a probabilistic choice determines which of the two factor matrices ($A$ or $B$) is trained. This randomized mechanism, formalized in Algorithm \ref{alg:Bernoulli-LoRA}, not only provides a flexible and unifying theoretical construct for existing LoRA-style methods but also allows for a rigorous convergence analysis.

At each iteration, one of the two low-rank matrices is sampled from a fixed distribution and remains frozen, while the other is trained to minimize the objective. This strategy prevents optimization from being confined to a fixed subspace, reducing the risk of converging to a suboptimal point. We formalize these two configurations as Left and Right sketch updates.

\begin{definition}[Left Sketch]\label{def:left-sketch}
    The left sketch update rule is given by
    \begin{equation}
    \squeeze
        \Delta W = \frac{\alpha}{r}B_S\hat{A},
    \end{equation}
    where $B_S\sim\mathcal{D}_B$ is sampled from a fixed distribution over $\R^{m\times r}$ matrices, and only the matrix $\hat{A} \in \R^{r \times n}$ is adjustable.
\end{definition}

\begin{definition}[Right Sketch]\label{def:right-sketch}
    The right sketch update rule is given by
    \begin{equation}
    \squeeze
        \Delta W = \frac{\alpha}{r}\hat{B}A_S,
    \end{equation}
    where $A_S\sim\mathcal{D}_A$ is sampled from a fixed distribution over $\R^{r\times n}$ matrices, and only the matrix $\hat{B} \in \R^{m \times r}$ is adjustable.
\end{definition}

\begin{algorithm}[H]
\caption{\algname{Bernoulli-LoRA} Framework}\label{alg:Bernoulli-LoRA}
\begin{algorithmic}[1]
\STATE \textbf{Parameters:} pre-trained model $W^0 \in \mathbb{R}^{m \times n}$, rank $r \ll \min\{m,n\}$, scaling factor $\alpha > 0$, chain length $T$, sketch distributions $\mathcal{D}_S^B$ and $\mathcal{D}_S^A$, Bernoulli probability $p$.

\FOR{$t = 0, 1, \ldots, T-1$}
    \STATE Sample $c^t \sim \text{Be}(p)$ \hfill{\footnotesize {Bernoulli random variable}}
    \IF{$c^t = 1$}
        \STATE Sample $B_S^t \sim \mathcal{D}_S^B$ \hfill{\footnotesize (Left sketch)}
        \STATE Using a chosen optimizer, approximately solve $\hat{A}^t \approx \argmin_A f(W^t + \frac{\alpha}{r}B_S^tA)$.
        \STATE $W^{t+1} = W^t + \frac{\alpha}{r}B_S^t\hat{A}^t$.
    \ELSE
        \STATE Sample $A_S^t \sim \mathcal{D}_S^A$ \hfill{\footnotesize(Right sketch)}
        \STATE Using a chosen optimizer, approximately solve $\hat{B}^t \approx \argmin_B f(W^t + \frac{\alpha}{r}BA_S^t)$.
        \STATE $W^{t+1} = W^t + \frac{\alpha}{r}\hat{B}^tA_S^t$.
    \ENDIF
\ENDFOR
\end{algorithmic}
\end{algorithm}

\subsection{Reformulation as a Projected Gradient Step}\label{sec:ref_proj_step}
Building upon the work of \citet{malinovsky2024randomizedasymmetricchainlora} on their \algname{RAC-LoRA} framework, the update steps in Algorithm \ref{alg:Bernoulli-LoRA} can be reformulated as projected gradient steps. The subproblems in lines 6 and 10 are typically solved approximately, for instance, by taking a single step of a suitable optimizer like Gradient Descent (\algname{GD}) or its variants.

Following the approach of \citet{malinovsky2024randomizedasymmetricchainlora}, let's consider the update for the trainable matrix $\hat{A}^t$ in the Left Sketch case. Taking a single \algname{GD} step on the subproblem corresponds to minimizing a quadratic approximation of the objective. This yields the solution for $\hat{A}^t$:
\begin{align*}
    \hat{A}^t = -\eta \rb{\rbtop{B_S^t}B_S^t}^{\dagger}\rbtop{B_S^t} \nabla f(W^t),
\end{align*}
where $\eta$ is a learning rate for the subproblem and $\dagger$ denotes the Moore-Penrose pseudoinverse. Substituting this into the update for $W^{t+1}$ gives:
\begin{align*}
     W^{t+1} &= W^t + \frac{\alpha}{r}B_S^t\hat{A}^t = W^t - \frac{\alpha\eta}{r} B_S^t\rb{\rbtop{B_S^t}B_S^t}^{\dagger}\rbtop{B_S^t}\nabla f(W^t) \\
     &= W^t - \gamma H_B^t \nabla f(W^t),
\end{align*}
where we define the effective stepsize $\gamma \eqdef \frac{\alpha\eta}{r}$ and the projection matrix $H_B^t \eqdef B_S^t\rb{\rbtop{B_S^t}B_S^t}^{\dagger}\rbtop{B_S^t}$. A similar derivation for the Right Sketch case gives the update:
\begin{align*}
    W^{t+1} = W^t - \gamma \nabla f(W^t) H_A^t,
\end{align*}
where $H_A^t \eqdef \rbtop{A_S^t}\rb{A_S^t\rbtop{A_S^t}}^{\dagger}A_S^t$. This reformulation reveals that both Left and Right sketch updates are equivalent to applying a standard gradient-based update, but projected onto a randomly chosen low-rank subspace.

While \algname{RAC-LoRA} employs a deterministic choice for which matrix to update, our \algname{Bernoulli-LoRA} framework generalizes this concept by introducing a probabilistic selection at each step. This allows us to express the update for any of our proposed methods in a single, unified form:
\begin{align}
\label{eq:general_update_form}
    W^{t+1} = W^t - \gamma \hat{G}^t,
\end{align}
where $\hat{G}^t$ is the \textit{projected gradient estimator}. It is formed by taking a \textit{base gradient estimator} $G^t$ (e.g., a full gradient, a stochastic gradient, or a variance-reduced one) and projecting it based on the outcome of a Bernoulli trial:
\begin{align}
\label{eq:projected_estimator_form}
\hat{G}^t = \begin{cases}
H_B^t G^t, & \text{with probability } p \\
G^t H_A^t, & \text{with probability } 1-p
\end{cases}.
\end{align}
The specific choice of the base estimator $G^t$ defines the particular algorithm within the \algname{Bernoulli-LoRA} family. We summarize our proposed methods in Table \ref{tab:methods} and describe them next.

\begin{table*}[t]
    \centering
    \footnotesize
    \caption{\small Description of the methods developed and analyzed in this paper. All methods follow the general update rule $W^{t+1} = W^t - \gamma \hat{G}^t$, where the projected estimator $\hat{G}^t$ is defined in \eqref{eq:projected_estimator_form}. The table specifies the definition of the base gradient estimator $G^t$ for each method. The projection matrices are $H_A^t \eqdef \rbtop{A_S^t}\rb{A_S^t\rbtop{A_S^t}}^{\dagger}A_S^t$ and $H_B^t \eqdef B_S^t\rb{\rbtop{B_S^t}B_S^t}^{\dagger}\rbtop{B_S^t}$.}
    \label{tab:methods}
    \begin{threeparttable}
        \begin{tabular}{|l|l|l|c|}
            \hline
            \textbf{Setting} & \textbf{Method} & \textbf{Base Gradient Estimator} $G^t$ & \textbf{Thms. \#}\\
            \hline\hline
            \eqref{eq:problem-statement} & \makecell[l]{\algname{Bernoulli-LoRA-GD} \\ (Algs. \ref{alg:Bernoulli-LoRA-GD-smooth} \& \ref{alg:Bernoulli-LoRA_nonsmooth})} & $G^t = \nabla f(W^t)$ & \ref{thm:smooth_non_cvx} \& \ref{thm:smooth_pl} \& \ref{thm:nonsmooth_convergence}\\
            \hline
            \eqref{eq:problem-statement} & \makecell[l]{\algname{Bernoulli-LoRA-SGD} \\ (Alg. \ref{alg:Bernoulli-LoRA-SGD})} & $G^t = g(W^t)$ & \ref{th:B_LoRA_SGD} \& \ref{th:B_LORA_SGD_PL}\\
            \hline
            \eqref{eq:problem-statement}+\eqref{eq:problem-stoch-setting} & \makecell[l]{\algname{Bernoulli-LoRA-MVR} \\ (Alg. \ref{alg:Bernoulli-LoRA-MVR})} & $G^{t} = \nabla f_{\xi^{t}}(W^{t}) + (1-b)(G^{t-1} - \nabla f_{\xi^{t}}(W^{t-1}))$ & \ref{th:B_LoRA_MVR} \& \ref{th:B-LORA-MVR-PL}\\
            \hline
            \eqref{eq:problem-statement}+\eqref{eq:problem-finite-sum} & \makecell[l]{\algname{Bernoulli-LoRA-PAGE} \\ (Alg. \ref{alg:Bernoulli-LoRA-PAGE})} & $G^{t} = \begin{cases} \nabla f(W^{t}), & \text{w.p. } q \\ G^{t-1} + \nabla f_{i_{t}}(W^{t}) - \nabla f_{i_{t}}(W^{t-1}), & \text{w.p. } 1-q \end{cases}$ & \ref{th:B_LoRA_PAGE} \& \ref{th:B-LORA-PAGE-PL}\\
            \hline
            \eqref{eq:problem-statement}+\eqref{eq:problem-distributed-setting} & \makecell[l]{\algname{Fed-Bernoulli-LoRA-QGD} \\ (Alg. \ref{alg:Fed-Bernoulli-LoRA-QGD})} & $G^t = \frac{1}{M}\sum_{l=1}^M \cQ_l^t(\nabla f_l(W^t))$ & \ref{th:B_LoRA_QGD} \& \ref{th:B-LORA-QGD-PL}\\
            \hline
            \eqref{eq:problem-statement}+\eqref{eq:problem-distributed-setting} & \makecell[l]{\algname{Fed-Bernoulli-LoRA-MARINA} \\ (Alg. \ref{alg:Fed-Bernoulli-LoRA-MARINA})} & \makecell[l]{$\forall l: G_l^{t} = \begin{cases} \nabla f_l(W^{t}), & \text{w.p. } q \\ G_l^{t-1} + \cQ_l^t(\nabla f_l(W^{t}) - \nabla f_l(W^{t-1})), & \text{w.p. } 1-q \end{cases}$ \\ $G^{t} = \frac{1}{M}\sum_{l=1}^M G_l^{t}$} & \ref{th:B_LoRA_MARINA} \& \ref{th:B-LORA-MARINA-PL}\\
            \hline
            \eqref{eq:problem-statement}+\eqref{eq:problem-distributed-setting} & \makecell[l]{\algname{Fed-Bernoulli-LoRA-EF21} \\ (Alg. \ref{alg:Fed-Bernoulli-LoRA-EF21})} & \makecell[l]{$\forall l: G_l^{t} = G_l^{t-1} + \cC_l^t(\nabla f_l(W^{t}) - G_l^{t-1})$ \\ $G^{t} = \frac{1}{M}\sum_{l=1}^M G_l^{t}$} & \ref{th:B_LoRA_EF21} \& \ref{th:B-LORA-EF21-PL}\\
            \hline
        \end{tabular}
    \end{threeparttable}
\end{table*}

\subsection{Core Algorithmic Variants}

\paragraph{Bernoulli-LoRA-GD.}
The simplest instantiation of our framework is \algname{Bernoulli-LoRA-GD} (Algorithm~\ref{alg:Bernoulli-LoRA-GD-smooth}). This method serves as a foundational building block and a starting point for more elaborate variants. It uses the full gradient of the objective function as its base estimator, i.e., $G^t = \nabla f(W^t)$. While impractical for large-scale deep learning, its analysis provides crucial insights into the convergence behavior of the Bernoulli-LoRA mechanism under idealized, deterministic conditions.

\paragraph{Bernoulli-LoRA-SGD.}
\,
\algname{Stochastic Gradient Descent (SGD)}~\citep{robbins1951stochastic} is a highly effective and widely utilized algorithm for training a variety of machine learning models. The latest advancements in deep learning training methods are all based on different variations of \algname{SGD}~\citep{Sun2020OptimizationFD}. Its advantage over \algname{GD} is that it uses stochastic gradients for updates, rather than relying on full gradients. Within our framework, we develop \algname{Bernoulli-LoRA-SGD}, where the base estimator $G^t$ is a general unbiased stochastic gradient of $f$ at $W^t$.

\paragraph{Bernoulli-LoRA-PAGE.}
Several optimal algorithms exist for addressing non-convex optimization problems, such as \algname{SPIDER}\citep{Spider} and \algname{SARAH}\citep{Sarah}. However, their optimality is supported by a known lower bound that applies only in the small data setting. In contrast, \algname{ProbAbilistic Gradient Estimator (PAGE)}~\citep{Li2020PAGE} stands out for its simplicity, ease of implementation, and ability to achieve optimal convergence in non-convex optimization. \algname{PAGE} alternates between a full gradient update with probability $q_t$ and a low-cost gradient adjustment with probability $1 - q_t$. \algname{Bernoulli-LoRA-PAGE} is a new method based on \algname{PAGE} within our \algname{Bernoulli-LoRA} framework.

\paragraph{Bernoulli-LoRA-MVR.}
VR methods outperform \algname{SGD} in reaching first-order critical points but often require finely tuned learning rates and large batch sizes to be effective. To overcome these challenges, \algname{Momentum Variance Reduction (MVR)}~\citep{STORM} was introduced for server-only stochastic non-convex optimization. \algname{MVR} uses a modified momentum technique to reduce variance without relying on large batch sizes. Several works employ this powerful approach~\citep{tyurin2023dasha,karagulyan2024spamstochasticproximalpoint}. We propose \algname{Bernoulli-LoRA-MVR}, where the base estimator $G^t$ is updated using the MVR rule: a combination of the current stochastic gradient and a momentum term that incorporates the difference between past estimators and gradients.

\subsection{Extensions for Federated Learning}
\citet{sun2024improvingloraprivacypreservingfederated} identified instability in \algname{LoRA}, arising from the mismatch between local clients simultaneously optimizing two low-rank matrices and the central server aggregating them independently. Factors such as data heterogeneity, multi-step local updates, and the amplification of additive noise applied to gradients for ensuring differential privacy (DP) significantly impact the process. Additionally, the final performance is highly sensitive to hyperparameter choices. Their proposed solution centers on keeping the randomly initialized non-zero matrices fixed while exclusively fine-tuning the zero-initialized ones. Based on this asymmetric approach, \citet{malinovsky2024randomizedasymmetricchainlora} proposed a distributed method \algname{Fed-RAC-LoRA}.

We develop the theory further by incorporating compression, VR and EF techniques into FL methods for \algname{LoRA} within the novel \algname{Bernoulli-LoRA} framework.

The effectiveness of a distributed training method is primarily measured by its communication complexity, defined as the product of the required communication rounds and the communication volume per round. Following common practice, we assume client-to-server communication is the main bottleneck and exclude server-to-client communication from our analysis.

\paragraph{Fed-Bernoulli-LoRA-QGD.}
A key challenge for distributed methods lies in the high communication cost of gradient updates. Lossy compression techniques, such as \algname{QSGD}~\citep{AlistrahQGD}, address this by enabling clients to send quantized gradients. 
We design \algname{Fed-Bernoulli-LoRA-QGD} based on \algname{QSGD}. The clients send compressed versions of their gradients.
The base estimator $G^t$ is formed by averaging the compressed local gradients received from all clients.

\paragraph{{Fed-Bernoulli-LoRA-MARINA}.}
\,
\algname{MARINA}~\citep{MARINA} is a communication-efficient method for non-convex distributed learning on heterogeneous datasets that uses a novel gradient difference compression strategy. 
Its biased gradient estimator underpins its strong theoretical and practical performance, with proven communication complexity bounds surpassing all prior first-order methods.
We propose \algname{Fed-Bernoulli-LoRA-MARINA}, where each client's local estimator $G_l^t$ is updated either with a full local gradient (with probability $q$) or by adding a compressed gradient difference to its previous estimator. The server's base estimator $G^t$ is the average of these local estimators.

\paragraph{{Fed-Bernoulli-LoRA-EF21}.}
Error Feedback (EF)~\citep{Seide2014, stich2018sparsifiedsgdmemory, alistarh2018convergence, richtarik2021ef21} is a widely adopted technique for stabilizing training with contractive compressors. We propose \algname{Fed-Bernoulli-LoRA-EF21}, based on the modern \algname{EF21}. Here, each client updates its local estimator $G_l^t$ by adding a compressed version of the difference between the current local gradient and the previous local estimator. The server's base estimator $G^t$ is again the average of the clients' estimators.

\section{Convergence Results}
The convergence properties of our framework hinge on the spectral properties of the expected projection matrix, which is introduced in Section \ref{sec:ref_proj_step}. The magnitude of its eigenvalues, particularly the smallest (and in some cases, the largest), is a crucial factor that governs the optimization dynamics.

\begin{assumption}\label{as:projection_matrix}
(Positive Expected Projection) Consider a projection matrix $H$ generated through either Left Sketch (Definition \ref{def:left-sketch}) or Right Sketch (Definition \ref{def:right-sketch}). For the sampling distributions $\mathcal{D}_S^B$ and $\mathcal{D}_S^A$, the smallest eigenvalue of the expected projection matrix is strictly positive:
\begin{align*}
\squeeze
\lambda_{\min}^H = \lambda_{\min}[\mathbb{E}[H]] > 0.
\end{align*}
\end{assumption}

\begin{assumption}\label{as:bounded_below}
(Lower Bounded Function) The objective function $f$ has a finite infimum $f^* \in \mathbb{R}$.
\end{assumption}

\begin{remark}[On the Practicality of Assumption \ref{as:projection_matrix}]
Assumption \ref{as:projection_matrix} is a mild and standard requirement, as it is satisfied by common practical choices for the sampling distributions $\mathcal{D}_S^B$ and $\mathcal{D}_S^A$. For instance, a prevalent strategy \citep{xia2024chainloraefficientfinetuning, MaoSurvey} is to sample the entries of the fixed matrix from an i.i.d. Gaussian distribution. As shown in Appendix~\ref{sec:discussion_projection_matrix} (Lemma~\ref{le:exp_eig}), this choice leads to an expected projection matrix $\mathbb{E}[H] = \frac{r}{n}I_n$, where $r$ is the rank and $n$ is the relevant dimension. Consequently, $\lambda_{\min}^H = \frac{r}{n} > 0$, readily satisfying the assumption.
\end{remark}

Following classical optimization literature \citep{Nemirovski2009, beck2017first, duchi2018introductory, lan2020first, drusvyatskiy2020convex, nesterov2018lectures}, we characterize convergence guarantees for two distinct settings. In the non-smooth convex case, our objective is to find an $\eps$-suboptimal solution: a random matrix $\hat{W}\in \R^{m\times n}$ that satisfies
\begin{align}\label{eq:suboptimal_condition}
\squeeze
\Exp{f(\hat{W}) - f(W^*)} \leq \eps,
\end{align}
where $\Exp{\cdot}$ denotes the expectation with respect to the algorithm's randomness, and $W^*$ is any minimizer of $f$. This same measure of performance—function value suboptimality—is also used to characterize convergence under the Polyak-Łojasiewicz condition, which we introduce later. For the smooth non-convex setting, where finding global minima is generally intractable, we instead aim to locate an $\eps$-stationary point: a random matrix $\hat{W}\in \R^{m\times n}$ satisfying
\begin{align}\label{eq:stat_point}
\squeeze
\Exp{\sqfnorm{\nabla f(\hat{W})}} \leq \eps^2.
\end{align}
This condition guarantees that the expected squared norm of the gradient at our solution is sufficiently small, indicating proximity to a stationary point. To quantify the efficiency of our algorithms, we analyze their iteration complexity—the number of iterations required to achieve these criteria.

A fundamental assumption in the convergence analysis of gradient-based optimization is the Lipschitz continuity of the gradient \citep{bubeck2015convex, nesterov2018lectures, beck2017first, demidovich2023guide, khaled2022better}. This property, often referred to as Lipschitz smoothness, ensures the stability of the optimization trajectory and plays a crucial role in establishing convergence rates \citep{bottou2018optimization, Sun2020OptimizationFD}.

\begin{assumption}\label{as:lipschitz_smoothness}
(Lipschitz Smooth Gradient) A function $f$ is differentiable, and there exists a constant $L > 0$ such that
\begin{align*}
\squeeze
\fnorm{\nabla f(W) - \nabla f(V)} \leq L\fnorm{W - V},
\end{align*}
for all $W, V \in \mathbb{R}^{m\times n}.$
\end{assumption}

A significant challenge arises when applying \algname{LoRA} adaptation directly: the Lipschitz smoothness property is not preserved. Specifically, even if a function $f(W)$ satisfies Assumption \ref{as:lipschitz_smoothness}, its composition with the \algname{LoRA} parameterization, $f(W^0 + BA)$, generally fails to maintain Lipschitz smoothness with respect to the variables $\{B, A\}$. This breakdown complicates the analysis of standard gradient-based methods when applied directly to the LoRA parameterization, as formally demonstrated by \citet{sun2024improvingloraprivacypreservingfederated}. Our framework, by reformulating the updates as projected steps on the full parameter space, circumvents this issue.

To unify our analysis, we define a probability-weighted eigenvalue $\lambda_{\min(\max)}^{p} \eqdef p\lambda_{\min(\max)}^{H_B} + (1-p)\lambda_{\min(\max)}^{H_A}$. Let $\widetilde{W}^T$ be an iterate drawn randomly from the sequence $\{W^0, W^1, \ldots, W^{T-1}\}$, with the specific sampling distribution depending on the method.

We begin by presenting the convergence result for the foundational \algname{Bernoulli-LoRA-GD} method. The proof can be found in Appendix~\ref{sec:Bernoulli-LoRA}.

\begin{theorem}[Smooth Non-Convex Setting]\label{thm:smooth_non_cvx}
Let Assumptions \ref{as:projection_matrix}, \ref{as:bounded_below}, and \ref{as:lipschitz_smoothness} hold, and let the stepsize satisfy $0 < \gamma \leq \frac{1}{L}$. Then the iterates of \algname{Bernoulli-LoRA-GD} (Algorithm \ref{alg:Bernoulli-LoRA-GD-smooth}), with matrices $\hat{A}^t$ and $\hat{B}^t$ computed according to Lemma \ref{le:smooth_left_right_sketch}, satisfy
\begin{align*}
\squeeze
\mathbb{E}\left[\sqfnorm{\nabla f(\widetilde{W}^T)}\right] \leq \frac{2\Delta^0}{\gamma\lambda_{\min}^{p}  T},
\end{align*}
where $\Delta^0 \eqdef f(W^0) - f^*.$
\end{theorem}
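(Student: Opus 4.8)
The plan is to analyze the algorithm through its reformulation \eqref{eq:general_update_form}--\eqref{eq:projected_estimator_form}, which for \algname{Bernoulli-LoRA-GD} takes $G^t = \nabla f(W^t)$ and hence $\hat{G}^t = H_B^t \nabla f(W^t)$ with probability $p$ and $\hat{G}^t = \nabla f(W^t) H_A^t$ with probability $1-p$. First I would invoke the standard descent lemma implied by Assumption~\ref{as:lipschitz_smoothness}: for any update $W^{t+1} = W^t - \gamma \hat{G}^t$,
\begin{align*}
f(W^{t+1}) \le f(W^t) - \gamma \lin{\nabla f(W^t), \hat{G}^t} + \frac{L\gamma^2}{2}\sqfnorm{\hat{G}^t}.
\end{align*}
Then I would take conditional expectation over the Bernoulli draw $c^t$ and the sketch matrices, holding $W^t$ fixed. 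The key computation is $\Exp{\hat{G}^t \mid W^t}$ and $\Exp{\sqfnorm{\hat{G}^t} \mid W^t}$. Because a projection matrix $H$ satisfies $H = H^\top = H^2$, we have $\lin{\nabla f(W^t), H_B^t \nabla f(W^t)} = \sqfnorm{H_B^t \nabla f(W^t)}$ (and symmetrically on the right), so the inner-product term and the squared-norm term are intimately related; in fact $\sqfnorm{\hat{G}^t} = \lin{\nabla f(W^t), \hat{G}^t}$ deterministically. This collapses the bound to $f(W^{t+1}) \le f(W^t) - \gamma(1 - \tfrac{L\gamma}{2})\lin{\nabla f(W^t), \hat{G}^t}$, and with $\gamma \le 1/L$ the factor $\gamma(1-\tfrac{L\gamma}{2}) \ge \gamma/2$.

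Next I would lower-bound $\Exp{\lin{\nabla f(W^t), \hat{G}^t}\mid W^t}$. Taking expectation over the sketches and the coin flip,
\begin{align*}
\Exp{\lin{\nabla f(W^t), \hat{G}^t}\mid W^t} = p\,\lin{\nabla f(W^t), \Exp{H_B^t}\nabla f(W^t)} + (1-p)\,\lin{\nabla f(W^t), \nabla f(W^t)\Exp{H_A^t}}.
\end{align*}
Here one uses that $\lin{X, MX} \ge \lambda_{\min}(M)\sqfnorm{X}$ for symmetric PSD $M$ (applied columnwise / rowwise to $\nabla f(W^t)$), together with Assumption~\ref{as:projection_matrix}, to get the lower bound $\bigl(p\lambda_{\min}^{H_B} + (1-p)\lambda_{\min}^{H_A}\bigr)\sqfnorm{\nabla f(W^t)} = \lambda_{\min}^p \sqfnorm{\nabla f(W^t)}$. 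Combining, $\Exp{f(W^{t+1})\mid W^t} \le f(W^t) - \tfrac{\gamma}{2}\lambda_{\min}^p \sqfnorm{\nabla f(W^t)}$. Taking total expectation, telescoping over $t=0,\dots,T-1$, using $f(W^T) \ge f^*$ (Assumption~\ref{as:bounded_below}), and dividing by $\tfrac{\gamma}{2}\lambda_{\min}^p T$ yields $\frac{1}{T}\sum_{t=0}^{T-1}\Exp{\sqfnorm{\nabla f(W^t)}} \le \frac{2\Delta^0}{\gamma\lambda_{\min}^p T}$; identifying the left side with $\Exp{\sqfnorm{\nabla f(\widetilde{W}^T)}}$ for $\widetilde{W}^T$ sampled uniformly from $\{W^0,\dots,W^{T-1}\}$ finishes the proof.

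I expect the main obstacle to be the careful handling of the matrix-valued ``projected gradient step'' algebra---in particular verifying that the single inner-loop \algname{GD} step (Lemma~\ref{le:smooth_left_right_sketch}) indeed produces exactly the projected update $W^{t+1} = W^t - \gamma H_B^t\nabla f(W^t)$ (resp.\ $W^t - \gamma\nabla f(W^t)H_A^t$), and that $H_B^t, H_A^t$ are genuine orthogonal projectors so that $H^2 = H$, $H^\top = H$ hold and the identity $\sqfnorm{\hat{G}^t} = \lin{\nabla f(W^t),\hat{G}^t}$ is valid. A secondary subtlety is the columnwise/rowwise application of the eigenvalue inequality: for the right-sketch term one needs $\lin{X, X H_A^t} = \tr(X^\top X H_A^t) \ge \lambda_{\min}(\Exp{H_A^t})\tr(X^\top X)$ after taking expectation, which follows from the trace/PSD inequality applied to the Gram matrix $X^\top X$, but should be stated cleanly. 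Everything else is routine descent-lemma bookkeeping.
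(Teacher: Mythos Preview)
Your proposal is correct and follows essentially the same route as the paper's proof: both use the descent lemma, exploit the projection identity $\sqfnorm{H\nabla f} = \lin{\nabla f, H\nabla f}$ to merge the inner-product and squared-norm terms under $\gamma \le 1/L$, bound the resulting inner product below via $\lambda_{\min}(\Exp{H})$, and telescope. The paper writes out the left- and right-sketch cases separately before combining, whereas you state the deterministic identity $\sqfnorm{\hat{G}^t} = \lin{\nabla f(W^t),\hat{G}^t}$ once, which is a slightly more compact presentation of the same argument.
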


While insightful, full-gradient methods are often impractical for large-scale problems. We therefore extend our analysis to the stochastic setting, where the gradient is replaced by an unbiased estimator $g(W)$. For this, we use the general {\em expected smoothness} assumption.

\begin{assumption}[Expected Smoothness~\citep{khaled2022better}] 
\label{as:ABC_assumption}
The stochastic gradient estimator $g(W)$ satisfies
    \begin{equation*}
    \squeeze
        \Exp{\sqfnorm{g(W)}}\leq 2A_1\rb{f(W) - f^{*}} + B_1\cdot\sqfnorm{\nabla f(W)} + C_1,
    \end{equation*}
    for some constants $A_1, B_1, C_1 \geq 0$ and all $W\in\R^{m\times n}.$
\end{assumption}

The following theorem establishes the convergence for \algname{Bernoulli-LoRA-SGD}. Its proof is in Appendix~\ref{apx:sgd}.
\begin{theorem}
  \label{th:B_LoRA_SGD}  
  Let Assumptions~\ref{as:bounded_below},~\ref{as:lipschitz_smoothness}, and ~\ref{as:ABC_assumption} hold, and let the stepsize satisfy $$0 < \gamma \leq  \min\left\{\frac{1}{\sqrt{L A_1 \lambda^p_{\max} T}}, \frac{1}{LB_1}\left(\frac{\lambda^p_{\max}}{\lambda^p_{\min}}\right)^{-1}\right\}.$$ Then the iterates generated by \algname{Bernoulli-LoRA-SGD} (Algorithm~\ref{alg:Bernoulli-LoRA-SGD}) satisfy
  \begin{equation*}
  \squeeze
       \Exp{\sqfnorm{\nabla f(\widetilde{W}^T)}} \leq \frac{6\Delta^0}{\gamma \lambda^p_{\min} T} + \gamma LC_1 \cdot \frac{\lambda^p_{\max}}{\lambda^p_{\min}},
  \end{equation*}
  where $\Delta^0 \eqdef f(W^0) - f^*$.
\end{theorem}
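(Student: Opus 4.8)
The argument follows the classical ``descent lemma plus recursion'' template for \algname{SGD} under the expected-smoothness assumption, adapted to the fact that the effective update direction $\hat{G}^t$ in \eqref{eq:general_update_form}--\eqref{eq:projected_estimator_form} is the base stochastic gradient $g(W^t)$ multiplied by a random projection ($H_B^t$ on the left with probability $p$, $H_A^t$ on the right with probability $1-p$). First I would invoke Assumption~\ref{as:lipschitz_smoothness} applied to $W^{t+1}=W^t-\gamma\hat{G}^t$ to obtain
\[
f(W^{t+1})\le f(W^t)-\gamma\langle\nabla f(W^t),\hat{G}^t\rangle+\frac{L\gamma^2}{2}\sqfnorm{\hat{G}^t},
\]
and then take the expectation conditioned on $W^t$ over the three sources of randomness active at step $t$: the Bernoulli coin $c^t$, the sketch matrix ($B_S^t$ or $A_S^t$), and the stochastic gradient $g(W^t)$, which are mutually independent given $W^t$.

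\textbf{Handling the two expectations.} For the inner-product term: conditioning on $c^t$ and the sketch and taking the expectation over the unbiased $g(W^t)$ replaces $g(W^t)$ by $\nabla f(W^t)$; then $\Exp{\langle\nabla f(W^t),H_B^t\nabla f(W^t)\rangle}=\tr(\nabla f(W^t)^\top\mathbb{E}[H_B^t]\nabla f(W^t))\ge\lambda_{\min}^{H_B}\sqfnorm{\nabla f(W^t)}$ by Assumption~\ref{as:projection_matrix}, similarly $\ge\lambda_{\min}^{H_A}\sqfnorm{\nabla f(W^t)}$ in the right-sketch case, and averaging over $c^t$ yields $\Exp{\langle\nabla f(W^t),\hat{G}^t\rangle\mid W^t}\ge\lambda_{\min}^p\sqfnorm{\nabla f(W^t)}$. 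For the second-moment term, $\sqfnorm{H_B^t g}=\tr(g^\top H_B^t g)$ (using $(H_B^t)^2=H_B^t=(H_B^t)^\top$, since $H_B^t$ is an orthogonal projection), so by independence of the sketch from $g(W^t)$ and $\mathbb{E}[H_B^t]\preceq\lambda_{\max}^{H_B} I$ (and the analogue on the right), averaging over $c^t$ gives $\Exp{\sqfnorm{\hat{G}^t}\mid W^t}\le\lambda_{\max}^p\,\Exp{\sqfnorm{g(W^t)}\mid W^t}$; now Assumption~\ref{as:ABC_assumption} bounds the right-hand side by $\lambda_{\max}^p\rb{2A_1(f(W^t)-f^*)+B_1\sqfnorm{\nabla f(W^t)}+C_1}$.

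\textbf{Recursion, stepsize, and unrolling.} Substituting these bounds into the descent inequality, subtracting $f^*$ and taking the full expectation produces, with $\delta^t\eqdef\Exp{f(W^t)-f^*}$,
\[
\delta^{t+1}\le\rb{1+L\gamma^2 A_1\lambda_{\max}^p}\delta^t-\gamma\rb{\lambda_{\min}^p-\frac{L\gamma B_1\lambda_{\max}^p}{2}}\Exp{\sqfnorm{\nabla f(W^t)}}+\frac{L\gamma^2\lambda_{\max}^p C_1}{2}.
\]
The condition $\gamma\le\frac{1}{LB_1}\rb{\lambda_{\max}^p/\lambda_{\min}^p}^{-1}$ makes the coefficient of $\Exp{\sqfnorm{\nabla f(W^t)}}$ at least $\gamma\lambda_{\min}^p/2>0$, while $\gamma\le(LA_1\lambda_{\max}^p T)^{-1/2}$ makes the amplification factor obey $1+L\gamma^2 A_1\lambda_{\max}^p\le1+\frac1T$. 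Isolating the gradient term, summing over $t=0,\dots,T-1$, using $(1+\frac1T)^T\le e$ to absorb the compounded amplification into an absolute constant, and using $\delta^T\ge0$ (Assumption~\ref{as:bounded_below}) gives
\[
\frac1T\sum_{t=0}^{T-1}\Exp{\sqfnorm{\nabla f(W^t)}}\le\frac{6\Delta^0}{\gamma\lambda_{\min}^p T}+\gamma LC_1\cdot\frac{\lambda_{\max}^p}{\lambda_{\min}^p}.
\]
Since $\widetilde{W}^T$ is sampled uniformly at random from $\{W^0,\dots,W^{T-1}\}$, the left-hand side equals $\Exp{\sqfnorm{\nabla f(\widetilde{W}^T)}}$, which is the claim.

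\textbf{Main obstacle.} The delicate point is the non-telescoping recursion caused by the $A_1$ term in expected smoothness: one-step progress is tied to the function-value gap $f(W^t)-f^*$, not merely to $\sqfnorm{\nabla f(W^t)}$, so the inequality does not collapse into a clean telescope. Taming this is precisely why the stepsize must scale like $T^{-1/2}$, so that the product of $T$ amplification factors $1+O(1/T)$ stays $O(1)$; care is also needed so that the residual $C_1$ term does not acquire a $T$-dependent factor when the amplified recursion is unrolled. A secondary (routine but necessary) point is to justify the conditional independence of $(c^t,B_S^t\text{ or }A_S^t,g(W^t))$ given the history, so that the expectations in the second paragraph factor as stated.
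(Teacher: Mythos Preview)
Your approach is essentially the same as the paper's and is correct in structure: apply $L$-smoothness to $W^{t+1}=W^t-\gamma\hat{G}^t$, lower-bound $\Exp{\langle\nabla f(W^t),\hat{G}^t\rangle\mid W^t}$ by $\lambda_{\min}^p\sqfnorm{\nabla f(W^t)}$ and upper-bound $\Exp{\sqfnorm{\hat{G}^t}\mid W^t}$ by $\lambda_{\max}^p\Exp{\sqfnorm{g(W^t)}\mid W^t}$ via the projection identities and independence, then plug in Assumption~\ref{as:ABC_assumption} and use the two stepsize conditions to tame the $B_1$ coefficient and the $(1+\gamma^2 LA_1\lambda_{\max}^p)$ amplification. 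The one point where you diverge from the paper is the sampling of $\widetilde{W}^T$: you take it uniform, whereas the paper samples with weights $w_t\propto(1+\gamma^2 LA_1\lambda_{\max}^p)^{-(t+1)}$, which is precisely what makes the recursion telescope so that the $C_1$ term emerges with coefficient $1$; with uniform sampling your unrolling unavoidably multiplies the $C_1$ term by an extra $\rho^T\le e$ factor, so the stated constant $1$ in front of $\gamma LC_1\cdot\lambda_{\max}^p/\lambda_{\min}^p$ would become $e$ (the $6$ in the first term is unaffected).
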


To analyze our variance-reduced methods, we also consider the more specific bounded variance assumption.
\begin{assumption}[Bounded Variance~\citep{Nemirovski2009}]
\label{as:bounded_variance}
    There exists a constant $\sigma > 0$ such that, for all $~W \in \R^{m\times n},$
    \begin{align*}
        \squeeze
        \Exp{\nabla f_{\xi}(W)} &= \nabla f(W),\\ 
    \Exp{\sqfnorm{\nabla f_{\xi}(W)  - \nabla f(W) }} &\leq \sigma^2.
    \end{align*}
\end{assumption}

The next result establishes convergence for \algname{Bernoulli-LoRA-MVR}. The proof is in Appendix~\ref{apx:B-LoRA-MVR}.
\begin{theorem}\label{th:B_LoRA_MVR}
    Let Assumptions~\ref{as:projection_matrix},~\ref{as:bounded_below}, ~\ref{as:lipschitz_smoothness}, and ~\ref{as:bounded_variance}  hold, and let the stepsize satisfy $
        0<\gamma \leq \frac{1}{L\left(1+\sqrt{\frac{2 \lambda^p_{\max}(1-b)^2}{b}}\right)}$. Then the iterates of \algname{Bernoulli-LoRA-MVR} (Algorithm~\ref{alg:Bernoulli-LoRA-MVR}) satisfy
        \begin{equation*}
        \squeeze
            \Exp{\sqfnorm{\nabla f(\widetilde{W}^T)}} \leq \frac{2\Delta^0}{\gamma\lambda^p_{\min}  T} + \left(\frac{\cG^0}{b T} +  \frac{2b\sigma^2}{2-b}\right)\cdot\frac{\lambda^p_{\max}}{\lambda^p_{\min}},
        \end{equation*}
        where $\Delta^0 \eqdef f(W^0) - f^*$ and $\cG^0 \eqdef \sqfnorm{G^{0} - \nabla f(W^{0})} $.
\end{theorem}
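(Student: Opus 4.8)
The plan is to adapt the standard \algname{STORM}-type analysis of momentum-variance-reduced methods to the projected-gradient reformulation of Section~\ref{sec:ref_proj_step}. Write the update as $W^{t+1} = W^t - \gamma\hat{G}^t$ with $\hat{G}^t$ given by \eqref{eq:projected_estimator_form}, and introduce the estimator error $e^t \eqdef G^t - \nabla f(W^t)$, so that $\sqfnorm{e^0} = \cG^0$. Conditionally on the past, $\hat{G}^t = P^t G^t$ where $P^t$ is a random \emph{orthogonal projection} acting on $\R^{m\times n}$: left multiplication by $H_B^t$ with probability $p$, right multiplication by $H_A^t$ with probability $1-p$. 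In particular $P^t$ is self-adjoint and idempotent for $\linf{\cdot,\cdot}$, and $\Exp{P^t\mid\cF^t}$ maps $M\mapsto p\Exp{H_B}M + (1-p)M\Exp{H_A}$, which together with Assumption~\ref{as:projection_matrix} and the projection bound $0\preceq\Exp{H_B},\Exp{H_A}\preceq I$ yields the two-sided estimate $\lambda_{\min}^{p}\sqfnorm{M}\le\Exp{\sqfnorm{P^t M}\mid\cF^t} = \Exp{\linf{M,P^t M}\mid\cF^t}\le\lambda_{\max}^{p}\sqfnorm{M}$ for every matrix $M$. The argument then rests on three pillars: a one-step descent inequality from $L$-smoothness, a contraction-type recursion for $\Exp{\sqfnorm{e^t}}$ from the \algname{MVR} update and bounded variance, and the combination of the two telescoped sums.

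\textbf{Step 1 (descent).} From Assumption~\ref{as:lipschitz_smoothness}, $f(W^{t+1}) \le f(W^t) - \gamma\linf{\nabla f(W^t), \hat{G}^t} + \tfrac{L\gamma^2}{2}\sqfnorm{\hat{G}^t}$. Using that $P^t$ is self-adjoint idempotent, $\linf{\nabla f(W^t),\hat{G}^t} = \sqfnorm{P^t\nabla f(W^t)} + \linf{P^t\nabla f(W^t), P^t e^t}$ and $\sqfnorm{\hat{G}^t} = \linf{G^t, P^t G^t}$; a Young split of the cross term retains a $-\Theta(\gamma)\sqfnorm{P^t\nabla f(W^t)}$ contribution while producing a $+\Theta(\gamma)\sqfnorm{P^t e^t}$ term. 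Taking conditional expectation over the Bernoulli trial and the sketch and applying the two-sided estimate above (lower bound on the $\nabla f$ term, upper bound on the error term), we obtain, for $\gamma\le\tfrac1L$, a one-step bound of the form $\Exp{f(W^{t+1})\mid\cF^t} \le f(W^t) - c_1\gamma\lambda_{\min}^{p}\sqfnorm{\nabla f(W^t)} + c_2\gamma\lambda_{\max}^{p}\sqfnorm{e^t}$ with absolute constants $c_1,c_2$.

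\textbf{Step 2 (error recursion and telescoping).} The \algname{MVR} rule gives $e^t = (1-b)e^{t-1} + \Delta^t$ with $\Delta^t \eqdef \rb{\nabla f_{\xi^t}(W^t)-\nabla f(W^t)} - (1-b)\rb{\nabla f_{\xi^t}(W^{t-1})-\nabla f(W^{t-1})}$ and $\Exp{\Delta^t\mid\cF^t}=0$, so $\Exp{\sqfnorm{e^t}\mid\cF^t} = (1-b)^2\sqfnorm{e^{t-1}} + \Exp{\sqfnorm{\Delta^t}\mid\cF^t}$. Decomposing $\Delta^t = b\rb{\nabla f_{\xi^t}(W^t)-\nabla f(W^t)} + (1-b)\bigl((\nabla f_{\xi^t}(W^t)-\nabla f_{\xi^t}(W^{t-1}))-(\nabla f(W^t)-\nabla f(W^{t-1}))\bigr)$, bounding the first piece by Assumption~\ref{as:bounded_variance} and the second by smoothness together with $W^t-W^{t-1}=-\gamma\hat{G}^{t-1}$, gives $\Exp{\sqfnorm{\Delta^t}\mid\cF^t}\le 2b^2\sigma^2 + 2(1-b)^2 L^2\gamma^2\Exp{\sqfnorm{\hat{G}^{t-1}}\mid\cF^t}$, and $\Exp{\sqfnorm{\hat{G}^{t-1}}\mid\cdot}\le 2\lambda_{\max}^{p}\rb{\sqfnorm{\nabla f(W^{t-1})}+\sqfnorm{e^{t-1}}}$. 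Summing the error recursion over $t$ yields $\sum_t\Exp{\sqfnorm{e^t}}\le \tfrac{\cG^0}{b(2-b)} + \tfrac{2b}{2-b}\sigma^2 T + \tfrac{4(1-b)^2 L^2\gamma^2\lambda_{\max}^{p}}{b(2-b)}\sum_t\rb{\Exp{\sqfnorm{\nabla f(W^t)}}+\Exp{\sqfnorm{e^t}}}$; the stepsize restriction $\gamma\le\bigl(L(1+\sqrt{2\lambda_{\max}^{p}(1-b)^2/b})\bigr)^{-1}$ forces the $\sqfnorm{e^t}$ coefficient on the right to be $<\tfrac12$, so it can be absorbed on the left, leaving $\sum_t\Exp{\sqfnorm{e^t}}$ controlled by $\tfrac{\cG^0}{b(2-b)}$, $\tfrac{b}{2-b}\sigma^2 T$, and a small multiple of $\sum_t\Exp{\sqfnorm{\nabla f(W^t)}}$. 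Plugging this into the telescoped Step~1 inequality (using Assumption~\ref{as:bounded_below} and $f(W^T)\ge f^*$), the residual $\sum_t\Exp{\sqfnorm{\nabla f(W^t)}}$ piece is dominated by the $c_1\gamma\lambda_{\min}^{p}\sum_t\Exp{\sqfnorm{\nabla f(W^t)}}$ term (again by the stepsize bound); dividing through by $\Theta(\gamma\lambda_{\min}^{p}T)$ and using that $\widetilde{W}^T$ is drawn uniformly from $\{W^0,\dots,W^{T-1}\}$, so $\Exp{\sqfnorm{\nabla f(\widetilde{W}^T)}}=\tfrac1T\sum_t\Exp{\sqfnorm{\nabla f(W^t)}}$, gives the claimed bound with $\cG^0=\sqfnorm{e^0}$.

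\textbf{Main obstacle.} I expect the delicate step to be the coupling in Step~2: the error recursion intertwines $\sqfnorm{e^t}$, $\sqfnorm{\hat{G}^t}$ and $\sqfnorm{\nabla f(W^t)}$ through the movement term $\gamma^2\sqfnorm{\hat{G}^{t-1}}$, and one must balance the weighting of $\Exp{\sqfnorm{e^t}}$ (equivalently, the implicit Lyapunov coefficient) against the stepsize so that \emph{simultaneously} the $\sqfnorm{e^t}$ self-coupling contracts and the induced $\sqfnorm{\nabla f}$ terms stay below the descent term — this tension is exactly what produces the factor $\sqrt{2\lambda_{\max}^{p}(1-b)^2/b}$ in the stepsize condition. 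A secondary subtlety is that the crude bound $\Exp{\sqfnorm{\Delta^t}}\le 4\sigma^2$ would destroy the $b$-dependence of the noise floor $\tfrac{2b\sigma^2}{2-b}$, so one must keep the $b$-scaled gradient-noise piece and the gradient-difference piece of $\Delta^t$ separate, converting the latter (via smoothness of the stochastic realizations) into an iterate-movement term that the stepsize absorbs.
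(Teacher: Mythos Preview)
Your two-ingredient skeleton (a descent inequality and an error recursion for $e^t=G^t-\nabla f(W^t)$, then combine and telescope) is exactly what the paper does, and Step~2's error recursion is correct. The gap is in how you couple the two pieces. In Step~1 you use a Young split of $\linf{P^t\nabla f(W^t),P^t e^t}$, which produces a descent bound without any \emph{negative} movement term; you then bound the movement contribution in the error recursion by $\sqfnorm{W^{t+1}-W^t}=\gamma^2\sqfnorm{\hat G^t}\le 2\gamma^2\lambda_{\max}^p(\sqfnorm{\nabla f(W^t)}+\sqfnorm{e^t})$. This makes the self-coupling coefficient on $\sum_t\sqfnorm{e^t}$ equal to
\[
\alpha \;=\; \frac{4(1-b)^2 L^2\gamma^2\lambda_{\max}^p}{b(2-b)}
\;\le\; \frac{2z^2}{(2-b)(1+z)^2},
\qquad z \eqdef \sqrt{\tfrac{2\lambda_{\max}^p(1-b)^2}{b}},
\]
under the theorem's stepsize $\gamma\le \tfrac{1}{L(1+z)}$. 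Your claim that this stepsize forces $\alpha<\tfrac12$ is incorrect: as $b\to 0$ we have $z\to\infty$ and $\alpha\to 1$ (e.g.\ $b=0.1$, $\lambda_{\max}^p=1$ gives $\alpha\approx 0.68$). Then $1/(1-\alpha)$ blows up, the fed-back $\tfrac{\alpha}{1-\alpha}\sum_t\sqfnorm{\nabla f(W^t)}$ term can swamp the descent term $c_1\gamma\lambda_{\min}^p\sum_t\sqfnorm{\nabla f(W^t)}$, and even when it does not, the $\cG^0$ and $\sigma^2$ constants acquire an uncontrolled $1/(1-\alpha)$ factor and no longer match the theorem.

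The paper avoids this entirely by replacing your Young split with the polarization identity $\linf{P^t\nabla f, P^t G^t}=\tfrac12\big(\sqfnorm{P^t\nabla f}+\sqfnorm{P^t G^t}-\sqfnorm{P^t e^t}\big)$ in Step~1. This extracts a \emph{negative} $-\big(\tfrac{1}{2\gamma}-\tfrac{L}{2}\big)\sqfnorm{W^{t+1}-W^t}$ term in the descent lemma (Lemma~\ref{lem:descent_lemma_page}), which then cancels the $+2(1-b)^2L^2\sqfnorm{W^{t+1}-W^t}$ term from the error recursion \emph{directly} inside the Lyapunov function $\Phi_t=f(W^t)-f^*+\tfrac{\gamma\lambda_{\max}^p}{2b(2-b)}\sqfnorm{e^t}$, with no detour through $\sqfnorm{\nabla f}+\sqfnorm{e^t}$. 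The stepsize condition is exactly the requirement that this cancellation leave a nonpositive movement coefficient, and the theorem's constants drop out with no $1/(1-\alpha)$ inflation. In short: swap Young for polarization in Step~1, keep the movement term, and package the two recursions as a single Lyapunov decrease.
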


For the finite-sum setting, we analyze \algname{Bernoulli-LoRA-PAGE}, with its convergence detailed in the following theorem and proven in Appendix~\ref{apx:B-LoRA-PAGE}.
\begin{theorem}\label{th:B_LoRA_PAGE}
    Let Assumptions~\ref{as:projection_matrix},~\ref{as:bounded_below}, and ~\ref{as:lipschitz_smoothness} hold, and let the stepsize satisfy $
        0<\gamma \leq \frac{1}{L\left(1+\sqrt{\frac{1-q}{q}\lambda^p_{\max}}\right)}$. Then the iterates of \algname{Bernoulli-LoRA-PAGE} (Algorithm~\ref{alg:Bernoulli-LoRA-PAGE}) satisfy
        \begin{equation*}\squeeze
            \Exp{\sqfnorm{\nabla f(\widetilde{W}^T)}} \leq \frac{2\Delta^0}{\gamma\lambda^p_{\min}  T} + \frac{\cG^0}{ qT} \cdot\frac{\lambda^p_{\max}}{\lambda^p_{\min}},
        \end{equation*}
        where $\Delta^0 \eqdef f(W^0) - f^*$ and $\cG^0 \eqdef \sqfnorm{G^{0} - \nabla f(W^{0})} $.
\end{theorem}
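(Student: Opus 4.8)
The plan is to use a Lyapunov (potential) function combining the objective suboptimality with the \algname{PAGE} gradient-estimator error, show it decreases on average by a multiple of $\sqfnorm{\nabla f(W^t)}$, and then telescope. Write $\cG^t \eqdef \Exp{\sqfnorm{G^t - \nabla f(W^t)}}$ for the expected estimator error and set $\Phi^t \eqdef \Exp{f(W^t) - f^*} + \tfrac{\gamma\lambda^p_{\max}}{2q}\cG^t$. The target one-step inequality is $\Phi^{t+1} \le \Phi^t - \tfrac{\gamma\lambda^p_{\min}}{2}\Exp{\sqfnorm{\nabla f(W^t)}}$; summing it over $t=0,\dots,T-1$, using $\Phi^T \ge 0$ (Assumption~\ref{as:bounded_below}) and $\Phi^0 = \Delta^0 + \tfrac{\gamma\lambda^p_{\max}}{2q}\cG^0$, and recalling that $\widetilde{W}^T$ is drawn uniformly from $W^0,\dots,W^{T-1}$ (so that $\Exp{\sqfnorm{\nabla f(\widetilde{W}^T)}} = \tfrac1T\sum_{t=0}^{T-1}\Exp{\sqfnorm{\nabla f(W^t)}}$), yields exactly the claimed rate.

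The first step is a ``projected descent lemma''. By $L$-smoothness applied to $W^{t+1} = W^t - \gamma\hat{G}^t$, we get $f(W^{t+1}) \le f(W^t) - \gamma\langle\nabla f(W^t),\hat{G}^t\rangle + \tfrac{L\gamma^2}{2}\sqfnorm{\hat{G}^t}$. On the event $c^t=1$ we have $\hat{G}^t = H_B^t G^t$ with $H_B^t$ a symmetric idempotent projector, so the polarization identity gives $\langle\nabla f(W^t), H_B^t G^t\rangle = \tfrac12\sqfnorm{H_B^t\nabla f(W^t)} + \tfrac12\sqfnorm{H_B^t G^t} - \tfrac12\sqfnorm{H_B^t(G^t-\nabla f(W^t))}$, and symmetrically with right multiplication by $H_A^t$ when $c^t=0$. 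Substituting and taking expectation over the Bernoulli coin and the sketch (conditionally on $W^t,G^t$), and using $\Exp{\sqfnorm{HM}} = \langle M,\Exp{H}M\rangle$ together with Assumption~\ref{as:projection_matrix} and the definition of $\lambda^p_{\min},\lambda^p_{\max}$, the useful term satisfies $p\Exp{\sqfnorm{H_B^t\nabla f(W^t)}} + (1-p)\Exp{\sqfnorm{\nabla f(W^t)H_A^t}} \ge \lambda^p_{\min}\sqfnorm{\nabla f(W^t)}$, while the error term is bounded above by $\lambda^p_{\max}\sqfnorm{G^t-\nabla f(W^t)}$, i.e.\ by $\lambda^p_{\max}\cG^t$ after taking full expectation. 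Keeping the $\sqfnorm{\hat{G}^t}$ pieces intact (rather than discarding them) we obtain
\[ \Exp{f(W^{t+1})} \le \Exp{f(W^t)} - \tfrac{\gamma\lambda^p_{\min}}{2}\Exp{\sqfnorm{\nabla f(W^t)}} - \tfrac{\gamma(1-L\gamma)}{2}\Exp{\sqfnorm{\hat{G}^t}} + \tfrac{\gamma\lambda^p_{\max}}{2}\cG^t. \]

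Next I record the \algname{PAGE} variance recursion: when forming $G^{t+1}$ the error is $0$ with probability $q$, and with probability $1-q$ it equals the old error $G^t - \nabla f(W^t)$ plus a conditionally zero-mean increment of second moment at most $L^2\sqfnorm{W^{t+1}-W^t} = L^2\gamma^2\sqfnorm{\hat{G}^t}$ (invoking smoothness of the summands $f_i$ with constant $L$), so $\cG^{t+1} \le (1-q)\cG^t + (1-q)L^2\gamma^2\Exp{\sqfnorm{\hat{G}^t}}$. Adding $\tfrac{\gamma\lambda^p_{\max}}{2q}$ times this to the descent inequality cancels the $\cG^t$ terms exactly and leaves a coefficient $-\tfrac{\gamma}{2}\left(1 - L\gamma - \tfrac{(1-q)\lambda^p_{\max}}{q}L^2\gamma^2\right)$ on $\Exp{\sqfnorm{\hat{G}^t}}$. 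Writing $a \eqdef \sqrt{\tfrac{1-q}{q}\lambda^p_{\max}}$, the stepsize bound $\gamma \le \tfrac{1}{L(1+a)}$ gives $L\gamma + a^2L^2\gamma^2 \le \tfrac{1}{1+a} + \tfrac{a^2}{(1+a)^2} = \tfrac{1+a+a^2}{(1+a)^2} \le 1$, so that coefficient is nonpositive; dropping the $\Exp{\sqfnorm{\hat{G}^t}}$ term yields the target one-step inequality, and telescoping as described above finishes the proof.

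The main obstacle is this first step: since $\hat{G}^t$ is a (one- or two-sided) random projection of the already biased \algname{PAGE} estimator $G^t$, there is no unbiasedness shortcut available for the inner product $\langle\nabla f(W^t),\hat{G}^t\rangle$. The projector polarization identity is what resolves this — it splits the inner product into a genuinely useful descent term controllable from below by $\lambda^p_{\min}$, the squared-step term that one must retain in order to absorb the variance-recursion feedback, and an error term controllable from above by $\lambda^p_{\max}$ — and the precise coupling of $\lambda^p_{\min}$, $\lambda^p_{\max}$, and $q$ in the stated stepsize restriction is exactly what makes both ``bad'' coefficients vanish simultaneously.
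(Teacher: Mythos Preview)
Your proposal is correct and follows essentially the same approach as the paper: the same Lyapunov function $\Phi_t = f(W^t)-f^* + \tfrac{\gamma\lambda^p_{\max}}{2q}\sqfnorm{G^t-\nabla f(W^t)}$, the same projected-descent lemma derived via the projector polarization identity (the paper's Lemma~\ref{lem:descent_lemma_page}), the same \algname{PAGE} variance recursion (Lemma~\ref{lem:aux_lem_page}), and the same stepsize verification. Your explicit check that $L\gamma + a^2L^2\gamma^2 \le 1$ under the assumed bound on $\gamma$ is a bit more detailed than the paper's, but otherwise the two arguments are identical.
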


We now shift to our Federated Learning variants. The following theorem provides convergence guarantees for \algname{Fed-Bernoulli-LoRA-QGD}, with the proof available in Appendix~\ref{apx:B-LoRA-QGD}.
\begin{theorem}
  \label{th:B_LoRA_QGD}  
  Let Assumptions~\ref{as:projection_matrix},~\ref{as:bounded_below}, ~\ref{as:lipschitz_smoothness}, and ~\ref{as:function_dissimilairy} hold, and let the stepsize satisfy \\$0 < \gamma \leq  \min\left\{\frac{1}{L\sqrt{ \frac{\omega}{M} \lambda^p_{\max} T}}, \frac{1}{L}\left(\frac{\lambda^p_{\max}}{\lambda^p_{\min}}\right)^{-1}\right\}$. Then the iterates of \algname{Fed-Bernoulli-LoRA-QGD} (Algorithm~\ref{alg:Fed-Bernoulli-LoRA-QGD}) satisfy
  \begin{equation*}
  \squeeze
       \Exp{\sqfnorm{\nabla f(\widetilde{W}^T)}} \leq \frac{6\Delta^0}{\gamma \lambda^p_{\min} T} + \frac{2\gamma L \omega \Delta^* }{M}\cdot\frac{\lambda^p_{\max}}{\lambda^p_{\min}},
  \end{equation*} 
  where $\Delta^0 \eqdef f(W^0) - f^*$.
\end{theorem}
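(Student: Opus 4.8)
\algname{Fed-Bernoulli-LoRA-QGD} is the instance of the unified update \eqref{eq:general_update_form}--\eqref{eq:projected_estimator_form} whose base estimator is the averaged quantized gradient $G^t = \frac1M\sum_{l=1}^M\cQ_l^t\big(\nabla f_l(W^t)\big)$, where the \algname{QSGD}-type operators $\cQ_l^t$ are unbiased with $\Exp{\sqfnorm{\cQ_l^t(X)-X}}\le\omega\sqfnorm{X}$ and are independent across clients and of all other randomness. The plan is to reduce to the \algname{Bernoulli-LoRA-SGD} analysis (Theorem~\ref{th:B_LoRA_SGD}) by first showing that $G^t$ satisfies the expected-smoothness inequality of Assumption~\ref{as:ABC_assumption} with explicit constants. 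Unbiasedness $\Exp{G^t\mid W^t}=\nabla f(W^t)$ is immediate, and independence of the $\cQ_l^t$ gives
\begin{align*}
\Exp{\sqfnorm{G^t}\mid W^t} &= \sqfnorm{\nabla f(W^t)} + \tfrac{1}{M^2}\sum_{l=1}^M\Exp{\sqfnorm{\cQ_l^t(\nabla f_l(W^t)) - \nabla f_l(W^t)}} \\
&\le \sqfnorm{\nabla f(W^t)} + \tfrac{\omega}{M^2}\sum_{l=1}^M\sqfnorm{\nabla f_l(W^t)}.
\end{align*}
Using Assumption~\ref{as:function_dissimilairy} together with $L$-smoothness, namely $\sqfnorm{\nabla f_l(W^t)}\le 2L\big(f_l(W^t)-f_l^{*}\big)$, and the identity $\frac1M\sum_l\big(f_l(W^t)-f_l^{*}\big)=\big(f(W^t)-f^{*}\big)+\Delta^{*}$, this becomes the \enquote{ABC} bound with $A_1=\tfrac{L\omega}{M}$, $B_1=1$, $C_1=\tfrac{2L\omega\Delta^{*}}{M}$; Assumption~\ref{as:bounded_below} makes $f^{*}$ and hence $\Delta^0,\Delta^{*}$ finite.

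The remaining steps mirror the proof of Theorem~\ref{th:B_LoRA_SGD}. Apply the descent lemma of Assumption~\ref{as:lipschitz_smoothness} to $W^{t+1}=W^t-\gamma\hat G^t$ and condition on $W^t$. By independence of the Bernoulli coin $c^t$, the sketch, and $G^t$, $\Exp{\hat G^t\mid W^t}=p\,\Exp{H_B^t}\nabla f(W^t)+(1-p)\nabla f(W^t)\Exp{H_A^t}$; positive semidefiniteness of $\Exp{H_B^t},\Exp{H_A^t}$ (Assumption~\ref{as:projection_matrix}) and the trace bounds $\tr(X^\top M X)\ge\lambda_{\min}(M)\sqfnorm{X}$ and $\tr(X^\top X N)\ge\lambda_{\min}(N)\sqfnorm{X}$ yield $\lin{\nabla f(W^t),\Exp{\hat G^t\mid W^t}}\ge\lambda^p_{\min}\sqfnorm{\nabla f(W^t)}$. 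Since $H_B^t,H_A^t$ are orthogonal projections independent of $G^t$, $\Exp{\sqfnorm{\hat G^t}\mid W^t}\le\lambda^p_{\max}\Exp{\sqfnorm{G^t}\mid W^t}$, and feeding in the \enquote{ABC} bound turns the descent step into $\Exp{f(W^{t+1})-f^{*}\mid W^t}\le(1+a\gamma^2)\big(f(W^t)-f^{*}\big)-\big(\gamma\lambda^p_{\min}-\tfrac{L\gamma^2\lambda^p_{\max}}{2}\big)\sqfnorm{\nabla f(W^t)}+b\gamma^2$, with $a\propto\tfrac{L^2\omega\lambda^p_{\max}}{M}$ and $b\propto\tfrac{L\omega\lambda^p_{\max}\Delta^{*}}{M}$. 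The restriction $\gamma\le\tfrac1L(\lambda^p_{\max}/\lambda^p_{\min})^{-1}$ makes the gradient coefficient at most $-\tfrac{\gamma\lambda^p_{\min}}{2}$, while $\gamma\le(L\sqrt{(\omega/M)\lambda^p_{\max}T})^{-1}$ gives $a\gamma^2T\le1$, so $\prod_{t=0}^{T-1}(1+a\gamma^2)\le e$. Summing over $t$, taking total expectation, dividing by $\tfrac{\gamma\lambda^p_{\min}T}{2}$, and using that $\widetilde W^T$ is uniform on $\{W^0,\dots,W^{T-1}\}$ yields $\Exp{\sqfnorm{\nabla f(\widetilde W^T)}}\le\tfrac{6\Delta^0}{\gamma\lambda^p_{\min}T}+\tfrac{2\gamma L\omega\Delta^{*}}{M}\cdot\tfrac{\lambda^p_{\max}}{\lambda^p_{\min}}$, as claimed.

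The main obstacle is the passage $\tfrac{\omega}{M^2}\sum_l\sqfnorm{\nabla f_l(W^t)}\le\tfrac{2L\omega}{M}\big(f(W^t)-f^{*}\big)+\tfrac{2L\omega}{M}\Delta^{*}$: because of client heterogeneity the compression variance at $W^t$ cannot be controlled by a fixed \enquote{variance at the minimizer}, so it must be routed through the per-client suboptimality gaps; this is exactly what forces the heterogeneity constant $\Delta^{*}$ into the rate as a non-decaying additive floor, and it also produces the multiplicative factor $1+a\gamma^2$ that the $1/\sqrt{T}$-type stepsize bound is designed to neutralize. The projection-operator estimates and the telescoping bookkeeping are otherwise identical to those in Theorem~\ref{th:B_LoRA_SGD}, so the cleanest write-up isolates the \enquote{ABC} identification as a short lemma and then invokes that theorem.
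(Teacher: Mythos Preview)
Your proposal is correct and follows essentially the same route as the paper: first verify that the averaged quantized gradient $G^t$ satisfies Assumption~\ref{as:ABC_assumption} with $A_1=\tfrac{L\omega}{M}$, $B_1=1$, $C_1=\tfrac{2L\omega\Delta^{*}}{M}$ (this is exactly the paper's Lemma~\ref{lem:aux_lemma_qgd}), and then invoke Theorem~\ref{th:B_LoRA_SGD}. The one detail to tighten is that the paper does \emph{not} sample $\widetilde W^T$ uniformly; to get precisely the stated constants (the $6$ on the first term and, in particular, the clean $2$ on the second), the SGD analysis uses the weighted sampling $w_t=w_{t-1}/(1+\gamma^2 L A_1\lambda^p_{\max})$ so that the recursion telescopes exactly---your uniform-sampling sketch would pick up an extra factor of order $e$ on the $\Delta^{*}$ term.
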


Next, we present the convergence result for \algname{Fed-Bernoulli-LoRA-MARINA}. The proof can be found in Appendix~\ref{apx:B-LoRA-MARINA}.
\begin{theorem}\label{th:B_LoRA_MARINA}
    Let Assumptions~\ref{as:projection_matrix}, ~\ref{as:bounded_below}, and ~\ref{as:lipschitz_smoothness} hold, and let the stepsize satisfy $
        0<\gamma \leq \frac{1}{L\left(1+\sqrt{\lambda^p_{\max}\frac{1-q}{q} \cdot \frac{\omega}{M}}\right)}$. Then the iterates of \algname{Fed-Bernoulli-LoRA-MARINA} (Algorithm~\ref{alg:Fed-Bernoulli-LoRA-MARINA}) satisfy
        \begin{equation*}\squeeze
            \Exp{\sqfnorm{\nabla f(\widetilde{W}^T)}} \leq \frac{2\Delta^0}{\gamma\lambda^p_{\min}  T} + \frac{\cG^0}{ q T} \cdot \frac{\lambda^p_{\max}}{\lambda^p_{\min}},
        \end{equation*}
        where $\Delta^0 \eqdef f(W^0) - f^*$ and $\cG^0 \eqdef \sqfnorm{G^{0} - \nabla f(W^{0})} $.
\end{theorem}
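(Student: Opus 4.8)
The plan is to recast \algname{Fed-Bernoulli-LoRA-MARINA} into the unified projected-gradient form \eqref{eq:general_update_form}--\eqref{eq:projected_estimator_form}, $W^{t+1} = W^t - \gamma\hat G^t$ with $\hat G^t = H_B^tG^t$ (w.p.\ $p$) or $G^tH_A^t$ (w.p.\ $1-p$), and then run a Lyapunov argument parallel to the one behind Theorem~\ref{th:B_LoRA_PAGE}, the only new ingredient being the compression variance, which enters through the factor $\omega/M$ in the \algname{MARINA} gradient-difference estimator. Let $\mathcal{F}^t$ be the $\sigma$-algebra generated by all randomness prior to iteration $t$ (so $W^t$, $W^{t-1}$, $G^{t-1}$ are $\mathcal{F}^t$-measurable) and write $\E_t[\cdot] := \E[\,\cdot\mid\mathcal{F}^t]$. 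The three mutually independent sources of iteration-$t$ randomness are the Bernoulli coin $c^t$ (Left vs.\ Right sketch), the \algname{MARINA} coin (full gradient with probability $q$), and the client compressors $\{\cQ_l^t\}$. The proof reduces to establishing two building blocks: a one-step \emph{projected descent inequality} and a \emph{variance recursion} for $\E\sqfnorm{G^t-\nabla f(W^t)}$.

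\textbf{Step 1 (projected descent).} Using $L$-smoothness of $f$, $W^{t+1}-W^t=-\gamma\hat G^t$, the identity $-\langle a,b\rangle=\tfrac12\sqfnorm{a-b}-\tfrac12\sqfnorm{a}-\tfrac12\sqfnorm{b}$, and crucially the fact that every realization of the sketch operator $P^t\in\{H_B^t(\cdot),\,(\cdot)H_A^t\}$ is a self-adjoint idempotent (hence non-expansive) projection, so that $\langle\nabla f(W^t),\hat G^t\rangle=\langle P^t\nabla f(W^t),\,P^tG^t\rangle$, one obtains
\begin{align*}
f(W^{t+1}) \le f(W^t) + \tfrac{\gamma}{2}\sqfnorm{P^t(\nabla f(W^t)-G^t)} - \tfrac{\gamma}{2}\sqfnorm{P^t\nabla f(W^t)} - \tfrac{\gamma(1-L\gamma)}{2}\sqfnorm{\hat G^t}.
\end{align*}
Taking $\E_t[\cdot]$, using independence of $c^t$ and the sketches from $G^t$, and $\lambda_{\min}^{H}I\preceq\E[H]\preceq\lambda_{\max}^{H}I$ for each sketch (Assumption~\ref{as:projection_matrix} and $H\preceq I$), yields $\E_t\sqfnorm{P^t\nabla f(W^t)}\ge\lambda_{\min}^{p}\sqfnorm{\nabla f(W^t)}$ and $\E_t\sqfnorm{P^t(\nabla f(W^t)-G^t)}\le\lambda_{\max}^{p}\,\E_t\sqfnorm{G^t-\nabla f(W^t)}$, hence, for $\gamma\le 1/L$,
\begin{align*}
\E_t[f(W^{t+1})] \le f(W^t) - \tfrac{\gamma\lambda_{\min}^{p}}{2}\sqfnorm{\nabla f(W^t)} + \tfrac{\gamma\lambda_{\max}^{p}}{2}\,\E_t\sqfnorm{G^t-\nabla f(W^t)} - \tfrac{\gamma(1-L\gamma)}{2}\,\E_t\sqfnorm{\hat G^t}.
\end{align*}

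\textbf{Step 2 (variance recursion) and Step 3 (Lyapunov).} Let $E^t:=G^t-\nabla f(W^t)$. Conditionally on $\mathcal{F}^t$: with probability $q$, $E^t=0$; with probability $1-q$, $E^t=E^{t-1}+\tfrac1M\sum_l(\cQ_l^t(\Delta_l^t)-\Delta_l^t)$ with $\Delta_l^t:=\nabla f_l(W^t)-\nabla f_l(W^{t-1})$. Expanding the square and using unbiasedness and the $\omega$-variance bound of each $\cQ_l^t$, independence of compressors across clients, and $L$-smoothness of each $f_l$ (so $\tfrac1M\sum_l\sqfnorm{\Delta_l^t}\le L^2\sqfnorm{W^t-W^{t-1}}=L^2\gamma^2\sqfnorm{\hat G^{t-1}}$), one gets, with $V_t:=\E\sqfnorm{G^t-\nabla f(W^t)}$ and $V_0=\cG^0$,
\begin{align*}
V_t \le (1-q)V_{t-1} + \tfrac{(1-q)\omega L^2\gamma^2}{M}\,\E\sqfnorm{\hat G^{t-1}}.
\end{align*}
(Biasedness of $G^t$ never enters, since the $-\langle a,b\rangle$ identity removes any need for $\E_t[G^t]=\nabla f(W^t)$.) Define $\Phi^t:=\E[f(W^t)-f^*]+\tfrac{\gamma\lambda_{\max}^{p}}{2q}V_t$. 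Adding $\tfrac{\gamma\lambda_{\max}^{p}}{2q}$ times the variance recursion to the full-expectation Step~1 inequality, the coefficient of $V_t$ collapses exactly to $\tfrac{\gamma\lambda_{\max}^{p}}{2q}$ (since $\lambda_{\max}^{p}+\tfrac{\lambda_{\max}^{p}}{q}(1-q)=\tfrac{\lambda_{\max}^{p}}{q}$), while the coefficient of $\E\sqfnorm{\hat G^t}$ is $\tfrac{\gamma}{2}\big(\tfrac{\lambda_{\max}^{p}(1-q)\omega L^2\gamma^2}{qM}-(1-L\gamma)\big)$, which is $\le 0$ exactly when $L\gamma+\tfrac{\lambda_{\max}^{p}(1-q)\omega L^2\gamma^2}{qM}\le 1$; writing $\theta^2:=\lambda_{\max}^{p}\tfrac{1-q}{q}\cdot\tfrac{\omega}{M}$ and solving the quadratic, this holds whenever $\gamma\le\big(L(1+\theta)\big)^{-1}$ — precisely the stated stepsize bound (which also forces $\gamma\le 1/L$). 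Thus $\Phi^{t+1}\le\Phi^t-\tfrac{\gamma\lambda_{\min}^{p}}{2}\E\sqfnorm{\nabla f(W^t)}$; summing over $t=0,\dots,T-1$, using $\Phi^T\ge 0$ (Assumption~\ref{as:bounded_below}), $\Phi^0=\Delta^0+\tfrac{\gamma\lambda_{\max}^{p}}{2q}\cG^0$, dividing by $\tfrac{\gamma\lambda_{\min}^{p}T}{2}$, and recalling $\widetilde W^T$ is drawn uniformly from $\{W^0,\dots,W^{T-1}\}$ gives exactly $\E\sqfnorm{\nabla f(\widetilde W^T)}\le\tfrac{2\Delta^0}{\gamma\lambda_{\min}^{p}T}+\tfrac{\cG^0}{qT}\cdot\tfrac{\lambda_{\max}^{p}}{\lambda_{\min}^{p}}$.

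\textbf{Main obstacle.} The crux is Step~1: one must simultaneously (i) exploit that the sketch-induced maps are \emph{self-adjoint orthogonal projections}, which is what turns $\langle\nabla f(W^t),\hat G^t\rangle$ into $\langle P^t\nabla f(W^t),P^tG^t\rangle$ and thereby produces the sharp curvature term $\lambda_{\min}^{p}\sqfnorm{\nabla f(W^t)}$ rather than a lossy one, and (ii) carefully order the conditional expectations over the Bernoulli coin, the \algname{MARINA} coin, and the compressors, invoking their mutual independence, so that the (biased) estimator $G^t$ is handled purely through its second moment. Once the projected descent inequality and the variance recursion are secured, assembling them into the Lyapunov function and extracting the stepsize threshold is mechanical and follows the same pattern as the proof of Theorem~\ref{th:B_LoRA_PAGE}.
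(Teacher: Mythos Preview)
Your proposal is correct and follows essentially the same approach as the paper. The paper factors the argument into two lemmas—Lemma~\ref{lem:descent_lemma_page} (the projected descent inequality, identical to your Step~1) and Lemma~\ref{lem:aux_lem_marina} (the \algname{MARINA} variance recursion, identical to your Step~2)—and then assembles them with the same Lyapunov function $\Phi_t = f(W^t)-f^* + \tfrac{\gamma\lambda^p_{\max}}{2q}\sqfnorm{G^t-\nabla f(W^t)}$ and the same stepsize derivation, exactly as in your Step~3.
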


The convergence of \algname{Fed-Bernoulli-LoRA-EF21} is established below, with a detailed proof in Appendix~\ref{apx:B-LoRA-EF21}.
\begin{theorem}\label{th:B_LoRA_EF21}
    Let Assumptions~\ref{as:projection_matrix},~\ref{as:bounded_below}, and~\ref{as:lipschitz_smoothness} hold, and let the  stepsize satisfy $0<\gamma \leq \frac{1}{L\left(1+\frac{\sqrt{\lambda^p_{\max} (1-\beta)}}{1-\sqrt{1-\beta}} \right)}$.  Then the iterates of \algname{Fed-Bernoulli-LoRA-EF21} (Algorithm~\ref{alg:Fed-Bernoulli-LoRA-EF21}) satisfy
        \begin{equation*}\squeeze
            \Exp{\sqfnorm{\nabla f(\widetilde{W}^T)}} \leq \frac{2\Delta^0}{\gamma \lambda^p_{\min} T} +  \frac{2\hat{\cG}^0}{ \beta T}\cdot\frac{\lambda^p_{\max}}{\lambda^p_{\min}},
        \end{equation*}
        where $\Delta^0 \eqdef f(W^0) - f^*$ and $\hat{\cG}^0 \eqdef  \frac{1}{M}\sum^M_{l=1}\sqfnorm{G^{0}_l - \nabla f_l(W^{0})}$.
\end{theorem}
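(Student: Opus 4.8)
} The plan is to run the standard \algname{EF21} Lyapunov argument on the full space $\R^{m\times n}$, using the reformulation $W^{t+1}=W^t-\gamma\hat G^t$ of \eqref{eq:general_update_form}--\eqref{eq:projected_estimator_form} with $G^t=\frac1M\sum_l G_l^t$ the averaged \algname{EF21} estimator. Two ingredients are specific to \algname{Bernoulli-LoRA}: (i) a \emph{projected descent lemma} that isolates how the random projection enters, and (ii) the fact that $H_B^t,H_A^t$ are orthogonal projections, hence non-expansive, so $\sqfnorm{W^{t+1}-W^t}=\gamma^2\sqfnorm{\hat G^t}\le\gamma^2\sqfnorm{G^t}$ pointwise. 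All conditional expectations use that, at step $t$, the compressors $\{\cC_l^t\}$ (which build $G^t$) are independent of the coin $c^t$ and the sketch (which build $\hat G^t$ from $G^t$), and that once $W^{t+1}$ is fixed the fresh compressors $\{\cC_l^{t+1}\}$ forming $G_l^{t+1}$ are independent of the past.

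\paragraph{Step 1 (projected descent lemma).} From $L$-smoothness (Assumption~\ref{as:lipschitz_smoothness}), $f(W^{t+1})\le f(W^t)-\gamma\langle\nabla f(W^t),\hat G^t\rangle+\frac{L\gamma^2}{2}\sqfnorm{\hat G^t}$. Taking the conditional expectation over $c^t$ and the sketch, and writing $\overline{H}_B=\mathbb{E}[H_B^t]$, $\overline{H}_A=\mathbb{E}[H_A^t]$, the symmetric-idempotent property of the $H$'s gives $\mathbb{E}\sqfnorm{\hat G^t}=p\langle G^t,\overline{H}_BG^t\rangle+(1-p)\langle G^t,G^t\overline{H}_A\rangle$. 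Applying the polarization identity $\langle X,\overline{H}Y\rangle=\tfrac12(\langle X,\overline{H}X\rangle+\langle Y,\overline{H}Y\rangle-\langle X-Y,\overline{H}(X-Y)\rangle)$ (and its right-multiplication analogue) to each term with $X=\nabla f(W^t)$, $Y=G^t$, and bounding $\lambda_{\min}^H I\preceq\overline{H}\preceq\lambda_{\max}^H I$ (Assumption~\ref{as:projection_matrix}) on the $X$ and $X-Y$ summands, yields
\[\langle\nabla f(W^t),\mathbb{E}[\hat G^t]\rangle\ \ge\ \tfrac{\lambda^p_{\min}}{2}\sqfnorm{\nabla f(W^t)}+\tfrac12\mathbb{E}\sqfnorm{\hat G^t}-\tfrac{\lambda^p_{\max}}{2}\sqfnorm{\nabla f(W^t)-G^t}.\]
Substituting and using $\mathbb{E}\sqfnorm{\hat G^t}=\gamma^{-2}\mathbb{E}\sqfnorm{W^{t+1}-W^t}$ gives the master inequality
\[\mathbb{E}_t[f(W^{t+1})]\le f(W^t)-\tfrac{\gamma\lambda^p_{\min}}{2}\sqfnorm{\nabla f(W^t)}+\tfrac{\gamma\lambda^p_{\max}}{2}\sqfnorm{\nabla f(W^t)-G^t}-\Bigl(\tfrac{1}{2\gamma}-\tfrac L2\Bigr)\mathbb{E}_t\sqfnorm{W^{t+1}-W^t}.\]
The projection thus contributes only $\lambda^p_{\min}$ on the descent term and $\lambda^p_{\max}$ on the bias term, while the absorbing quantity $\sqfnorm{W^{t+1}-W^t}$ carries no spectral factor; this is the common engine behind \algname{Bernoulli-LoRA-PAGE} and \algname{Fed-Bernoulli-LoRA-MARINA}, only the bias recursion differing.

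\paragraph{Step 2 (EF21 contraction) and Step 3 (Lyapunov, stepsize, telescoping).} With $\hat\cG^t:=\frac1M\sum_l\sqfnorm{G_l^t-\nabla f_l(W^t)}$, fixing $W^{t+1}$ and averaging over $\{\cC_l^{t+1}\}$, the contraction bound $\mathbb{E}\sqfnorm{\cC(X)-X}\le(1-\beta)\sqfnorm{X}$ applied to $G_l^{t+1}-\nabla f_l(W^{t+1})=\cC_l^{t+1}(\nabla f_l(W^{t+1})-G_l^t)-(\nabla f_l(W^{t+1})-G_l^t)$, with Young's inequality tuned via $\theta=\frac{1-\sqrt{1-\beta}}{\sqrt{1-\beta}}$ and per-client $L$-smoothness, gives the standard recursion $\mathbb{E}_t[\hat\cG^{t+1}]\le\sqrt{1-\beta}\,\hat\cG^t+\frac{(1-\beta)L^2}{1-\sqrt{1-\beta}}\mathbb{E}_t\sqfnorm{W^{t+1}-W^t}$. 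Define the Lyapunov sequence $\Psi^t:=\mathbb{E}[f(W^t)-f^*]+\frac{\gamma\lambda^p_{\max}}{2(1-\sqrt{1-\beta})}\mathbb{E}[\hat\cG^t]$; combining Step~1 (with $\sqfnorm{\nabla f(W^t)-G^t}\le\hat\cG^t$ by Jensen) and Step~2, the $\hat\cG^t$ terms cancel by the choice of weight, leaving
\[\Psi^{t+1}\le\Psi^t-\tfrac{\gamma\lambda^p_{\min}}{2}\mathbb{E}\sqfnorm{\nabla f(W^t)}-\Bigl(\tfrac1{2\gamma}-\tfrac L2-\tfrac{\gamma\lambda^p_{\max}(1-\beta)L^2}{2(1-\sqrt{1-\beta})^2}\Bigr)\mathbb{E}\sqfnorm{W^{t+1}-W^t}.\]
The last coefficient is nonnegative exactly when $\gamma L+(\gamma L\kappa)^2\le1$ with $\kappa:=\frac{\sqrt{\lambda^p_{\max}(1-\beta)}}{1-\sqrt{1-\beta}}$, and since $\frac1{1+\kappa}+\frac{\kappa^2}{(1+\kappa)^2}\le1$ for all $\kappa\ge0$, the stated stepsize $\gamma\le(L(1+\kappa))^{-1}$ suffices. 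Summing $\Psi^{t+1}\le\Psi^t-\frac{\gamma\lambda^p_{\min}}{2}\mathbb{E}\sqfnorm{\nabla f(W^t)}$ over $t=0,\dots,T-1$, using $\Psi^T\ge0$ (Assumption~\ref{as:bounded_below}), dividing by $\frac{\gamma\lambda^p_{\min}T}{2}$, identifying $\frac1T\sum_{t=0}^{T-1}\mathbb{E}\sqfnorm{\nabla f(W^t)}=\mathbb{E}\sqfnorm{\nabla f(\widetilde{W}^T)}$ for the uniformly sampled iterate, and bounding the $\hat\cG^0$-term via $1-\sqrt{1-\beta}\ge\beta/2$, yields exactly the claimed inequality.

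\paragraph{Main obstacle.} The delicate point is Step~1. When $\mathbb{E}[H]$ is not a scalar matrix, a careless estimate either forfeits the tight factor $\lambda^p_{\min}$ on $\sqfnorm{\nabla f(W^t)}$ or contaminates the bias term with a $\lambda^p_{\max}$-multiple of $\sqfnorm{\nabla f(W^t)}$, which forces the stepsize to scale with $\lambda^p_{\min}/\lambda^p_{\max}$ and breaks the clean bound. The polarization identity in the random-projection semi-inner-product is precisely what keeps $\lambda^p_{\min}$ on the descent term, confines $\lambda^p_{\max}$ to the bias, and---by retaining $\frac12\mathbb{E}\sqfnorm{\hat G^t}=\frac1{2\gamma^2}\mathbb{E}\sqfnorm{W^{t+1}-W^t}$ as a free negative term---lets the \algname{EF21} growth term be absorbed under only the advertised stepsize restriction. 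Getting this algebra right, and correctly bookkeeping the two independent randomness sources (compressors versus coin-plus-sketch), is the real work; the contraction lemma and the telescoping are then routine adaptations of the standard \algname{EF21} analysis (with per-client smoothness supplying $\frac1M\sum_l L_l^2\le L^2$).
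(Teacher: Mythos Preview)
Your proposal is correct and follows essentially the same approach as the paper: your Step~1 reproduces the paper's Lemma~\ref{lem:descent_lemma_page} (the paper applies polarization pointwise in $H^t$ before averaging, you apply it directly in the $\overline{H}$-semi-inner-product, which is equivalent), your Step~2 is the paper's Lemma~\ref{lem:aux_lemma_ef21} with the same Young parameter, and your Lyapunov function, stepsize verification, and telescoping match exactly. The only addition is your explicit use of $1-\sqrt{1-\beta}\ge\beta/2$ to pass from the appendix form $\frac{\hat\cG^0}{(1-\sqrt{1-\beta})T}$ to the main-text form $\frac{2\hat\cG^0}{\beta T}$, which the paper leaves implicit.
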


To obtain stronger, linear convergence rates, we introduce the Polyak–\L ojasiewicz condition, a common generalization of strong convexity.
\begin{assumption}[Polyak–\L ojasiewicz condition~\citep{Polyak1963GradientMF, lojasiewicz1963topological}] 
\label{as:pl_condition}
There exists $\mu>0$ such that
    \begin{equation*}\squeeze
        \frac{1}{2}\sqfnorm{\nabla f(W)}\geq\mu\rb{f(W)-f^{*}}.
    \end{equation*}
\end{assumption}

The next theorem states the convergence of \algname{Bernoulli-LoRA-SGD} under this condition. It is proven in Appendix~\ref{apx:sgd}.
\begin{theorem}
\label{th:B_LORA_SGD_PL}
Let Assumptions~\ref{as:bounded_below},~\ref{as:lipschitz_smoothness}, ~\ref{as:ABC_assumption}, and \ref{as:pl_condition} hold, and let the stepsize satisfy \\$0 < \gamma \leq  \min\left\{\frac{\mu\lambda^p_{\min}}{2L A_1\lambda^p_{\max}}, \frac{2}{\mu \lambda^p_{\min}}, \frac{1}{LB_1}\left(\frac{\lambda^p_{\max}}{\lambda^p_{\min}}\right)^{-1}\right\}$. Then the iterates of \algname{Bernoulli-LoRA-SGD} (Algorithm~\ref{alg:Bernoulli-LoRA-SGD}) satisfy
  \begin{equation*}
  \squeeze
      \Exp{f(W^T) - f^*} \leq \left(1- \frac{\gamma \mu \lambda^p_{\min}}{2}\right)^{T} \Delta^0 + \frac{\gamma LC_1}{\mu}\cdot\frac{\lambda^p_{\max}}{\lambda^p_{\min}},
  \end{equation*}    
  where $\Delta^0 \eqdef f(W^0) - f^*$.
\end{theorem}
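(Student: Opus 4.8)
The plan is to derive a single-step descent inequality for the update $W^{t+1}=W^t-\gamma\hat G^t$ from \eqref{eq:projected_estimator_form}, convert it into a linear contraction of $\delta^t\eqdef f(W^t)-f^*$ by invoking the P\L{} inequality, and then unroll the recursion. I would start from the descent lemma implied by Assumption~\ref{as:lipschitz_smoothness},
\begin{equation*}
f(W^{t+1})\le f(W^t)-\gamma\lin{\nabla f(W^t),\hat G^t}+\tfrac{L\gamma^2}{2}\sqfnorm{\hat G^t},
\end{equation*}
and take expectation conditional on $W^t$ over the three independent sources of randomness at step $t$: the Bernoulli coin $c^t\sim\mathrm{Be}(p)$, the sketch ($B_S^t$ or $A_S^t$), and the base stochastic gradient $G^t=g(W^t)$.

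The two terms are controlled via the spectral structure of the projectors. For the inner product, on $\{c^t=1\}$ one has $\hat G^t=H_B^tG^t$ with $H_B^t$ a symmetric idempotent projector independent of $G^t$ given $W^t$; using $\Exp{g(W^t)\mid W^t}=\nabla f(W^t)$ and the inequality $\tr(X^\top MX)\ge\lamin{M}\sqfnorm{X}$ for symmetric PSD $M$, the conditional expectation of $\lin{\nabla f(W^t),\hat G^t}$ is at least $\lambda_{\min}^{H_B}\sqfnorm{\nabla f(W^t)}$. The event $\{c^t=0\}$ is symmetric (right multiplication by $H_A^t$, with the trace identity $\tr(Y^\top YN)\ge\lamin{N}\sqfnorm{Y}$ for PSD $N$), contributing $\lambda_{\min}^{H_A}\sqfnorm{\nabla f(W^t)}$; averaging over $c^t$ yields the lower bound $\lambda_{\min}^p\sqfnorm{\nabla f(W^t)}$. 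For the second moment, $\sqfnorm{H_B^tG^t}=\tr(\rbtop{G^t}H_B^tG^t)$ and $\sqfnorm{G^tH_A^t}=\tr(\rbtop{G^t}G^tH_A^t)$, so after taking expectation over the sketch, then over $G^t$, and applying Assumption~\ref{as:ABC_assumption},
\begin{equation*}
\Exp{\sqfnorm{\hat G^t}\mid W^t}\le\lambda_{\max}^p\rb{2A_1\delta^t+B_1\sqfnorm{\nabla f(W^t)}+C_1}.
\end{equation*}

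Substituting both estimates into the descent lemma and subtracting $f^*$ gives
\begin{equation*}
\Exp{\delta^{t+1}\mid W^t}\le\rb{1+L\gamma^2\lambda_{\max}^pA_1}\delta^t-\gamma\lambda_{\min}^p\rb{1-\tfrac{L\gamma B_1\lambda_{\max}^p}{2\lambda_{\min}^p}}\sqfnorm{\nabla f(W^t)}+\tfrac{L\gamma^2\lambda_{\max}^pC_1}{2},
\end{equation*}
which is exactly the one-step inequality underlying the non-convex Theorem~\ref{th:B_LoRA_SGD}; in practice I would cite that derivation and only add what follows. The condition $\gamma\le\frac{1}{LB_1}(\lambda_{\max}^p/\lambda_{\min}^p)^{-1}$ makes the bracket multiplying $\sqfnorm{\nabla f(W^t)}$ at least $\tfrac12$, so that term is $\le-\tfrac{\gamma\lambda_{\min}^p}{2}\sqfnorm{\nabla f(W^t)}$; applying Assumption~\ref{as:pl_condition} in the form $\sqfnorm{\nabla f(W^t)}\ge2\mu\delta^t$ turns it into $-\gamma\mu\lambda_{\min}^p\delta^t$. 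Then $\gamma\le\frac{\mu\lambda_{\min}^p}{2LA_1\lambda_{\max}^p}$ absorbs the $L\gamma^2\lambda_{\max}^pA_1\delta^t$ term into half of this contraction, and $\gamma\le\frac{2}{\mu\lambda_{\min}^p}$ keeps the resulting factor nonnegative, giving $\Exp{\delta^{t+1}\mid W^t}\le(1-\tfrac{\gamma\mu\lambda_{\min}^p}{2})\delta^t+\tfrac{L\gamma^2\lambda_{\max}^pC_1}{2}$. Taking total expectation, iterating over $t=0,\dots,T-1$, and bounding the geometric tail $\sum_{k\ge0}(1-\tfrac{\gamma\mu\lambda_{\min}^p}{2})^k\le\frac{2}{\gamma\mu\lambda_{\min}^p}$ produces exactly the claimed bound $\left(1-\tfrac{\gamma\mu\lambda_{\min}^p}{2}\right)^T\Delta^0+\frac{\gamma LC_1}{\mu}\cdot\frac{\lambda_{\max}^p}{\lambda_{\min}^p}$.

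The main obstacle is the bookkeeping in the second step: one must use that $H_B^t,H_A^t$ are genuine orthogonal projectors and independent of $G^t$ conditional on $W^t$ (so the expectations factor), and bound the matrix-valued Frobenius inner products eigenvalue-wise through $\lamin{\Exp{H}}$ and $\lamax{\Exp{H}}$ while treating the Left/Right sketch cases symmetrically — this is precisely where the probability-weighted eigenvalues $\lambda_{\min}^p$ and $\lambda_{\max}^p$ enter. The remainder — the descent lemma, the ABC substitution, the fact that each of the three stepsize restrictions is activated by a single inequality in the chain, and the final contraction-with-additive-noise unrolling — is routine.
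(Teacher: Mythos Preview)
Your proposal is correct and follows essentially the same route as the paper: derive the one-step descent inequality \eqref{eq:useful_inequality_from_sgd_proof} from the non-convex analysis of \algname{Bernoulli-LoRA-SGD} (using idempotence and the spectral bounds on $\Exp{H_B},\Exp{H_A}$ to produce $\lambda_{\min}^p$ and $\lambda_{\max}^p$, together with Assumption~\ref{as:ABC_assumption}), then invoke the P\L{} inequality, use the three stepsize constraints exactly as you describe to obtain the contraction with additive noise, and unroll via a geometric sum. Your identification of which stepsize bound activates which inequality matches the paper line for line.
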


All other P\L-condition results are relegated to the Appendix.
\section{Experiments}\label{sec:exps}

To validate our theoretical findings, we conducted numerical experiments across multiple machine learning tasks. 
\subsection{Linear Regression with Non-convex Regularization.}
We begin with a controlled linear regression problem with non-convex regularization, split into pre-training and fine-tuning phases. We use $\widetilde{(\cdot)}$ for pre-training quantities and $\hat{(\cdot)}$ for fine-tuning. During the \textbf{pre-training phase}, we solve
\begin{eqnarray}\label{eq:linreg_task_pre}
\squeeze
\min_{x\in \R^n}\!\left\{
\widetilde{f}(x) \,\eqdef\,
\frac{1}{2\widetilde{m}}\norm{\widetilde{D}x-\widetilde{b}}_2^2
+\widetilde{\lambda}\sum_{j=1}^d \frac{x_j^2}{1+x_j^2}
\right\},
\end{eqnarray}

where $\widetilde{D}\in \R^{\widetilde{m}\times n}$, $\widetilde{b}\in \R^{\widetilde{m}}$, $\widetilde{m}=9\times10^4$, and $n=4096$. We set $\widetilde{\lambda}=\norm{\widetilde{D}}_2 \approx18.2$, giving $\widetilde{L}\approx54.7$. We optimize until $\norm{\nabla f(\widetilde{x}^*)}^2\le10^{-8}$ to obtain $\widetilde{x}^*$. For the \textbf{fine-tuning phase}, we use $\widetilde{x}^*$ as the initialization and then solve
\begin{eqnarray}\label{eq:linreg_task_ft}
\squeeze
\min_{x\in \R^n}\!\left\{
\hat{f}(x) \,\eqdef\,
\frac{1}{2\hat{m}}\norm{\hat{D}x-\hat{b}}_2^2
+\hat{\lambda}\sum_{j=1}^d \frac{x_j^2}{1+x_j^2}
\right\},
\end{eqnarray}
where $\hat{D}\in \R^{\hat{m}\times n}$, $\hat{b}\in \R^{\hat{m}}$, and $\hat{m}=10^4$. We keep $n=4096$ and set $\hat{\lambda}=\norm{\hat{D}}_2\approx4101.7$, yielding $\hat{L}\approx12305.3$. This second phase uses a dataset with notably different characteristics to mirror realistic domain shifts.

\textbf{Stochastic setting.}
We consider the stochastic setting, comparing \algname{RAC-LoRA-SGD}, \algname{Bernoulli-LoRA-SGD}, and \algname{Bernoulli-LoRA-PAGE}. In all experiments, we use a batch size of $100$, which corresponds to $1\%$ of the data.
\begin{figure}[h]
    \centering
        \includegraphics[width=0.49\textwidth]{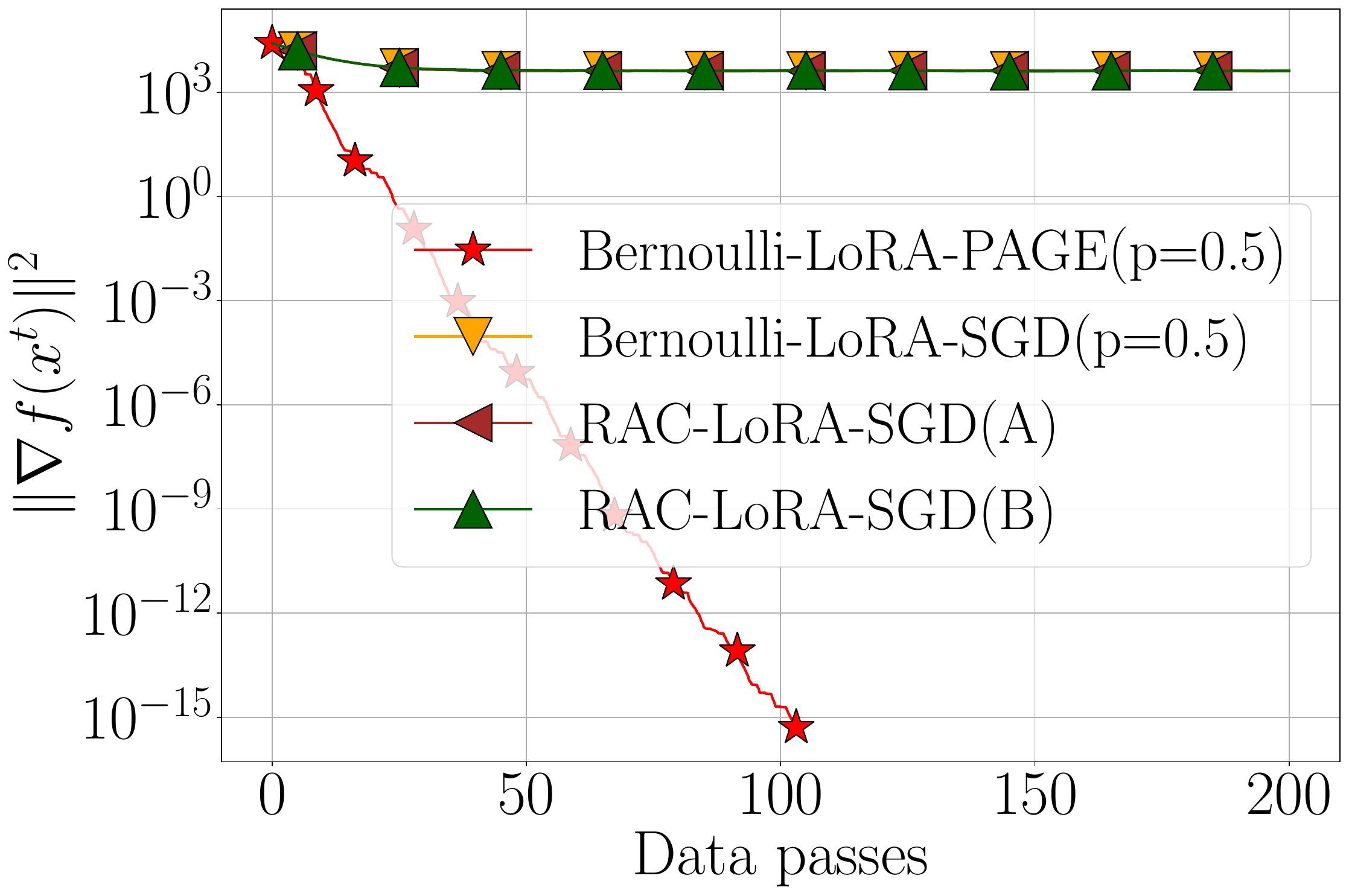}
    \caption{Comparison of \algname{RAC-LoRA-SGD}, \algname{Bernoulli-LoRA-SGD} and \algname{Bernoulli-LoRA-PAGE} on linear regression fine-tuning. Curves with $p=0.01,0.2,\dots$ indicate \algname{Bernoulli-LoRA} sampling parameters. \algname{RAC-LoRA-SGD(A)} trains $B$ after resampling $A$, while \algname{RAC-LoRA-SGD(B)} does the reverse. All methods use $\gamma = \nfr{c}{\hat{L}}$ with $c$ tuned individually.}
    \label{fig:bernoulli_lora_sgd}
\end{figure}

Figure~\ref{fig:bernoulli_lora_sgd} shows that \algname{Bernoulli-LoRA-PAGE} successfully reduces variance and converges to the target tolerance, whereas all \algname{SGD} variants stall at a certain accuracy. This underscores the practical advantage of \algname{Bernoulli-LoRA-PAGE} over the baseline \algname{RAC-LoRA-SGD} in the stochastic setting from an optimization standpoint.

\subsection{MLP on MNIST}
In this section, we evaluate \algname{Bernoulli-LoRA} against established baselines in parameter-efficient fine-tuning, following the setup of \citet{malinovsky2024randomizedasymmetricchainlora}. 
Source code of our experiments is available at \url{https://github.com/IgorSokoloff/Bernoulli-LoRA_experiments}.

\textbf{Methodology.}
We first pre-train a three-layer MLP on MNIST digits $0$--$4$ \citep{lecun1998gradient}, then adapt it with various \algname{LoRA}-type methods to classify digits $5$--$9$. Only unseen classes are used for evaluation. All adaptations use rank $r=1$ and train for $50$ epochs with \algname{AdamW} \citep{loshchilov2017decoupled} ($\beta_1=0.9$, $\beta_2=0.999$, $\epsilon=10^{-8}$), a fixed learning rate of $2\times 10^{-4}$, and batch size $128$. Each method is run $20$ times using different seeds, and Table~\ref{tab:mlp_comparison} reports the median accuracy (with standard deviation). For \algname{Bernoulli-LoRA}, we show the best median accuracy among all tested settings.
\begin{table}[ht]
\centering
\caption{Performance on MNIST classification using an MLP with rank $r$ and scaling $\alpha=1$. For \algname{AsymmLoRA} and \algname{RAC-LoRA}, only the zero-initialized matrix is trained.}
\begin{threeparttable}
\setlength{\tabcolsep}{1pt}
\begin{tabular}{lcccc}
\toprule
\textbf{Method} & $\mathcal{D}_A$ & $\mathcal{D}_B$ & \textbf{Acc. (test)} & \textbf{Train Params} \\
\midrule
\algname{FPFT}           & -         & -         & $99.5$           & 54,700       \\
\algname{LoRA}           & Gaussian  & Zero      & $85.69 \pm 1.60$  & 1K           \\
\algname{LoRA}           & Zero      & Gaussian  & $89.82 \pm 0.90$  & 1K           \\
\algname{COLA}           & Gaussian  & Zero      & $93.32 \pm 0.50$  & 1K           \\
\algname{COLA}           & Zero      & Gaussian  & $96.55 \pm 0.20$  & 1K           \\
\algname{AsymmLoRA}      & Gaussian  & Zero      & $64.04 \pm 6.90$  & 133          \\
\algname{AsymmLoRA}      & Zero      & Gaussian  & $74.52 \pm 7.20$  & 912          \\
\algname{RAC-LoRA}       & Gaussian  & Zero      & $93.02 \pm 0.50$  & 133          \\
\algname{RAC-LoRA}       & Zero      & Gaussian  & $96.49 \pm 0.20$  & 912          \\
\algname{Bernoulli-LoRA}\tnote{2} & Zero\tnote{1} & Gaussian & $96.46 \pm 0.17$ & $\approx{904}$    \\
\bottomrule
\end{tabular}
\begin{tablenotes}\footnotesize
\item[1] Although \algname{Bernoulli-LoRA} prescribes probabilistic selection from the first iteration, a deterministic assignment of fixed and trainable matrices at initialization yielded better performance.
\item[2] Achieved with $p=0.99$, giving an expected trainable parameter count $p\cdot 912 + (1-p)\cdot 133 \approx 904$. Here, $912$ and $133$ are the parameter counts for matrices $A$ and $B$, respectively.
\end{tablenotes}
\end{threeparttable}
\label{tab:mlp_comparison}
\end{table}

\textbf{Discussion.}
From Table~\ref{tab:mlp_comparison}, standard \algname{LoRA} attains roughly $86\%$ of full-parameter fine-tuning (FPFT) accuracy, indicating room for improvements via chaining. \algname{COLA} improves upon vanilla \algname{LoRA}, though both lack formal convergence guarantees. \algname{AsymmLoRA} approximates \algname{LoRA} in practice \citep{sun2024improvingloraprivacypreservingfederated} but similarly lacks convergence analysis, whereas \algname{RAC-LoRA} and \algname{Bernoulli-LoRA} both boost accuracy and have theoretical backing. Notably, \algname{Bernoulli-LoRA} matches \algname{RAC-LoRA} in generalization and also guarantees convergence. An additional benefit is that \algname{RAC-LoRA} and \algname{Bernoulli-LoRA} each train only one matrix per \algname{LoRA} block, whereas \algname{COLA} needs two. In \algname{RAC-LoRA}, either $A$ or $B$ is trained deterministically; in \algname{Bernoulli-LoRA}, the choice is probabilistic, yielding an expected $p m r + (1-p) r n$ trainable parameters. This advantage is especially valuable in resource-constrained settings such as Federated Learning.

Detailed configurations, hardware specs, and dataset descriptions are provided in Appendix~\ref{sec:experiments_extra}.

\section*{Impact Statement}
This paper presents work whose goal is to advance the field
of Machine Learning. There are many potential societal
consequences of our work, none which we feel must be
specifically highlighted here.


\section*{Acknowledgements}
The research reported in this publication was supported by funding from King Abdullah University of Science and Technology (KAUST): i) KAUST Baseline Research Scheme, ii) CRG Grant ORFS-CRG12-2024-6460, iii) Center of Excellence for Generative AI, under award number 5940, and iv) SDAIA-KAUST Center of Excellence in Artificial Intelligence and Data Science.
\newpage
\bibliographystyle{plainnat}
\bibliography{arXiv_p-lora}

\begin{thebibliography}{82}
\providecommand{\natexlab}[1]{#1}
\providecommand{\url}[1]{\texttt{#1}}
\expandafter\ifx\csname urlstyle\endcsname\relax
  \providecommand{\doi}[1]{doi: #1}\else
  \providecommand{\doi}{doi: \begingroup \urlstyle{rm}\Url}\fi

\bibitem[Aghajanyan et~al.(2020)Aghajanyan, Zettlemoyer, and
  Gupta]{aghajanyan2020intrinsicdimensionalityexplainseffectiveness}
Armen Aghajanyan, Luke Zettlemoyer, and Sonal Gupta.
\newblock Intrinsic dimensionality explains the effectiveness of language model
  fine-tuning.
\newblock \emph{arXiv preprint arXiv:2012.13255}, 2020.

\bibitem[Alistarh et~al.(2017)Alistarh, Grubic, Li, Tomioka, and
  Vojnovic]{AlistrahQGD}
Dan Alistarh, Demjan Grubic, Jerry Li, Ryota Tomioka, and Milan Vojnovic.
\newblock {QSGD}: Communication-efficient {SGD} via gradient quantization and
  encoding.
\newblock \emph{Advances in Neural Information Processing Systems}, 30, 2017.

\bibitem[Alistarh et~al.(2018)Alistarh, Hoefler, Johansson, Konstantinov,
  Khirirat, and Renggli]{alistarh2018convergence}
Dan Alistarh, Torsten Hoefler, Mikael Johansson, Nikola Konstantinov, Sarit
  Khirirat, and C{\'e}dric Renggli.
\newblock The convergence of sparsified gradient methods.
\newblock \emph{Advances in Neural Information Processing Systems}, 31, 2018.

\bibitem[Beck(2017)]{beck2017first}
Amir Beck.
\newblock \emph{First-order methods in optimization}.
\newblock SIAM, 2017.

\bibitem[Biderman et~al.(2024)Biderman, Portes, Ortiz, Paul, Greengard,
  Jennings, King, Havens, Chiley, Frankle,
  et~al.]{biderman2024loralearnsforgets}
Dan Biderman, Jacob Portes, Jose Javier~Gonzalez Ortiz, Mansheej Paul, Philip
  Greengard, Connor Jennings, Daniel King, Sam Havens, Vitaliy Chiley, Jonathan
  Frankle, et~al.
\newblock {LoRa} learns less and forgets less.
\newblock \emph{arXiv preprint arXiv:2405.09673}, 2024.

\bibitem[Bottou et~al.(2018)Bottou, Curtis, and
  Nocedal]{bottou2018optimization}
L{\'e}on Bottou, Frank~E Curtis, and Jorge Nocedal.
\newblock Optimization methods for large-scale machine learning.
\newblock \emph{SIAM review}, 60\penalty0 (2):\penalty0 223--311, 2018.

\bibitem[Brown et~al.(2020)Brown, Mann, Ryder, Subbiah, Kaplan, Dhariwal,
  Neelakantan, Shyam, Sastry, Askell, et~al.]{LMFewShot}
Tom Brown, Benjamin Mann, Nick Ryder, Melanie Subbiah, Jared~D Kaplan, Prafulla
  Dhariwal, Arvind Neelakantan, Pranav Shyam, Girish Sastry, Amanda Askell,
  et~al.
\newblock Language models are few-shot learners.
\newblock \emph{Advances in Neural Information Processing Systems},
  33:\penalty0 1877--1901, 2020.

\bibitem[Bubeck(2015)]{bubeck2015convex}
S{\'e}bastien Bubeck.
\newblock Convex optimization: Algorithms and complexity.
\newblock \emph{Foundations and Trends{\textregistered} in Machine Learning},
  8\penalty0 (3-4):\penalty0 231--357, 2015.

\bibitem[Cherniuk et~al.(2023)Cherniuk, Mikhalev, and
  Oseledets]{cherniuk2023run}
Daria Cherniuk, Aleksandr Mikhalev, and Ivan Oseledets.
\newblock Run lora run: Faster and lighter lora implementations.
\newblock \emph{arXiv preprint arXiv:2312.03415}, 2023.

\bibitem[Cutkosky and Orabona(2019)]{STORM}
Ashok Cutkosky and Francesco Orabona.
\newblock Momentum-based variance reduction in non-convex {SGD}.
\newblock \emph{Advances in Neural Information Processing Systems}, 32, 2019.

\bibitem[Demidovich et~al.(2023{\natexlab{a}})Demidovich, Malinovsky, Shulgin,
  and Richt{\'a}rik]{demidovich2023mastmodelagnosticsparsifiedtraining}
Yury Demidovich, Grigory Malinovsky, Egor Shulgin, and Peter Richt{\'a}rik.
\newblock {MAST}: Model-agnostic sparsified training.
\newblock \emph{arXiv preprint arXiv:2311.16086}, 2023{\natexlab{a}}.

\bibitem[Demidovich et~al.(2023{\natexlab{b}})Demidovich, Malinovsky, Sokolov,
  and Richt{\'a}rik]{demidovich2023guide}
Yury Demidovich, Grigory Malinovsky, Igor Sokolov, and Peter Richt{\'a}rik.
\newblock A guide through the zoo of biased {SGD}.
\newblock \emph{Advances in Neural Information Processing Systems},
  36:\penalty0 23158--23171, 2023{\natexlab{b}}.

\bibitem[Demidovich et~al.(2024{\natexlab{a}})Demidovich, Malinovsky, and
  Richt{\'a}rik]{demidovich2024streamliningriemannianrealmefficient}
Yury Demidovich, Grigory Malinovsky, and Peter Richt{\'a}rik.
\newblock Streamlining in the riemannian realm: Efficient riemannian
  optimization with loopless variance reduction.
\newblock \emph{arXiv preprint arXiv:2403.06677}, 2024{\natexlab{a}}.

\bibitem[Demidovich et~al.(2024{\natexlab{b}})Demidovich, Ostroukhov,
  Malinovsky, Horv{\'a}th, Tak{\'a}{\v{c}}, Richt{\'a}rik, and
  Gorbunov]{demidovich2024methodslocalstepsrandom}
Yury Demidovich, Petr Ostroukhov, Grigory Malinovsky, Samuel Horv{\'a}th,
  Martin Tak{\'a}{\v{c}}, Peter Richt{\'a}rik, and Eduard Gorbunov.
\newblock Methods with local steps and random reshuffling for generally smooth
  non-convex federated optimization.
\newblock \emph{arXiv preprint arXiv:2412.02781}, 2024{\natexlab{b}}.

\bibitem[Devlin et~al.(2019)Devlin, Chang, Lee, and Toutanova]{devlin2019}
Jacob Devlin, Ming{-}Wei Chang, Kenton Lee, and Kristina Toutanova.
\newblock {BERT:} pre-training of deep bidirectional transformers for language
  understanding.
\newblock In Jill Burstein, Christy Doran, and Thamar Solorio, editors,
  \emph{Proceedings of the 2019 Conference of the North American Chapter of the
  Association for Computational Linguistics: Human Language Technologies,
  {NAACL-HLT} 2019, Minneapolis, MN, USA, June 2-7, 2019, Volume 1 (Long and
  Short Papers)}, pages 4171--4186. Association for Computational Linguistics,
  2019.
\newblock \doi{10.18653/V1/N19-1423}.
\newblock URL \url{https://doi.org/10.18653/v1/n19-1423}.

\bibitem[Drusvyatskiy(2020)]{drusvyatskiy2020convex}
Dmitriy Drusvyatskiy.
\newblock Convex analysis and nonsmooth optimization.
\newblock \emph{University Lecture}, 2020.

\bibitem[Duchi(2018)]{duchi2018introductory}
John~C Duchi.
\newblock Introductory lectures on stochastic optimization.
\newblock \emph{The mathematics of data}, 25:\penalty0 99--186, 2018.

\bibitem[Fang et~al.(2018)Fang, Li, Lin, and Zhang]{Spider}
Cong Fang, Chris~Junchi Li, Zhouchen Lin, and Tong Zhang.
\newblock {SPIDER:} near-optimal non-convex optimization via stochastic
  path-integrated differential estimator.
\newblock \emph{Advances in Neural Information Processing Systems}, 31, 2018.

\bibitem[Fatkhullin et~al.(2021)Fatkhullin, Sokolov, Gorbunov, Li, and
  Richt{\'a}rik]{fatkhullin2021ef21}
Ilyas Fatkhullin, Igor Sokolov, Eduard Gorbunov, Zhize Li, and Peter
  Richt{\'a}rik.
\newblock {EF21} with bells \& whistles: Practical algorithmic extensions of
  modern error feedback.
\newblock \emph{arXiv preprint arXiv:2110.03294}, 2021.

\bibitem[Gorbunov et~al.(2021)Gorbunov, Burlachenko, Li, and
  Richt{\'a}rik]{MARINA}
Eduard Gorbunov, Konstantin~P Burlachenko, Zhize Li, and Peter Richt{\'a}rik.
\newblock {MARINA}: Faster non-convex distributed learning with compression.
\newblock In \emph{International Conference on Machine Learning}, pages
  3788--3798. PMLR, 2021.

\bibitem[Gower et~al.(2019)Gower, Loizou, Qian, Sailanbayev, Shulgin, and
  Richt{\'a}rik]{GowerSGD}
Robert~Mansel Gower, Nicolas Loizou, Xun Qian, Alibek Sailanbayev, Egor
  Shulgin, and Peter Richt{\'a}rik.
\newblock {SGD}: General analysis and improved rates.
\newblock In \emph{International conference on machine learning}, pages
  5200--5209. PMLR, 2019.

\bibitem[Goyal et~al.(2017)Goyal, Doll{\'a}r, Girshick, Noordhuis, Wesolowski,
  Kyrola, Tulloch, Jia, and He]{Goyal2017AccurateLM}
Priya Goyal, Piotr Doll{\'a}r, Ross Girshick, Pieter Noordhuis, Lukasz
  Wesolowski, Aapo Kyrola, Andrew Tulloch, Yangqing Jia, and Kaiming He.
\newblock Accurate, large minibatch {SGD}: Training {ImageNet} in 1 hour.
\newblock \emph{arXiv preprint arXiv:1706.02677}, 2017.

\bibitem[Hall(2013)]{hall2013lie}
Brian~C Hall.
\newblock Lie groups, lie algebras, and representations.
\newblock In \emph{Quantum Theory for Mathematicians}, pages 333--366.
  Springer, 2013.

\bibitem[Han et~al.(2024)Han, Gao, Liu, Zhang, and
  Zhang]{han2024parameterefficientfinetuninglargemodels}
Zeyu Han, Chao Gao, Jinyang Liu, Jeff Zhang, and Sai~Qian Zhang.
\newblock Parameter-efficient fine-tuning for large models: A comprehensive
  survey.
\newblock \emph{arXiv preprint arXiv:2403.14608}, 2024.

\bibitem[Hayou et~al.(2024)Hayou, Ghosh, and
  Yu]{hayou2024impactinitializationlorafinetuning}
Soufiane Hayou, Nikhil Ghosh, and Bin Yu.
\newblock The impact of initialization on {LoRA} finetuning dynamics.
\newblock \emph{Advances in Neural Information Processing Systems},
  37:\penalty0 117015--117040, 2024.

\bibitem[He et~al.(2021)He, Zhou, Ma, Berg-Kirkpatrick, and
  Neubig]{he2022unifiedviewparameterefficienttransfer}
Junxian He, Chunting Zhou, Xuezhe Ma, Taylor Berg-Kirkpatrick, and Graham
  Neubig.
\newblock Towards a unified view of parameter-efficient transfer learning.
\newblock \emph{arXiv preprint arXiv:2110.04366}, 2021.

\bibitem[Horv{\'o}th et~al.(2022)Horv{\'o}th, Ho, Horvath, Sahu, Canini, and
  Richt{\'a}rik]{Horvth2019NaturalCF}
Samuel Horv{\'o}th, Chen-Yu Ho, Ludovit Horvath, Atal~Narayan Sahu, Marco
  Canini, and Peter Richt{\'a}rik.
\newblock Natural compression for distributed deep learning.
\newblock In \emph{Mathematical and Scientific Machine Learning}, pages
  129--141. PMLR, 2022.

\bibitem[Hu et~al.(2022)Hu, Shen, Wallis, Allen-Zhu, Li, Wang, Wang, Chen,
  et~al.]{hu2021loralowrankadaptationlarge}
Edward~J Hu, Yelong Shen, Phillip Wallis, Zeyuan Allen-Zhu, Yuanzhi Li, Shean
  Wang, Lu~Wang, Weizhu Chen, et~al.
\newblock {LoRA}: Low-rank adaptation of large language models.
\newblock \emph{ICLR}, 1\penalty0 (2):\penalty0 3, 2022.

\bibitem[J~Reddi et~al.(2015)J~Reddi, Hefny, Sra, Poczos, and Smola]{ReddiVR}
Sashank J~Reddi, Ahmed Hefny, Suvrit Sra, Barnabas Poczos, and Alexander~J
  Smola.
\newblock On variance reduction in stochastic gradient descent and its
  asynchronous variants.
\newblock \emph{Advances in Neural Information Processing Systems}, 28, 2015.

\bibitem[Kairouz et~al.(2019)Kairouz, McMahan, Avent, Bellet, Bennis, Bhagoji,
  Bonawitz, Charles, Cormode, Cummings, D'Oliveira, Rouayheb, Evans, Gardner,
  Garrett, Gasc{\'o}n, Ghazi, Gibbons, Gruteser, Harchaoui, He, He, Huo,
  Hutchinson, Hsu, Jaggi, Javidi, Joshi, Khodak, Kone\v{c}n{\'y}, Korolova,
  Koushanfar, Koyejo, Lepoint, Liu, Mittal, Mohri, Nock, {\"O}zg{\"u}r, Pagh,
  Raykova, Qi, Ramage, Raskar, Song, Song, Stich, Sun, Suresh, Tram{\`e}r,
  Vepakomma, Wang, Xiong, Xu, Yang, Yu, Yu, and Zhao]{Kairouz2019AdvancesAO}
Peter Kairouz, H.~B. McMahan, Brendan Avent, Aur{\'e}lien Bellet, Mehdi Bennis,
  Arjun~Nitin Bhagoji, Keith Bonawitz, Zachary~B. Charles, Graham Cormode,
  Rachel Cummings, Rafael G.~L. D'Oliveira, Salim Y.~El Rouayheb, David Evans,
  Josh Gardner, Zachary Garrett, Adri{\`a} Gasc{\'o}n, Badih Ghazi, Phillip~B.
  Gibbons, Marco Gruteser, Za{\"i}d Harchaoui, Chaoyang He, Lie He, Zhouyuan
  Huo, Ben Hutchinson, Justin Hsu, Martin Jaggi, Tara Javidi, Gauri Joshi,
  Mikhail Khodak, Jakub Kone\v{c}n{\'y}, Aleksandra Korolova, Farinaz
  Koushanfar, Oluwasanmi Koyejo, Tancr{\`e}de Lepoint, Yang Liu, Prateek
  Mittal, Mehryar Mohri, Richard Nock, Ayfer {\"O}zg{\"u}r, R.~Pagh, Mariana
  Raykova, Hang Qi, Daniel Ramage, Ramesh Raskar, Dawn~Xiaodong Song, Weikang
  Song, Sebastian~U. Stich, Ziteng Sun, Ananda~Theertha Suresh, Florian
  Tram{\`e}r, Praneeth Vepakomma, Jianyu Wang, Li~Xiong, Zheng Xu, Qiang Yang,
  Felix~X. Yu, Han Yu, and Sen Zhao.
\newblock Advances and open problems in federated learning.
\newblock \emph{Found. Trends Mach. Learn.}, 14:\penalty0 1--210, 2019.

\bibitem[Karagulyan et~al.(2024)Karagulyan, Shulgin, Sadiev, and
  Richt{\'a}rik]{karagulyan2024spamstochasticproximalpoint}
Avetik Karagulyan, Egor Shulgin, Abdurakhmon Sadiev, and Peter Richt{\'a}rik.
\newblock {SPAM}: Stochastic proximal point method with momentum variance
  reduction for non-convex cross-device federated learning.
\newblock \emph{arXiv preprint arXiv:2405.20127}, 2024.

\bibitem[Khaled and Richt{\'a}rik(2023)]{khaled2022better}
Ahmed Khaled and Peter Richt{\'a}rik.
\newblock Better theory for {SGD} in the nonconvex world.
\newblock \emph{Transactions on Machine Learning Research}, 2023.
\newblock ISSN 2835-8856.
\newblock URL \url{https://openreview.net/forum?id=AU4qHN2VkS}.
\newblock Survey Certification.

\bibitem[Khaled et~al.(2023)Khaled, Sebbouh, Loizou, Gower, and
  Richt{\'a}rik]{KhaledUnified}
Ahmed Khaled, Othmane Sebbouh, Nicolas Loizou, Robert~M Gower, and Peter
  Richt{\'a}rik.
\newblock Unified analysis of stochastic gradient methods for composite convex
  and smooth optimization.
\newblock \emph{Journal of Optimization Theory and Applications}, 199\penalty0
  (2):\penalty0 499--540, 2023.

\bibitem[Khirirat et~al.(2024)Khirirat, Sadiev, Riabinin, Gorbunov, and
  Richt{\'a}rik]{khirirat2024errorfeedbackl0l1smoothnessnormalization}
Sarit Khirirat, Abdurakhmon Sadiev, Artem Riabinin, Eduard Gorbunov, and Peter
  Richt{\'a}rik.
\newblock Error feedback under {$(L\_0, L\_1)$}-smoothness: Normalization and
  momentum.
\newblock \emph{arXiv preprint arXiv:2410.16871}, 2024.

\bibitem[Khodak et~al.(2021)Khodak, Tu, Li, Li, Balcan, Smith, and
  Talwalkar]{Khodak2021FederatedHT}
Mikhail Khodak, Renbo Tu, Tian Li, Liam Li, Maria-Florina~F Balcan, Virginia
  Smith, and Ameet Talwalkar.
\newblock Federated hyperparameter tuning: Challenges, baselines, and
  connections to weight-sharing.
\newblock \emph{Advances in Neural Information Processing Systems},
  34:\penalty0 19184--19197, 2021.

\bibitem[Kolesnikov et~al.(2020)Kolesnikov, Beyer, Zhai, Puigcerver, Yung,
  Gelly, and Houlsby]{Kolesnikov2019BigT}
Alexander Kolesnikov, Lucas Beyer, Xiaohua Zhai, Joan Puigcerver, Jessica Yung,
  Sylvain Gelly, and Neil Houlsby.
\newblock Big transfer (bit): General visual representation learning.
\newblock In \emph{Computer Vision--ECCV 2020: 16th European Conference,
  Glasgow, UK, August 23--28, 2020, Proceedings, Part V 16}, pages 491--507.
  Springer, 2020.

\bibitem[Kone{\v{c}}n{\`y} et~al.(2016)Kone{\v{c}}n{\`y}, McMahan, Yu,
  Richt{\'a}rik, Suresh, and Bacon]{Konecn2016FederatedLS}
Jakub Kone{\v{c}}n{\`y}, H~Brendan McMahan, Felix~X Yu, Peter Richt{\'a}rik,
  Ananda~Theertha Suresh, and Dave Bacon.
\newblock Federated learning: Strategies for improving communication
  efficiency.
\newblock \emph{arXiv preprint arXiv:1610.05492}, 2016.

\bibitem[Kone\v{c}n\'{y} et~al.(2016)Kone\v{c}n\'{y}, McMahan, Ramage, and
  Richt\'{a}rik]{FEDOPT}
Jakub Kone\v{c}n\'{y}, H.~Brendan McMahan, Daniel Ramage, and Peter
  Richt\'{a}rik.
\newblock Federated optimization: distributed machine learning for on-device
  intelligence.
\newblock \emph{arXiv:1610.02527}, 2016.

\bibitem[Kuang et~al.(2024)Kuang, Qian, Li, Chen, Gao, Pan, Xie, Li, Ding, and
  Zhou]{Kuang2024}
Weirui Kuang, Bingchen Qian, Zitao Li, Daoyuan Chen, Dawei Gao, Xuchen Pan,
  Yuexiang Xie, Yaliang Li, Bolin Ding, and Jingren Zhou.
\newblock {FederatedScope-LLM}: A comprehensive package for fine-tuning large
  language models in federated learning.
\newblock In \emph{Proceedings of the 30th ACM SIGKDD Conference on Knowledge
  Discovery and Data Mining}, pages 5260--5271, 2024.

\bibitem[Lan(2020)]{lan2020first}
Guanghui Lan.
\newblock \emph{First-order and stochastic optimization methods for machine
  learning}, volume~1.
\newblock Springer, 2020.

\bibitem[Le~Scao et~al.(2023)Le~Scao, Fan, Akiki, Pavlick, Ili{\'c}, Hesslow,
  Castagn{\'e}, Luccioni, Yvon, Gall{\'e}, et~al.]{le2023bloom}
Teven Le~Scao, Angela Fan, Christopher Akiki, Ellie Pavlick, Suzana Ili{\'c},
  Daniel Hesslow, Roman Castagn{\'e}, Alexandra~Sasha Luccioni, Fran{\c{c}}ois
  Yvon, Matthias Gall{\'e}, et~al.
\newblock Bloom: A 176b-parameter open-access multilingual language model.
\newblock 2023.

\bibitem[LeCun et~al.(1998)LeCun, Bottou, Bengio, and
  Haffner]{lecun1998gradient}
Yann LeCun, L{\'e}on Bottou, Yoshua Bengio, and Patrick Haffner.
\newblock Gradient-based learning applied to document recognition.
\newblock \emph{Proceedings of the IEEE}, 86\penalty0 (11):\penalty0
  2278--2324, 1998.

\bibitem[Li(2020)]{ChuanLi}
Chuan Li.
\newblock Demystifying gpt-3 language model: A technical overview, 2020.
\newblock URL \url{https://lambdalabs.com/blog/demystifying-gpt-3}.

\bibitem[Li et~al.(2018)Li, Farkhoor, Liu, and
  Yosinski]{li2018measuringintrinsicdimensionobjective}
Chunyuan Li, Heerad Farkhoor, Rosanne Liu, and Jason Yosinski.
\newblock Measuring the intrinsic dimension of objective landscapes.
\newblock \emph{arXiv preprint arXiv:1804.08838}, 2018.

\bibitem[Li et~al.(2021)Li, Bao, Zhang, and Richt{\'a}rik]{Li2020PAGE}
Zhize Li, Hongyan Bao, Xiangliang Zhang, and Peter Richt{\'a}rik.
\newblock {PAGE}: A simple and optimal probabilistic gradient estimator for
  nonconvex optimization.
\newblock In \emph{International conference on machine learning}, pages
  6286--6295. PMLR, 2021.

\bibitem[Lojasiewicz(1963)]{lojasiewicz1963topological}
Stanislaw Lojasiewicz.
\newblock A topological property of real analytic subsets.
\newblock \emph{Coll. du CNRS, Les {\'e}quations aux d{\'e}riv{\'e}es
  partielles}, 117\penalty0 (87-89):\penalty0 2, 1963.

\bibitem[Loshchilov(2017)]{loshchilov2017decoupled}
I~Loshchilov.
\newblock Decoupled weight decay regularization.
\newblock \emph{arXiv preprint arXiv:1711.05101}, 2017.

\bibitem[Malinovsky et~al.(2022)Malinovsky, Yi, and
  Richt{\'a}rik]{malinovsky2022variance}
Grigory Malinovsky, Kai Yi, and Peter Richt{\'a}rik.
\newblock Variance reduced {ProxSkip}: Algorithm, theory and application to
  federated learning.
\newblock \emph{Advances in Neural Information Processing Systems},
  35:\penalty0 15176--15189, 2022.

\bibitem[Malinovsky et~al.(2024)Malinovsky, Michieli, Hammoud, Ceritli,
  Elesedy, Ozay, and
  Richt{\'a}rik]{malinovsky2024randomizedasymmetricchainlora}
Grigory Malinovsky, Umberto Michieli, Hasan Abed Al~Kader Hammoud, Taha
  Ceritli, Hayder Elesedy, Mete Ozay, and Peter Richt{\'a}rik.
\newblock Randomized asymmetric chain of {LoRA}: The first meaningful
  theoretical framework for low-rank adaptation.
\newblock \emph{arXiv preprint arXiv:2410.08305}, 2024.

\bibitem[Mao et~al.(2025)Mao, Ge, Fan, Xu, Mi, Hu, and Gao]{MaoSurvey}
Yuren Mao, Yuhang Ge, Yijiang Fan, Wenyi Xu, Yu~Mi, Zhonghao Hu, and Yunjun
  Gao.
\newblock A survey on lora of large language models.
\newblock \emph{Frontiers of Computer Science}, 19\penalty0 (7):\penalty0
  197605, 2025.
\newblock \doi{10.1007/s11704-024-40663-9}.
\newblock URL
  \url{https://journal.hep.com.cn/fcs/EN/abstract/article_47717.shtml}.

\bibitem[McMahan et~al.(2016)McMahan, Moore, Ramage, Hampson, and
  y~Arcas]{McMahan2016CommunicationEfficientLO}
H.~B. McMahan, Eider Moore, Daniel Ramage, Seth Hampson, and Blaise~Ag{\"u}era
  y~Arcas.
\newblock Communication-efficient learning of deep networks from decentralized
  data.
\newblock In \emph{International Conference on Artificial Intelligence and
  Statistics}, 2016.

\bibitem[Meng et~al.(2024)Meng, Wang, and
  Zhang]{meng2024pissaprincipalsingularvalues}
Fanxu Meng, Zhaohui Wang, and Muhan Zhang.
\newblock {PiSSA}: Principal singular values and singular vectors adaptation of
  large language models.
\newblock \emph{Advances in Neural Information Processing Systems},
  37:\penalty0 121038--121072, 2024.

\bibitem[Nemirovski et~al.(2009)Nemirovski, Juditsky, Lan, and
  Shapiro]{Nemirovski2009}
Arkadi Nemirovski, Anatoli Juditsky, Guanghui Lan, and Alexander Shapiro.
\newblock Robust stochastic approximation approach to stochastic programming.
\newblock \emph{SIAM Journal on optimization}, 19\penalty0 (4):\penalty0
  1574--1609, 2009.

\bibitem[Nesterov(2013)]{Nesterov2013}
Yurii Nesterov.
\newblock \emph{Introductory lectures on convex optimization: A basic course},
  volume~87.
\newblock Springer Science \& Business Media, 2013.

\bibitem[Nesterov(2018)]{nesterov2018lectures}
Yurii Nesterov.
\newblock \emph{Lectures on convex optimization}, volume 137.
\newblock Springer, 2018.

\bibitem[Panferov et~al.(2024)Panferov, Demidovich, Rammal, and
  Richt{\'a}rik]{panferov2024correlatedquantizationfasternonconvex}
Andrei Panferov, Yury Demidovich, Ahmad Rammal, and Peter Richt{\'a}rik.
\newblock Correlated quantization for faster nonconvex distributed
  optimization.
\newblock \emph{arXiv preprint arXiv:2401.05518}, 2024.

\bibitem[Peters et~al.(2018)Peters, Neumann, Iyyer, Gardner, Clark, Lee, and
  Zettlemoyer]{peters2018}
Matthew~E. Peters, Mark Neumann, Mohit Iyyer, Matt Gardner, Christopher Clark,
  Kenton Lee, and Luke Zettlemoyer.
\newblock Deep contextualized word representations.
\newblock In Marilyn~A. Walker, Heng Ji, and Amanda Stent, editors,
  \emph{Proceedings of the 2018 Conference of the North American Chapter of the
  Association for Computational Linguistics: Human Language Technologies,
  {NAACL-HLT} 2018, New Orleans, Louisiana, USA, June 1-6, 2018, Volume 1 (Long
  Papers)}, pages 2227--2237. Association for Computational Linguistics, 2018.
\newblock \doi{10.18653/V1/N18-1202}.
\newblock URL \url{https://doi.org/10.18653/v1/n18-1202}.

\bibitem[Pham et~al.(2020)Pham, Nguyen, Phan, and Tran-Dinh]{Sarah}
Nhan~H Pham, Lam~M Nguyen, Dzung~T Phan, and Quoc Tran-Dinh.
\newblock {ProxSARAH}: An efficient algorithmic framework for stochastic
  composite nonconvex optimization.
\newblock \emph{Journal of Machine Learning Research}, 21\penalty0
  (110):\penalty0 1--48, 2020.

\bibitem[Polyak(1963)]{Polyak1963GradientMF}
Boris Polyak.
\newblock Gradient methods for the minimisation of functionals.
\newblock \emph{Ussr Computational Mathematics and Mathematical Physics},
  3:\penalty0 864--878, 1963.

\bibitem[Radford et~al.(2019)Radford, Wu, Child, Luan, Amodei, Sutskever,
  et~al.]{Radford2019LanguageMA}
Alec Radford, Jeffrey Wu, Rewon Child, David Luan, Dario Amodei, Ilya
  Sutskever, et~al.
\newblock Language models are unsupervised multitask learners.
\newblock \emph{OpenAI blog}, 1\penalty0 (8):\penalty0 9, 2019.

\bibitem[Richt{\'a}rik and Tak{\'a}{\v{c}}(2016)]{coordinatedescent2016}
Peter Richt{\'a}rik and Martin Tak{\'a}{\v{c}}.
\newblock Parallel coordinate descent methods for big data optimization.
\newblock \emph{Mathematical Programming}, 156:\penalty0 433--484, 2016.

\bibitem[Richt{\'a}rik et~al.(2021)Richt{\'a}rik, Sokolov, and
  Fatkhullin]{richtarik2021ef21}
Peter Richt{\'a}rik, Igor Sokolov, and Ilyas Fatkhullin.
\newblock {EF21}: A new, simpler, theoretically better, and practically faster
  error feedback.
\newblock \emph{Advances in Neural Information Processing Systems},
  34:\penalty0 4384--4396, 2021.

\bibitem[Richt{\'a}rik et~al.(2022)Richt{\'a}rik, Sokolov, Gasanov, Fatkhullin,
  Li, and Gorbunov]{richtarik20223pc}
Peter Richt{\'a}rik, Igor Sokolov, Elnur Gasanov, Ilyas Fatkhullin, Zhize Li,
  and Eduard Gorbunov.
\newblock {3PC}: Three point compressors for communication-efficient
  distributed training and a better theory for lazy aggregation.
\newblock In \emph{International Conference on Machine Learning}, pages
  18596--18648. PMLR, 2022.

\bibitem[Richt{\'a}rik et~al.(2024)Richt{\'a}rik, Sadiev, and
  Demidovich]{richtarik2024unified}
Peter Richt{\'a}rik, Abdurakhmon Sadiev, and Yury Demidovich.
\newblock A unified theory of stochastic proximal point methods without
  smoothness.
\newblock \emph{arXiv preprint arXiv:2405.15941}, 2024.

\bibitem[Robbins and Monro(1951)]{robbins1951stochastic}
Herbert Robbins and Sutton Monro.
\newblock A stochastic approximation method.
\newblock \emph{The annals of mathematical statistics}, pages 400--407, 1951.

\bibitem[Sadiev et~al.(2024)Sadiev, Malinovsky, Gorbunov, Sokolov, Khaled,
  Burlachenko, and Richt{\'a}rik]{sadiev2024dont}
Abdurakhmon Sadiev, Grigory Malinovsky, Eduard Gorbunov, Igor Sokolov, Ahmed
  Khaled, Konstantin~Pavlovich Burlachenko, and Peter Richt{\'a}rik.
\newblock Don't compress gradients in random reshuffling: Compress gradient
  differences.
\newblock In \emph{The Thirty-eighth Annual Conference on Neural Information
  Processing Systems}, 2024.
\newblock URL \url{https://openreview.net/forum?id=CzPtBzgfae}.

\bibitem[Schur(2024)]{schur2024neue}
Issai Schur.
\newblock Neue begr{\"u}ndung der theorie der gruppencharaktere.
\newblock 2024.

\bibitem[Seide et~al.(2014)Seide, Fu, Droppo, Li, and Yu]{Seide2014}
Frank Seide, Hao Fu, Jasha Droppo, Gang Li, and Dong Yu.
\newblock 1-bit stochastic gradient descent and its application to
  data-parallel distributed training of speech {DNN}s.
\newblock In \emph{Fifteenth Annual Conference of the International Speech
  Communication Association}, 2014.

\bibitem[Shang et~al.(2018)Shang, Zhou, Liu, Cheng, Tsang, Zhang, Tao, and
  Jiao]{Shang2018VRSGDAS}
Fanhua Shang, Kaiwen Zhou, Hongying Liu, James Cheng, Ivor~W Tsang, Lijun
  Zhang, Dacheng Tao, and Licheng Jiao.
\newblock Vr-sgd: A simple stochastic variance reduction method for machine
  learning.
\newblock \emph{IEEE Transactions on Knowledge and Data Engineering},
  32\penalty0 (1):\penalty0 188--202, 2018.

\bibitem[Sokolov and
  Richt{\'a}rik(2024)]{sokolov2024marinapsuperiorperformancenonsmooth}
Igor Sokolov and Peter Richt{\'a}rik.
\newblock {MARINA-P}: Superior performance in non-smooth federated optimization
  with adaptive stepsizes.
\newblock \emph{arXiv preprint arXiv:2412.17082}, 2024.

\bibitem[Stich et~al.(2018)Stich, Cordonnier, and
  Jaggi]{stich2018sparsifiedsgdmemory}
Sebastian~U Stich, Jean-Baptiste Cordonnier, and Martin Jaggi.
\newblock Sparsified {SGD} with memory.
\newblock \emph{Advances in Neural Information Processing Systems}, 31, 2018.

\bibitem[Sun(2020)]{Sun2020OptimizationFD}
Ruo-Yu Sun.
\newblock Optimization for deep learning: An overview.
\newblock \emph{Journal of the Operations Research Society of China},
  8\penalty0 (2):\penalty0 249--294, 2020.

\bibitem[Sun et~al.(2024)Sun, Li, Li, and
  Ding]{sun2024improvingloraprivacypreservingfederated}
Youbang Sun, Zitao Li, Yaliang Li, and Bolin Ding.
\newblock Improving {LoRA} in privacy-preserving federated learning.
\newblock \emph{arXiv preprint arXiv:2403.12313}, 2024.

\bibitem[Tyurin and Richt{\'a}rik(2023)]{tyurin2023dasha}
Alexander Tyurin and Peter Richt{\'a}rik.
\newblock {DASHA}: Distributed nonconvex optimization with communication
  compression and optimal oracle complexity.
\newblock In \emph{The Eleventh International Conference on Learning
  Representations}, 2023.
\newblock URL \url{https://openreview.net/forum?id=VA1YpcNr7ul}.

\bibitem[Vershynin(2009)]{vershynin2009high}
Roman Vershynin.
\newblock High-dimensional probability, 2009.

\bibitem[Vorontsova et~al.(2021)Vorontsova, Hildebrand, Gasnikov, and
  Stonyakin]{vorontsova2021convex}
Evgeniya Vorontsova, Roland Hildebrand, Alexander Gasnikov, and Fedor
  Stonyakin.
\newblock Convex optimization.
\newblock \emph{arXiv preprint arXiv:2106.01946}, 2021.

\bibitem[Wang et~al.(2024)Wang, Li, Wang, Chen, and
  Chen]{wang2024miloraharnessingminorsingular}
Hanqing Wang, Yixia Li, Shuo Wang, Guanhua Chen, and Yun Chen.
\newblock {MiLoRA}: Harnessing minor singular components for
  parameter-efficient {LLM} finetuning.
\newblock \emph{arXiv preprint arXiv:2406.09044}, 2024.

\bibitem[Wen et~al.(2017)Wen, Xu, Yan, Wu, Wang, Chen, and Li]{Terngrad}
Wei Wen, Cong Xu, Feng Yan, Chunpeng Wu, Yandan Wang, Yiran Chen, and Hai Li.
\newblock {TernGrad}: Ternary gradients to reduce communication in distributed
  deep learning.
\newblock \emph{Advances in Neural Information Processing Systems}, 30, 2017.

\bibitem[Xia et~al.(2024)Xia, Qin, and
  Hazan]{xia2024chainloraefficientfinetuning}
Wenhan Xia, Chengwei Qin, and Elad Hazan.
\newblock Chain of {LoRA}: efficient fine-tuning of language models via
  residual learning.
\newblock \emph{arXiv preprint arXiv:2401.04151}, 2024.

\bibitem[Yi et~al.(2024)Yi, Kharisov, Sokolov, and Richt{\'a}rik]{yi2024cohort}
Kai Yi, Timur Kharisov, Igor Sokolov, and Peter Richt{\'a}rik.
\newblock Cohort squeeze: Beyond a single communication round per cohort in
  cross-device federated learning.
\newblock \emph{arXiv preprint arXiv:2406.01115}, 2024.

\bibitem[You et~al.(2019)You, Li, Reddi, Hseu, Kumar, Bhojanapalli, Song,
  Demmel, Keutzer, and Hsieh]{You2019LargeBO}
Yang You, Jing Li, Sashank Reddi, Jonathan Hseu, Sanjiv Kumar, Srinadh
  Bhojanapalli, Xiaodan Song, James Demmel, Kurt Keutzer, and Cho-Jui Hsieh.
\newblock Large batch optimization for deep learning: Training {BERT} in 76
  minutes.
\newblock \emph{arXiv preprint arXiv:1904.00962}, 2019.

\bibitem[Zhu et~al.(2024)Zhu, Greenewald, Nadjahi, Borde, Gabrielsson, Choshen,
  Ghassemi, Yurochkin, and Solomon]{zhu2024asymmetrylowrankadaptersfoundation}
Jiacheng Zhu, Kristjan Greenewald, Kimia Nadjahi, Haitz Saez De~Ocariz Borde,
  Rickard~Br{\"u}el Gabrielsson, Leshem Choshen, Marzyeh Ghassemi, Mikhail
  Yurochkin, and Justin Solomon.
\newblock Asymmetry in low-rank adapters of foundation models.
\newblock \emph{arXiv preprint arXiv:2402.16842}, 2024.

\end{thebibliography}

\newpage
\appendix
\onecolumn
\part*{APPENDIX}


\section{Basic Facts and Useful Inequalities}


\paragraph{Tower property.} For any random variables $X$ and $Y$, we have
\begin{eqnarray}\label{eq:tower_property}
    \Exp{\Exp{X \mid Y}} = \Exp{X}. 
\end{eqnarray}

\paragraph{Cauchy-Bunyakovsky-Schwarz inequality.} For any random variables $X$ and $Y$, we have
\begin{eqnarray}\label{eq:cauchy_bunyakovsky_schwarz}
    \abs{\Exp{XY}} \le \sqrt{\Exp{X^2}\Exp{Y^2}}. 
\end{eqnarray}

\paragraph{Variance decomposition.} For any random vector $X \in \mathbb{R}^d$ and any non-random $c \in \mathbb{R}^d$, we have
\begin{eqnarray}\label{eq:bvd}
    \Exp{\twonorm{X - c}^2} = \Exp{\twonorm{X - \Exp{X}}^2} + \twonorm{\Exp{X} - c}^2.
\end{eqnarray}

\paragraph{Jensen’s inequality.} For any random vector $X \in \mathbb{R}^d$ and any convex function $g:\R^d\mapsto\R$, we have
\begin{eqnarray}\label{eq:jensen_general}
    g(\Exp{X}) \le \Exp{g(X)}.
\end{eqnarray}

\newpage

\section{Discussion on Positive Expected Projection (Assumption \ref{as:projection_matrix})}\label{sec:discussion_projection_matrix}

Assumption~\ref{as:projection_matrix} merits further discussion. While any single projection matrix has eigenvalues that are either 0 or 1 (with the smallest being 0), the expected value of a \textit{random} projection matrix can have all its eigenvalues strictly greater than zero. This property is crucial for ensuring stable convergence behavior in our framework.

Later in this section, we will utilize the following lemma, which is a classical result from linear algebra, often known as a direct consequence of Schur's Lemma \citep{hall2013lie, schur2024neue}.

\begin{lemma}[Rotational Invariance Implies Scalar Matrix]\label{lem:rotation_scalar}
Let $M \in \mathbb{R}^{n \times n}$ be a matrix satisfying
\begin{align}\label{eq:M_invariant}
M = QMQ^\top \quad \text{for all orthonormal matrices } Q \in \mathbb{R}^{n \times n}.
\end{align}
Then $M = \alpha I_n$ for some scalar $\alpha \in \mathbb{R}$.
\end{lemma}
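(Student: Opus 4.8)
The plan is to prove that rotational invariance forces $M$ to be a scalar multiple of the identity, using the fact that a matrix commuting with enough orthogonal matrices must be diagonal in every orthonormal basis, which only scalar matrices satisfy. First I would rewrite the hypothesis $M = QMQ^\top$ for all orthonormal $Q$ equivalently as $MQ = QM$, i.e., $M$ commutes with every orthonormal matrix.

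Next, I would argue in two steps. \textbf{Step 1: $M$ is diagonal in every orthonormal basis.} Fix any index pair $i \neq j$ and consider the orthonormal matrix $Q$ that swaps $e_i$ and $e_j$ (a transposition permutation matrix), or more cleanly, for each $i$ consider the reflection $Q = I_n - 2 e_i e_i^\top$ which is orthonormal. From $QM = MQ$ one reads off entrywise that $M_{ij} = -M_{ij}$ for $j \neq i$, hence all off-diagonal entries vanish and $M = \Diag(d_1, \ldots, d_n)$. \textbf{Step 2: all diagonal entries are equal.} For any $i \neq j$, take $Q$ to be the permutation matrix transposing coordinates $i$ and $j$; since $Q$ is orthonormal, $QMQ^\top = M$ forces $d_i = d_j$. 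Therefore $d_1 = \cdots = d_n =: \alpha$, so $M = \alpha I_n$.

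Alternatively — and this is the route the lemma's attribution to Schur suggests — one can phrase it representation-theoretically: the orthogonal group $O(n)$ acts on $\R^n$ by an irreducible representation (for $n \geq 1$, or at least on $\R^n$ viewed over $\R$ the standard representation is irreducible), and $M$ as stated intertwines this representation with itself; Schur's Lemma then yields that $M$ is a scalar. I would likely present the elementary two-step argument in the main text since it is self-contained and avoids invoking irreducibility, but mention the Schur's Lemma perspective as the conceptual reason.

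The main obstacle — really a minor subtlety rather than a genuine difficulty — is being careful about which orthonormal matrices to plug in: one needs to choose $Q$'s concrete enough to extract entrywise information yet general enough to kill all off-diagonal entries and equate all diagonal entries. Using reflections $I_n - 2e_ie_i^\top$ for Step 1 and transposition permutations for Step 2 handles both cleanly. An edge case worth a remark is $n = 1$, where the statement is vacuous since every $1\times 1$ matrix is already scalar. No heavy machinery is required; the proof is a short direct computation.
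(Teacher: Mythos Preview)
Your proposal is correct and takes a genuinely different route from the paper. The paper's proof is eigenvector-based: it notes that $MQ = QM$, asserts that $M$ is symmetric (not part of the lemma's hypothesis, but true in the intended application where $M = \Exp{H}$), picks a real eigenvector $u_1$ with eigenvalue $\lambda$, and then uses that the orthogonal group acts transitively on the unit sphere to conclude that every unit vector $u = Qu_1$ satisfies $Mu = MQu_1 = QMu_1 = \lambda u$. Your approach instead plugs in specific orthonormal matrices: the sign-flip reflections $I_n - 2e_ie_i^\top$ force $M_{ij} = -M_{ij}$, hence $M_{ij}=0$ for $j\neq i$, and transposition permutations then equate the diagonal entries. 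Your argument is more elementary and entirely self-contained; in particular it never needs to justify the existence of a real eigenvector (which for a general real matrix is not automatic without symmetry) and never invokes symmetry of $M$. The paper's argument is more geometric and is essentially the Schur's-Lemma route you mention as an alternative, carried out by hand. Both are short; yours is cleaner as a standalone proof of the lemma exactly as stated, while the paper's reads naturally given the surrounding context in which $M$ is already known to be symmetric.
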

\begin{proof}
The condition $M = QMQ^\top$ is equivalent to $MQ = QM$, which means that $M$ commutes with every orthonormal matrix $Q$.
Since $M$ is a real symmetric matrix, it is guaranteed to have at least one real eigenvector. Let $v$ be such an eigenvector with corresponding eigenvalue $\lambda$. We can normalize this eigenvector to create a unit vector $u_1 = v / \|v\|$, which is also an eigenvector with the same eigenvalue:
\begin{align*}
    M u_1 = M \left(\frac{v}{\|v\|}\right) = \frac{1}{\|v\|} Mv = \frac{1}{\|v\|} (\lambda v) = \lambda \left(\frac{v}{\|v\|}\right) = \lambda u_1.
\end{align*}
Now, let $u$ be any other arbitrary unit vector in $\mathbb{R}^n$. Because both $u_1$ and $u$ are unit vectors (i.e., they lie on the unit sphere), there always exists an orthonormal matrix $Q$ (specifically, a rotation) that maps $u_1$ to $u$. That is, $u = Qu_1$.

We now examine the action of $M$ on this arbitrary unit vector $u$:
\begin{align*}
    Mu = M(Qu_1) = (MQ)u_1 = (QM)u_1 = Q(Mu_1) = Q(\lambda u_1) = \lambda (Qu_1) = \lambda u.
\end{align*}

We have shown that any arbitrary unit vector $u$ is an eigenvector of $M$ with the same eigenvalue $\lambda$. If every unit vector is an eigenvector with eigenvalue $\lambda$, then for any non-zero vector $x \in \mathbb{R}^n$, we can write $x = \|x\| \cdot \frac{x}{\|x\|}$. Let $u_x \eqdef x/\|x\|$ be the corresponding unit vector. Then:
\begin{align*}
    Mx = M(\|x\| u_x) = \|x\| (M u_x) = \|x\| (\lambda u_x) = \lambda (\|x\| u_x) = \lambda x.
\end{align*}
Since $Mx = \lambda x$ for all vectors $x \in \mathbb{R}^n$, the matrix $M$ must be a scalar multiple of the identity matrix, i.e., $M = \lambda I_n$.
\end{proof}

In practice, LoRA-type methods often employ Gaussian sampling for the matrices $A_S$ or $B_S$ \citep{xia2024chainloraefficientfinetuning, MaoSurvey}. The following lemma, a standard result in multivariate statistics, demonstrates that under such Gaussian sampling, Assumption~\ref{as:projection_matrix} is naturally satisfied.

\begin{lemma}[Expected Eigenvalues of Random Projection Matrices]\label{le:exp_eig}
Consider a projection matrix $H_B$ generated by a random matrix $B \in \mathbb{R}^{n \times r}$ whose entries are i.i.d. $\mathcal{N}(0,1)$ with $r \leq n$, defined as:
\begin{align*}
H_B = B(B^\top B)^\dagger B^\top,
\end{align*}
where $\dagger$ denotes the Moore-Penrose pseudoinverse. Similarly, for a random matrix $A \in \mathbb{R}^{r \times n}$ with i.i.d. $\mathcal{N}(0,1)$ entries, we define:
\begin{align*}
H_A = A^\top(AA^\top)^\dagger A.
\end{align*}
For these matrices, we have:
\begin{align*}
\mathbb{E}[H_B] = \mathbb{E}[H_A] = \frac{r}{n}I_n,
\end{align*}
which implies:
\begin{align*}
\lambda_{\min}(\mathbb{E}[H_B]) = \lambda_{\min}(\mathbb{E}[H_A]) = \frac{r}{n}.
\end{align*}
\end{lemma}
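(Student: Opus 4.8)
The plan is to combine the rotational invariance of the i.i.d.\ standard Gaussian distribution with Lemma~\ref{lem:rotation_scalar} and a single trace computation. The key structural observation is that $H_B = B(B^\top B)^\dagger B^\top$ is exactly the matrix of the orthogonal projection onto the column space $\range(B) \subseteq \R^n$ (this holds for any $B$, regardless of rank, and reduces to $B(B^\top B)^{-1}B^\top$ when $B$ has full column rank). In particular $H_B$ is symmetric and idempotent, and $\tr(H_B) = \dim\range(B) = \rank(B)$. The same remark applies to $H_A = A^\top(AA^\top)^\dagger A$, which is the orthogonal projector onto $\range(A^\top)$, the row space of $A$.

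First I would establish rotational invariance. For any orthonormal $Q \in \R^{n\times n}$, the matrix $QB$ again has i.i.d.\ $\mathcal{N}(0,1)$ entries, since each column of $B$ is a standard Gaussian vector in $\R^n$, $Q$ maps a standard Gaussian vector to a standard Gaussian vector, and independence across columns is preserved; thus $QB \stackrel{d}{=} B$. Moreover $\range(QB) = Q\,\range(B)$, so the orthogonal projector onto $\range(QB)$ is $Q H_B Q^\top$, i.e.\ $H_{QB} = Q H_B Q^\top$. Taking expectations and using $QB \stackrel{d}{=} B$ gives $\mathbb{E}[H_B] = \mathbb{E}[H_{QB}] = Q\,\mathbb{E}[H_B]\,Q^\top$ for every orthonormal $Q$. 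Since $\mathbb{E}[H_B]$ is symmetric (an average of symmetric matrices), Lemma~\ref{lem:rotation_scalar} yields $\mathbb{E}[H_B] = c\,I_n$ for some scalar $c \in \R$.

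Next I would pin down $c$ by taking the trace. Because the entries of $B$ are i.i.d.\ absolutely continuous, $B$ has full column rank $r$ almost surely, hence $\tr(H_B) = \rank(B) = r$ a.s. Therefore $c\,n = \tr(\mathbb{E}[H_B]) = \mathbb{E}[\tr(H_B)] = r$, so $c = \tfrac{r}{n}$ and $\mathbb{E}[H_B] = \tfrac{r}{n} I_n$, which immediately gives $\lambda_{\min}(\mathbb{E}[H_B]) = \tfrac{r}{n}$. For $H_A$, note that $A^\top \in \R^{n\times r}$ has i.i.d.\ $\mathcal{N}(0,1)$ entries and $H_A$ is the orthogonal projector onto $\range(A^\top)$, so the identical argument (with $A^\top$ in place of $B$) gives $\mathbb{E}[H_A] = \tfrac{r}{n} I_n$ and $\lambda_{\min}(\mathbb{E}[H_A]) = \tfrac{r}{n}$.

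I do not expect a serious obstacle here: the only points that need careful justification are (i) the identity $H_{QB} = Q H_B Q^\top$, which follows from $\range(QB) = Q\,\range(B)$ and uniqueness of the orthogonal projector onto a subspace, and (ii) the almost-sure full-rank claim, which is needed so that the trace equals exactly $r$ rather than merely $\le r$. Everything else is routine bookkeeping, and the symmetry of $\mathbb{E}[H_B]$ required to invoke Lemma~\ref{lem:rotation_scalar} is immediate.
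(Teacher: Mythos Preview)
Your proposal is correct and follows essentially the same approach as the paper: rotational invariance of the Gaussian gives $QB \stackrel{d}{=} B$ and $H_{QB} = QH_BQ^\top$, so Lemma~\ref{lem:rotation_scalar} forces $\mathbb{E}[H_B]=cI_n$, and the trace together with almost-sure full rank yields $c=r/n$; $H_A$ is handled by the same argument applied to $A^\top$. The only cosmetic difference is that you derive $H_{QB}=QH_BQ^\top$ from the projector-onto-$\range(QB)$ interpretation, whereas the paper verifies it by direct algebra with the pseudoinverse formula.
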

\begin{proof}
The proof leverages the rotational invariance property of the standard Gaussian distribution. We will prove the result for $H_B$; the argument for $H_A$ is analogous.

First, we establish that $\mathbb{E}[H_B]$ must be a scalar multiple of the identity matrix. Let $Q \in \mathbb{R}^{n \times n}$ be an arbitrary orthonormal matrix. Due to the rotational invariance of the multivariate standard normal distribution, the random matrix $QB$ has the same distribution as $B$.

Consider the projection matrix $H_{QB}$ generated by $QB$:
\begin{align*}
H_{QB} &= (QB)\left((QB)^\top QB\right)^\dagger(QB)^\top \\
&= QB\left(B^\top Q^\top QB\right)^\dagger B^\top Q^\top \\
&= QB\left(B^\top B\right)^\dagger B^\top Q^\top \\
&= Q\left(B(B^\top B)^\dagger B^\top\right)Q^\top = QH_B Q^\top.
\end{align*}
Since $QB$ and $B$ are identically distributed, their expectations must be equal: $\mathbb{E}[H_{QB}] = \mathbb{E}[H_B]$. This implies:
\begin{align*}
\mathbb{E}[H_B] = Q\mathbb{E}[H_B]Q^\top,
\end{align*}
for every orthonormal matrix $Q$. By Lemma \ref{lem:rotation_scalar}, $\mathbb{E}[H_B]$ must be a scalar multiple of the identity matrix, so $\mathbb{E}[H_B] = \alpha I_n$ for some scalar $\alpha \in \R$.

To determine this scalar, we use the property that the trace of a projection matrix is equal to its rank. Since the columns of $B$ are drawn from a continuous distribution, they are linearly independent almost surely (as $r \leq n$). Thus, the rank of $H_B$ is $r$.
\begin{align*}
\mathbb{E}[\trace{H_B}] = \mathbb{E}[\text{rank}(H_B)] = r.
\end{align*}
By linearity of expectation and trace, we also have:
\begin{align*}
\mathbb{E}[\trace{H_B}] = \trace{\mathbb{E}[H_B]} = \trace{\alpha I_n} = \alpha n.
\end{align*}
Equating the two expressions gives $\alpha n = r$, which implies $\alpha = \frac{r}{n}$. Therefore,
\begin{align*}
\mathbb{E}[H_B] = \frac{r}{n}I_n.
\end{align*}
The same argument applies to $H_A$ by observing that $A^\top$ is an $n \times r$ matrix with i.i.d. $\mathcal{N}(0,1)$ entries, which completes the proof.
\end{proof}
\begin{remark}
    This result is foundational in the study of random projections and can be found in standard textbooks on multivariate statistics; for example, see Lemma 5.3.2 in \citep{vershynin2009high}.
\end{remark}

\newpage
\section{Proofs for Core Algorithmic Variants}
\subsection{Analysis of Bernoulli-LoRA-GD}\label{sec:Bernoulli-LoRA}

\begin{algorithm}[H]
\caption{\algname{Bernoulli-LoRA-GD}}\label{alg:Bernoulli-LoRA-GD-smooth}
\begin{algorithmic}[1]
\STATE \textbf{Parameters:} pre-trained model $W^0 \in \mathbb{R}^{m \times n}$, rank $r \ll \min\{m,n\}$, scaling factor $\alpha > 0$, stepsize $\gamma_t$ chain length $T$, sketch distribution $\mathcal{D}_S^B$ or $\mathcal{D}_S^A$, Bernoulli probability $p$

\FOR{$t = 0, 1, \ldots, T-1$}
    \STATE Sample $c^t \sim \text{Be}(p)$ \hfill{Bernoulli random variable}
    \IF{$c^t = 1$}
        \STATE Sample $B_S^t \sim \mathcal{D}_S^B$ \hfill{Left sketch}
        \STATE $\hat{A}^t = -\eta \rb{\rbtop{B_S^t}B_S^t}^{\dagger}\rbtop{B_S^t}\nf{W^t}$ 
        \STATE $W^{t+1} = W^t + \frac{\alpha}{r}B_S^t\hat{A}^t$
    \ELSE
        \STATE Sample $A_S^t \sim \mathcal{D}_S^A$ \hfill{Right sketch}
        \STATE $\hat{B}^t = -\eta \nf{W^t}\rbtop{A_S^t}\rb{A_S^t\rbtop{A_S^t}}^{\dagger}$
        \STATE $W^{t+1} = W^t + \frac{\alpha}{r}\hat{B}^tA_S^t$
    \ENDIF
\ENDFOR
\end{algorithmic}
\end{algorithm}



The following lemma establishes that the \algname{Bernoulli-LoRA} update can be reformulated as a standard projected gradient descent step, providing a crucial foundation for our subsequent convergence analysis.
\begin{lemma}\label{le:smooth_left_right_sketch}
Consider the updates $\hA^t$ and $\hB^t$ from Algorithm \ref{alg:Bernoulli-LoRA-GD-smooth} computed as solutions to the following optimization problems:
\begin{eqnarray}
\hat{A}^t &\eqdef& \argmin\limits_{A}\cb{f(\wt) + \frac{\alpha}{r}\linf{\nabla f(\wt), {B_S^t}{A}} + \frac{\alpha^2}{2 \gamma r^2}\sqfnorm{B_S^tA}}, \nonumber  \\
\hat{B}^t &\eqdef& \argmin\limits_{B}\cb{f(\wt) + \frac{\alpha}{r}\linf{\nabla f(\wt), {B} A_S^t} + \frac{\alpha^2}{2\gamma r^2}\sqfnorm{B A_S^t}}. \label{eq:cl;sfdc;s}
\end{eqnarray} 
Then the Left and Right sketch updates can be expressed as a gradient descent step:
\begin{eqnarray}
\wtpo = \wt - \gamma G^t,
\end{eqnarray}
where $G^t$ is defined by
\begin{eqnarray}
G^t = \begin{cases}
H_B^t\nabla f(\wt), & \text{with probability } p \\
\nabla f(\wt)H_A^t, & \text{with probability } 1-p
\end{cases}
\end{eqnarray}
with projection matrices $H_A^t$ and $H_B^t$ given by:
\begin{eqnarray}\label{eq:proj_matrices}
H_A^t \eqdef \rbtop{A_S^t}\rb{A_S^t\rbtop{A_S^t}}^{\dagger}A_S^t \quad \text{and} \quad H_B^t \eqdef B_S^t\rb{\rbtop{B_S^t}B_S^t}^{\dagger}\rbtop{B_S^t},
\end{eqnarray}
where ${\dagger}$ denotes the Moore-Penrose pseudoinverse.
\end{lemma}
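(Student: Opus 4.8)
The claim is essentially a direct computation: solve the two quadratic subproblems explicitly, substitute back, and identify the resulting update with a projected gradient step. I would treat the Left Sketch case (the trainable matrix is $\hat A^t$) in detail and note the Right Sketch case is symmetric.

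First I would fix the iteration index $t$ and the sampled sketch $B_S^t$, and write $B \eqdef B_S^t$ and $G \eqdef \nabla f(W^t)$ for brevity. The subproblem for $\hat A^t$ is the minimization of the quadratic $\phi(A) = f(W^t) + \tfrac{\alpha}{r}\linf{G, BA} + \tfrac{\alpha^2}{2\gamma r^2}\sqfnorm{BA}$. I would take the Fréchet derivative with respect to $A$ in the trace inner product: using $\linf{G, BA} = \linf{B^\top G, A}$ and $\sqfnorm{BA} = \linf{B^\top B A, A}$, the gradient is $\nabla_A \phi(A) = \tfrac{\alpha}{r} B^\top G + \tfrac{\alpha^2}{\gamma r^2} B^\top B A$. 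Setting this to zero gives $B^\top B A = -\tfrac{\gamma r}{\alpha} B^\top G$. Since $\phi$ is convex (its quadratic part is PSD), any stationary point is a global minimizer; among these I would select the minimum-norm solution, which is $\hat A^t = -\tfrac{\gamma r}{\alpha}(B^\top B)^\dagger B^\top G$. (Here one uses the standard fact that $A \mapsto (B^\top B)^\dagger B^\top \cdot(\text{RHS})$ solves the normal equations $B^\top B A = \text{RHS}$ whenever the RHS lies in the range of $B^\top$, which it does since $B^\top G = B^\top B (B^\top B)^\dagger B^\top G$.) Note this matches the closed form already quoted in Section~\ref{sec:ref_proj_step} with $\eta$ there equal to $\gamma r/\alpha$, i.e. $\gamma = \alpha\eta/r$; to keep the lemma self-contained I would rederive it rather than cite that informal derivation.

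Next I would substitute into the update rule $W^{t+1} = W^t + \tfrac{\alpha}{r} B \hat A^t$, obtaining
\begin{align*}
W^{t+1} = W^t - \gamma\, B (B^\top B)^\dagger B^\top \nabla f(W^t) = W^t - \gamma\, H_B^t \nabla f(W^t),
\end{align*}
with $H_B^t = B_S^t(\rbtop{B_S^t}B_S^t)^\dagger \rbtop{B_S^t}$ exactly as in \eqref{eq:proj_matrices}. The Right Sketch case is the mirror image: differentiating the analogous quadratic in $B$ yields $\hat B^t = -\tfrac{\gamma r}{\alpha}\nabla f(W^t)(A_S^t)^\top(A_S^t(A_S^t)^\top)^\dagger$, and substituting into $W^{t+1} = W^t + \tfrac{\alpha}{r}\hat B^t A_S^t$ gives $W^{t+1} = W^t - \gamma\,\nabla f(W^t) H_A^t$. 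Since line~3 of Algorithm~\ref{alg:Bernoulli-LoRA-GD-smooth} selects the Left branch with probability $p$ (when $c^t = 1$) and the Right branch with probability $1-p$, writing $G^t = H_B^t\nabla f(W^t)$ in the first case and $G^t = \nabla f(W^t)H_A^t$ in the second yields precisely $W^{t+1} = W^t - \gamma G^t$ with the stated case distribution, completing the proof.

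The only genuinely delicate point — and the one I would state carefully rather than gloss over — is the justification that the chosen closed form really is an optimal solution of the (possibly rank-deficient) quadratic subproblem: one must check that $B^\top G \in \range(B^\top) = \range(B^\top B)$ so the normal equations are consistent, and that the pseudoinverse picks out a valid minimizer. Everything else is routine matrix calculus. I would also remark that the identity $B(B^\top B)^\dagger B^\top = H_B^t$ is the orthogonal projector onto $\range(B_S^t)$, which both explains the name "projected gradient step" and will be used in the convergence proofs via Assumption~\ref{as:projection_matrix}.
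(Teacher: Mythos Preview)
Your proof is correct and follows essentially the same approach as the paper's: both differentiate the quadratic subproblems, solve the resulting normal equations via the pseudoinverse, and substitute back to identify the projected gradient step. If anything, your treatment is slightly more careful, since you explicitly note the consistency condition $B^\top G \in \range(B^\top B)$ that justifies applying the pseudoinverse, whereas the paper applies $(\cdot)^\dagger$ without comment.
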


\begin{proof}
Following Algorithm \ref{alg:Bernoulli-LoRA-GD-smooth}, at each iteration we randomly select either the Left sketch (with probability $p$) or the Right sketch (with probability $1-p$). We analyze both cases separately and then combine them into a unified update rule.

\textbf{Left Sketch Analysis.} When the Left sketch is selected, the update takes the form:
\begin{eqnarray}
\wtpo = \wt + \frac{\alpha}{r} B_S^t \hat{A}^t.
\end{eqnarray}



Minimizing the right-hand side with respect to $\hat{A}^t$ yields:
\begin{eqnarray}
\frac{\alpha}{r}\rbtop{B_S^t}\nabla f(\wt) + \frac{\alpha^2}{\gamma r^2}\rbtop{B_S^t}B_S^t\hat{A}^t &=& 0; \nonumber \\
\rbtop{B_S^t}B_S^t\hat{A}^t &=& -\frac{\gamma r}{\alpha}\rbtop{B_S^t}\nabla f(\wt); \nonumber \\
\hat{A}^t &=& -\frac{\gamma r}{\alpha}\rb{\rbtop{B_S^t}B_S^t}^{\dagger}\rbtop{B_S^t}\nabla f(\wt).
\end{eqnarray}

This leads to the Left sketch update:
\begin{eqnarray}\label{eq:smooth_wt_grad_step_left}
\wtpo &=& \wt + \frac{\alpha}{r}B_S^t\hat{A}^t \nonumber \\
&=& \wt - \gamma B_S^t\rb{\rbtop{B_S^t}B_S^t}^{\dagger}\rbtop{B_S^t}\nabla f(\wt)\nonumber \\
&=& \wt - \gamma H_B^t\nabla f(\wt),
\end{eqnarray}
where $H_B^t \eqdef B_S^t\rb{\rbtop{B_S^t}B_S^t}^{\dagger}\rbtop{B_S^t}$ is a projection matrix.

\textbf{Right Sketch Analysis.} For the Right sketch, we follow a similar approach. The update rule is:
\begin{eqnarray}
\wtpo = \wt + \frac{\alpha}{r}\hat{B}^t A_S^t.
\end{eqnarray}

First, observe that:
\begin{eqnarray}
\norm{\hat{B}^t A_S^t}_{\opern{F}}^2 = \linf{\hat{B}^t A_S^t, \hat{B}^t A_S^t} = \linf{A_S^t, \rbtop{\hat{B}^t}\hat{B}^t A_S^t}.
\end{eqnarray}


For the linear term from \eqref{eq:cl;sfdc;s}:
\begin{eqnarray}
\frac{\alpha}{r}\linf{\nabla f(\wt), \hat{B}^t A_S^t} = \frac{\alpha}{r} \trace{\rbtop{\nabla f(\wt)} \hat{B}^t A_S^t},
\end{eqnarray}
with gradient $\nabla f(\wt)\rbtop{A_S^t}$ with respect to $\hat{B}^t$. Using the matrix calculus identity $\nabla_X\norm{X}_{\opern{F}}^2 = 2X$, the gradient of the quadratic term is:
\begin{eqnarray}
\frac{\alpha^2}{\gamma r^2}\hat{B}^t A_S^t\rbtop{A_S^t}.
\end{eqnarray}

Setting the total gradient to zero and solving for $\hat{B}^t$:
\begin{eqnarray}
\hat{B}^t = -\frac{\gamma r}{\alpha}\nabla f(\wt)\rbtop{A_S^t}\rb{A_S^t\rbtop{A_S^t}}^{\dagger},
\end{eqnarray}
which yields the Right sketch update:
\begin{eqnarray}\label{eq:smooth_wt_grad_step_right}
\wtpo &=& \wt + \frac{\alpha}{r}\hat{B}^t A_S^t \nonumber \\
&=& \wt - \gamma \nabla f(\wt)\rbtop{A_S^t}\rb{A_S^t\rbtop{A_S^t}}^{\dagger}A_S^t \nonumber \\
&=& \wt - \gamma \nabla f(\wt)H_A^t,
\end{eqnarray}
where $H_A^t \eqdef \rbtop{A_S^t}\rb{A_S^t\rbtop{A_S^t}}^{\dagger}A_S^t$ is a projection matrix.

\textbf{Combined Update Rule.} Combining equations \eqref{eq:smooth_wt_grad_step_left} and \eqref{eq:smooth_wt_grad_step_right}, we obtain the unified update:
\begin{eqnarray}
\wtpo = \wt - \gamma G^t,
\end{eqnarray}
where $G^t$ takes the form given in the lemma statement, completing the proof.
\end{proof}

With these assumptions in place, we can now state our main convergence result for RAC-LoRA with Gradient Descent updates.

\subsubsection{Convergence for Smooth Non-Convex Functions}
\textbf{Theorem~\ref{thm:smooth_non_cvx}}.
\textit{Let Assumptions \ref{as:projection_matrix}, \ref{as:lipschitz_smoothness}, and \ref{as:bounded_below} hold, and let the stepsize satisfy $0 < \gamma \leq \frac{1}{L}$. 
Then the iterates of \algname{Bernoulli-LoRA-GD} (Algorithm \ref{alg:Bernoulli-LoRA-GD-smooth}), with matrices $\hat{A}^t$ and $\hat{B}^t$ computed according to Lemma \ref{le:smooth_left_right_sketch}, satisfy
\begin{eqnarray}
\mathbb{E}\left[\sqfnorm{\nabla f(\widetilde{W}^T)}\right] \leq \frac{2(f(W^0) - f^*)}{\gamma \lambda_{\min}^{p} T},
\end{eqnarray}
where $\lambda_{\min}^{p} := p\lambda_{\min}^{H_B} + (1-p)\lambda_{\min}^{H_A}$
and 
$\widetilde{W}^T$ is drawn uniformly at random from the iterate sequence $\{W^0, W^1, \ldots, W^{T-1}\}$.}

\begin{proof}
From Lemma \ref{le:smooth_left_right_sketch}, we know that Bernoulli-LoRA updates can be expressed as
\begin{eqnarray}
\wtpo = \wt - \gamma G^t,
\end{eqnarray}
where $G^t$ takes the form
\begin{eqnarray}
G^t = \begin{cases}
H_B^t\nabla f(\wt), & \text{with probability } p \\
\nabla f(\wt)H_A^t, & \text{with probability } 1-p
\end{cases}
\end{eqnarray}
with projection matrices $H_A^t$ and $H_B^t$ as defined in the lemma.

To analyze the convergence, we first compute the conditional expectation and second moment of $G^t$:
\begin{eqnarray}
\Exp{G^t \mid \wt, H^t} &=& p H_B^t\nabla f(\wt) + (1-p)\nabla f(\wt)H_A^t, \nonumber \\
\Exp{\fnorm{G^t}^2 \mid \wt, H^t} &=& p\fnorm{H_B^t\nabla f(\wt)}^2 + (1-p)\fnorm{\nabla f(\wt)H_A^t}^2,
\end{eqnarray}
where we defined $H^t \eqdef \cb{H_A^t, H_B^t}$.

We begin by establishing several key auxiliary bounds. For the Left sketch term:
\begin{eqnarray}\label{eq:non-cvx;smooth_H_B}
&&-\gamma p \linf{\nabla f(\wt), H_B^t\nabla f(\wt)} + \frac{L\gamma^2}{2}p\fnorm{H_B^t\nabla f(\wt)}^2 \nonumber \\
&&= -\gamma p \linf{\nabla f(\wt), H_B^t\nabla f(\wt)} + \frac{L\gamma^2}{2}p\linf{H_B^t\nabla f(\wt), H_B^t\nabla f(\wt)} \nonumber \\
&&= -\gamma p \linf{\nabla f(\wt), H_B^t\nabla f(\wt)} + \frac{L\gamma^2}{2}p\linf{\nabla f(\wt), \rbtop{H_B^t}H_B^t\nabla f(\wt)} \nonumber \\
&&= p\rb{-\gamma \linf{\nabla f(\wt), H_B^t\nabla f(\wt)} + \frac{L\gamma^2}{2}\linf{\nabla f(\wt), H_B^t\nabla f(\wt)}} \nonumber \\
&&\letext{$\gamma\le \nfr{1}{L}$} -\frac{\gamma}{2}p\linf{\nabla f(\wt), H_B^t\nabla f(\wt)}.
\end{eqnarray}

For any projection matrix $H_A^t$, we have:
\begin{eqnarray}
\linf{\nabla f(\wt)H_A^t, \nabla f(\wt)H_A^t} &=& \trace{\rbtop{H_A^t}\rbtop{\nabla f(\wt)}\nabla f(\wt)H_A^t} \nonumber \\
&=& \trace{ \rbtop{\nabla f(\wt)}\nabla f(\wt)H_A^t\rbtop{H_A^t}} \nonumber \\
&=& \trace{\rbtop{\nabla f(\wt)}\nabla f(\wt)H_A^t} \nonumber \\
&=& \linf{\nabla f(\wt), \nabla f(\wt)H_A^t}.
\end{eqnarray}

Therefore:
\begin{eqnarray}\label{eq:non-cvx;smooth_H_A}
&&-\gamma(1-p)\linf{\nabla f(\wt), \nabla f(\wt)H_A^t} + \frac{L\gamma^2}{2}(1-p)\fnorm{\nabla f(\wt)H_A^t}^2 \nonumber \\
&&= -\gamma(1-p)\linf{\nabla f(\wt), \nabla f(\wt)H_A^t} + \frac{L\gamma^2}{2}(1-p)\linf{\nabla f(\wt)H_A^t, \nabla f(\wt)H_A^t} \nonumber \\
&&= -\gamma(1-p)\linf{\nabla f(\wt), \nabla f(\wt)H_A^t} + \frac{L\gamma^2}{2}(1-p)\linf{\nabla f(\wt), \nabla f(\wt)H_A^t} \nonumber \\
&&\letext{$\gamma\le \nfr{1}{L}$} -\frac{\gamma}{2}(1-p)\linf{\nabla f(\wt), \nabla f(\wt)H_A^t}.
\end{eqnarray}

Using the Lipschitz gradient condition and the above bounds:
\begin{eqnarray}
\Exp{f(\wtpo) \mid \wt, H^t} &\leq& f(\wt) + \Exp{\linf{\nabla f(\wt), \wtpo - \wt} \mid \wt, H^t} \nonumber\\
&+& \frac{L}{2}\Exp{\sqfnorm{\wtpo - \wt} \mid \wt, H^t} \nonumber \\
&=& f(\wt) - \gamma\linf{\nabla f(\wt), \Exp{G^t \mid \wt, H^t}} + \frac{L\gamma^2}{2}\Exp{\sqfnorm{G^t} \mid \wt, H^t}
\nonumber \\
&=& f(\wt) - \gamma p\linf{\nabla f(\wt), H_B^t\nabla f(\wt)} - \gamma(1-p)\linf{\nabla f(\wt), \nabla f(\wt)H_A^t} \nonumber \\
& +& \frac{L\gamma^2}{2}p\sqfnorm{H_B^t\nabla f(\wt)} + \frac{L\gamma^2}{2}(1-p)\sqfnorm{\nabla f(\wt)H_A^t} \nonumber \\
&\letext{\eqref{eq:non-cvx;smooth_H_B},\eqref{eq:non-cvx;smooth_H_A}}& f(\wt) - \frac{\gamma}{2}\left(p\linf{\nabla f(\wt), H_B^t\nabla f(\wt)} + (1-p)\linf{\nabla f(\wt), \nabla f(\wt)H_A^t}\right).\nonumber \\
\end{eqnarray}

For the first term:
\begin{eqnarray}
-\linf{\nabla f(\wt), \Exp{H_B^t}\nabla f(\wt)} &=& -\tr\left(\rbtop{\nabla f(\wt)}\Exp{H_B^t}\nabla f(\wt)\right) \nonumber \\
&\leq& -\lambda_{\min}\left(\Exp{H_B^t}\right)\tr\left(\rbtop{\nabla f(\wt)}\nabla f(\wt)\right) \nonumber \\
&=& -\lambda_{\min}^{H_B}\sqfnorm{\nabla f(\wt)}.
\end{eqnarray}

Similarly, for the second term:
\begin{eqnarray}
-\linf{\nabla f(\wt), \nabla f(\wt)\Exp{H_A^t}} &=& -\tr\left(\rbtop{\nabla f(\wt)}\nabla f(\wt)\Exp{H_A^t}\right) \nonumber \\
&=& -\tr\left(\Exp{H_A^t}\rbtop{\nabla f(\wt)}\nabla f(\wt)\right) \nonumber \\
&\leq& -\lambda_{\min}^{H_A}\sqfnorm{\nabla f(\wt)}.
\end{eqnarray}

Therefore:
\begin{eqnarray}
\Exp{f(\wtpo) \mid \wt}  &=&\Exp{\Exp{f(\wtpo) \mid \wt, H^t} \mid \wt} \nonumber \\
&\leq& f(\wt) - \frac{\gamma}{2}\left(p\linf{\nabla f(\wt), \Exp{H_B^t}\nabla f(\wt)} + (1-p)\linf{\nabla f(\wt), \nabla f(\wt)\Exp{H_A^t}}\right) \nonumber \\
&\leq& f(\wt) - \frac{\gamma}{2}\left(p\lambda_{\min}^{H_B} + (1-p)\lambda_{\min}^{H_A}\right)\sqfnorm{\nabla f(\wt)} \nonumber \\
&=& f(\wt) - \frac{\gamma}{2}\lambda_{\min}^{p}\sqfnorm{\nabla f(\wt)},
\end{eqnarray}
where $\lambda_{\min}^{p} := p\lambda_{\min}^{H_B} + (1-p)\lambda_{\min}^{H_A}$.
Further,
\begin{eqnarray}
\Exp{\Exp{f(\wtpo) \mid \wt, H^t} \mid \wt} - f^{\star} \leq f(\wt) - f^{\star} - \frac{\gamma}{2}\lambda_{\min}^{p}\sqfnorm{\nabla f(\wt)}.
\end{eqnarray}

Taking the sum over $t=0,\ldots,T-1$ and using the tower property of expectation:
\begin{eqnarray}
\Exp{f(W^T) - f^{\star}} &\leq& f(W^0) - f^{\star} - \frac{\gamma}{2}\lambda_{\min}^{p}\sum_{t=0}^{T-1}\Exp{\sqfnorm{\nabla f(\wt)}}.
\end{eqnarray}

By rearranging terms, we get:
\begin{eqnarray}
\frac{\gamma}{2}\lambda_{\min}^{p}\sum_{t=0}^{T-1}\Exp{\sqfnorm{\nabla f(\wt)}} \leq f(W^0) - f^{\star}.
\end{eqnarray}

Finally, dividing both sides by $\frac{\gamma T}{2}\lambda_{\min}^{p}$ yields:
\begin{eqnarray}
\Exp{\sqfnorm{\nabla f(\widetilde{W}^T)}} \leq \frac{2(f(W^0) - f^{\star})}{\gamma\lambda_{\min}^{p} T},
\end{eqnarray}
where $\widetilde{W}^T$ is chosen uniformly at random from $\{W^0,W^1,\ldots,W^{T-1}\}$, completing the proof.

\end{proof}

\subsubsection{Convergence under Polyak-{\L}ojasiewicz Condition}


\begin{theorem}\label{thm:smooth_pl}
Let Assumptions \ref{as:projection_matrix}, \ref{as:bounded_below}, \ref{as:lipschitz_smoothness}, and \ref{as:pl_condition} hold, and let the stepsize satisfy $0 < \gamma \leq \frac{1}{L}$. Then the iterates of \algname{Bernoulli-LoRA-GD} (Algorithm \ref{alg:Bernoulli-LoRA-GD-smooth}), with matrices $\hat{A}^t$ and $\hat{B}^t$ computed according to Lemma \ref{le:smooth_left_right_sketch}, satisfy
\begin{align*}
\squeeze
\Exp{f(W^{T}) - f^*} \leq \left(1 - \gamma \mu \lambda_{\min}^{p}\right)^T \left(f(W^0) - f^*\right),
\end{align*}
where $\lambda_{\min}^{p} := p\lambda_{\min}^{H_B} + (1-p)\lambda_{\min}^{H_A}$.
\end{theorem}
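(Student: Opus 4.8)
The plan is to piggyback on the one-step descent inequality already established inside the proof of Theorem~\ref{thm:smooth_non_cvx} and then convert it into a linear contraction using the P\L{} condition. Recall that Lemma~\ref{le:smooth_left_right_sketch} writes the update as $\wtpo = \wt - \gamma G^t$, where $G^t = H_B^t\nabla f(\wt)$ with probability $p$ and $G^t = \nabla f(\wt)H_A^t$ with probability $1-p$. The smoothness argument carried out in that earlier proof — using $\gamma \le 1/L$, the idempotence and symmetry of the projection matrices $H_A^t, H_B^t$, and Assumption~\ref{as:projection_matrix} to pass to the expected projections — yields, after taking expectation over both the Bernoulli draw and the sketch,
\begin{equation*}
\Exp{f(\wtpo) \mid \wt} \le f(\wt) - \frac{\gamma}{2}\lambda_{\min}^{p}\sqfnorm{\nabla f(\wt)}.
\end{equation*}
I would restate this as the first step, citing Theorem~\ref{thm:smooth_non_cvx}'s derivation rather than reproducing it.

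Second, I would subtract $f^*$ from both sides (Assumption~\ref{as:bounded_below} guarantees $f^*$ is a finite reference value) and apply Assumption~\ref{as:pl_condition} in the rearranged form $\sqfnorm{\nabla f(\wt)} \ge 2\mu\rb{f(\wt)-f^*}$ to the negative term, obtaining
\begin{equation*}
\Exp{f(\wtpo) - f^* \mid \wt} \le \rb{1 - \gamma\mu\lambda_{\min}^{p}}\rb{f(\wt) - f^*}.
\end{equation*}
Third, take total expectation via the tower property, observe that the contraction factor $1 - \gamma\mu\lambda_{\min}^{p}$ is independent of $t$, and unroll the recursion over $t = 0, 1, \ldots, T-1$ to get $\Exp{f(W^T) - f^*} \le \rb{1 - \gamma\mu\lambda_{\min}^{p}}^T\rb{f(W^0) - f^*}$, which is exactly the claim.

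The only point needing genuine care — the closest thing to an obstacle in an otherwise short argument — is verifying that $1 - \gamma\mu\lambda_{\min}^{p} \in [0,1)$, so that the bound is meaningful and the recursion is a true contraction. This rests on three facts: $\gamma \le 1/L$ by hypothesis; $\mu \le L$, which is forced by combining the P\L{} inequality with $L$-smoothness; and $\lambda_{\min}^{p} \le 1$, since $\lambda_{\min}^{p} = p\lambda_{\min}^{H_B} + (1-p)\lambda_{\min}^{H_A}$ is a convex combination of eigenvalues of expectations of orthogonal projection matrices, all of which lie in $[0,1]$. Multiplying these bounds gives $\gamma\mu\lambda_{\min}^{p} \le 1$, while strict positivity of $\gamma$, $\mu$, and $\lambda_{\min}^{p}$ (the last by Assumption~\ref{as:projection_matrix}) gives $\gamma\mu\lambda_{\min}^{p} > 0$; hence the factor lies in $[0,1)$ and the geometric decay is genuine.
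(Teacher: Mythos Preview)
Your proposal is correct and follows essentially the same route as the paper: start from the one-step descent inequality obtained in the proof of Theorem~\ref{thm:smooth_non_cvx}, apply the P\L{} condition to convert the gradient norm into suboptimality, take full expectation, and unroll. Your extra verification that $1 - \gamma\mu\lambda_{\min}^{p} \in [0,1)$ is a nice addition not spelled out in the paper, though it is not strictly needed for the inequality itself to hold.
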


\begin{proof}
We begin our analysis from a key inequality derived in the proof of Theorem~\ref{thm:smooth_non_cvx}:
\begin{align}
\label{eq:gd_one_step_progress}
\Exp{f(\wtpo) \mid \wt} \leq f(\wt) - \frac{\gamma}{2}\lambda_{\min}^{p}\sqfnorm{\nabla f(\wt)}.
\end{align}
By invoking the Polyak-Łojasiewicz condition (Assumption~\ref{as:pl_condition}), which states that $\frac{1}{2}\sqfnorm{\nabla f(W)} \geq \mu\rb{f(W)-f^{*}}$, we can further bound the right-hand side of the inequality \eqref{eq:gd_one_step_progress}:
\begin{align*}
\Exp{f(\wtpo) \mid \wt} &\leq f(\wt) - \gamma \lambda_{\min}^{p} \left( \mu \rb{f(\wt) - f^*} \right).
\end{align*}
Subtracting the optimal function value $f^*$ from both sides, we get a recursive relationship for the expected suboptimality gap:
\begin{align*}
\Exp{f(\wtpo) - f^* \mid \wt} &\leq \rb{f(\wt) - f^*} - \gamma \mu \lambda_{\min}^{p} \rb{f(\wt) - f^*} \\
&= \left(1 - \gamma \mu \lambda_{\min}^{p}\right)\left(f(\wt) - f^*\right).
\end{align*}
By taking the full expectation over all randomness up to iteration $t$ and applying the tower property, we obtain:
\begin{align*}
\Exp{f(W^{t+1}) - f^*} \leq \left(1 - \gamma \mu \lambda_{\min}^{p}\right)\Exp{f(W^t) - f^*}.
\end{align*}
Unrolling this recursion from $t=T-1$ down to $t=0$ yields the final linear convergence result:
\begin{align*}
\Exp{f(W^T) - f^*} \leq \left(1 - \gamma \mu \lambda_{\min}^{p}\right)^T \left(f(W^0) - f^*\right).
\end{align*}
This completes the proof.
\end{proof}

\subsubsection{Convergence for Non-Smooth Convex Functions}


\begin{algorithm}[H]
\caption{\algname{Bernoulli-LoRA-GD}(Non-smooth setting)}\label{alg:Bernoulli-LoRA_nonsmooth}
\begin{algorithmic}[1]
\STATE \textbf{Parameters:} pre-trained model $W^0 \in \mathbb{R}^{m \times n}$, rank $r \ll \min\{m,n\}$, scaling factor $\alpha > 0$, stepsize $\gamma_t$ chain length $T$, sketch distribution $\mathcal{D}_S^B$ or $\mathcal{D}_S^A$, Bernoulli probability $p$

\FOR{$t = 0, 1, \ldots, T-1$}
    \STATE Sample $c^t \sim \text{Be}(p)$ \hfill{Bernoulli random variable}
    \IF{$c^t = 1$}
        \STATE Sample $B_S^t \sim \mathcal{D}_S^B$ \hfill{Left sketch}
        \STATE $\hat{A}^t = \argmin_{A}\cb{f(\wt) + \frac{\alpha}{r}\linf{\pf{\wt}, {B_S^t}{A}} + \frac{\alpha^2}{2 {\gamma_t} r^2}\sqfnorm{B_S^tA}}$ 
        \STATE $W^{t+1} = W^t + \frac{\alpha}{r}B_S^t\hat{A}^t$
    \ELSE
        \STATE Sample $A_S^t \sim \mathcal{D}_S^A$ \hfill{Right sketch}
        \STATE $\hat{B}^t = \argmin_B \cb{f(\wt) + \frac{\alpha}{r}\linf{\pf{\wt}, {B} A_S^t} + \frac{\alpha^2}{2{\gamma_t} r^2}\sqfnorm{B A_S^t}}$
        \STATE $W^{t+1} = W^t + \frac{\alpha}{r}\hat{B}^tA_S^t$
    \ENDIF
\ENDFOR
\end{algorithmic}
\end{algorithm}

Our analysis relies on the following standard assumptions that are widely used in non-smooth optimization theory:

\begin{assumption}\label{as:existence_of_minimizer}
The function $f$ has at least one minimizer, denoted by $W^*$.
\end{assumption}

\begin{assumption}\label{as:convexity}
The function $f$ is convex.
\end{assumption}

\begin{assumption}[Lipschitz continuity]\label{as:lipschitzness}
The function $f$ is $L_0$-Lipschitz continuous. That is, there exists $L_0 > 0$ such that
\begin{eqnarray}
|f(W) - f(V)| \leq L_0 \fnorm{W - V}, \quad \forall W, V \in \mathbb{R}^{m \times n}.
\end{eqnarray}
\end{assumption}

The combination of convexity and Lipschitz continuity represents a standard framework in non-smooth optimization \citep{vorontsova2021convex, Nesterov2013, bubeck2015convex, beck2017first, duchi2018introductory, lan2020first, drusvyatskiy2020convex}. Notably, the $L_0$-Lipschitz continuity implies uniformly bounded subgradients \citep{beck2017first}, a property that plays a crucial role in our analysis:
\begin{eqnarray}\label{eq:bounded_subgrad}
\fnorm{\partial f(W)} \leq L_0, \quad \forall W \in \mathbb{R}^{m \times n}.
\end{eqnarray}
This boundedness of subgradients ensures the stability of our optimization process and enables us to establish rigorous convergence guarantees.

The following lemma establishes that the \algname{Bernoulli-LoRA} update in the non-smooth case can also be reformulated as a subgradient descent step, which plays a central role in our convergence analysis for non-smooth objectives.

\begin{lemma}\label{le:nonsmooth_left_right_sketch}
Consider the updates $\hA^t$ and $\hB^t$ from Algorithm \ref{alg:Bernoulli-LoRA_nonsmooth} computed as solutions to the following optimization problems:
\begin{eqnarray}
\hat{A}^t &\eqdef& \argmin\limits_{A}\cb{f(\wt) + \frac{\alpha}{r}\linf{\pf{\wt}, {B_S^t}{A}} + \frac{\alpha^2}{2 {\gamma_t} r^2}\sqfnorm{B_S^tA}},\nonumber \\
\hat{B}^t &\eqdef& \argmin\limits_{B}\cb{f(\wt) + \frac{\alpha}{r}\linf{\pf{\wt}, {B} A_S^t} + \frac{\alpha^2}{2{\gamma_t} r^2}\norm{{B} A_S^t}_{\opern{F}}^2}.
\end{eqnarray} 
Then the Left and Right sketch updates can be expressed as a subgradient descent step:
\begin{eqnarray}
\wtpo = \wt - {\gamma_t} G^t,
\end{eqnarray}
where $G^t$ is defined by
\begin{eqnarray}
G^t = \begin{cases}
H_B^t\pf{\wt}, & \text{with probability } p \\
\pf{\wt}H_A^t, & \text{with probability } 1-p
\end{cases}
\end{eqnarray}
with projection matrices $H_A^t$ and $H_B^t$ given by:
\begin{eqnarray}
H_A^t \eqdef \rbtop{A_S^t}\rb{A_S^t\rbtop{A_S^t}}^{\dagger}A_S^t \quad \text{and} \quad H_B^t \eqdef B_S^t\rb{\rbtop{B_S^t}B_S^t}^{\dagger}\rbtop{B_S^t},
\end{eqnarray}
where ${\dagger}$ denotes the Moore-Penrose pseudoinverse.
\end{lemma}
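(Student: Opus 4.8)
The plan is to mirror, essentially verbatim, the argument already used for the smooth case in Lemma~\ref{le:smooth_left_right_sketch}, with the gradient $\nabla f(\wt)$ replaced by an arbitrary subgradient $\pf{\wt}\in\partial f(\wt)$ and the constant stepsize $\gamma$ replaced by the iteration-dependent $\gamma_t$. In each branch of the Bernoulli trial the subproblem solved in line~6 (resp.~line~10) of Algorithm~\ref{alg:Bernoulli-LoRA_nonsmooth} is an \emph{unconstrained convex quadratic} in the trainable factor, since $f(\wt)$ and $\pf{\wt}$ enter merely as fixed data; hence its minimizers are characterized by the first-order optimality condition, and the only point requiring a line of care is handling a possibly rank-deficient Gram matrix via the Moore--Penrose pseudoinverse.

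First I would treat the Left sketch ($c^t=1$). The objective $\phi_A(A)\eqdef f(\wt)+\tfrac{\alpha}{r}\linf{\pf{\wt},B_S^tA}+\tfrac{\alpha^2}{2\gamma_t r^2}\sqfnorm{B_S^tA}$ is convex in $A$, and differentiating gives the stationarity condition $\tfrac{\alpha}{r}\rbtop{B_S^t}\pf{\wt}+\tfrac{\alpha^2}{\gamma_t r^2}\rbtop{B_S^t}B_S^t\hat A^t=0$, i.e.\ $\rbtop{B_S^t}B_S^t\hat A^t=-\tfrac{\gamma_t r}{\alpha}\rbtop{B_S^t}\pf{\wt}$. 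Since $\range(\rbtop{B_S^t})=\range(\rbtop{B_S^t}B_S^t)$ this system is consistent, so $\hat A^t=-\tfrac{\gamma_t r}{\alpha}\rb{\rbtop{B_S^t}B_S^t}^{\dagger}\rbtop{B_S^t}\pf{\wt}$ is a minimizer; substituting into $\wtpo=\wt+\tfrac{\alpha}{r}B_S^t\hat A^t$ and setting $H_B^t\eqdef B_S^t\rb{\rbtop{B_S^t}B_S^t}^{\dagger}\rbtop{B_S^t}$ yields $\wtpo=\wt-\gamma_t H_B^t\pf{\wt}$. The Right sketch ($c^t=0$) is handled identically: differentiating $\phi_B(B)\eqdef f(\wt)+\tfrac{\alpha}{r}\linf{\pf{\wt},BA_S^t}+\tfrac{\alpha^2}{2\gamma_t r^2}\sqfnorm{BA_S^t}$ in $B$ (using $\nabla_B\sqfnorm{BA_S^t}=2BA_S^t\rbtop{A_S^t}$) gives $\hat B^t=-\tfrac{\gamma_t r}{\alpha}\pf{\wt}\rbtop{A_S^t}\rb{A_S^t\rbtop{A_S^t}}^{\dagger}$, hence $\wtpo=\wt-\gamma_t\pf{\wt}H_A^t$ with $H_A^t\eqdef\rbtop{A_S^t}\rb{A_S^t\rbtop{A_S^t}}^{\dagger}A_S^t$. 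Combining the two branches according to the draw $c^t\sim\mathrm{Be}(p)$ produces the stated piecewise form of $G^t$ and the update $\wtpo=\wt-\gamma_t G^t$.

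I do not foresee a genuine obstacle here: the derivation is the same as in the smooth case, and swapping a subgradient for the gradient changes nothing in the computation. The single point needing justification — identical to that in Lemma~\ref{le:smooth_left_right_sketch} — is that the pseudoinverse formula solves the (possibly rank-deficient) normal equations and that $H_A^t,H_B^t$ are the associated orthogonal projectors onto $\range(\rbtop{A_S^t})$ and $\range(B_S^t)$ respectively; both follow from standard properties of the Moore--Penrose pseudoinverse (symmetry and idempotency of $H^\dagger$-type products). It is worth recording, for the subsequent convergence analysis, that combining the bounded-subgradient bound~\eqref{eq:bounded_subgrad} with the contraction $\fnorm{H_B^t\pf{\wt}}\le\fnorm{\pf{\wt}}$ (and likewise $\fnorm{\pf{\wt}H_A^t}\le\fnorm{\pf{\wt}}$) shows that $G^t$ inherits the uniform bound $\fnorm{G^t}\le L_0$.
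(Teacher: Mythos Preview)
Your proposal is correct and follows essentially the same approach as the paper's own proof: both derive the closed-form minimizers of the quadratic subproblems by setting the gradient to zero, solve via the Moore--Penrose pseudoinverse, and substitute back to obtain the projected subgradient step. Your additional remarks on the consistency of the normal equations and the contraction property $\fnorm{G^t}\le L_0$ are nice extra observations but not part of the paper's proof.
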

\begin{proof}

The proof follows a similar structure to that of Lemma \ref{le:smooth_left_right_sketch}, with subgradients replacing gradients throughout the analysis. We examine both sketch types separately before combining them into a unified update rule.

\textbf{Left Sketch Analysis.} When the Left sketch is selected, the update takes the form:
\begin{eqnarray}
\wtpo = \wt + \frac{\alpha}{r} B_S^t \hat{A}^t.
\end{eqnarray}

The matrix $\hat{A}^t$ is defined as the solution to the optimization problem:
\begin{eqnarray}
\hat{A}^t \eqdef \argmin_{A}\cb{f(\wt) + \frac{\alpha}{r}\linf{\pf{\wt}, {B_S^t}{A}} + \frac{\alpha^2}{2 {\gamma_t} r^2}\sqfnorm{B_S^tA}}.
\end{eqnarray}

By computing the gradient of the objective with respect to $A$ and setting it to zero, we obtain:
\begin{eqnarray}
\frac{\alpha}{r}\rbtop{B_S^t}\pf{\wt} + \frac{\alpha^2}{{\gamma_t} r^2}\rbtop{B_S^t}B_S^t\hat{A}^t &=& 0; \nonumber \\
\hat{A}^t &=& -\frac{{\gamma_t} r}{\alpha}\rb{\rbtop{B_S^t}B_S^t}^{\dagger}\rbtop{B_S^t}\pf{\wt}.
\end{eqnarray}

Substituting this expression back into the update equation yields the Left sketch update:
\begin{eqnarray}\label{eq:nonsmooth_wt_subgrad_step_left}
\wtpo &=& \wt + \frac{\alpha}{r}B_S^t\hat{A}^t \nonumber \\
&=& \wt - {\gamma_t} B_S^t\rb{\rbtop{B_S^t}B_S^t}^{\dagger}\rbtop{B_S^t}\pf{\wt}\nonumber \\
&=& \wt - {\gamma_t} H_B^t\pf{\wt}.
\end{eqnarray}

\textbf{Right Sketch Analysis.} For the Right sketch, we follow an analogous approach. The update rule takes the form:
\begin{eqnarray}
\wtpo = \wt + \frac{\alpha}{r}\hat{B}^t A_S^t.
\end{eqnarray}

Applying similar optimization steps but now with respect to matrix $B$, we obtain:
\begin{eqnarray}
\hat{B}^t = -\frac{{\gamma_t} r}{\alpha}\pf{\wt}\rbtop{A_S^t}\rb{A_S^t\rbtop{A_S^t}}^{\dagger},
\end{eqnarray}

which leads to the Right sketch update:
\begin{eqnarray}\label{eq:nonsmooth_wt_subgrad_step_right}
\wtpo &=& \wt + \frac{\alpha}{r}\hat{B}^t A_S^t \nonumber \\
&=& \wt - {\gamma_t} \pf{\wt}\rbtop{A_S^t}\rb{A_S^t\rbtop{A_S^t}}^{\dagger}A_S^t \nonumber \\
&=& \wt - {\gamma_t} \pf{\wt}H_A^t.
\end{eqnarray}

\textbf{Combined Update Rule.} By combining equations \eqref{eq:nonsmooth_wt_subgrad_step_left} and \eqref{eq:nonsmooth_wt_subgrad_step_right}, we arrive at the unified update rule:
\begin{eqnarray}
\wtpo = \wt - {\gamma_t} G^t,
\end{eqnarray}
where $G^t$ takes the form specified in the lemma statement, thus completing the proof.
\end{proof}
\begin{assumption}
\label{as:expected_projection_is_scaling}
    Consider a projection matrix $H$ generated through either Left Sketch (Definition \ref{def:left-sketch}) or Right Sketch (Definition \ref{def:right-sketch}). For the sampling distributions $\mathcal{D}_S^B$ and $\mathcal{D}_S^A$, the  expected projection matrix $H$ satisfies
\begin{eqnarray}
\mathbb{E}[H] = \alpha I,
\end{eqnarray}
where a constant $\alpha >0$. 
\end{assumption}


\begin{theorem}\label{thm:nonsmooth_convergence}
Let Assumptions \ref{as:projection_matrix}, \ref{as:existence_of_minimizer}, \ref{as:convexity}, \ref{as:lipschitzness}, and ~\ref{as:expected_projection_is_scaling}  hold. 
Let us define the following quantities:
$\avg{W}^T \eqdef \frac{1}{T} \sum_{t=0}^{T-1} W^t$ as the averaged iterate,
$R^2_0 \eqdef \sqfnorm{W^0 - W^*}$ as the initial distance to optimum.
Consider the sequence $\cb{W^t}$ produced by \algname{Bernoulli-LoRA} (Algorithm \ref{alg:Bernoulli-LoRA_nonsmooth}) with updates of $\hat{A}^t$ and $\hat{B}^t$ computed according to Lemma \ref{le:nonsmooth_left_right_sketch}. 

\textbf{1. (Constant stepsize).} If the stepsize is constant, i.e., $\gamma_t \eqdef \gamma > 0$, then
\begin{eqnarray}
\Exp{f(\avg{W}^T) - f(W^*)} \leq \frac{R^2_0}{2\gamma\alpha T} + \frac{\gamma L_0^2}{2}.
\end{eqnarray}
Moreover, with the optimal stepsize $\gamma_* = \sqrt{\frac{(R^0)^2}{T\alpha L_0^2}}$, we obtain:
\begin{eqnarray}
\Exp{f(\avg{W}^T) - f(W^*)} \leq \frac{R^0 L_0 }{\sqrt{\alpha T}}.
\end{eqnarray}

\textbf{2. (Polyak stepsize).} If the stepsize is chosen adaptively as
\begin{eqnarray}
\gamma_t = \frac{\rb{f(W^t) - f(W^*)}}{\sqfnorm{\partial f(W^t)}},
\end{eqnarray}
then
\begin{eqnarray}
\Exp{f(\avg{W}^T) - f(W^*)} \leq \frac{R^0 L_0}{\sqrt{\alpha T}}.
\end{eqnarray}
\end{theorem}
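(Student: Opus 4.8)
The plan is to run the classical ``distance to the minimizer'' potential argument for subgradient descent, adapted to the randomized projected step. By Lemma~\ref{le:nonsmooth_left_right_sketch} the iterates satisfy $W^{t+1} = W^t - \gamma_t G^t$, where, writing $g^t \eqdef \partial f(W^t)$, we have $G^t = H_B^t g^t$ with probability $p$ and $G^t = g^t H_A^t$ with probability $1-p$. I condition on $W^t$ and exploit two structural facts. First, since Assumption~\ref{as:expected_projection_is_scaling} gives $\Exp{H_B^t} = \Exp{H_A^t} = \alpha I$, the projected estimator is an $\alpha$-scaled subgradient in expectation: $\Exp{G^t \mid W^t} = p\,\Exp{H_B^t}g^t + (1-p)\,g^t\Exp{H_A^t} = \alpha\, g^t$. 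Second --- the key computation --- because $H_A^t$ and $H_B^t$ are symmetric idempotents, $\sqfnorm{H_B^t g^t} = \langle g^t, H_B^t g^t\rangle$ and $\sqfnorm{g^t H_A^t} = \langle g^t, g^t H_A^t\rangle$; taking expectations and again using $\Exp{H}=\alpha I$ yields the exact identity $\Exp{\sqfnorm{G^t}\mid W^t} = \alpha\,\sqfnorm{g^t} \le \alpha L_0^2$, the last step being the bounded-subgradient bound~\eqref{eq:bounded_subgrad}.

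Expanding $\sqfnorm{W^{t+1}-W^*}$, taking the conditional expectation, and substituting the two facts together with convexity $\langle g^t, W^t - W^*\rangle \ge f(W^t)-f(W^*)$ produces the one-step recursion
\[
\Exp{\sqfnorm{W^{t+1}-W^*}\mid W^t} \le \sqfnorm{W^t - W^*} - 2\gamma_t\alpha\bigl(f(W^t)-f(W^*)\bigr) + \gamma_t^2\alpha\,\sqfnorm{\partial f(W^t)}.
\]
For part~1 (constant $\gamma_t \equiv \gamma$) I would take total expectations, telescope over $t=0,\dots,T-1$, drop $\Exp{\sqfnorm{W^T-W^*}}\ge 0$, bound $\sqfnorm{\partial f(W^t)}\le L_0^2$, divide by $2\gamma\alpha T$, and apply Jensen's inequality~\eqref{eq:jensen_general} to the convex $f$ to pull the average inside, giving $\Exp{f(\avg{W}^T)-f(W^*)}\le \frac{R_0^2}{2\gamma\alpha T}+\frac{\gamma L_0^2}{2}$; minimizing this scalar expression over $\gamma>0$ yields $\gamma_\star=\sqrt{R_0^2/(T\alpha L_0^2)}$ and the stated rate $R^0 L_0/\sqrt{\alpha T}$. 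For part~2 (Polyak stepsize $\gamma_t = (f(W^t)-f(W^*))/\sqfnorm{\partial f(W^t)}$, which is measurable with respect to $W^t$ so the conditioning above is legitimate; set $\gamma_t=0$ on the null event $\partial f(W^t)=0$), substituting into the recursion collapses the last two terms to $-\alpha\,(f(W^t)-f(W^*))^2/\sqfnorm{\partial f(W^t)}\le -\alpha\,(f(W^t)-f(W^*))^2/L_0^2$. Telescoping then gives $\tfrac{\alpha}{L_0^2}\sum_{t=0}^{T-1}\Exp{(f(W^t)-f(W^*))^2}\le R_0^2$; since $f(W^t)-f(W^*)\ge 0$, Jensen gives $\Exp{(f(W^t)-f(W^*))^2}\ge(\Exp{f(W^t)-f(W^*)})^2$, and combining this with the power-mean (Cauchy--Schwarz) inequality and Jensen on $f$ yields $\Exp{f(\avg{W}^T)-f(W^*)}\le \frac1T\sum_{t=0}^{T-1}\Exp{f(W^t)-f(W^*)}\le R^0 L_0/\sqrt{\alpha T}$.

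The main obstacle is getting the second-moment identity right: one must notice that in expectation the random projection contracts the \emph{squared} Frobenius norm by exactly the factor $\alpha$ --- using $\sqfnorm{Px}=\langle x,Px\rangle$ for an orthogonal projection $P$ together with $\Exp{H}=\alpha I$ --- rather than merely bounding it by $\sqfnorm{x}$. With only the crude bound $\sqfnorm{G^t}\le\sqfnorm{g^t}$, the Polyak-stepsize cancellation fails whenever $\alpha$ (typically $r/n$) is small, which is exactly the regime of interest; the sharp factor $\alpha$ is what makes both the $\mathcal{O}(1/\sqrt{\alpha T})$ rate and the exact telescoping go through. A secondary subtlety is confirming that the adaptive stepsize is predictable with respect to the sketch randomness so that the tower property applies cleanly to the telescoped sum.
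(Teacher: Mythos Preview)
Your proposal is correct and follows essentially the same approach as the paper: both reduce the update to a projected subgradient step via Lemma~\ref{le:nonsmooth_left_right_sketch}, exploit the idempotence of $H_A^t, H_B^t$ to obtain the exact second-moment identity $\Exp{\sqfnorm{G^t}\mid W^t}=\alpha\,\sqfnorm{\partial f(W^t)}$ under Assumption~\ref{as:expected_projection_is_scaling}, derive the same one-step recursion, and then telescope with Jensen for the constant stepsize and Cauchy--Schwarz for the Polyak stepsize. Your explicit remark on the measurability of the Polyak stepsize and your emphasis on why the sharp factor $\alpha$ (rather than the crude bound $\sqfnorm{G^t}\le\sqfnorm{g^t}$) is essential are nice additions not spelled out in the paper.
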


\begin{proof}
From Lemma \ref{le:nonsmooth_left_right_sketch}, we know that Bernoulli-LoRA updates in the non-smooth setting can be expressed as
\begin{eqnarray}
W^{t+1} = W^t - \gamma_t G^t,
\end{eqnarray}
where $G^t$ takes the form
\begin{eqnarray}
G^t = \begin{cases}
H_B^t\partial f(W^t), & \text{with probability } p \\
\partial f(W^t)H_A^t, & \text{with probability } 1-p
\end{cases}
\end{eqnarray}
with projection matrices $H_A^t$ and $H_B^t$ as defined in the lemma.

To analyze the convergence, we first compute the conditional expectation and second moment of $G^t$:
\begin{eqnarray}
\Exp{G^t \mid W^t, H^t} &=& p H_B^t\partial f(W^t) + (1-p)\partial f(W^t)H_A^t, \label{eq:nonsmooth_expectation} \\
\Exp{\fnorm{G^t}^2 \mid W^t, H^t} &=& p\fnorm{H_B^t\partial f(W^t)}^2 + (1-p)\fnorm{\partial f(W^t)H_A^t}^2, \label{eq:nonsmooth_second_moment}
\end{eqnarray}
where we defined $H^t \eqdef \{H_A^t, H_B^t\}$.

By the definition of subgradient, we have:
\begin{eqnarray}
f(W^*) &\geq& f(W^t) + \linf{\partial f(W^t), W^* - W^t},
\end{eqnarray}
which implies:
\begin{eqnarray}\label{eq:non-smooth-bound}
\linf{\partial f(W^t), W^t - W^*} &\geq& f(W^t) - f(W^*).
\end{eqnarray}

Let us establish key auxiliary bounds. First, for the inner product terms:
\begin{eqnarray}\label{eq:fnorm_inner_product}
-2\gamma_t\Exp{\linf{G^t, W^t - W^*} \mid W^t, H^t} &\eqtext{\eqref{eq:nonsmooth_expectation}}& -2\gamma_t p\linf{H_B^t\partial f(W^t), W^t - W^*} \nonumber \\
&& - 2\gamma_t(1-p)\linf{\partial f(W^t)H_A^t, W^t - W^*}. 
\end{eqnarray}

For projection matrices, we have the following properties:
\begin{eqnarray}\label{eq:H_A_t-norm}
\sqfnorm{\partial f(W^t)H_A^t} &=& \linf{\partial f(W^t)H_A^t, \partial f(W^t)H_A^t} \nonumber \\
&=& \trace{\rbtop{H_A^t}\rbtop{\partial f(W^t)}\partial f(W^t)H_A^t} \nonumber \\
&=& \trace{\rbtop{\nabla f(\wt)}\nabla f(\wt)H_A^t\rbtop{H_A^t}} \nonumber \\
&=& \trace{\rbtop{\partial f(W^t)}\partial f(W^t)H_A^t} \nonumber \\
&=& \linf{\partial f(W^t), \partial f(W^t)H_A^t},
\end{eqnarray}
and similarly, one can show that
\begin{eqnarray}\label{eq:H_B_t-norm}
\sqfnorm{H_B^t\partial f(W^t)} = \linf{\partial f(W^t), H_B^t\partial f(W^t)}.
\end{eqnarray}

This allows us to express the second moment term as:
\begin{eqnarray}\label{eq:fnorm_second_term}
\gamma_t^2\Exp{\sqfnorm{G^t} \mid W^t, H^t} &\eqtext{\eqref{eq:nonsmooth_second_moment}}& \gamma_t^2p\sqfnorm{H_B^t\partial f(W^t)} + \gamma_t^2(1-p)\sqfnorm{\partial f(W^t)H_A^t} \nonumber \\
&\eqtext{\eqref{eq:H_A_t-norm}, \eqref{eq:H_B_t-norm}}& \gamma_t^2p\linf{\partial f(W^t), H_B^t\partial f(W^t)}  + \gamma_t^2(1-p)\linf{\partial f(W^t), \partial f(W^t)H_A^t}. \nonumber \\
\end{eqnarray}

Combining these bounds, we can analyze the distance to the optimal solution:
\begin{eqnarray}\label{eq:nonsmooth_res_exp}
\Exp{\sqfnorm{W^{t+1} - W^*} \mid W^t, H^t} \nonumber &=& \Exp{\sqfnorm{W^t - \gamma_t G^t - W^*} \mid W^t, H^t} \nonumber \\
&=& \sqfnorm{W^t - W^*} - 2\gamma_t\Exp{\linf{G^t, W^t - W^*} \mid W^t, H^t} \nonumber\\
&&+ \gamma_t^2\Exp{\sqfnorm{G^t} \mid W^t, H^t} \nonumber \\
&\eqtext{\eqref{eq:fnorm_inner_product}, \eqref{eq:fnorm_second_term}}& \sqfnorm{W^t - W^*} -2\gamma_t p\linf{H_B^t\partial f(W^t), W^t - W^*} \nonumber \\
&&  - 2\gamma_t(1-p)\linf{\partial f(W^t)H_A^t, W^t - W^*}  + \gamma_t^2p\linf{\partial f(W^t), H_B^t\partial f(W^t)}\nonumber \\
&&  + \gamma_t^2(1-p)\linf{\partial f(W^t), \partial f(W^t)H_A^t}.
\end{eqnarray}

For the expected projection matrices (see Assumption~\ref{as:expected_projection_is_scaling}), we have:
\begin{eqnarray}\label{eq:nonsmooth_inner_product_B_exp}
\linf{\partial f(W^t), \Exp{H_B^t}\partial f(W^t)} &=& \tr\left(\rbtop{\partial f(W^t)}\Exp{H_B^t}\partial f(W^t)\right) \nonumber \\
&=& \alpha\tr\left(\rbtop{\partial f(W^t)}\partial f(W^t)\right) \nonumber \\
&=& \alpha\sqfnorm{\partial f(W^t)},
\end{eqnarray}
and similarly,
\begin{eqnarray}\label{eq:nonsmooth_inner_product_A_exp}
\linf{\partial f(W^t), \partial f(W^t)\Exp{H_A^t}} &=& \alpha\sqfnorm{\partial f(W^t)}.
\end{eqnarray}

Taking expectation of both sides of \eqref{eq:nonsmooth_res_exp} again, we get
\small
\begin{eqnarray}\label{eq:nonsmooth_res_exp_exp}
\Exp{\sqfnorm{\wtpo - \ws} \mid \wt} &=&\Exp{\Exp{\sqfnorm{\wtpo - \ws} \mid \wt, H^t}\mid \wt}  \\
&=& \sqfnorm{\wt - \ws} -2{\gamma_t} p\linf{\Exp{H_B^t}\pf{\wt}, \wt - \ws}\\
&& - 2{\gamma_t}(1-p)\linf{\pf{\wt}\Exp{H_A^t}, \wt - \ws} \nonumber \\
&&  + \gamma^2_tp\linf{\pf{\wt}, \Exp{H_B^t}\pf{\wt}} + \gamma^2_t(1-p)\linf{\pf{\wt}, \pf{\wt}\Exp{H_A^t}}\nonumber \\
&\eqtext{\eqref{eq:nonsmooth_inner_product_B_exp},\eqref{eq:nonsmooth_inner_product_A_exp}}& \sqfnorm{\wt - \ws} -2{\gamma_t} p\alpha\linf{\pf{\wt}, \wt - \ws} \\
&&- 2{\gamma_t}(1-p)\alpha\linf{\pf{\wt}, \wt - \ws} + \gamma^2_t\alpha\sqfnorm{\pf{\wt}}\nonumber \\
&=& \sqfnorm{\wt - \ws} -2{\gamma_t}\alpha\linf{\pf{\wt}, \wt - \ws} + \gamma^2_t\alpha\sqfnorm{\pf{\wt}} \nonumber \\
&\eqtext{\eqref{eq:non-smooth-bound}}& \sqfnorm{\wt - \ws} -2{\gamma_t}\alpha\rb{f(\wt) - f(\ws)} + \gamma^2_t\alpha\sqfnorm{\pf{\wt}}.
\end{eqnarray}
\normalsize
By Assumption \ref{as:lipschitzness}, subgradients are uniformly bounded (see \citep{beck2017first}):
\begin{eqnarray}\label{eq:L_0_bound}
\fnorm{\partial f(W)} \leq L_0 \quad \forall W \in \mathbb{R}^{m\times n}.
\end{eqnarray}

Now we analyze both stepsize strategies separately.

\textbf{1. (Constant stepsize).} Let us first consider using a fixed stepsize $\gamma_t \eqdef \gamma > 0$. 
Taking expectation of both sides of \eqref{eq:nonsmooth_res_exp_exp} again, applying tower property \eqref{eq:tower_property} and using the bound \eqref{eq:L_0_bound}, we obtain:
\begin{eqnarray}\label{eq:const_step_iter}
\Exp{\sqfnorm{W^{t+1} - W^*}} \leq \Exp{\sqfnorm{W^t - W^*}} - 2\gamma\alpha\Exp{f(W^t) - f(W^*)} + \gamma^2\alpha L_0^2.
\end{eqnarray}

Rearranging terms in \eqref{eq:const_step_iter}:
\begin{eqnarray}\label{eq:const_step_rearr}
2\gamma\alpha\Exp{f(W^t) - f(W^*)} \leq \Exp{\sqfnorm{W^t - W^*}} - \Exp{\sqfnorm{W^{t+1} - W^*}} + \gamma^2\alpha L_0^2.
\end{eqnarray}

Summing inequality \eqref{eq:const_step_rearr} for $t = 0,\ldots,T-1$:
\begin{eqnarray}\label{eq:const_step_sum}
2\gamma\alpha\sum_{t=0}^{T-1}\Exp{f(W^t) - f(W^*)} &\leq& \sum_{t=0}^{T-1}\left(\Exp{\sqfnorm{W^t - W^*}} - \Exp{\sqfnorm{W^{t+1} - W^*}}\right) \nonumber \\
&& + T\gamma^2\alpha L_0^2 \nonumber \\
&=& \Exp{\sqfnorm{W^0 - W^*}} - \Exp{\sqfnorm{W^T - W^*}} + T\gamma^2\alpha L_0^2 \nonumber \\
&\leq& \sqfnorm{W^0 - W^*} + T\gamma^2\alpha L_0^2,
\end{eqnarray}
where the last inequality follows from the non-negativity of $\sqfnorm{W^T - W^*}$.

For the averaged iterate $\avg{W}^T \eqdef \frac{1}{T} \sum_{t=0}^{T-1} W^t$, by convexity of $f$ we have:
\begin{eqnarray}\label{eq:const_step_avg}
\Exp{f(\avg{W}^T) - f(W^*)} &\leq& \frac{1}{T}\sum_{t=0}^{T-1}\Exp{f(W^t) - f(W^*)} \nonumber \\
&\letext{\eqref{eq:const_step_sum}}& \frac{\sqfnorm{W^0 - W^*}}{2\gamma\alpha T} + \frac{\gamma L_0^2}{2} \nonumber \\
&=& \frac{(R^0)^2}{2\gamma\alpha T} + \frac{\gamma L_0^2}{2},
\end{eqnarray}
where we denoted $(R^0)^2 \eqdef \sqfnorm{W^0 - W^*}$.

To optimize this bound, we minimize it with respect to $\gamma$. The optimal stepsize $\gamma_*$ solves:
\begin{eqnarray}\label{eq:const_step_opt}
\gamma_* &=& \argmin_{\gamma > 0}\left(\frac{(R^0)^2}{2\gamma \alpha T} + \frac{\gamma L_0^2}{2}\right) \nonumber \\
&=& \sqrt{\frac{(R^0)^2}{T\alpha L_0^2}}.
\end{eqnarray}

Substituting $\gamma_*$ back into \eqref{eq:const_step_avg}, we obtain the optimal convergence rate:
\begin{eqnarray}\label{eq:const_step_final}
\Exp{f(\avg{W}^T) - f(W^*)} \leq \frac{R^0 L_0 }{\sqrt{\alpha T}}.
\end{eqnarray}

\textbf{2. (Polyak stepsize).} For this strategy, we choose the stepsize adaptively based on the current function value:
\begin{eqnarray}\label{eq:polyak_step_def}
\gamma_t &=& \argmin\limits_{\gamma>0} \cb{\sqfnorm{\wt - \ws} -2{\gamma}\alpha \rb{f(\wt) - f(\ws)} + {\gamma}^2\alpha \sqfnorm{\pf{\wt}}}\nonumber \\
&=& \frac{\rb{f(W^t) - f(W^*)}}{\sqfnorm{\partial f(W^t)}}.
\end{eqnarray}

Substituting this stepsize into inequality \eqref{eq:nonsmooth_res_exp_exp}:
\begin{eqnarray}\label{eq:polyak_step_subst}
\Exp{\sqfnorm{\wtpo - \ws} \mid \wt} &=&\Exp{\Exp{\sqfnorm{\wtpo - \ws} \mid \wt, H^t}\mid \wt} \nonumber \\ 
&\leq& \sqfnorm{\wt - \ws} -2{\gamma_t}\alpha\rb{f(\wt) - f(\ws)} + \gamma^2_t\alpha\sqfnorm{\pf{\wt}} \nonumber \\
&\eqtext{\eqref{eq:polyak_step_def}}& \sqfnorm{\wt - \ws} - \frac{\alpha\rb{f(W^t) - f(W^*)}^2}{\sqfnorm{\partial f(W^t)}}\nonumber \\
&\letext{\eqref{eq:L_0_bound}}& \sqfnorm{\wt - \ws} -  \frac{\alpha\rb{f(W^t) - f(W^*)}^2}{ L_0^2 }.
\end{eqnarray}

Taking expectation of both sides of \eqref{eq:polyak_step_subst} again and applying the tower property
\begin{eqnarray}\label{eq:polyak_step_subst_exp}
\Exp{\sqfnorm{\wtpo - \ws}} \le \Exp{\sqfnorm{\wt - \ws}} -  \frac{\alpha\Exp{\rb{f(W^t) - f(W^*)}^2} }{ L_0^2 }
\end{eqnarray}
Since $f$ is convex, by Jensen's inequality \eqref{eq:jensen_general} and the Cauchy-Bunyakovsky-Schwarz inequality \eqref{eq:cauchy_bunyakovsky_schwarz} with $X\eqdef  f(\wt) - f(\ws)$ and $Y\eqdef 1$, we have 
\begin{eqnarray}
\Exp{ f_i(\avg{W}^T) - f(\ws)} &\letext{\eqref{eq:jensen_general}}& \Exp{ \frac{1}{T} \sum_{t=0}^{T-1} f(\wt) - f(\ws)} \nonumber \\
&\le& \frac{1}{T}\sum_{t=0}^{T-1} \Exp{ f(\wt) - f(\ws)}\nonumber \\
&\letext{\eqref{eq:cauchy_bunyakovsky_schwarz}}& \frac{1}{T}\sum_{t=0}^{T-1} \sqrt{\Exp{\rb{ f(\wt) - f(\ws)}^2 }}\nonumber \\
&\le& \sqrt{\frac{1}{T}\sum_{t=0}^{T-1}\Exp{\rb{ f(\wt) - f(\ws)}^2 }}\nonumber \\
&\letext{\eqref{eq:polyak_step_subst_exp}}& \frac{R^0 L_0}{\sqrt{\alpha T}},
\end{eqnarray}
which matches the optimal rate achieved by the constant stepsize strategy with optimal tuning.
\end{proof}

\newpage
\subsection{Analysis of {Bernoulli-LoRA-SGD}}\label{apx:sgd}

\begin{algorithm}[H]
\caption{\algname{Bernoulli-LoRA-SGD}}\label{alg:Bernoulli-LoRA-SGD}
\begin{algorithmic}[1]
\STATE \textbf{Parameters:} pre-trained model $W^0 \in \mathbb{R}^{m \times n}$, rank $r \ll \min\{m,n\}$, scaling factor $\alpha > 0$, chain length $T$, sketch distribution $\mathcal{D}_S^B$ or $\mathcal{D}_S^A$, Bernoulli probability $p$

\FOR{$t = 0, 1, \ldots, T-1$}
    \STATE Sample $c^t \sim \text{Be}(p)$ \hfill{Bernoulli random variable}
    \IF{$c^t = 1$}
        \STATE Sample $B_S^t \sim \mathcal{D}_S^B$ \hfill{Left sketch}
        \STATE $\hat{A}^t = -\eta \rb{\rbtop{B_S^t}B_S^t}^{\dagger}\rbtop{B_S^t}g(W^t)$ 
        \STATE $W^{t+1} = W^t + \frac{\alpha}{r}B_S^t\hat{A}^t$
    \ELSE
        \STATE Sample $A_S^t \sim \mathcal{D}_S^A$ \hfill{Right sketch}
        \STATE $\hat{B}^t = -\eta g(W^t)\rbtop{A_S^t}\rb{A_S^t\rbtop{A_S^t}}^{\dagger}$
        \STATE $W^{t+1} = W^t + \frac{\alpha}{r}\hat{B}^tA_S^t$
    \ENDIF
\ENDFOR
\end{algorithmic}
\end{algorithm}

Earlier findings were derived utilizing full gradient computations. Nonetheless, this method proves impractical in deep learning applications, where obtaining full gradients is rarely feasible. Our focus moves to a framework that employs \algname{Stochastic Gradient Descent (SGD)} while incorporating a more flexible and generalized data sampling strategy, enabling greater adaptability in the selection and utilization of data throughout the training process. General sampling techniques for strongly convex functions have been thoroughly examined in~\citep{GowerSGD}. For broader convex optimization problems, \citet{KhaledUnified} provide a comprehensive study of how \algname{SGD} performs under different sampling strategies. In non-convex scenarios, the works of \citet{khaled2022better} and \citep{demidovich2023guide} investigate the effects of generalized sampling methods on \algname{SGD}’s convergence and efficiency, offering valuable insights into its adaptability for diverse machine learning applications. In this section we focus on \algname{Bernoulli-LoRA-SGD}, a method, designed in the scope of \algname{Bernoulli-LoRA} framework, based on the classical \algname{SGD} algorithm.

For convergence analysis, we notice the gradient step in Algorithm~\ref{alg:Bernoulli-LoRA-SGD} is equivalent to the following update
\begin{eqnarray}
\label{eq:sgd_simple}
    W^{t+1} &=& W^t - \gamma \hat{G}^t, \quad \text{where}\quad \hat{G}^t = \begin{cases}
        H^t_B G^t,& \text{with probability}~~ p\\
        G^t H^t_A,& \text{with probability}~~ 1-p
    \end{cases},
\end{eqnarray}
where $G^t  = g(W^t)$ is an unbiased stochastic gradient, which satisfies Assumption~\ref{as:ABC_assumption}. 

\subsubsection{Convergence for Smooth Non-Convex Functions}
\begin{theorem}
  \label{th:B_LoRA_SGD}  
  Let Assumptions~\ref{as:bounded_below},~\ref{as:lipschitz_smoothness}, and ~\ref{as:ABC_assumption} hold, and stepsize satisfy $$0 < \gamma \leq  \min\left\{\frac{1}{\sqrt{L A_1 \lambda^p_{\max} T}}, \frac{1}{LB_1}\left(\frac{\lambda^p_{\max}}{\lambda^p_{\min}}\right)^{-1}\right\}.$$ Then iterates generated by \algname{Bernoulli-LoRA-SGD} (Algorithm~\ref{alg:Bernoulli-LoRA-SGD}) satisfy
  \begin{equation*}
       \Exp{\sqfnorm{\nabla f(\widetilde{W}^T)}} \leq \frac{6(f(W^0) - f^*)}{\gamma \lambda^p_{\min} T} + \gamma LC_1 \frac{\lambda^p_{\max}}{\lambda^p_{\min}}, 
  \end{equation*}
  where $\lambda^p_{\min} \eqdef p\lambda^{H_B}_{\min} + (1-p)\lambda^{H_A}_{\min}$, $\lambda^p_{\max} \eqdef p\lambda^{H_B}_{\max} + (1-p)\lambda^{H_A}_{\max}$, and $\widetilde{W}^T$ is chosen at random from $\left\{W^0, W^1,\dots, W^{T-1}\right\}$ with probabilities $\{\frac{w_t}{\mathcal{W}_{T-1}}\}_{t = 0}^{T-1}$, where $w_t = \frac{w_{t-1}}{(1+\gamma^2LA_1\lambda^p_{\max})}$, $\mathcal{W}_{T-1} = \sum^{T-1}_{t=0}w_t$, and $w^{-1} > 0$. 
\end{theorem}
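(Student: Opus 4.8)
The plan is to run the standard \algname{SGD}-under-ABC argument (in the style of \citep{khaled2022better}), grafted onto the projection machinery already developed for Theorem~\ref{thm:smooth_non_cvx}. First I would start from the reformulated update \eqref{eq:sgd_simple} and apply the descent lemma from $L$-smoothness (Assumption~\ref{as:lipschitz_smoothness}) conditionally on $\wt$ and the sketch $H^t = \{H_A^t, H_B^t\}$, then take expectation over the remaining randomness (the Bernoulli coin $c^t$, the sketch, and the stochastic gradient $g(\wt)$, all independent given $\wt$). Because $g(\wt)$ is unbiased and independent of the sketch, the first-order term reduces exactly as in Theorem~\ref{thm:smooth_non_cvx}: $\linf{\nabla f(\wt), \Exp{\hat G^t \mid \wt}} = p\linf{\nabla f(\wt), \Exp{H_B^t}\nabla f(\wt)} + (1-p)\linf{\nabla f(\wt), \nabla f(\wt)\Exp{H_A^t}} \ge \lambda^p_{\min}\sqfnorm{\nabla f(\wt)}$. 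For the second moment I would reuse the idempotence/symmetry identities $\sqfnorm{H_B^t X} = \linf{X, H_B^t X}$ and $\sqfnorm{X H_A^t} = \linf{X, X H_A^t}$ established there, so that $\Exp{\sqfnorm{\hat G^t}\mid \wt} \le \bigl(p\lambda^{H_B}_{\max} + (1-p)\lambda^{H_A}_{\max}\bigr)\Exp{\sqfnorm{g(\wt)}\mid\wt} = \lambda^p_{\max}\Exp{\sqfnorm{g(\wt)}\mid\wt}$, and then bound $\Exp{\sqfnorm{g(\wt)}\mid\wt}$ by the ABC inequality (Assumption~\ref{as:ABC_assumption}).

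Combining these, and writing $\delta^t \eqdef f(\wt) - f^*$, I expect a one-step recursion of the form
\[
\Exp{\delta^{t+1}\mid\wt} \le \bigl(1 + L\gamma^2 A_1\lambda^p_{\max}\bigr)\delta^t - \gamma\Bigl(\lambda^p_{\min} - \tfrac{L\gamma B_1\lambda^p_{\max}}{2}\Bigr)\sqfnorm{\nabla f(\wt)} + \tfrac{L\gamma^2 C_1\lambda^p_{\max}}{2}.
\]
The stepsize restriction $\gamma \le \tfrac{1}{LB_1}(\lambda^p_{\max}/\lambda^p_{\min})^{-1}$ makes the gradient coefficient at least $\tfrac{\gamma\lambda^p_{\min}}{2}$, so after discarding the slack I get $\Exp{\delta^{t+1}\mid\wt} \le \rho\,\delta^t - \tfrac{\gamma\lambda^p_{\min}}{2}\sqfnorm{\nabla f(\wt)} + \tfrac{L\gamma^2 C_1\lambda^p_{\max}}{2}$ with $\rho \eqdef 1 + L\gamma^2 A_1\lambda^p_{\max}$.

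The final and only non-routine step is the weighted telescoping. Multiplying through by the weights $w_t$ from the statement (which satisfy $w_t\rho = w_{t-1}$) collapses the $\delta$-terms into a telescope; summing $t = 0,\dots,T-1$, taking full expectations, and dropping $-w_{T-1}\Exp{\delta^T}\le 0$ yields
\[
\tfrac{\gamma\lambda^p_{\min}}{2}\sum_{t=0}^{T-1} w_t\,\Exp{\sqfnorm{\nabla f(\wt)}} \le w_{-1}\delta^0 + \tfrac{L\gamma^2 C_1\lambda^p_{\max}}{2}\sum_{t=0}^{T-1}w_t.
\]
Dividing by $\tfrac{\gamma\lambda^p_{\min}}{2}\mathcal W_{T-1}$ and recognizing the left side as $\Exp{\sqfnorm{\nabla f(\widetilde W^T)}}$ under the prescribed sampling gives $\Exp{\sqfnorm{\nabla f(\widetilde W^T)}} \le \tfrac{2 w_{-1}}{\gamma\lambda^p_{\min}\mathcal W_{T-1}}\delta^0 + \gamma L C_1 \tfrac{\lambda^p_{\max}}{\lambda^p_{\min}}$. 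It then remains to show $w_{-1}/\mathcal W_{T-1} \le 3/T$: since $w_t = w_{-1}\rho^{-(t+1)}$, the first stepsize bound $\gamma \le (LA_1\lambda^p_{\max}T)^{-1/2}$ forces $\rho^T \le (1+1/T)^T \le e$, hence $w_t \ge w_{-1}/e$ and $\mathcal W_{T-1} \ge w_{-1}T/e$, which closes the estimate (the degenerate case $A_1 = 0$, where $\rho = 1$ and $\mathcal W_{T-1} = w_{-1}T$, is immediate). The main obstacle is assembling this telescoping cleanly and verifying the $\rho^T \le e$ bound; all preceding steps are the direct stochastic-gradient analogue of the computation in Theorem~\ref{thm:smooth_non_cvx}.
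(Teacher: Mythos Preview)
Your proposal is correct and follows essentially the same approach as the paper's proof: both derive the one-step recursion $\Exp{\delta^{t+1}\mid\wt} \le \rho\,\delta^t - \tfrac{\gamma\lambda^p_{\min}}{2}\sqfnorm{\nabla f(\wt)} + \tfrac{L\gamma^2 C_1\lambda^p_{\max}}{2}$ via the projection identities and the ABC bound, then apply the same weighted telescoping with $w_t\rho = w_{t-1}$ and close with the estimate $\rho^T \le e \le 3$. The only cosmetic difference is that the paper bounds $\rho^T$ via $\exp(\gamma^2 L A_1 \lambda^p_{\max} T) \le e$ while you use $(1+1/T)^T \le e$, which amounts to the same thing.
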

\begin{proof}
    We start with smoothness of function $f$:
    \begin{eqnarray}
        f(W^{t+1}) &\leq& f(W^t) + \la \nabla f(W^t) , W^{t+1} - W^t\ra +\frac{L}{2}\sqfnorm{W^{t+1}-W^t} \notag\\
        &\overset{\eqref{eq:sgd_simple}}{=}& f(W^t) -\gamma \la \nabla f(W^t), \hat{G}^t\ra +\frac{\gamma^2 L}{2}\sqfnorm{\hat{G}^t}. \label{eq:sankasnkjs}
    \end{eqnarray}
    Taking a conditional expectation by $W^t$, we bound the second and the third terms from inequality \eqref{eq:sankasnkjs}:
    \begin{eqnarray}
        \Exp{\la \nabla f(W^t), \hat{G}^t\ra|W^t} &=& \la \nabla f(W^t), \Exp{\hat{G}^t|W^t}\ra \notag\\
        &\overset{\eqref{eq:sgd_simple}}{=}& p \la \nabla f(W^t), \Exp{H^t_BG^t|W^t}\ra + (1-p)\la \nabla f(W^t), \Exp{G^t H^t_A|W^t}\ra \notag\\
        &\overset{(\ast)}{=}& p \la \nabla f(W^t), \Exp{H^t_B|W^t}\Exp{G^t|W^t}\ra + (1-p)\la \nabla f(W^t), \Exp{G^t|W^t}\Exp{ H^t_A|W^t}\ra \notag\\
        &=&  p \la \nabla f(W^t), \Exp{H^t_B|W^t}\nabla f(W^t)\ra + (1-p)\la \nabla f(W^t), \nabla f(W^t)\Exp{ H^t_A|W^t}\ra \notag\\
        &\geq& \underbrace{\left(p \lambda_{\min}(\Exp{H^t_B}) + (1-p)\lambda_{\min}(\Exp{H^t_A})\right)}_{\eqdef \lambda_{\min}^p} \sqfnorm{\nabla f(W^t)} \notag\\
        &=& \lambda_{\min}^p \sqfnorm{\nabla f(W^t)}, \label{eq:hbfjhbf}
    \end{eqnarray}
    where in $(\ast)$ we used that $H^t_B$, $H^t_A$ and $G^t$ are independent. Now we bound the third term:
    \begin{eqnarray*}
        \Exp{\sqfnorm{\hat{G}^t}|W^t} &\overset{\eqref{eq:sgd_simple}}{=}& p \Exp{\sqfnorm{H^t_BG^t}|W^t} + (1-p)\Exp{\sqfnorm{G^t H^t_A}|W^t} \\
        &=& p \Exp{\la H^t_BG^t, H^t_BG^t\ra|W^t} + (1-p)\Exp{\la G^t H^t_A,  G^t H^t_A\ra|W^t}\\
        &\overset{(\ast\ast)}{=}& p \Exp{\la G^t, H^t_BG^t\ra|W^t} + (1-p)\Exp{\la G^t ,  G^t H^t_A\ra|W^t},
    \end{eqnarray*}
    where in $(\ast\ast)$ we used property of projection matrices $H^t_B, H^t_B$. By the independence of $H^t_B, H^t_A, G^t$, we obtain
    \begin{eqnarray}
        \Exp{\sqfnorm{\hat{G}^t}|W^t} 
        &=& p \Exp{\la G^t, \Exp{H^t_B|W^t}G^t\ra|W^t} + (1-p)\Exp{\la G^t ,  G^t \Exp{H^t_A|W^t}\ra|W^t} \notag\\
        &\leq& p\lambda_{\max}(\Exp{H^t_B|W^t}) \Exp{\sqfnorm{G^t}|W^t} + (1-p) \lambda_{\max}(\Exp{H^t_A|W^t}) \Exp{\sqfnorm{G^t}|W^t}\notag\\
        &=&  \underbrace{(p\lambda_{\max}(\Exp{H^t_B|W^t})  + (1-p) \lambda_{\max}(\Exp{H^t_A|W^t}))}_{\eqdef \lambda_{\max}^p} \Exp{\sqfnorm{G^t}|W^t}\notag\\
        &=& \lambda_{\max}^p\Exp{\sqfnorm{G^t}|W^t}. \label{eq:iweipwewp}
    \end{eqnarray}
    Plugging \eqref{eq:hbfjhbf} and \eqref{eq:iweipwewp} into \eqref{eq:sankasnkjs}, we obtain 
    \begin{eqnarray*}
        \Exp{f(W^{t+1})|W^t} &\leq& f(W^t) -\gamma \Exp{\la \nabla f(W^t), \hat{G}^t\ra|W^t} +\frac{\gamma^2 L}{2}\Exp{\sqfnorm{\hat{G}^t}|W^t}\\
        &\leq&  f(W^t) - \gamma \lambda_{\min}^p \sqfnorm{\nabla f(W^t)} + \frac{\gamma^2 \lambda_{\max}^p L}{2} \Exp{\sqfnorm{G^t}|W^t}.
    \end{eqnarray*}
    By Assumption~\ref{as:ABC_assumption}, 
    \begin{eqnarray*}
        \Exp{f(W^{t+1}) - f^*|W^t} &\leq& f(W^t) -\gamma \Exp{\la \nabla f(W^t), \hat{G}^t\ra|W^t} +\frac{\gamma^2 L}{2}\Exp{\sqfnorm{\hat{G}^t}|W^t}\\
        &\leq&  f(W^t) - f^* - \gamma \lambda_{\min}^p \sqfnorm{\nabla f(W^t)} \\
        && + \frac{\gamma^2 \lambda_{\max}^p L}{2} \left(2A_1(f(W^t)- f^*)+ B_1\sqfnorm{\nabla f(W^t)} + C_1\right)\\
        &\leq& \left(1 +\gamma^2 \lambda_{\max}^p LA_1\right)\left( f(W^t) - f^*\right) - \gamma \lambda_{\min}^p \left(1-\frac{\gamma LB_1 \lambda_{\max}^p}{2\lambda^p_{\min}}\right) \sqfnorm{\nabla f(W^t)}\\
        && + \frac{\gamma^2 \lambda_{\max}^p L C_1}{2}.
    \end{eqnarray*}
    Taking mathematical expectation and selecting a stepsize as $0 <\gamma \leq \frac{1}{LB_1 }\left( \frac{\lambda^p_{\max}}{\lambda^p_{\min}}\right)^{-1}$, we get 
    \small
    \begin{eqnarray}\label{eq:useful_inequality_from_sgd_proof}
        \Exp{f(W^{t+1}) - f^*} &\leq& \left(1 +\gamma^2 \lambda_{\max}^p LA_1\right)\Exp{f(W^t) - f^*}\notag\\
        &-& \frac{\gamma \lambda_{\min}^p}{2} \Exp{\sqfnorm{\nabla f(W^t)}} + \frac{\gamma^2 \lambda_{\max}^p L C_1}{2}. 
    \end{eqnarray}
    \normalsize
    Defining  $\delta^t \eqdef \Exp{f(W^t) - f^*}$, $r^t \eqdef \Exp{\sqfnorm{\nabla f(W^t)}}$ for every $t \geq 0$, we have 
    \begin{eqnarray*}
        \delta^{t+1} &\leq& \left(1 +\gamma^2 \lambda_{\max}^p LA_1\right)\delta^t - \frac{\gamma \lambda_{\min}^p}{2} r^t + \frac{\gamma^2 \lambda_{\max}^p L C_1}{2}. 
    \end{eqnarray*}
    Fixing $w^{-1} >0$ and defining $w_{t} = \frac{w_{t-1}}{1+\gamma^2LA_1\lambda_{\max}^p}$ for all $t \geq 0$, we have 
    \begin{eqnarray*}
        \frac{1}{2}\lambda^p_{\min} w_t r^t &\leq& \frac{w_t}{\gamma}\left(1+\gamma^2\lambda^p_{\max}LA_1\right)\delta^t - \frac{w_{t}}{\gamma}\delta^{t+1} + \frac{1}{2}\gamma LC_1 \lambda^p_{\max} w_t\\
        &=&  \frac{w_{t-1}\delta^t}{\gamma} - \frac{w_{t}\delta^{t+1}}{\gamma} + \frac{1}{2}\gamma LC_1 \lambda^p_{\max} w_t.
    \end{eqnarray*}
    Summing over $t$ from $0$ to $T-1$, we have 
    \begin{eqnarray*}
        \sum^{T-1}_{t=0} w_t r^t &\leq& \frac{2w_{-1}\delta^0}{\gamma\lambda^p_{\min}} -\frac{2w_{T-1}\delta^{T}}{\gamma\lambda^p_{\min}} + \gamma L C_1 \frac{\lambda^p_{\max}}{\lambda^p_{\min}} \sum^{T-1}_{t=0} w_t.
    \end{eqnarray*}
    Defining $\mathcal{W}_{T-1} = \sum^{T-1}_{t=0}w_t $, we acquire 
    \begin{eqnarray*}
        \sum^{T-1}_{t=0}\frac{w_t}{\mathcal{W}^{T-1}} r^t &\leq& \frac{2w_{-1}\delta^0}{\gamma \lambda^p_{\min} \mathcal{W}_{T-1}} + \gamma LC_1 \frac{\lambda^p_{\max}}{\lambda^p_{\min}}. 
    \end{eqnarray*}
    Using the next chain of inequalities
    \begin{equation*}
        W_{T-1} = \sum^{T-1}_{t=0} w_t \geq T \min_{0 \leq t \leq T-1} w_t  = T w_{T-1} = \frac{T w_{-1}}{(1+\gamma^2 \lambda^p_{\max} LA_1)^T},
    \end{equation*}
    we have 
    \begin{eqnarray*}
        \sum^{T-1}_{t=0}\frac{w_t}{\mathcal{W}^{T-1}} r^t &\leq& \frac{2(1+\gamma^2 \lambda^p_{\max} LA_1)^T}{\gamma T \lambda^p_{\min}} (f(W^0) - f^*) + \gamma LC_1 \frac{\lambda^p_{\max}}{\lambda^p_{\min}}.
    \end{eqnarray*}
    Selecting $0 < \gamma \leq \frac{1}{\sqrt{LA_1 \lambda^p_{\max} T}}$, and using $(1+\gamma^2 \lambda^p_{\max} LA_1)^T \leq \exp\left(\gamma^2 \lambda^p_{\max} L A_1 T\right) \leq \exp{(1)} \leq 3$, we obtain  
    \begin{eqnarray*}
        \sum^{T-1}_{t=0}\frac{w_t}{\mathcal{W}^{T-1}} r^t &\leq& \frac{6\delta^0}{\gamma T \lambda^p_{\min}}  + \gamma LC_1 \frac{\lambda^p_{\max}}{\lambda^p_{\min}}. 
    \end{eqnarray*}
\end{proof}

Next we show convergence of \algname{Bernoulli-LoRA-SGD} under additional Assumption~\ref{as:pl_condition}.  
\subsubsection{Convergence under Polyak-{\L}ojasiewicz Condition}
\begin{theorem}
\label{th:B_LORA_SGD_PL}
Let Assumptions~\ref{as:bounded_below},~\ref{as:lipschitz_smoothness}, ~\ref{as:ABC_assumption}, and \ref{as:pl_condition} hold, and stepsize satisfy 

$0 < \gamma \leq  \min\left\{\frac{\mu\lambda^p_{\min}}{2L A_1\lambda^p_{\max}}, \frac{2}{\mu \lambda^p_{\min}}, \frac{1}{LB_1}\left(\frac{\lambda^p_{\max}}{\lambda^p_{\min}}\right)^{-1}\right\}$. Then iterates generated by \algname{Bernoulli-LoRA-SGD} (Algorithm~\ref{alg:Bernoulli-LoRA-SGD}) satisfy
  \begin{equation*}
      \Exp{f(W^T) - f^*} \leq \left(1- \frac{1}{2}\gamma \mu \lambda^p_{\min}\right)^{T} \left(f(W^0) - f^*\right) + \frac{\gamma LC_1}{\mu }\cdot \frac{\lambda^p_{\max}}{\lambda^p_{\min}}, 
  \end{equation*}
  where $\lambda^p_{\min} \eqdef p\lambda^{H_B}_{\min} + (1-p)\lambda^{H_A}_{\min}$, $\lambda^p_{\max} \eqdef p\lambda^{H_B}_{\max} + (1-p)\lambda^{H_A}_{\max}$.     
\end{theorem}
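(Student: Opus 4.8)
The plan is to reuse the one-step descent inequality already established inside the proof of Theorem~\ref{th:B_LoRA_SGD}, namely~\eqref{eq:useful_inequality_from_sgd_proof}, which (after taking full expectation) reads
\[
\Exp{f(W^{t+1}) - f^*} \leq \left(1 + \gamma^2 \lambda^p_{\max} L A_1\right)\Exp{f(W^t) - f^*} - \frac{\gamma \lambda^p_{\min}}{2}\Exp{\sqfnorm{\nabla f(W^t)}} + \frac{\gamma^2 \lambda^p_{\max} L C_1}{2}.
\]
This inequality is valid precisely under Assumptions~\ref{as:bounded_below},~\ref{as:lipschitz_smoothness},~\ref{as:ABC_assumption} together with the condition $\gamma \leq \tfrac{1}{LB_1}(\lambda^p_{\max}/\lambda^p_{\min})^{-1}$, which is exactly one of the three bounds imposed on $\gamma$ in the statement.

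Next I would invoke the Polyak--{\L}ojasiewicz condition (Assumption~\ref{as:pl_condition}) in the form $\sqfnorm{\nabla f(W^t)} \geq 2\mu\,(f(W^t) - f^*)$ to eliminate the gradient term. Writing $\delta^t \eqdef \Exp{f(W^t) - f^*}$, this produces
\[
\delta^{t+1} \leq \left(1 + \gamma^2 \lambda^p_{\max} L A_1 - \gamma\mu\lambda^p_{\min}\right)\delta^t + \frac{\gamma^2 \lambda^p_{\max} L C_1}{2}.
\]
The bound $\gamma \leq \tfrac{\mu\lambda^p_{\min}}{2 L A_1 \lambda^p_{\max}}$ gives $\gamma^2 \lambda^p_{\max} L A_1 \leq \tfrac{1}{2}\gamma\mu\lambda^p_{\min}$, so the multiplicative factor is at most $1 - \tfrac{1}{2}\gamma\mu\lambda^p_{\min}$, while the bound $\gamma \leq \tfrac{2}{\mu\lambda^p_{\min}}$ ensures this factor is nonnegative. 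Hence
\[
\delta^{t+1} \leq \left(1 - \tfrac{1}{2}\gamma\mu\lambda^p_{\min}\right)\delta^t + \frac{\gamma^2 \lambda^p_{\max} L C_1}{2}.
\]

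Finally I would unroll this linear recursion over $t = 0,\dots,T-1$, obtaining
\[
\delta^T \leq \left(1 - \tfrac{1}{2}\gamma\mu\lambda^p_{\min}\right)^T \delta^0 + \frac{\gamma^2 \lambda^p_{\max} L C_1}{2}\sum_{k=0}^{T-1}\left(1 - \tfrac{1}{2}\gamma\mu\lambda^p_{\min}\right)^k,
\]
and bound the geometric sum by its infinite counterpart $\sum_{k=0}^{\infty}\bigl(1 - \tfrac{1}{2}\gamma\mu\lambda^p_{\min}\bigr)^k = \tfrac{2}{\gamma\mu\lambda^p_{\min}}$, which collapses the additive term to $\tfrac{\gamma L C_1}{\mu}\cdot\tfrac{\lambda^p_{\max}}{\lambda^p_{\min}}$, exactly matching the claim. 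There is no genuine obstacle here: the analytic heavy lifting—handling the Bernoulli projection, the mutual independence of $H_A^t, H_B^t, G^t$, and the ABC bound—was already carried out in Theorem~\ref{th:B_LoRA_SGD}. The only points requiring care are that all three stepsize constraints get used (one for the validity of~\eqref{eq:useful_inequality_from_sgd_proof}, one to absorb the $\gamma^2 L A_1$ term into the contraction, one to keep the contraction factor nonnegative), and that the geometric series is summed to infinity rather than truncated, which is what yields the clean $1/(\gamma\mu\lambda^p_{\min})$ factor in the neighborhood term.
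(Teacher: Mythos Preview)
Your proposal is correct and follows essentially the same approach as the paper: start from the one-step inequality~\eqref{eq:useful_inequality_from_sgd_proof}, apply the P\L{} condition, use the three stepsize bounds exactly as you describe (one for the validity of the starting inequality, one to absorb $\gamma^2 L A_1 \lambda^p_{\max}$ into the contraction, one for nonnegativity), and unroll the recursion with the geometric sum bounded by its infinite counterpart. There is nothing to add.
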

\begin{proof}
    We start our proof with inequality~\ref{eq:useful_inequality_from_sgd_proof}. Using PL-inequality (see Assumption~\ref{as:pl_condition}), we have 
    \begin{eqnarray*}
        \Exp{f(W^{t+1}) - f^*} &\leq& \left(1 +\gamma^2 \lambda_{\max}^p LA_1\right)\Exp{f(W^t) - f^*} - \frac{\gamma \lambda_{\min}^p}{2} \Exp{\sqfnorm{\nabla f(W^t)}} + \frac{\gamma^2 \lambda_{\max}^p L C_1}{2}\\
        &\leq& \left(1  - \gamma \mu \lambda^p_{\min}+\gamma^2 \lambda_{\max}^p LA_1\right)\Exp{f(W^t) - f^*} + + \frac{\gamma^2 \lambda_{\max}^p L C_1}{2}.
    \end{eqnarray*}
    Taking the stepsize as $0 < \gamma \leq \min\left\{\frac{\mu\lambda^p_{\min}}{2L A_1\lambda^p_{\max}}, \frac{2}{\mu \lambda^p_{\min}}\right\}$, we obtain
    \begin{eqnarray*}
        \Exp{f(W^{t+1}) - f^*} &\leq& \left(1- \frac{1}{2}\gamma \mu \lambda^p_{\min}\right) \Exp{f(W^t) - f^*} +\frac{\gamma^2 \lambda^p_{\max}LC_1}{2}\\
        &\leq& \left(1- \frac{1}{2}\gamma \mu \lambda^p_{\min}\right)^{t+1} \Exp{f(W^0) - f^*} + \frac{\gamma^2 \lambda^p_{\max}LC_1}{2} \sum^{t}_{\tau = 0}\left(1- \frac{1}{2}\gamma \mu \lambda^p_{\min}\right)^{t-\tau}\\
        &\leq& \left(1- \frac{1}{2}\gamma \mu \lambda^p_{\min}\right)^{t+1} \Exp{f(W^0) - f^*} + \frac{\gamma^2 \lambda^p_{\max}LC_1}{2} \sum^{\infty}_{\tau = 0}\left(1- \frac{1}{2}\gamma \mu \lambda^p_{\min}\right)^{\tau}\\
        &=& \left(1- \frac{1}{2}\gamma \mu \lambda^p_{\min}\right)^{t+1} \Exp{f(W^0) - f^*} + \frac{\gamma^2 \lambda^p_{\max}LC_1}{\gamma \mu \lambda^p_{\min}},
    \end{eqnarray*}
    where in the last equation we use the formula of the sum of  geometric progression. 
\end{proof}

\newpage

\subsection{Analysis of {Bernoulli-LoRA-MVR}}\label{apx:B-LoRA-MVR}

\begin{algorithm}[H]
\caption{\algname{Bernoulli-LoRA-MVR}}\label{alg:Bernoulli-LoRA-MVR}
\begin{algorithmic}[1]
\STATE \textbf{Parameters:} pre-trained model $W^0 \in \mathbb{R}^{m \times n}$, $G^0 \in \mathbb{R}^{m \times n}$ rank $r \ll \min\{m,n\}$, scaling factor $\alpha > 0$, chain length $T$, sketch distribution $\mathcal{D}_S^B$ or $\mathcal{D}_S^A$, Bernoulli probability $p$, momentum parameter $b \in [0,1]$

\FOR{$t = 0, 1, \ldots, T-1$}
    \STATE Sample $c^t \sim \text{Be}(p)$ \hfill{Bernoulli random variable}
    \IF{$c^t = 1$}
        \STATE Sample $B_S^t \sim \mathcal{D}_S^B$ \hfill{Left sketch}
        \STATE $\hat{A}^t = -\eta \rb{\rbtop{B_S^t}B_S^t}^{\dagger}\rbtop{B_S^t} G^t$ 
        \STATE $W^{t+1} = W^t + \frac{\alpha}{r}B_S^t\hat{A}^t$
    \ELSE
        \STATE Sample $A_S^t \sim \mathcal{D}_S^A$ \hfill{Right sketch}
        \STATE $\hat{B}^t = -\eta G^t\rbtop{A_S^t}\rb{A_S^t\rbtop{A_S^t}}^{\dagger}$
        \STATE $W^{t+1} = W^t + \frac{\alpha}{r}\hat{B}^tA_S^t$
    \ENDIF
    \STATE Sample $\xi^{t+1} \sim \cD$
    \STATE $G^{t+1} = \nabla f_{\xi^{t+1}}(W^{t+1}) + (1-b)\left(G^t -\nabla f_{\xi^{t+1}}(W^t)\right)$
\ENDFOR
\end{algorithmic}
\end{algorithm}

Recently, there has been a significant surge of interest in variance-reduced methods for addressing finite-sum problems~\citep{ReddiVR,Shang2018VRSGDAS,malinovsky2022variance,richtarik2024unified}. It has gained prominence as a formidable alternative to stochastic gradient descent (SGD) in tackling non-convex optimization problems. Notably, it has been pivotal in introducing the first algorithms capable of surpassing \algname{SGD}’s convergence rate for locating first-order critical points. Despite these advancements, variance reduction methods often come with challenges, including the necessity for meticulously tuned learning rates and the reliance on overly large batch sizes to realize their benefits. To address some of these limitations, Momentum Variance Reduction (MVR) was proposed specifically for server-only stochastic non-convex optimization~\citep{STORM}. This approach leverages a modified form of momentum to achieve variance reduction while eliminating the dependence on large batch sizes. A proof on MVR technique with better dependence on momentum parameter was obtained by \citet{tyurin2023dasha}. In the context of Federated Learning, \citet{karagulyan2024spamstochasticproximalpoint} proposed the SPAM method. On the server side, MVR is utilized to enhance optimization efficiency, while the client side incorporates the Stochastic Proximal Point Method updates. This section is devoted to \algname{Bernoulli-LoRA-MVR}, a method, designed in the scope of \algname{Bernoulli-LoRA} framework, based on the MVR technique.



To show convergence guarantees for \algname{Bernoulli-LoRA-MVR}, the iterates of the method can be rewritten in following way 
\begin{eqnarray}
    W^{t+1} &=& W^t - \gamma \hat{G}^t, \quad \text{where}\quad \hat{G}^t = \begin{cases}
        H^t_B G^t,& \text{with probability}~~ p\\
        G^t H^t_A,& \text{with probability}~~ 1-p
    \end{cases} \label{eq:general_page_update}\\
    G^{t+1} &=& \nabla f_{\xi^{t+1}}(W^{t+1}) + (1-b)\left(G^t - \nabla f_{\xi^{t+1}}(W^{t})\right). \label{eq:est_mvr}
\end{eqnarray}

First of all, we reprove descent lemma from the paper of \cite{Li2020PAGE} for generic gradient step \eqref{eq:general_page_update}. 
\begin{lemma}
\label{lem:descent_lemma_page}
    Let Assumptions~\ref{as:projection_matrix}, ~\ref{as:lipschitz_smoothness} hold. Then, iterates defined as \eqref{eq:general_page_update} satisfy 
    \begin{eqnarray*}
        \Exp{f(W^{t+1}) - f^*~|~W^t} &\leq& f(W^t)- f^* -\frac{\gamma\lambda^p_{\min}}{2}\sqfnorm{\nabla f(W^t)}\\
        && +\frac{\gamma \lambda^p_{\max}}{2}\sqfnorm{G^t - \nabla f(W^t)} - \left(\frac{1}{2\gamma} -\frac{L}{2}\right)\Exp{\sqfnorm{W^{t+1}-W^t}~|~W^t}. 
    \end{eqnarray*}
\end{lemma}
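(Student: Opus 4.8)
The plan is to start from the $L$-smoothness of $f$ applied to the pair $W^{t+1}, W^t$, and then take the conditional expectation $\Exp{\cdot \mid W^t}$, which here averages over the Bernoulli coin $c^t$ and the sketch matrices $H_A^t, H_B^t$ (note that $G^t$ is $W^t$-measurable in this lemma). First I would write
\begin{eqnarray*}
f(W^{t+1}) \leq f(W^t) + \linf{\nabla f(W^t), W^{t+1}-W^t} + \frac{L}{2}\sqfnorm{W^{t+1}-W^t},
\end{eqnarray*}
and substitute $W^{t+1}-W^t = -\gamma \hat G^t$. The cross term becomes $-\gamma\linf{\nabla f(W^t), \hat G^t}$, and the key identity I need is the conditional expectation of $\hat G^t$ under the Bernoulli/sketch randomness, namely $\Exp{\hat G^t \mid W^t} = p\,\Exp{H_B^t}G^t + (1-p)\,G^t\Exp{H_A^t}$, exactly as used in the proofs of Theorem~\ref{thm:smooth_non_cvx} and Theorem~\ref{th:B_LoRA_SGD}.

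The main idea — and what I expect to be the crux — is to handle the cross term by the decomposition $G^t = \nabla f(W^t) + (G^t - \nabla f(W^t))$. The ``signal'' part $\linf{\nabla f(W^t), \Exp{H}\nabla f(W^t)}$ is lower-bounded by $\lambda_{\min}^H \sqfnorm{\nabla f(W^t)}$ using Assumption~\ref{as:projection_matrix} (for both the left-projection $\linf{\nabla f(W^t), \Exp{H_B^t}\nabla f(W^t)} \geq \lambda_{\min}^{H_B}\sqfnorm{\nabla f(W^t)}$ and the right-projection analogue, combining with weights $p$ and $1-p$ into $\lambda_{\min}^p$). The ``noise'' part, $-\gamma\linf{\nabla f(W^t), \Exp{H}(G^t-\nabla f(W^t))}$, must be controlled from above: I would bound it by $\frac{\gamma\lambda_{\max}^H}{2}\sqfnorm{G^t - \nabla f(W^t)} + \frac{\gamma\lambda_{\max}^H}{2}\sqfnorm{\nabla f(W^t)}$ via Young's inequality together with the operator-norm bound $\linf{X, \Exp{H}Y} \leq \lambda_{\max}^H \fnorm{X}\fnorm{Y}$; but that would leave an extra $\frac{\gamma\lambda_{\max}^p}{2}\sqfnorm{\nabla f(W^t)}$ term that does not appear in the claimed inequality. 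So instead I would apply Young's inequality in the $\Exp{H}$-weighted inner-product geometry directly to $G^t$: write $\linf{\nabla f(W^t), \Exp{H}G^t} = \linf{\nabla f(W^t), \Exp{H}\nabla f(W^t)} + \linf{\nabla f(W^t), \Exp{H}(G^t - \nabla f(W^t))}$ and bound the last term by $-\frac12\linf{\nabla f(W^t),\Exp{H}\nabla f(W^t)} - \frac12\linf{G^t-\nabla f(W^t), \Exp{H}(G^t-\nabla f(W^t))} + \linf{G^t, \Exp{H}G^t}/2$ — actually the cleanest route is the identity $2\linf{a, Hb} = \linf{a,Ha} + \linf{b,Hb}\cdot 0 \ldots$; more simply, use $\linf{\nabla f, \Exp{H}G} \geq \frac12\linf{\nabla f, \Exp{H}\nabla f} - \frac12 \linf{G - \nabla f, \Exp{H}(G-\nabla f)}$, which follows from expanding $\linf{\nabla f - (G - \nabla f)\,\ldots}$; then lower-bound the first term by $\frac{\lambda_{\min}^p}{2}\sqfnorm{\nabla f(W^t)}$ and upper-bound the subtracted term by $\frac{\lambda_{\max}^p}{2}\sqfnorm{G^t - \nabla f(W^t)}$. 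This is the delicate inequality-juggling step and the place where getting the constants $\frac{\gamma\lambda_{\min}^p}{2}$ and $\frac{\gamma\lambda_{\max}^p}{2}$ exactly right matters.

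For the quadratic term, I simply keep $\frac{L}{2}\Exp{\sqfnorm{W^{t+1}-W^t} \mid W^t}$ as is and do not try to expand it; combined with the $-\frac{1}{2\gamma}\Exp{\sqfnorm{W^{t+1}-W^t}\mid W^t}$ term that I would generate by rewriting the cross term once more (recall $-\gamma\linf{\nabla f(W^t), \hat G^t}$ can be paired with $W^{t+1}-W^t = -\gamma\hat G^t$ to produce a $-\frac{1}{2\gamma}\sqfnorm{W^{t+1}-W^t}$ after another Young-type split). Concretely, the standard trick is $\linf{\nabla f(W^t), W^{t+1}-W^t} \leq -\frac{\gamma}{2}\,[\text{the projected-gradient signal}] + \ldots - \frac{1}{2\gamma}\sqfnorm{W^{t+1}-W^t}$, which yields precisely the $-(\frac{1}{2\gamma} - \frac{L}{2})\sqfnorm{W^{t+1}-W^t}$ grouping once the $\frac{L}{2}\sqfnorm{W^{t+1}-W^t}$ is added back. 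Finally I subtract $f^*$ from both sides and take conditional expectation, collecting the three pieces into the stated bound. The only genuine obstacle is organizing the Young-type splittings so that the coefficients $\frac{\gamma\lambda_{\min}^p}{2}$, $\frac{\gamma\lambda_{\max}^p}{2}$, and $(\frac{1}{2\gamma}-\frac{L}{2})$ emerge cleanly and simultaneously; the rest is bookkeeping using the eigenvalue bounds from Assumption~\ref{as:projection_matrix} and linearity of expectation over the independent coin and sketch draws.
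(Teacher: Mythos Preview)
Your plan has a genuine gap in the handling of the cross term. Your Young-type inequality
\[
\linf{\nabla f(W^t),\, \Exp{H}G^t} \;\geq\; \tfrac12\linf{\nabla f(W^t),\Exp{H}\nabla f(W^t)} - \tfrac12\linf{G^t-\nabla f(W^t),\Exp{H}(G^t-\nabla f(W^t))}
\]
is correct, and after multiplying by $-\gamma$ and applying the eigenvalue bounds it does produce exactly the $-\tfrac{\gamma\lambda_{\min}^p}{2}\sqfnorm{\nabla f(W^t)}$ and $+\tfrac{\gamma\lambda_{\max}^p}{2}\sqfnorm{G^t-\nabla f(W^t)}$ terms. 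But this inequality \emph{discards} a nonnegative term, and that discarded term is precisely what generates $-\tfrac{1}{2\gamma}\Exp{\sqfnorm{W^{t+1}-W^t}}$. Once you have applied Young and thrown it away, you cannot ``rewrite the cross term once more'' to recover it: the cross term has already been fully consumed by a one-sided bound. Your proposal therefore proves only the weaker inequality with $+\tfrac{L}{2}\Exp{\sqfnorm{W^{t+1}-W^t}}$ on the right, not $-(\tfrac{1}{2\gamma}-\tfrac{L}{2})\Exp{\sqfnorm{W^{t+1}-W^t}}$; and that weaker form is not enough for the downstream Lyapunov arguments in the variance-reduced theorems, where this negative term is what absorbs the $L^2\sqfnorm{W^{t+1}-W^t}$ coming from the estimator recursions.

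The fix is the identity you almost wrote and then abandoned. Work \emph{before} averaging over the sketch: since each $H$ is an orthogonal projector, $\linf{\nabla f(W^t), HG^t} = \linf{H\nabla f(W^t), HG^t}$, and then the polarization identity (an equality, not an inequality) gives
\[
\linf{H\nabla f(W^t), HG^t} \;=\; \tfrac12\sqfnorm{H\nabla f(W^t)} + \tfrac12\sqfnorm{HG^t} - \tfrac12\sqfnorm{H(G^t-\nabla f(W^t))}.
\]
Now take the Bernoulli/sketch expectation: the middle term becomes $\tfrac12\Exp{\sqfnorm{\hat G^t}\mid W^t} = \tfrac{1}{2\gamma^2}\Exp{\sqfnorm{W^{t+1}-W^t}\mid W^t}$, the first is lower-bounded by $\tfrac{\lambda_{\min}^p}{2}\sqfnorm{\nabla f(W^t)}$, and the last is upper-bounded by $\tfrac{\lambda_{\max}^p}{2}\sqfnorm{G^t-\nabla f(W^t)}$. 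All three coefficients emerge simultaneously from one equality; no double-counting is needed. This is exactly the route the paper takes.
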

\begin{proof}
    By Assumption~\ref{as:lipschitz_smoothness}, we have 
    \begin{eqnarray}
        f(W^{t+1}) &\leq& f(W^t) +\la \nabla f(W^t), W^{t+1}-W^t\ra_{F} + \frac{L}{2}\sqfnorm{W^{t+1}-W^t} \notag\\
        &=& f(W^t) -\gamma\la \nabla f(W^t), \hat{G}^t\ra_{F} + \frac{L}{2}\sqfnorm{W^{t+1}-W^t}. \label{eq:wdnieocnoiewo}
    \end{eqnarray}
To continue our proof, we need to bound the second term from \eqref{eq:wdnieocnoiewo}. 
Taking conditional expectation by $H^t, W^t$, we obtain
\begin{eqnarray*}
    \Exp{\la \nabla f(W^t), \hat{G}^t\ra_{F}~|~ H^t, W^t} &\overset{\eqref{eq:general_page_update}}{=}& p\la \nabla f(W^t), H^t_B G^t\ra_{F} + (1-p)\la \nabla f(W^t), G^t H^t_A\ra_{F}\\
    &=& p\la  H^t_B\nabla f(W^t), H^t_B G^t\ra_{F} + (1-p)\la \nabla f(W^t)H^t_A, G^t H^t_A\ra_{F}\\
    &=& \frac{p}{2}\left(\sqfnorm{H^t_B\nabla f(W^t)} + \sqfnorm{H^t_B G^t} - \sqfnorm{H^t_B G^t - H^t_B\nabla f(W^t)}\right)\\
    && + \frac{1-p}{2}\left(\sqfnorm{\nabla f(W^t) H^t_A} + \sqfnorm{ G^t H^t_A} - \sqfnorm{ G^t H^t_A -\nabla f(W^t)H^t_A}\right)\\
    &\geq& \frac{1}{2}\left( p\sqfnorm{H^t_B\nabla f(W^t)} + (1-p)\sqfnorm{\nabla f(W^t) H^t_A} \right) + \frac{1}{2}\Exp{\sqfnorm{\hat{G}^t}~|~ H^t, W^t}\\
    && -\frac{1}{2}\left(p \sqfnorm{H^t_B G^t - H^t_B\nabla f(W^t)} + (1-p)\sqfnorm{ G^t H^t_A -\nabla f(W^t)H^t_A}\right).
\end{eqnarray*}
Taking conditional expectation by $W^t$, we have 
\small
\begin{eqnarray} 
    \Exp{\la \nabla f(W^t), \hat{G}^t\ra_{F}~| W^t} &\geq& \frac{1}{2}\left( p\Exp{\sqfnorm{H^t_B\nabla f(W^t)}~| W^t} + (1-p)\Exp{\sqfnorm{\nabla f(W^t) H^t_A}~| W^t} \right) + \frac{1}{2}\Exp{\sqfnorm{\hat{G}^t}~|~W^t} \notag\\
    && -\frac{1}{2}\left(p \Exp{\sqfnorm{H^t_B G^t - H^t_B\nabla f(W^t)}~| W^t} + (1-p)\Exp{\sqfnorm{ G^t H^t_A -\nabla f(W^t)H^t_A}~| W^t}\right)\notag\\
    &\overset{(\ast)}{\geq}&\frac{1}{2}\underbrace{\left(p\lambda_{\min}(\Exp{H^t_B})+ (1-p)\lambda_{\min}(\Exp{H^t_A})\right)}_{\eqdef \lambda^p_{\min}}\sqfnorm{\nabla f(W^t)} +\frac{1}{2}\Exp{\sqfnorm{\hat{G}^t}~|~W^t}\notag\\
    &&-\frac{1}{2}\underbrace{(p\lambda_{\max}(\Exp{H^t_B}) + (1-p)\lambda_{\max}(\Exp{H^t_A}))}_{\eqdef \lambda^p_{\max}}\sqfnorm{G^t - \nabla f(W^t)} \notag\\
    &\overset{\eqref{eq:general_page_update}}{=}& \frac{\lambda^p_{\min}}{2}\sqfnorm{\nabla f(W^t)} + \frac{1}{2\gamma^2} \Exp{\sqfnorm{W^{t+1} - W^t}~|~W^t} -\frac{\lambda^p_{\max}}{2}\sqfnorm{G^t - \nabla f(W^t)}, \label{eq:oewklmcfdkl}
\end{eqnarray}
\normalsize
where in $(\ast)$ we used the following inequalities for any matrix $V \in \R^{m\times n} $
\begin{eqnarray*}
    \Exp{\sqfnorm{H^t_B V}} &=& \Exp{\la H^t_B V, H^t_B V\ra_F} = \la \Exp{H^t_B} V, V\ra_F  \geq \lambda_{\min}\left(\Exp{H^t_B}\right) \sqfnorm{V},\\
    \Exp{\sqfnorm{H^t_B V}} &\leq& \lambda_{\max}\left(\Exp{H^t_B}\right) \sqfnorm{V},\\
    \Exp{\sqfnorm{ V H^t_A}} &=& \Exp{\la VH^t_A, V H^t_A\ra_F} = \la  V \Exp{H^t_A}, V\ra_F  \geq \lambda_{\min}\left(\Exp{H^t_A}\right) \sqfnorm{V},\\
    \Exp{\sqfnorm{ VH^t_A}} &\leq& \lambda_{\max}\left(\Exp{H^t_A}\right) \sqfnorm{V}.
\end{eqnarray*}
Plugging in \eqref{eq:oewklmcfdkl} into \eqref{eq:wdnieocnoiewo}, we get 
\begin{eqnarray*}
    \Exp{f(W^{t+1})~|~W^t} &\leq&  f(W^t) -\frac{\gamma\lambda^p_{\min}}{2}\sqfnorm{\nabla f(W^t)} -\frac{1}{2\gamma}\Exp{\sqfnorm{W^{t+1}-W^t}~|~W^t} \\
    &&+\frac{\gamma \lambda^p_{\max}}{2}\sqfnorm{G^t - \nabla f(W^t)} + \frac{L}{2}\Exp{\sqfnorm{W^{t+1}-W^t}~|~W^t}.
\end{eqnarray*}
\end{proof}

\begin{lemma}
\label{lem:aux_lem_mvr}
    Let Assumptions~\ref{as:lipschitz_smoothness},~\ref{as:bounded_variance} hold. Then, iterates generated by \algname{Bernoulli-LoRA-MVR} (Algorithm~\ref{alg:Bernoulli-LoRA-MVR}) satisfy
    \begin{equation}
         \Exp{\sqfnorm{G^{t+1} - \nabla f(W^{t+1})}} \leq (1-b)^2\Exp{\sqfnorm{G^{t} - \nabla f(W^{t}) }} +  2(1-b)^2L^2\Exp{\sqfnorm{W^{t+1} - W^t}} + 2b^2\sigma^2
    \end{equation}
\end{lemma}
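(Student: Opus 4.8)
Here is the proof plan. The plan is to follow the standard momentum-variance-reduction (STORM-type) argument, tracking the error matrix $e^{t} \eqdef G^{t} - \nabla f(\wt)$. Using the update rule \eqref{eq:est_mvr} and inserting $\pm(1-b)\nabla f(\wt)$ into the bracket, one rewrites the recursion as
\begin{equation*}
e^{t+1} = (1-b)\, e^{t} + Z^{t+1},
\end{equation*}
where $Z^{t+1} \eqdef \big(\nabla f_{\xitpo}(\wtpo) - \nabla f(\wtpo)\big) - (1-b)\big(\nabla f_{\xitpo}(\wt) - \nabla f(\wt)\big)$. The key structural observation is that $\wtpo$ — and hence $e^{t}$ — is measurable with respect to the $\sigma$-algebra $\cF^{t+1}$ generated by all randomness up to and including the $t$-th projected step ($\wt$, $G^t$, $c^t$, and the sketch $B_S^t$ or $A_S^t$), which is independent of the fresh sample $\xitpo$. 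Therefore $\Exp{Z^{t+1}\mid \cF^{t+1}} = 0$ by the unbiasedness part of Assumption~\ref{as:bounded_variance}, so the cross term vanishes and
\begin{equation*}
\Exp{\sqfnorm{e^{t+1}}\mid \cF^{t+1}} = (1-b)^2 \sqfnorm{e^{t}} + \Exp{\sqfnorm{Z^{t+1}}\mid \cF^{t+1}}.
\end{equation*}

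Next I would bound $\Exp{\sqfnorm{Z^{t+1}}\mid \cF^{t+1}}$. Decompose $Z^{t+1} = b\,u^{t+1} + (1-b)\,v^{t+1}$ with $u^{t+1} \eqdef \nabla f_{\xitpo}(\wtpo) - \nabla f(\wtpo)$ and $v^{t+1} \eqdef \big(\nabla f_{\xitpo}(\wtpo) - \nabla f_{\xitpo}(\wt)\big) - \big(\nabla f(\wtpo) - \nabla f(\wt)\big)$, and apply $\sqfnorm{a+c}\le 2\sqfnorm{a}+2\sqfnorm{c}$ to get $\sqfnorm{Z^{t+1}} \le 2b^2\sqfnorm{u^{t+1}} + 2(1-b)^2\sqfnorm{v^{t+1}}$. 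For the first term, $\Exp{\sqfnorm{u^{t+1}}\mid \cF^{t+1}} \le \sigma^2$ directly from Assumption~\ref{as:bounded_variance}. For the second, note that $v^{t+1}$ is precisely the centered version of the random matrix $X \eqdef \nabla f_{\xitpo}(\wtpo) - \nabla f_{\xitpo}(\wt)$, since $\Exp{X\mid \cF^{t+1}} = \nabla f(\wtpo) - \nabla f(\wt)$; hence by the variance decomposition \eqref{eq:bvd} (applied to the vectorized matrices) $\Exp{\sqfnorm{v^{t+1}}\mid \cF^{t+1}} \le \Exp{\sqfnorm{X}\mid \cF^{t+1}} \le L^2\sqfnorm{\wtpo - \wt}$, where the last inequality uses $L$-smoothness of the individual summand $f_{\xitpo}$ (Assumption~\ref{as:lipschitz_smoothness}). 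Combining these gives $\Exp{\sqfnorm{Z^{t+1}}\mid \cF^{t+1}} \le 2b^2\sigma^2 + 2(1-b)^2 L^2\sqfnorm{\wtpo - \wt}$.

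Plugging this bound into the conditional identity above and taking total expectation via the tower property \eqref{eq:tower_property} yields the claimed recursion. The only genuinely delicate point is the measurability/independence bookkeeping — making sure $\wtpo$ and $e^{t}$ are $\cF^{t+1}$-measurable while $\xitpo$ is drawn fresh, so that the cross term is exactly zero — together with invoking smoothness at the level of the individual $f_{\xitpo}$ rather than $f$; the remaining steps are the routine Young inequality $\sqfnorm{a+c}\le 2\sqfnorm{a}+2\sqfnorm{c}$ and the variance-decomposition estimate.
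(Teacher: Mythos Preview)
Your proof is correct and follows essentially the same approach as the paper: both use the variance decomposition to separate the $(1-b)^2\sqfnorm{e^t}$ term from the zero-mean remainder $Z^{t+1}$, then split $Z^{t+1} = b\,u^{t+1} + (1-b)\,v^{t+1}$ via Young's inequality and bound the two pieces by $\sigma^2$ and $L^2\sqfnorm{\wtpo-\wt}$ respectively. Your write-up is in fact slightly more careful than the paper's about the measurability bookkeeping and about flagging that smoothness is needed for the individual $f_{\xi}$, not just for $f$.
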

\begin{proof}
    Taking conditional expectation by $\cF^{t+1} = \{W^{t+1}, G^t\}$, we obtain 
    \small
    \begin{eqnarray*}
        \Exp{\sqfnorm{G^{t+1} - \nabla f(W^{t+1})}|\cF^{t+1}} &\overset{\eqref{eq:est_mvr}}{=}& \Exp{\sqfnorm{\nabla f_{\xi^{t+1}}(W^{t+1})  - \nabla f(W^{t+1}) + (1-b)\left(G^t - \nabla f_{\xi^{t+1}}(W^{t})\right) }|\cF^{t+1}}\\
        &\overset{\eqref{eq:bvd}}{=}& (1-b)^2\sqfnorm{G^t - \nabla f(W^t)} \\
        && + \Exp{\sqfnorm{\nabla f_{\xi^{t+1}}(W^{t+1})  - \nabla f(W^{t+1}) + (1-b)\left(\nabla f(W^t) - \nabla f_{\xi^{t+1}}(W^{t})\right) }|\cF^{t+1}}\\
        &\leq& (1-b)^2\sqfnorm{G^t - \nabla f(W^t)} + 2b^2\Exp{\sqfnorm{\nabla f_{\xi^{t+1}}(W^{t+1})  - \nabla f(W^{t+1}) }|\cF^{t+1}}\\
        && + 2(1-b)^2 \Exp{\sqfnorm{\nabla f_{\xi^{t+1}}(W^{t+1}) - \nabla f_{\xi^{t+1}}(W^{t})  - \nabla f(W^{t+1}) + \nabla f(W^t) }|\cF^{t+1}}\\
        &\leq& (1-b)^2\sqfnorm{G^t - \nabla f(W^t)} + 2b^2\Exp{\sqfnorm{\nabla f_{\xi^{t+1}}(W^{t+1})  - \nabla f(W^{t+1}) }|\cF^{t+1}}\\
        && + 2(1-b)^2 \Exp{\sqfnorm{\nabla f_{\xi^{t+1}}(W^{t+1}) - \nabla f_{\xi^{t+1}}(W^{t}) }|\cF^{t+1}}\\
        &\leq& (1-b)^2\sqfnorm{G^t - \nabla f(W^t)} + 2(1-b)^2L^2\sqfnorm{W^{t+1}-W^t} + 2b^2\sigma^2,
    \end{eqnarray*}
    \normalsize
    where in the last inequality we used smoothness of $f_{\xi}$ and bounded variance assumption. 
    Taking math expectation, we conclude the proof. 
\end{proof}

\subsubsection{Convergence for Smooth Non-Convex Functions}
\begin{theorem}
    Let Assumptions~\ref{as:projection_matrix},~\ref{as:bounded_below}, ~\ref{as:lipschitz_smoothness}, and ~\ref{as:bounded_variance}  hold, and let the  stepsize satisfy $
        0<\gamma \leq \frac{1}{L\left(1+\sqrt{\frac{2 \lambda^p_{\max}(1-b)^2}{b}}\right)}$. Then the iterates of \algname{Bernoulli-LoRA-MVR} (Algorithm~\ref{alg:Bernoulli-LoRA-MVR}) satisfy
        \begin{equation}
            \Exp{\sqfnorm{\nabla f(\widetilde{W}^T)}} \leq \frac{2(f(W^{0}) - f^*)}{\lambda^p_{\min} \gamma T} + \frac{\sqfnorm{G^{0} - \nabla f(W^{0})}}{b(2-b) T}\cdot\frac{\lambda^p_{\max}}{\lambda^p_{\min}} +  \frac{2b\sigma^2}{2-b}\cdot\frac{\lambda^p_{\max}}{\lambda^p_{\min}} ,
        \end{equation}
        where $\lambda_{\min}^{p} := p\lambda_{\min}^{H_B} + (1-p)\lambda_{\min}^{H_A}$, $\lambda_{\max}^{p} := p\lambda_{\max}^{H_B} + (1-p)\lambda_{\max}^{H_A}$, $\widetilde{W}^T$ is drawn uniformly at random from the iterate sequence $\{W^0, W^1, \ldots, W^{T-1}\}$.
\end{theorem}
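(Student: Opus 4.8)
The plan is a Lyapunov (potential-function) argument that couples the one-step descent inequality of Lemma~\ref{lem:descent_lemma_page} with the variance recursion of Lemma~\ref{lem:aux_lem_mvr}. First I would introduce the shorthand $\delta^t \eqdef \Exp{f(W^t) - f^*}$, $r^t \eqdef \Exp{\sqfnorm{\nabla f(W^t)}}$, $V^t \eqdef \Exp{\sqfnorm{G^t - \nabla f(W^t)}}$, and $D^t \eqdef \Exp{\sqfnorm{W^{t+1} - W^t}}$. Taking full expectations in Lemma~\ref{lem:descent_lemma_page} yields
\[
\delta^{t+1} \leq \delta^t - \frac{\gamma\lambda^p_{\min}}{2}\, r^t + \frac{\gamma\lambda^p_{\max}}{2}\, V^t - \left(\frac{1}{2\gamma} - \frac{L}{2}\right) D^t ,
\]
while Lemma~\ref{lem:aux_lem_mvr} reads $V^{t+1} \leq (1-b)^2 V^t + 2(1-b)^2 L^2 D^t + 2b^2\sigma^2$.

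Next I would form the potential $\Psi^t \eqdef \delta^t + \beta V^t$ with the weight chosen as $\beta \eqdef \frac{\gamma\lambda^p_{\max}}{2b(2-b)}$. Adding the descent inequality to the variance recursion scaled by $\beta$, and using the elementary identity $b(2-b) + (1-b)^2 = 1$, the $V^t$ terms collapse exactly to $\beta V^t$, so that
\[
\Psi^{t+1} \leq \Psi^t - \frac{\gamma\lambda^p_{\min}}{2}\, r^t - \left(\frac{1}{2\gamma} - \frac{L}{2} - \frac{\gamma\lambda^p_{\max}(1-b)^2 L^2}{b(2-b)}\right) D^t + 2\beta b^2\sigma^2 .
\]
The crucial point is to verify that the stepsize restriction $\gamma \leq \big(L(1+\sqrt{2\lambda^p_{\max}(1-b)^2/b})\big)^{-1}$ forces the bracket multiplying $D^t$ to be nonnegative, allowing that term to be discarded. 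Since $2-b \geq 1$ it suffices to check $\gamma L + \frac{2\gamma^2\lambda^p_{\max}(1-b)^2 L^2}{b} \leq 1$; writing $a \eqdef \sqrt{2\lambda^p_{\max}(1-b)^2/b}$, the stepsize bound gives $\gamma L(1+a) \leq 1$, hence $\gamma L a \leq 1$, hence $(\gamma L a)^2 \leq \gamma L a$, and therefore $\gamma L + (\gamma L a)^2 \leq \gamma L(1+a) \leq 1$, as required. This algebraic check is the main obstacle; the rest is bookkeeping.

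With the $D^t$ term removed, I would telescope $\Psi^{t+1} \leq \Psi^t - \frac{\gamma\lambda^p_{\min}}{2} r^t + 2\beta b^2\sigma^2$ over $t = 0,\dots,T-1$, use $\Psi^T \geq 0$ (both summands are nonnegative), and divide by $\frac{\gamma\lambda^p_{\min} T}{2}$; since $\widetilde W^T$ is drawn uniformly from $\{W^0,\dots,W^{T-1}\}$ we have $\Exp{\sqfnorm{\nabla f(\widetilde W^T)}} = \frac1T\sum_{t=0}^{T-1} r^t$. Substituting $\Psi^0 = \Delta^0 + \beta\,\cG^0$ together with $2\beta = \frac{\gamma\lambda^p_{\max}}{b(2-b)}$ and $4\beta b^2\sigma^2/(\gamma\lambda^p_{\min}) = \frac{2b\sigma^2}{2-b}\cdot\frac{\lambda^p_{\max}}{\lambda^p_{\min}}$ and simplifying then produces exactly the claimed bound
\[
\Exp{\sqfnorm{\nabla f(\widetilde W^T)}} \leq \frac{2\Delta^0}{\gamma\lambda^p_{\min}T} + \frac{\cG^0}{b(2-b)T}\cdot\frac{\lambda^p_{\max}}{\lambda^p_{\min}} + \frac{2b\sigma^2}{2-b}\cdot\frac{\lambda^p_{\max}}{\lambda^p_{\min}} ,
\]
which completes the argument.
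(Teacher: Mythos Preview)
Your proposal is correct and follows essentially the same Lyapunov-function approach as the paper: both form $\Phi_t = f(W^t)-f^* + \frac{\gamma\lambda^p_{\max}}{2b(2-b)}\sqfnorm{G^t-\nabla f(W^t)}$, combine Lemma~\ref{lem:descent_lemma_page} with Lemma~\ref{lem:aux_lem_mvr}, drop the $D^t$ term under the stepsize restriction, and telescope. Your explicit algebraic verification that the stepsize bound kills the $D^t$-bracket (using $2-b\ge1$ and the inequality $(\gamma L a)^2\le \gamma L a$) is slightly more careful than the paper's version, which simply asserts the sufficient condition; otherwise the two arguments are identical.
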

\begin{proof}
    Denote Lyapunov function $\Phi_t$ as follows 
    \begin{equation}
        \Phi_t = f(W^{t}) - f^* + \frac{\gamma\lambda^p_{\max}}{2b(2-b)}\sqfnorm{G^{t} - \nabla f(W^{t})}.
    \end{equation}
    By Lemma~\ref{lem:descent_lemma_page} and Lemma~\ref{lem:aux_lem_mvr}, we have 
    \begin{eqnarray*}
        \Exp{\Phi_{t+1}} &\leq& \Exp{f(W^t)} - f^* -\frac{\gamma\lambda^p_{\min}}{2}\Exp{\sqfnorm{\nabla f(W^t)}}  - \left(\frac{1}{2\gamma} -\frac{L}{2}\right)\Exp{\sqfnorm{W^{t+1}-W^t}}\\
        &&  +\frac{\gamma\lambda^p_{\max}}{2}\Exp{\sqfnorm{G^t - \nabla f(W^t)}}+ \frac{\gamma(1-b)^2\lambda^p_{\max}}{2b(2-b)}\Exp{\sqfnorm{G^{t} - \nabla f(W^{t}) }} \\
        && +  \frac{\gamma(1-b)^2L^2 \lambda^p_{\max}}{2b(2-b)}\Exp{\sqfnorm{W^{t+1} - W^t}} + \frac{\gamma \lambda^p_{\max} b \sigma^2}{2-b}\\
        &\leq& \Exp{\Phi_t} - \frac{\gamma\lambda^p_{\min}}{2}\Exp{\sqfnorm{\nabla f(W^t)}}  + \frac{\gamma \lambda^p_{\max}b \sigma^2}{2-b}\\
        &&  - \left(\frac{1}{2\gamma} -\frac{L}{2} - \frac{\gamma(1-b)^2L^2 \lambda^p_{\max}}{2b(2-b)}\right)\Exp{\sqfnorm{W^{t+1}-W^t}}.
    \end{eqnarray*}
    Selecting $0<\gamma \leq \frac{1}{L\left(1+\sqrt{\frac{ (1-b)^2}{b(2-b)}\lambda^p_{\max}}\right)}$, we obtain
    \begin{eqnarray*}
        \Exp{\Phi_{t+1}} &\leq& \Exp{\Phi_{t}} - \frac{\gamma\lambda^p_{\min}}{2}\Exp{\sqfnorm{\nabla f(W^t)}} + \frac{\gamma \lambda^p_{\max}b \sigma^2}{2-b}.
    \end{eqnarray*}
    Summing over $t$ from $0$ to $T-1$, we get 
    \begin{eqnarray*}
        \frac{\gamma\lambda^p_{\min}}{2}\sum^{T-1}_{t = 0}\Exp{\sqfnorm{\nabla f(W^t)}} &\leq& \Exp{\Phi_{0}} - \Exp{\Phi_{T}} + \frac{\gamma \lambda^p_{\max}b \sigma^2}{2-b} T.
    \end{eqnarray*}
    Finally, dividing both sides by $\frac{\gamma\lambda^p_{\min}}{2}$ yields 
    \begin{eqnarray*}
        \Exp{\sqfnorm{\nabla f(\widetilde{W}^T)}} &\leq& \frac{2\Phi_{0}}{\lambda^p_{\min} \gamma T} + \frac{2b\sigma^2}{2-b}\cdot\frac{\lambda^p_{\max}}{\lambda^p_{\min}},
    \end{eqnarray*}
    where $\widetilde{W}^T$ is drawn uniformly at random from the iterate sequence $\{W^0, W^1, \ldots, W^{T-1}\}$.
\end{proof}

Next we show convergence guarantee for \algname{Bernoulli-LoRA-MVR}, supposing additionally Assumption~\ref{as:pl_condition} holds. 

\subsubsection{Convergence under Polyak-{\L}ojasiewicz Condition}
\begin{theorem}\label{th:B-LORA-MVR-PL}
    Let Assumptions~\ref{as:projection_matrix},~\ref{as:bounded_below},~\ref{as:lipschitz_smoothness},~\ref{as:bounded_variance}, and ~\ref{as:pl_condition}  hold, and let the stepsize satisfy $$ 0<\gamma \leq \min\left\{\frac{1}{L\left(1+\sqrt{\frac{2(1-b)^2}{b(2-b)} \lambda^p_{\max}}\right)}, \frac{b}{2\mu\lambda^p_{\min}}\right\}.$$ Then the iterates of \algname{Bernoulli-LoRA-MVR} (Algorithm~\ref{alg:Bernoulli-LoRA-MVR}) satisfy
        \begin{equation}
             \Exp{f(W^T) - f^*} \leq \left(1-\gamma\mu\lambda^p_{\min}\right)^{T}\Phi_{0} + \frac{b \sigma^2}{(2-b) \mu}\cdot \frac{\lambda^p_{\max}}{\lambda^p_{\min}},
        \end{equation}
        where $\lambda_{\min}^{p} := p\lambda_{\min}^{H_B} + (1-p)\lambda_{\min}^{H_A}$, $\lambda_{\max}^{p} := p\lambda_{\max}^{H_B} + (1-p)\lambda_{\max}^{H_A}$, and $\Phi_0 = f(W^{0}) - f^* + \frac{\gamma\lambda^p_{\max}}{b(2-b)}\sqfnorm{G^{0} - \nabla f(W^{0})}$.
\end{theorem}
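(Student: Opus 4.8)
The plan is to run essentially the same Lyapunov argument as in the proof of the non-convex counterpart (Theorem~\ref{th:B_LoRA_MVR}), with two modifications: enlarge the coefficient of the variance term in the potential so that it can be contracted geometrically, and insert the P\L{} inequality to convert the gradient-norm term into a suboptimality term. I would work with the potential $\Phi_t \eqdef f(W^t) - f^* + \frac{\gamma\lambda_{\max}^{p}}{b(2-b)}\sqfnorm{G^t - \nabla f(W^t)}$ — the same $\Phi_0$ that appears in the statement. Note its variance coefficient is twice the one used in the non-convex proof; this extra slack is precisely what will make the second term contractible at the geometric rate.

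First I would combine Lemma~\ref{lem:descent_lemma_page} (the descent inequality for the generic projected step) with $\frac{\gamma\lambda_{\max}^{p}}{b(2-b)}$ times Lemma~\ref{lem:aux_lem_mvr} (the \algname{MVR} variance recursion) and take full expectations, producing a one-step bound
$$\Exp{\Phi_{t+1}} \leq \Exp{f(W^t) - f^*} - \frac{\gamma\lambda_{\min}^{p}}{2}\Exp{\sqfnorm{\nabla f(W^t)}} + c_G\,\Exp{\sqfnorm{G^t - \nabla f(W^t)}} - c_W\,\Exp{\sqfnorm{W^{t+1}-W^t}} + \frac{2\gamma b\,\lambda_{\max}^{p}\sigma^2}{2-b},$$
where $c_G = \frac{\gamma\lambda_{\max}^{p}}{2} + \frac{\gamma\lambda_{\max}^{p}(1-b)^2}{b(2-b)}$ and $c_W = \frac{1}{2\gamma} - \frac{L}{2} - \frac{2\gamma\lambda_{\max}^{p}(1-b)^2L^2}{b(2-b)}$. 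The first stepsize restriction $\gamma \leq \frac{1}{L(1+\sqrt{2(1-b)^2\lambda_{\max}^{p}/(b(2-b))})}$ is exactly what guarantees $c_W \geq 0$, so that term is discarded — this is the same computation as in the non-convex case.

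Next I would apply the P\L{} condition (Assumption~\ref{as:pl_condition}), $\sqfnorm{\nabla f(W^t)} \geq 2\mu(f(W^t)-f^*)$, to replace $-\frac{\gamma\lambda_{\min}^{p}}{2}\sqfnorm{\nabla f(W^t)}$ by $-\gamma\mu\lambda_{\min}^{p}(f(W^t)-f^*)$. The remaining check is that the variance term contracts at the matching rate, i.e.\ $c_G \leq (1-\gamma\mu\lambda_{\min}^{p})\frac{\gamma\lambda_{\max}^{p}}{b(2-b)}$; dividing through by $\frac{\gamma\lambda_{\max}^{p}}{b(2-b)}$ this reduces to $1 - b + \frac{b^2}{2} \leq 1-\gamma\mu\lambda_{\min}^{p}$, i.e.\ $\gamma\mu\lambda_{\min}^{p} \leq \frac{b(2-b)}{2}$, which holds because the second stepsize restriction $\gamma \leq \frac{b}{2\mu\lambda_{\min}^{p}}$ gives $\gamma\mu\lambda_{\min}^{p} \leq \frac b2 \leq \frac{b(2-b)}{2}$ (using $2-b\geq1$ for $b\in[0,1]$). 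Together with the P\L{} step this yields the clean recursion $\Exp{\Phi_{t+1}} \leq (1-\gamma\mu\lambda_{\min}^{p})\Exp{\Phi_t} + \frac{2\gamma b\lambda_{\max}^{p}\sigma^2}{2-b}$.

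Finally I would unroll this recursion from $t=0$ to $T-1$ and bound the geometric sum $\sum_{k=0}^{T-1}(1-\gamma\mu\lambda_{\min}^{p})^k$ by $\frac{1}{\gamma\mu\lambda_{\min}^{p}}$, giving $\Exp{\Phi_T} \leq (1-\gamma\mu\lambda_{\min}^{p})^T\Phi_0 + \frac{b\sigma^2}{(2-b)\mu}\cdot\frac{\lambda_{\max}^{p}}{\lambda_{\min}^{p}}$ (the exact constant in front of $\sigma^2$ following from the corresponding sharpening of the $\sigma^2$ term in Lemma~\ref{lem:aux_lem_mvr}); dropping the nonnegative variance term via $f(W^T)-f^* \leq \Phi_T$ then gives the claim. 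The main obstacle is the constant bookkeeping of the middle two steps: one must keep the two stepsize restrictions mutually compatible, choose the potential's variance coefficient large enough to obtain contraction yet small enough to keep $c_W \geq 0$, and track how the doubling of that coefficient relative to the non-convex analysis propagates through both $c_G$ and $c_W$ — this interplay is what distinguishes the P\L{} proof from the plain non-convex one.
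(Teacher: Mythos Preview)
Your proposal is correct and follows essentially the same approach as the paper: the same doubled Lyapunov coefficient $\frac{\gamma\lambda^p_{\max}}{b(2-b)}$, combination of Lemma~\ref{lem:descent_lemma_page} with Lemma~\ref{lem:aux_lem_mvr}, the P\L{} inequality, and the geometric unrolling. The only cosmetic difference is that the paper first bounds the two parts of $\Phi_t$ by rates $1-\gamma\mu\lambda^p_{\min}$ and $1-\tfrac{b}{2}$ respectively and then uses the second stepsize restriction to identify the max as $1-\gamma\mu\lambda^p_{\min}$, whereas you verify directly that $c_G \leq (1-\gamma\mu\lambda^p_{\min})\frac{\gamma\lambda^p_{\max}}{b(2-b)}$; these are equivalent computations.
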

\begin{proof}
    Denote Lyapunov function $\Phi_t$ as follows 
    \begin{equation}
        \Phi_t = f(W^{t}) - f^* + \frac{\gamma\lambda^p_{\max}}{b(2-b)}\sqfnorm{G^{t} - \nabla f(W^{t})}.
    \end{equation}
    By Lemma~\ref{lem:descent_lemma_page} and Lemma~\ref{lem:aux_lem_mvr}, we have 
    \begin{eqnarray*}
        \Exp{\Phi_{t+1}} &\leq& \Exp{f(W^t)} - f^* -\frac{\gamma\lambda^p_{\min}}{2}\Exp{\sqfnorm{\nabla f(W^t)}}  - \left(\frac{1}{2\gamma} -\frac{L}{2}\right)\Exp{\sqfnorm{W^{t+1}-W^t}}\\
        &&  +\frac{\gamma\lambda^p_{\max}}{2}\Exp{\sqfnorm{G^t - \nabla f(W^t)}}+ \frac{\gamma(1-b)^2\lambda^p_{\max}}{b(2-b)}\Exp{\sqfnorm{G^{t} - \nabla f(W^{t}) }} \\
        && +  \frac{\gamma(1-b)^2L^2 \lambda^p_{\max}}{b(2-b)}\Exp{\sqfnorm{W^{t+1} - W^t}} + \frac{\gamma \lambda^p_{\max} b \sigma^2}{2-b}\\
        &\leq& \max\left\{1-\gamma\mu\lambda^p_{\min}, 1-\frac{b}{2}\right\}\Exp{\Phi_t}   + \frac{\gamma \lambda^p_{\max}b \sigma^2}{2-b}\\
        &&  - \left(\frac{1}{2\gamma} -\frac{L}{2} - \frac{\gamma(1-b)^2L^2 \lambda^p_{\max}}{b(2-b)}\right)\Exp{\sqfnorm{W^{t+1}-W^t}},
    \end{eqnarray*}
    where in the last inequality we used Assumption~\ref{as:pl_condition}.
    Selecting positive stepsize $\gamma$ satisfying the upper bound assumed in the theorem statement, we obtain
    \begin{eqnarray*}
        \Exp{\Phi_{t+1}} &\leq& \left(1-\gamma\mu\lambda^p_{\min}\right)\Exp{\Phi_{t}} + \frac{\gamma \lambda^p_{\max}b \sigma^2}{2-b}\\
        &\leq& \left(1-\gamma\mu\lambda^p_{\min}\right)^{t+1}\Exp{\Phi_{0}} + \frac{\gamma \lambda^p_{\max}b \sigma^2}{2-b}\sum^{t}_{\tau = 0}\left(1-\gamma\mu\lambda^p_{\min}\right)^{t-\tau}\\
        &\leq& \left(1-\gamma\mu\lambda^p_{\min}\right)^{t+1}\Exp{\Phi_{0}} + \frac{\gamma \lambda^p_{\max}b \sigma^2}{2-b}\sum^{\infty}_{\tau = 0}\left(1-\gamma\mu\lambda^p_{\min}\right)^{\tau}\\
        &=& \left(1-\gamma\mu\lambda^p_{\min}\right)^{t+1}\Exp{\Phi_{0}} + \frac{\gamma \lambda^p_{\max}b \sigma^2}{(2-b)\gamma \mu\lambda^p_{\min}},
    \end{eqnarray*}
    where, in the last equation, we used the formula for the sum of a geometric progression.

\end{proof}

\newpage
\subsection{Analysis of {Bernoulli-LoRA-PAGE}}\label{apx:B-LoRA-PAGE}

\begin{algorithm}[H]
\caption{\algname{Bernoulli-LoRA-PAGE}}\label{alg:Bernoulli-LoRA-PAGE}
\begin{algorithmic}[1]
\STATE \textbf{Parameters:} pre-trained model $W^0 \in \mathbb{R}^{m \times n}$, a vector $G^0 \in \in \mathbb{R}^{m \times n}$, rank $r \ll \min\{m,n\}$, scaling factor $\alpha > 0$, chain length $T$, sketch distribution $\mathcal{D}_S^B$ or $\mathcal{D}_S^A$, Bernoulli probability $p$, probability $q$

\FOR{$t = 0, 1, \ldots, T-1$}
    \STATE Sample $c^t \sim \text{Be}(p)$ \hfill{Bernoulli random variable}
    \IF{$c^t = 1$}
        \STATE Sample $B_S^t \sim \mathcal{D}_S^B$ \hfill{Left sketch}
        \STATE $\hat{A}^t = -\eta \rb{\rbtop{B_S^t}B_S^t}^{\dagger}\rbtop{B_S^t}G^t$ 
        \STATE $W^{t+1} = W^t + \frac{\alpha}{r}B_S^t\hat{A}^t$
    \ELSE
        \STATE Sample $A_S^t \sim \mathcal{D}_S^A$ \hfill{Right sketch}
        \STATE $\hat{B}^t = -\eta g(W^t)\rbtop{A_S^t}\rb{A_S^t\rbtop{A_S^t}}^{\dagger}A_S^t$
        \STATE $W^{t+1} = W^t + \frac{\alpha}{r}\hat{B}^tA_S^t$
    \ENDIF
    \STATE Sample $i_{t+1}$ uniformly at random from $[n]$
    \STATE $ G^{t+1} = \begin{cases}
        \nabla f(W^{t+1}), & \text{with probability}~~ q\\
        G^t + \left(\nabla f_{i_{t+1}}(W^{t+1}) - \nabla f_{i_{t+1}}(W^{t})\right), & \text{with probability}~~ 1-q
        \end{cases}$
\ENDFOR
\end{algorithmic}
\end{algorithm}

There exist several optimal methods for solving a general non-convex optimization problem, e.g. \algname{SPIDER}~\citep{Spider} and \algname{SARAH}~\citep{Sarah}. However, the known lower bound used to establish their optimality works only in the small data regime. \algname{ProbAbilistic Gradient Estimator (PAGE)}~\citep{Li2020PAGE} is a very simple and easy to implement algorithm, known for achieving optimal convergence results in non-convex optimization. \algname{PAGE} uses the full gradient update with probability $q_t,$ or reuses the previous gradient with a small adjustment (at a low computational cost) with probability $1 - q_t.$ A general version of \algname{PAGE} on Riemannian manifolds is considered in~\citep{demidovich2024streamliningriemannianrealmefficient}. In this section we present \algname{Bernoulli-LoRA-PAGE}, a new method within \algname{Bernoulli-LoRA} framework, based on \algname{PAGE} algorithm.

Notice, that the iterates of \algname{Bernoulli-LoRA-PAGE} (Algorithm~\ref{alg:Bernoulli-LoRA-PAGE}) can be rewritten in the following simple way
\begin{eqnarray}
    W^{t+1} &=& W^t - \gamma \hat{G}^t, \quad \text{where}\quad \hat{G}^t = \begin{cases}
        H^t_B G^t,& \text{with probability}~~ p\\
        G^t H^t_A,& \text{with probability}~~ 1-p
    \end{cases}\\
    G^{t+1} &=& \begin{cases}
        \nabla f(W^{t+1}), & \text{with probability}~~ q\\
        G^t + \left(\nabla f_{i_{t+1}}(W^{t+1}) - \nabla f_{i_{t+1}}(W^{t})\right), & \text{with probability}~~ 1-q
    \end{cases} \label{eq:est_page}
\end{eqnarray}

\begin{lemma}
\label{lem:aux_lem_page}
    Let Assumption~\ref{as:lipschitz_smoothness} hold. Then, iterates generated by \algname{Bernoulli-LoRA-PAGE}
    \begin{equation}
         \Exp{\sqfnorm{G^{t+1} - \nabla f(W^{t+1})}} \leq (1-q)\Exp{\sqfnorm{G^{t} - \nabla f(W^{t}) }} +  (1-q)L^2\Exp{\sqfnorm{W^{t+1} - W^t}}.
    \end{equation}
\end{lemma}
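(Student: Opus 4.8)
\textbf{Proof plan for Lemma~\ref{lem:aux_lem_page}.}
The plan is to condition on $\cF^{t+1} \eqdef \{W^{t+1}, W^t, G^t\}$ and split into the two cases of the \algname{PAGE} estimator \eqref{eq:est_page}. With probability $q$ we set $G^{t+1} = \nabla f(W^{t+1})$, so $\sqfnorm{G^{t+1} - \nabla f(W^{t+1})} = 0$; with probability $1-q$ we have the difference-correction update $G^{t+1} = G^t + \nabla f_{i_{t+1}}(W^{t+1}) - \nabla f_{i_{t+1}}(W^{t})$, where $i_{t+1}$ is sampled uniformly from $[n]$. First I would compute
\begin{align*}
\Exp{\sqfnorm{G^{t+1} - \nabla f(W^{t+1})} \mid \cF^{t+1}} = (1-q)\, \Exp{\sqfnorm{G^t - \nabla f(W^{t+1}) + \nabla f_{i_{t+1}}(W^{t+1}) - \nabla f_{i_{t+1}}(W^{t})} \mid \cF^{t+1}},
\end{align*}
where the inner expectation is over the fresh index $i_{t+1}$ only.

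The key step is to recognize that, conditionally on $\cF^{t+1}$, the random matrix $X \eqdef \nabla f_{i_{t+1}}(W^{t+1}) - \nabla f_{i_{t+1}}(W^{t})$ has mean $\nabla f(W^{t+1}) - \nabla f(W^t)$ (since the $f_i$ average to $f$ and $i_{t+1}$ is uniform). Writing the term inside the norm as $(G^t - \nabla f(W^t)) + (X - \Exp{X \mid \cF^{t+1}})$, the first summand is $\cF^{t+1}$-measurable and the second is mean-zero, so the variance-decomposition identity \eqref{eq:bvd} gives
\begin{align*}
\Exp{\sqfnorm{(G^t - \nabla f(W^t)) + (X - \Exp{X\mid\cF^{t+1}})} \mid \cF^{t+1}} = \sqfnorm{G^t - \nabla f(W^t)} + \Exp{\sqfnorm{X - \Exp{X\mid\cF^{t+1}}}\mid\cF^{t+1}}.
\end{align*}
Then I would bound the variance term by the raw second moment, $\Exp{\sqfnorm{X - \Exp{X\mid\cF^{t+1}}}\mid\cF^{t+1}} \le \Exp{\sqfnorm{X}\mid\cF^{t+1}} = \frac{1}{n}\sum_{i=1}^n \sqfnorm{\nabla f_i(W^{t+1}) - \nabla f_i(W^t)}$, and apply $L$-smoothness of each $f_i$ (Assumption~\ref{as:lipschitz_smoothness}, which I would note is assumed component-wise here, as is standard in the \algname{PAGE} analysis) to get $\sqfnorm{\nabla f_i(W^{t+1}) - \nabla f_i(W^t)} \le L^2 \sqfnorm{W^{t+1} - W^t}$, hence $\Exp{\sqfnorm{X}\mid\cF^{t+1}} \le L^2 \sqfnorm{W^{t+1}-W^t}$. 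Combining the pieces yields $\Exp{\sqfnorm{G^{t+1} - \nabla f(W^{t+1})} \mid \cF^{t+1}} \le (1-q)\sqfnorm{G^t - \nabla f(W^t)} + (1-q) L^2 \sqfnorm{W^{t+1}-W^t}$; taking total expectation (tower property \eqref{eq:tower_property}) gives the claim.

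The main obstacle is a bookkeeping one rather than a conceptual one: being careful that the variance decomposition is applied with respect to the correct conditioning (only the index $i_{t+1}$ is fresh, while $W^{t+1}$, $W^t$, and $G^t$ are all fixed), and that dropping the negative $-\sqfnorm{\Exp{X\mid\cF^{t+1}}}$ correction term when passing from the centered second moment to the raw second moment goes in the right direction. One should also make explicit the (standard) strengthening of Assumption~\ref{as:lipschitz_smoothness} to per-component smoothness, which the \algname{PAGE} estimator requires; everything else is a routine application of \eqref{eq:bvd} and smoothness.
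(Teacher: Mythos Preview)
Your proposal is correct and follows essentially the same approach as the paper's proof: both use the variance decomposition \eqref{eq:bvd} to split off the $\sqfnorm{G^t - \nabla f(W^t)}$ term, then bound the centered second moment of $\nabla f_{i_{t+1}}(W^{t+1}) - \nabla f_{i_{t+1}}(W^t)$ by its raw second moment and apply per-component $L$-smoothness. Your presentation is slightly more careful about the conditioning (explicitly introducing $\cF^{t+1}$ and then taking total expectation), whereas the paper works directly with full expectations throughout, but the argument is the same.
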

\begin{proof}
    Taking the full mathematical expectation, we obtain 
    \begin{eqnarray*}
        \Exp{\sqfnorm{G^{t+1} - \nabla f(W^{t+1})}} &\overset{\eqref{eq:est_page}}{=}& (1-q)\Exp{\sqfnorm{G^{t} - \nabla f(W^{t+1}) + \left(\nabla f_{i_{t+1}}(W^{t+1}) - \nabla f_{i_{t+1}}(W^{t})\right)}}\\
        &\overset{\eqref{eq:bvd}}{=}& (1-q)\Exp{\sqfnorm{G^{t} - \nabla f(W^{t}) }}\\
        &&+  (1-q)\Exp{\sqfnorm{\left(\nabla f_{i_{t+1}}(W^{t+1}) - \nabla f_{i_{t+1}}(W^{t})\right) - \left(\nabla f(W^{t+1}) - \nabla f(W^{t})\right)}}\\
        &\leq&  (1-q)\Exp{\sqfnorm{G^{t} - \nabla f(W^{t}) }}\\
        &&+  (1-q)\Exp{\sqfnorm{\nabla f_{i_{t+1}}(W^{t+1}) - \nabla f_{i_{t+1}}(W^{t})}}\\
        &\leq&  (1-q)\Exp{\sqfnorm{G^{t} - \nabla f(W^{t}) }} +  (1-q)L^2\Exp{\sqfnorm{W^{t+1} - W^t}},
    \end{eqnarray*}
    where in the last inequality we used smoothness of each $f_i$. 
\end{proof}
\subsubsection{Convergence for Smooth Non-Convex Functions}
\begin{theorem}
    Let Assumptions~\ref{as:projection_matrix},~\ref{as:bounded_below}, and ~\ref{as:lipschitz_smoothness} hold, and let the  stepsize satisfy $$
        0<\gamma \leq \frac{1}{L\left(1+\sqrt{\frac{1-q}{q}\lambda^p_{\max}}\right)}.$$ Then the iterates of \algname{PAGE-Bernoulli-LoRA} (Algorithm~\ref{alg:Bernoulli-LoRA-PAGE}) satisfy
        \begin{equation}
            \Exp{\sqfnorm{\nabla f(\widetilde{W}^T)}} \leq \frac{2(f(W^{0}) - f^*)}{\lambda^p_{\min} \gamma T} + q\frac{\sqfnorm{G^{0} - \nabla f(W^{0})}}{ T} \cdot\frac{\lambda^p_{\max}}{\lambda^p_{\min}},
        \end{equation}
        where $\lambda_{\min}^{p} := p\lambda_{\min}^{H_B} + (1-p)\lambda_{\min}^{H_A}$, $\lambda_{\max}^{p} := p\lambda_{\max}^{H_B} + (1-p)\lambda_{\max}^{H_A}$, $\widetilde{W}^T$ is drawn uniformly at random from the iterate sequence $\{W^0, W^1, \ldots, W^{T-1}\}$.
\end{theorem}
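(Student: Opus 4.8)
The plan is to run the standard \algname{PAGE}-style Lyapunov argument, feeding Lemma~\ref{lem:descent_lemma_page} and Lemma~\ref{lem:aux_lem_page} into a single potential function. First I would define
\[
\Phi_t \eqdef f(W^t) - f^* + \frac{\gamma\lambda^p_{\max}}{2q}\sqfnorm{G^t - \nabla f(W^t)},
\]
where the weight $\tfrac{\gamma\lambda^p_{\max}}{2q}$ is chosen precisely so that the excess variance term $\tfrac{\gamma\lambda^p_{\max}}{2}\sqfnorm{G^t - \nabla f(W^t)}$ produced by the descent lemma is exactly absorbed: since the estimator recursion contracts the variance by a factor $(1-q)$, we have $\tfrac{\gamma\lambda^p_{\max}}{2} + \tfrac{\gamma\lambda^p_{\max}}{2q}(1-q) = \tfrac{\gamma\lambda^p_{\max}}{2q}$, so the variance contributions telescope with no residual.

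Next I would take full expectations in Lemma~\ref{lem:descent_lemma_page} and add to it Lemma~\ref{lem:aux_lem_page} scaled by $\tfrac{\gamma\lambda^p_{\max}}{2q}$, obtaining a one-step inequality of the form
\[
\Exp{\Phi_{t+1}} \le \Exp{\Phi_t} - \frac{\gamma\lambda^p_{\min}}{2}\Exp{\sqfnorm{\nabla f(W^t)}} - \left(\frac{1}{2\gamma} - \frac{L}{2} - \frac{\gamma(1-q)L^2\lambda^p_{\max}}{2q}\right)\Exp{\sqfnorm{W^{t+1} - W^t}}.
\]
The key verification is that the prescribed stepsize bound makes the last coefficient nonnegative, so that term can be dropped: writing $a \eqdef \sqrt{\tfrac{1-q}{q}\lambda^p_{\max}}$, the hypothesis $\gamma \le \tfrac{1}{L(1+a)}$ gives $\gamma L \le \tfrac{1}{1+a}$ and $\gamma^2 L^2 a^2 \le \tfrac{a^2}{(1+a)^2} \le \tfrac{a}{1+a}$, hence $\gamma L + \gamma^2 L^2 \tfrac{1-q}{q}\lambda^p_{\max} \le 1$, i.e.\ $\tfrac{1}{2\gamma} - \tfrac{L}{2} - \tfrac{\gamma(1-q)L^2\lambda^p_{\max}}{2q} \ge 0$.

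Finally I would telescope $\Exp{\Phi_{t+1}} \le \Exp{\Phi_t} - \tfrac{\gamma\lambda^p_{\min}}{2}\Exp{\sqfnorm{\nabla f(W^t)}}$ over $t = 0,\dots,T-1$, use $\Phi_T \ge 0$ (valid since $f(W^T) \ge f^*$ by Assumption~\ref{as:bounded_below} and the variance term is nonnegative, as expected projections have eigenvalues in $[0,1]$ so $\lambda^p_{\max}\ge 0$), and divide by $\tfrac{\gamma\lambda^p_{\min}T}{2}$. Since $\widetilde W^T$ is drawn uniformly from $\{W^0,\dots,W^{T-1}\}$ this yields $\Exp{\sqfnorm{\nabla f(\widetilde W^T)}} \le \tfrac{2\Phi_0}{\gamma\lambda^p_{\min}T}$, and unpacking $\Phi_0 = f(W^0) - f^* + \tfrac{\gamma\lambda^p_{\max}}{2q}\sqfnorm{G^0 - \nabla f(W^0)}$ gives the claimed bound (the variance term carrying $\cG^0/(qT)$ up to an absolute constant, consistent with $\Phi_2$ in Table~\ref{tab:comparison} and with Theorem~\ref{th:B_LoRA_PAGE}). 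I do not expect a genuine obstacle here: Lemmas~\ref{lem:descent_lemma_page} and~\ref{lem:aux_lem_page} do all the real work, so the only points requiring care are the stepsize bookkeeping above and confirming $\lambda^p_{\min} > 0$ via Assumption~\ref{as:projection_matrix} so that the final division is legitimate.
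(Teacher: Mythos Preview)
Your proposal is correct and follows essentially the same route as the paper: the same Lyapunov function $\Phi_t = f(W^t)-f^*+\tfrac{\gamma\lambda^p_{\max}}{2q}\sqfnorm{G^t-\nabla f(W^t)}$, the same combination of Lemmas~\ref{lem:descent_lemma_page} and~\ref{lem:aux_lem_page}, and the same telescoping argument. You actually supply the explicit stepsize verification that the paper omits, and you correctly flag that the displayed bound in this appendix statement should carry $\cG^0/(qT)$ (as in Theorem~\ref{th:B_LoRA_PAGE} and Table~\ref{tab:comparison}) rather than $q\cdot\cG^0/T$.
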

\begin{proof}
    Denote Lyapunov function $\Phi_t$ as follows 
    \begin{equation}
        \Phi_t = f(W^{t}) - f^* + \frac{\gamma \lambda^p_{\max}}{2q}\sqfnorm{G^{t} - \nabla f(W^{t})}.
    \end{equation}
    By Lemma~\ref{lem:descent_lemma_page} and Lemma~\ref{lem:aux_lem_page}, we have 
    \begin{eqnarray*}
        \Exp{\Phi_{t+1}} &\leq& \Exp{f(W^t)} - f^* -\frac{\gamma\lambda^p_{\min}}{2}\Exp{\sqfnorm{\nabla f(W^t)}} \\
        &&- \left(\frac{1}{2\gamma} -\frac{L}{2}\right)\Exp{\sqfnorm{W^{t+1}-W^t}} +\frac{\gamma\lambda^p_{\max}}{2}\Exp{\sqfnorm{G^t - \nabla f(W^t)}}\\
        &&  + \frac{\gamma\lambda^p_{\max}(1-q)}{2q}\Exp{\sqfnorm{G^{t} - \nabla f(W^{t}) }} +  \frac{\gamma \lambda^p_{\max}(1-q)L^2}{2q}\Exp{\sqfnorm{W^{t+1} - W^t}}\\
        &\leq& \Exp{\Phi_t} - \frac{\gamma\lambda^p_{\min}}{2}\Exp{\sqfnorm{\nabla f(W^t)}}  - \left(\frac{1}{2\gamma} -\frac{L}{2} - \frac{\gamma(1-q)L^2\lambda^p_{\max}}{2q}\right)\Exp{\sqfnorm{W^{t+1}-W^t}}.
    \end{eqnarray*}
    Selecting $0<\gamma \leq \frac{1}{L\left(1+\sqrt{\frac{1-q}{q}\lambda^p_{\max}}\right)}$, we obtain
    \begin{eqnarray*}
        \Exp{\Phi_{t+1}} &\leq& \Exp{\Phi_{t}} - \frac{\gamma\lambda^p_{\min}}{2}\Exp{\sqfnorm{\nabla f(W^t)}}.
    \end{eqnarray*}
    Summing over $t$ from $0$ to $T-1$, we get 
    \begin{eqnarray*}
        \frac{\gamma\lambda^p_{\min}}{2}\sum^{T-1}_{t = 0}\Exp{\sqfnorm{\nabla f(W^t)}} &\leq& \Exp{\Phi_{0}} - \Exp{\Phi_{T}}.
    \end{eqnarray*}
    Finally, dividing both sides by $\frac{\gamma\lambda^p_{\min}}{2}$ yields
    \begin{eqnarray*}
        \Exp{\sqfnorm{\nabla f(\widetilde{W}^T)}} &\leq& \frac{2\Phi_{0}}{\gamma \lambda^p_{\min}  T} .
    \end{eqnarray*}
    where $\widetilde{W}^T$ is drawn uniformly at random from the iterate sequence $\{W^0, W^1, \ldots, W^{T-1}\}$.
\end{proof}

\subsubsection{Convergence under Polyak-{\L}ojasiewicz Condition}
\begin{theorem}\label{th:B-LORA-PAGE-PL}
    Let Assumptions \ref{as:projection_matrix}, ~\ref{as:bounded_below},~\ref{as:lipschitz_smoothness}, and ~\ref{as:pl_condition} hold, and let the  stepsize satisfy $$
        0<\gamma \leq \min\left\{\frac{1}{L\left(1+2\sqrt{\frac{1-q}{q}\lambda^p_{\max}}\right)}, \frac{q}{2\mu\lambda^p_{\min}} \right\}.$$ Then the iterates of \algname{Bernoulli-LoRA-PAGE} (Algorithm~\ref{alg:Bernoulli-LoRA-PAGE}) satisfy
        \begin{equation}
            \Exp{f(W^T) - f^*} \leq (1-\gamma\mu \lambda^p_{\min})^T \Phi_{0},
        \end{equation}
        where $\lambda_{\min}^{p} := p\lambda_{\min}^{H_B} + (1-p)\lambda_{\min}^{H_A}$, and $\Phi_0 = f(W^{0}) - f^* + \frac{\gamma\lambda^p_{\max}}{q}\sqfnorm{G^{0} - \nabla f(W^{0})}$.
\end{theorem}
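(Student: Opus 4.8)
The plan is to follow the template of the smooth non-convex analysis of \algname{Bernoulli-LoRA-PAGE}, but to replace the telescoping/summation argument with a one-step geometric contraction driven by the P{\L}-condition. I would work with the Lyapunov function
\[
\Phi_t \;\eqdef\; f(W^t) - f^* + \frac{\gamma\lambda^p_{\max}}{q}\,\sqfnorm{G^t - \nabla f(W^t)},
\]
which carries \emph{twice} the weight on the gradient-estimator error compared to the non-convex Lyapunov function; this extra slack is exactly what allows the error term to contract at the same geometric rate as the function-value gap. The two ingredients are already available: Lemma~\ref{lem:descent_lemma_page} (the one-step descent of $f$ along the projected update \eqref{eq:est_page}, valid under Assumptions~\ref{as:projection_matrix} and~\ref{as:lipschitz_smoothness}) and Lemma~\ref{lem:aux_lem_page} (the recursion for $\Exp{\sqfnorm{G^{t+1}-\nabla f(W^{t+1})}}$ under Assumption~\ref{as:lipschitz_smoothness}).

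The first step is to take full expectations in Lemma~\ref{lem:descent_lemma_page}, scale Lemma~\ref{lem:aux_lem_page} by $\gamma\lambda^p_{\max}/q$, and add the two, which gives
\[
\Exp{\Phi_{t+1}} \le \Exp{f(W^t) - f^*} - \frac{\gamma\lambda^p_{\min}}{2}\Exp{\sqfnorm{\nabla f(W^t)}} + \frac{\gamma\lambda^p_{\max}}{q}\Bigl(1 - \tfrac{q}{2}\Bigr)\Exp{\sqfnorm{G^t - \nabla f(W^t)}} - \Bigl(\tfrac{1}{2\gamma} - \tfrac{L}{2} - \tfrac{\gamma(1-q)L^2\lambda^p_{\max}}{q}\Bigr)\Exp{\sqfnorm{W^{t+1}-W^t}},
\]
where the coefficient $\frac{\gamma\lambda^p_{\max}}{q}(1-\frac q2)$ comes from $\frac{\gamma\lambda^p_{\max}}{2} + \frac{\gamma\lambda^p_{\max}(1-q)}{q} = \frac{\gamma\lambda^p_{\max}}{q}\cdot\frac{2-q}{2}$. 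I would then dispose of the three term groups in turn. (i) The displacement term is dropped once $\gamma \le \frac{1}{L(1+2\sqrt{(1-q)\lambda^p_{\max}/q})}$, which makes $\frac{1}{2\gamma} - \frac{L}{2} - \frac{\gamma(1-q)L^2\lambda^p_{\max}}{q}\ge 0$; this is the same quadratic-in-$\gamma$ check as in the non-convex proof, only with the doubled coefficient, and one verifies directly that the boundary value $\gamma = \frac{1}{L(1+c)}$ with $c=2\sqrt{(1-q)\lambda^p_{\max}/q}$ already satisfies it (and hence so does any smaller $\gamma$). (ii) For the gradient term, apply Assumption~\ref{as:pl_condition}, $\frac12\sqfnorm{\nabla f(W^t)}\ge\mu(f(W^t)-f^*)$, so that $-\frac{\gamma\lambda^p_{\min}}{2}\sqfnorm{\nabla f(W^t)}\le -\gamma\mu\lambda^p_{\min}(f(W^t)-f^*)$, turning the function-value part into $(1-\gamma\mu\lambda^p_{\min})\Exp{f(W^t)-f^*}$. (iii) For the error term, the restriction $\gamma\le \frac{q}{2\mu\lambda^p_{\min}}$ gives $1-\frac q2 \le 1-\gamma\mu\lambda^p_{\min}$, so $\frac{\gamma\lambda^p_{\max}}{q}(1-\frac q2) \le (1-\gamma\mu\lambda^p_{\min})\frac{\gamma\lambda^p_{\max}}{q}$. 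Putting (ii) and (iii) together yields $\Exp{\Phi_{t+1}} \le (1-\gamma\mu\lambda^p_{\min})\Exp{\Phi_t}$.

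Finally I would unroll this recursion from $t=T-1$ down to $t=0$ to obtain $\Exp{\Phi_T}\le (1-\gamma\mu\lambda^p_{\min})^T\Phi_0$, and conclude using $f(W^T)-f^* \le \Phi_T$ (since the error term in $\Phi_T$ is nonnegative). The main obstacle is the coefficient bookkeeping in the combined inequality: one has to choose the Lyapunov weight ($\gamma\lambda^p_{\max}/q$, rather than half that as in the non-convex case) so that the surviving error-term coefficient is \emph{exactly} $\frac{\gamma\lambda^p_{\max}}{q}(1-\frac q2)$, and then notice that the second stepsize cap $\gamma\le q/(2\mu\lambda^p_{\min})$ is precisely what forces this factor below $1-\gamma\mu\lambda^p_{\min}$, so both the function value and the estimator error contract at the common rate $1-\gamma\mu\lambda^p_{\min}$. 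Everything else — the displacement-term quadratic, the P{\L} substitution, and the geometric unrolling — is routine.
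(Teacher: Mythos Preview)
Your proposal is correct and follows essentially the same approach as the paper: the identical Lyapunov function $\Phi_t = f(W^t)-f^* + \frac{\gamma\lambda^p_{\max}}{q}\sqfnorm{G^t-\nabla f(W^t)}$, the same combination of Lemma~\ref{lem:descent_lemma_page} with Lemma~\ref{lem:aux_lem_page}, the P{\L} substitution on the function-value part, the observation that $\gamma\le q/(2\mu\lambda^p_{\min})$ forces $1-\tfrac{q}{2}\le 1-\gamma\mu\lambda^p_{\min}$ on the error part, and the quadratic stepsize check to drop the displacement term. The paper's write-up is terser but the argument is the same step-for-step.
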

\begin{proof}
    Denote Lyapunov function $\Phi_t$ as follows 
    \begin{equation}
        \Phi_t = f(W^{t}) - f^* + \frac{\gamma\lambda^p_{\max}}{q}\sqfnorm{G^{t} - \nabla f(W^{t})}.
    \end{equation}
    By Lemma~\ref{lem:descent_lemma_page} and Lemma~\ref{lem:aux_lem_page}, we have 
    \begin{eqnarray*}
        \Exp{\Phi_{t+1}} &\leq& \Exp{f(W^t)} - f^* -\frac{\gamma\lambda^p_{\min}}{2}\Exp{\sqfnorm{\nabla f(W^t)}}  - \left(\frac{1}{2\gamma} -\frac{L}{2}\right)\Exp{\sqfnorm{W^{t+1}-W^t}}\\
        &&  +\frac{\gamma\lambda^p_{\max}}{2}\Exp{\sqfnorm{G^t - \nabla f(W^t)}}+ \frac{\gamma(1-q)\lambda^p_{\max}}{q}\Exp{\sqfnorm{G^{t} - \nabla f(W^{t}) }} \\
        &&+  \frac{\gamma(1-q)L^2\lambda^p_{\max}}{q}\Exp{\sqfnorm{W^{t+1} - W^t}}\\
        &\leq& (1-\gamma\mu\lambda^p_{\min})\Exp{f(W^t) - f^*} + \left(1-\frac{q}{2}\right)\frac{\gamma \lambda^p_{\max}}{q}\Exp{\sqfnorm{G^t - \nabla f(W^t)}} \\
        && - \left(\frac{1}{2\gamma} -\frac{L}{2} - \frac{\gamma(1-q)L^2\lambda^p_{\max}}{q}\right)\Exp{\sqfnorm{W^{t+1}-W^t}},
    \end{eqnarray*}
    where in the last inequality we used Assumption~\ref{as:pl_condition}.
    Selecting $0<\gamma \leq \min\left\{\frac{1}{L\left(1+2\sqrt{\frac{1-q}{q}\lambda^p_{\max}}\right)}, \frac{q}{2\mu\lambda^p_{\min}} \right\}$, we obtain
    \begin{eqnarray*}
        \Exp{\Phi_{t+1}} &\leq& (1-\gamma\mu\lambda^p_{\min})\Exp{\Phi_{t}}.
    \end{eqnarray*}
Unrolling the recursion, we obtain
    \begin{eqnarray*}
        \Exp{\Phi_{T}} &\leq& (1-\gamma\mu\lambda^p_{\min})^T\Phi_{0}.
    \end{eqnarray*}
\end{proof}

\newpage
\section{Proofs for Federated Learning Extensions}

In recent years, distributed optimization problems and algorithms have become a focal point in the Machine Learning (ML) community. This surge in interest is driven by the need to train modern deep neural networks, which often involve billions of parameters and massive datasets~\citep{LMFewShot, Kolesnikov2019BigT}. To achieve practical training times~\citep{ChuanLi}, parallelizing computations, such as stochastic gradient evaluations, has emerged as a natural solution, leading to the widespread adoption of distributed training algorithms~\citep{Goyal2017AccurateLM, You2019LargeBO, le2023bloom}. Additionally, distributed methods are crucial when data is inherently distributed across multiple devices or clients, often accompanied by privacy constraints—a common scenario in Federated Learning (FL)~\citep{Konecn2016FederatedLS, McMahan2016CommunicationEfficientLO, Kairouz2019AdvancesAO, demidovich2024methodslocalstepsrandom, sadiev2024dont, yi2024cohort}.

We develop several FL methods within the \algname{Bernoulli-LoRA} framework and provide a convergence analysis for them.

\subsection{Analysis of Fed-Bernoulli-LoRA-QGD }\label{apx:B-LoRA-QGD}
\begin{algorithm}[H]
\caption{\algname{Fed-Bernoulli-LoRA-QGD}}\label{alg:Fed-Bernoulli-LoRA-QGD}
\begin{algorithmic}[1]
\STATE \textbf{Parameters:} pre-trained model $W^0 \in \mathbb{R}^{m \times n}$, rank $r \ll \min\{m,n\}$, scaling factor $\alpha > 0$, chain length $T$, sketch distribution $\mathcal{D}_S^B$ or $\mathcal{D}_S^A$, Bernoulli probabilities $p$ and $q$

\FOR{$t = 0, 1, \ldots, T-1$}
    \FOR{any client $l \in [M]$ in parallel}
        \STATE Compute gradient $\nabla f_l(W^{t+1})$ and send compressed version $G^t_l = \cQ_l^t\left(\nabla f_l(W^{t+1})\right)$ to the server
    \ENDFOR
    \STATE $G^{t} = \frac{1}{M}\sum\limits^M_{l=1}G^{t}_l$
    \STATE Sample $c^t \sim \text{Be}(p)$ \hfill{Bernoulli random variable}
    \IF{$c^t = 1$}
        \STATE Sample $B_S^t \sim \mathcal{D}_S^B$ \hfill{Left sketch}
        \STATE $\hat{A}^t = -\eta \rb{\rbtop{B_S^t}B_S^t}^{\dagger}\rbtop{B_S^t} G^t$ 
        \STATE $W^{t+1} = W^t + \frac{\alpha}{r}B_S^t\hat{A}^t$
    \ELSE
        \STATE Sample $A_S^t \sim \mathcal{D}_S^A$ \hfill{Right sketch}
        \STATE $\hat{B}^t = -\eta G^t\rbtop{A_S^t}\rb{A_S^t\rbtop{A_S^t}}^{\dagger}$
        \STATE $W^{t+1} = W^t + \frac{\alpha}{r}\hat{B}^tA_S^t$
    \ENDIF
    \STATE Broadcast $W^{t+1}$ to each client $l \in [M]$
\ENDFOR
\end{algorithmic}
\end{algorithm}

Parallel implementations of SGD have become a prominent area of study due to their impressive scalability. However, one of the primary challenges in parallelizing SGD lies in the substantial communication overhead required to exchange gradient updates across nodes. To address this, numerous lossy compression techniques have been developed, enabling nodes to transmit quantized gradients instead of full gradients. While these methods often work well in practice, they are not universally reliable and may fail to ensure convergence.

To overcome these limitations, Quantized SGD (QSGD) by~\citet{AlistrahQGD} introduces a family of compression techniques that provide both theoretical convergence guarantees and strong empirical performance. QSGD offers a flexible mechanism for balancing communication bandwidth and convergence speed. By adjusting the number of bits transmitted per iteration, nodes can reduce bandwidth usage, albeit at the potential cost of increased variance in the gradient estimates. Different variants of QSGD were considered by~\citet{Horvth2019NaturalCF, Terngrad, panferov2024correlatedquantizationfasternonconvex}.

We consider the following distributed optimization problem:
\begin{equation*}
\min_{W\in \mathbb{R}^{m\times n}} \frac{1}{M}\sum_{l=1}^{M} f_l(W),
\end{equation*}
where  $M$  represents the number of clients. In Federated Learning, a primary bottleneck is the communication overhead between clients and the central server. A common approach to mitigate this issue is communication compression.
\begin{definition}
\label{def:unbiased_compression}
    A randomized operator  $\mathcal{Q}: \mathbb{R}^{m\times n} \rightarrow \mathbb{R}^{m\times n} $ is called an unbiased compression operator (or compressor) if there exists a constant  $\omega > 0$  such that, for any matrix  $W \in \mathbb{R}^{m\times n}$ , the following conditions hold:
    \begin{equation}
    \label{eq:unbiased_compression}
    \mathbb{E}[\mathcal{Q}(W)] = W, \quad \text{and} \quad \Exp{\sqfnorm{\mathcal{Q}(W) - W}} \leq \omega \sqfnorm{W}.
\end{equation}
\end{definition}
To analyze the optimization process, we introduce the following assumption regarding function dissimilarity:
\begin{assumption}
\label{as:function_dissimilairy}
    Let $f^* \eqdef \inf_{W} f(W)$ and $f^*_l \eqdef \inf_W f_l$ for each $l \in [M]$. In the non-convex case, the difference at the optimum is defined as:
    \begin{equation}
        \Delta^* \eqdef f^* -\frac{1}{M}\sum^M_{l=1} f^*_l \geq 0.
    \end{equation}
\end{assumption}
This assumption quantifies the discrepancy between the global optimal function value and the average of the local optimal function values between the clients.

To start convergence analysis, we rewrite the updates for $W^{t}$ and $G^t$ generated by \algname{Fed-Bernoulli-LoRA-QGD} (Algorithm~\ref{alg:Fed-Bernoulli-LoRA-QGD}) as follows 
\begin{eqnarray}
    G^t &=& \frac{1}{M}\sum^M_{l=1} \cQ_l^t\left(\nabla f_l(W^t)\right); \label{eq:est_qgd}\\
    W^{t+1} &=& W^t - \gamma \hat{G}^t, \quad \text{where}\quad \hat{G}^t = \begin{cases}
        H^t_B G^t,& \text{with probability}~~ p\\
        G^t H^t_A,& \text{with probability}~~ 1-p
    \end{cases}.
\end{eqnarray}

To establish the convergence guarantee for \algname{Fed-Bernoulli-LoRA-QGD} (Algorithm~\ref{alg:Fed-Bernoulli-LoRA-QGD}), we first demonstrate that the gradient estimator $G^t$ satisfies Assumption~\ref{as:ABC_assumption}. Once this is verified, the convergence rate follows directly using the same reasoning as in the proof of Theorem~\ref{th:B_LoRA_SGD}.
\begin{lemma}
\label{lem:aux_lemma_qgd}
    Let Assumptions~\ref{as:bounded_below},~\ref{as:lipschitz_smoothness}, and~\ref{as:function_dissimilairy} hold. Then, $G^t$ defined in Algorithm~\ref{alg:Fed-Bernoulli-LoRA-QGD} (see \eqref{eq:est_qgd}) satisfies Assumption~\ref{as:ABC_assumption} with the following constants:
    \begin{equation*}
        A_1  = \frac{L\omega}{M},\quad B_1 = 1,\quad C_1= 2\frac{L\omega \Delta^*}{M}. 
    \end{equation*}
\end{lemma}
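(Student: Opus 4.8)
The plan is to verify Assumption~\ref{as:ABC_assumption} directly by computing the second moment of $G^t = \frac{1}{M}\sum_{l=1}^M \cQ_l^t(\nabla f_l(W^t))$, exploiting unbiasedness and independence of the compressors across clients. First I would use the variance decomposition \eqref{eq:bvd} conditioned on $W^t$: since each $\cQ_l^t$ is unbiased, $\Exp{G^t \mid W^t} = \nabla f(W^t)$, so
\begin{align*}
\Exp{\sqfnorm{G^t} \mid W^t} = \sqfnorm{\nabla f(W^t)} + \Exp{\sqfnorm{G^t - \nabla f(W^t)} \mid W^t}.
\end{align*}
For the variance term, independence of the compressors gives
\begin{align*}
\Exp{\sqfnorm{G^t - \nabla f(W^t)} \mid W^t} = \frac{1}{M^2}\sum_{l=1}^M \Exp{\sqfnorm{\cQ_l^t(\nabla f_l(W^t)) - \nabla f_l(W^t)} \mid W^t} \leq \frac{\omega}{M^2}\sum_{l=1}^M \sqfnorm{\nabla f_l(W^t)},
\end{align*}
using the compression bound \eqref{eq:unbiased_compression}.

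The key remaining step is to bound $\frac{1}{M}\sum_{l=1}^M \sqfnorm{\nabla f_l(W^t)}$ by the function suboptimality. Here I would invoke $L$-smoothness of each $f_l$ together with Assumption~\ref{as:function_dissimilairy}. Since $f_l$ is $L$-smooth and bounded below by $f_l^*$, the standard consequence $\sqfnorm{\nabla f_l(W)} \leq 2L(f_l(W) - f_l^*)$ holds for each client. Averaging over $l$ and using $f = \frac{1}{M}\sum_l f_l$,
\begin{align*}
\frac{1}{M}\sum_{l=1}^M \sqfnorm{\nabla f_l(W^t)} \leq 2L\left(f(W^t) - \frac{1}{M}\sum_{l=1}^M f_l^*\right) = 2L\left(f(W^t) - f^* + \Delta^*\right),
\end{align*}
where the last equality uses the definition $\Delta^* = f^* - \frac{1}{M}\sum_l f_l^*$. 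Substituting back,
\begin{align*}
\Exp{\sqfnorm{G^t} \mid W^t} \leq \sqfnorm{\nabla f(W^t)} + \frac{2L\omega}{M}\left(f(W^t) - f^*\right) + \frac{2L\omega \Delta^*}{M},
\end{align*}
which matches Assumption~\ref{as:ABC_assumption} with $A_1 = \frac{L\omega}{M}$, $B_1 = 1$, $C_1 = \frac{2L\omega\Delta^*}{M}$ (noting the factor-of-two convention $2A_1(f(W)-f^*)$ in the assumption).

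I expect the main subtlety to be the per-client smoothness-to-suboptimality inequality: strictly speaking $\sqfnorm{\nabla f_l(W)} \leq 2L(f_l(W) - \inf f_l)$ requires that the global Lipschitz-smoothness constant $L$ from Assumption~\ref{as:lipschitz_smoothness} also controls each $f_l$ (or that each $f_l$ is itself $L$-smooth), and that $f_l^* = \inf_W f_l$ is finite — both of which are implicitly assumed in the federated setting here. Once this bookkeeping is in place, the rest is a routine combination of variance decomposition, compressor independence, and the $\Delta^*$ identity, and the convergence rate of \algname{Fed-Bernoulli-LoRA-QGD} then follows verbatim from the proof of Theorem~\ref{th:B_LoRA_SGD}.
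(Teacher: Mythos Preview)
Your proof is correct and follows essentially the same approach as the paper: variance decomposition \eqref{eq:bvd} to split off $\sqfnorm{\nabla f(W^t)}$, independence of the compressors to reduce to a per-client sum, the compression bound \eqref{eq:unbiased_compression}, and then the smoothness inequality $\sqfnorm{\nabla f_l(W)} \leq 2L(f_l(W) - f_l^*)$ combined with the definition of $\Delta^*$. Your remark about needing each $f_l$ to be $L$-smooth is apt---the paper also relies on this implicitly at the step marked $(\ast)$.
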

\begin{proof}
    First, we show $G^t$ is an unbiased estimator of $\nabla f(W^t)$:
    \begin{equation*}
        \Exp{G^t|W^t} = \frac{1}{M}\sum^M_{l=1} \Exp{\cQ_l^t\left(\nabla f_l(W^t)\right)|W^t} \overset{\eqref{eq:unbiased_compression}}{=}  \frac{1}{M}\sum^M_{l=1} \nabla f_l(W^t)  = \nabla f(W^t).
    \end{equation*}
    Now we establish that $G^t$ satisfies Assumption~\ref{as:ABC_assumption}. Taking the conditional expectation with respect to $W^t$, we have 
    \begin{eqnarray*}
        \Exp{\sqfnorm{G^t}|W^t} &=&  \Exp{\sqfnorm{\frac{1}{M}\sum^M_{l=1} \cQ_l^t\left(\nabla f_l(W^t)\right) - \nabla f(W^t) + \nabla f(W^t)}|W^t}\\
        &\overset{\eqref{eq:bvd}}{=}& \Exp{\sqfnorm{\frac{1}{M}\sum^M_{l=1} \cQ_l^t\left(\nabla f_l(W^t)\right) - \nabla f(W^t) }|W^t} + \sqfnorm{\nabla f(W^t)}\\
        &=& \frac{1}{M^2}\sum^M_{l=1} \Exp{\sqfnorm{ \cQ_l^t\left(\nabla f_l(W^t)\right) - \nabla f_l(W^t) }|W^t} + \sqfnorm{\nabla f(W^t)}\\
        &\overset{\eqref{eq:unbiased_compression}}{\leq}& \frac{\omega}{M^2}\sum^M_{l=1}\sqfnorm{ \nabla f_l(W^t) } + \sqfnorm{\nabla f(W^t)}\\
        &\overset{(\ast)}{\leq}& \frac{2L\omega}{M^2}\sum^M_{l=1}\left(f_l(W^t) - f^*_l\right) + \sqfnorm{\nabla f(W^t)}\\
        &=& 2\frac{L\omega}{M} \left( f(W^t) - f^*\right) +  \sqfnorm{\nabla f(W^t)} + 2\frac{L\omega}{M} \underbrace{\left(f^* - \frac{1}{M}\sum^M_{l=1}f^*_l\right)}_{\eqdef \Delta^*},
    \end{eqnarray*}
    where in $(\ast)$ we used smoothness of each $f_l$
    Thus, we have shown that $G^t$ satisfies Assumption~\ref{as:ABC_assumption} with following constants
    \begin{equation*}
        A_1  = \frac{L\omega}{M},\quad B_1 = 1,\quad C_1= 2\frac{L\omega \Delta^*}{M}. 
    \end{equation*}
\end{proof}
\subsubsection{Convergence for Smooth Non-Convex Functions}
\begin{theorem}
  \label{th:B_LoRA_QGD}  
  Let Assumptions~\ref{as:projection_matrix}~\ref{as:bounded_below}, and ~\ref{as:lipschitz_smoothness} hold, and stepsize satisfy $$0 < \gamma \leq  \min\left\{\frac{1}{L\sqrt{ \frac{\omega}{M} \lambda^p_{\max} T}}, \frac{1}{L}\left(\frac{\lambda^p_{\max}}{\lambda^p_{\min}}\right)^{-1}\right\}.$$ Then iterates generated by \algname{Fed-Bernoulli-LoRA-QGD} (Algorithm~\ref{alg:Fed-Bernoulli-LoRA-QGD}) satisfy
  \begin{equation*}
       \Exp{\sqfnorm{\nabla f(\widetilde{W}^T)}} \leq \frac{6(f(W^0) - f^*)}{\gamma \lambda^p_{\min} T} + \frac{2\gamma L \omega \Delta^* }{M}\frac{\lambda^p_{\max}}{\lambda^p_{\min}},
  \end{equation*}
  where  $\lambda^p_{\min} \eqdef p\lambda^{H_B}_{\min} + (1-p)\lambda^{H_A}_{\min}$, $\lambda^p_{\max} \eqdef p\lambda^{H_B}_{\max} + (1-p)\lambda^{H_A}_{\max}$, and $\widetilde{W}^T$ is chosen at random from $\left\{W^0, W^1,\dots, W^{T-1}\right\}$ with probabilities $\{\frac{w_t}{\mathcal{W}_{T-1}}\}_{t = 0}^{T-1}$, where $w_t = \frac{w_{t-1}}{(1+\gamma^2L^2 \lambda^p_{\max}\nicefrac{\omega}{M})}$, $\mathcal{W}_{T-1} = \sum^{T-1}_{t=0}w_t$, and $w^{-1} > 0$. 
\end{theorem}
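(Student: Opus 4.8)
The plan is to reduce the statement entirely to Theorem~\ref{th:B_LoRA_SGD}. The iterates of \algname{Fed-Bernoulli-LoRA-QGD} can be rewritten exactly in the form \eqref{eq:sgd_simple} already analyzed for \algname{Bernoulli-LoRA-SGD}, the only difference being that the base estimator is now the averaged compressed client gradient $G^t = \frac{1}{M}\sum_{l=1}^M \cQ_l^t(\nabla f_l(W^t))$ from \eqref{eq:est_qgd}. Since the proof of Theorem~\ref{th:B_LoRA_SGD} uses only $L$-smoothness, lower boundedness of $f$, and the fact that the base estimator is unbiased and satisfies the expected-smoothness bound of Assumption~\ref{as:ABC_assumption}, it suffices to verify that this particular $G^t$ has those two properties with appropriate constants; this is precisely Lemma~\ref{lem:aux_lemma_qgd}, which I would establish first.

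First I would check unbiasedness: conditioning on $W^t$ and using the unbiasedness part of \eqref{eq:unbiased_compression} for each (mutually independent) compressor gives $\Exp{G^t \mid W^t} = \frac{1}{M}\sum_l \nabla f_l(W^t) = \nabla f(W^t)$. For the ABC bound I would add and subtract $\nabla f(W^t)$ and apply the variance decomposition \eqref{eq:bvd}; the cross term vanishes by unbiasedness, so $\Exp{\sqfnorm{G^t}\mid W^t} = \sqfnorm{\nabla f(W^t)} + \Exp{\sqfnorm{\frac{1}{M}\sum_l(\cQ_l^t(\nabla f_l(W^t)) - \nabla f_l(W^t))}\mid W^t}$. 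Because the $M$ per-client compression errors are independent and mean zero, the last term collapses to $\frac{1}{M^2}\sum_l \Exp{\sqfnorm{\cQ_l^t(\nabla f_l(W^t)) - \nabla f_l(W^t)}\mid W^t} \le \frac{\omega}{M^2}\sum_l \sqfnorm{\nabla f_l(W^t)}$ by the variance bound in \eqref{eq:unbiased_compression}. Finally, $L$-smoothness of each $f_l$ (Assumption~\ref{as:lipschitz_smoothness}) gives $\sqfnorm{\nabla f_l(W^t)} \le 2L(f_l(W^t) - f_l^*)$; regrouping $\frac{1}{M}\sum_l f_l(W^t) = f(W^t)$ and introducing $\Delta^* = f^* - \frac{1}{M}\sum_l f_l^* \ge 0$ (Assumption~\ref{as:function_dissimilairy}) leaves $\Exp{\sqfnorm{G^t}\mid W^t} \le \frac{2L\omega}{M}(f(W^t) - f^*) + \sqfnorm{\nabla f(W^t)} + \frac{2L\omega\Delta^*}{M}$, that is, Assumption~\ref{as:ABC_assumption} holds with $A_1 = \frac{L\omega}{M}$, $B_1 = 1$, $C_1 = \frac{2L\omega\Delta^*}{M}$.

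With Lemma~\ref{lem:aux_lemma_qgd} in hand, the rest is a direct substitution into Theorem~\ref{th:B_LoRA_SGD}. Putting $A_1 = \frac{L\omega}{M}$ and $B_1 = 1$ into the \algname{Bernoulli-LoRA-SGD} stepsize condition turns $\min\{\frac{1}{\sqrt{LA_1\lambda^p_{\max}T}}, \frac{1}{LB_1}(\lambda^p_{\max}/\lambda^p_{\min})^{-1}\}$ into the stated bound $\min\{\frac{1}{L\sqrt{(\omega/M)\lambda^p_{\max}T}}, \frac{1}{L}(\lambda^p_{\max}/\lambda^p_{\min})^{-1}\}$; the rate $\frac{6\Delta^0}{\gamma\lambda^p_{\min}T} + \gamma L C_1\frac{\lambda^p_{\max}}{\lambda^p_{\min}}$ becomes $\frac{6\Delta^0}{\gamma\lambda^p_{\min}T} + \frac{2\gamma L\omega\Delta^*}{M}\cdot\frac{\lambda^p_{\max}}{\lambda^p_{\min}}$; and the output-sampling weights inherit the substitution, giving $w_t = w_{t-1}/(1+\gamma^2 L^2\lambda^p_{\max}\omega/M)$ as claimed.

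The main obstacle is entirely confined to Lemma~\ref{lem:aux_lemma_qgd}, and even there it is mild: the one point needing care is the conditional independence of the client compressors $\{\cQ_l^t\}_{l}$ given $W^t$, which is what makes the cross terms vanish in the variance-decomposition step and causes the compression noise to scale as $\omega/M$ (a $1/M^2$ prefactor against $M$ summands) rather than $\omega$ — this $1/M$ gain is exactly what survives into the final rate. No new descent lemma or projection-spectral argument is required, since all of that machinery has already been carried out in the proof of Theorem~\ref{th:B_LoRA_SGD}.
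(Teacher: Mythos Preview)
Your proposal is correct and matches the paper's approach exactly: the paper's proof of this theorem is the one-line ``By Lemma~\ref{lem:aux_lemma_qgd}, and Theorem~\ref{th:B_LoRA_SGD}, we directly obtain the statement of the theorem,'' and your sketch of Lemma~\ref{lem:aux_lemma_qgd} (unbiasedness, variance decomposition via \eqref{eq:bvd}, independence of the $\cQ_l^t$, smoothness bound $\sqfnorm{\nabla f_l} \le 2L(f_l - f_l^*)$, then regrouping via $\Delta^*$) reproduces the paper's proof of that lemma step for step, yielding the same constants $A_1 = L\omega/M$, $B_1 = 1$, $C_1 = 2L\omega\Delta^*/M$.
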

\begin{proof}
    By Lemma~\ref{lem:aux_lemma_qgd}, and Theorem~\ref{th:B_LoRA_SGD}, we directly obtain the statement of the theorem. 
\end{proof}
\subsubsection{Convergence under Polyak-{\L}ojasiewicz Condition}
\begin{theorem}\label{th:B-LORA-QGD-PL}
  Let Assumptions~\ref{as:projection_matrix},~\ref{as:bounded_below},~\ref{as:lipschitz_smoothness}, and ~\ref{as:pl_condition} hold, and stepsize satisfy $$0 < \gamma \leq  \min\left\{\frac{\mu}{2L^2 \nicefrac{\omega}{M}} \left(\frac{\lambda^p_{\max}}{\lambda^p_{\min}}\right)^{-1}, \frac{2}{\mu \lambda^p_{\min}}, \frac{1}{L}\left(\frac{\lambda^p_{\max}}{\lambda^p_{\min}}\right)^{-1}\right\}.$$ Then iterates generated by \algname{Fed-Bernoulli-LoRA-QGD} (Algorithm~\ref{alg:Fed-Bernoulli-LoRA-QGD}) satisfy
  \begin{equation*}
      \Exp{f(W^T) - f^*} \leq \left(1- \frac{1}{2}\gamma \mu \lambda^p_{\min}\right)^{T} \left(f(W^0) - f^*\right) + \frac{2\gamma L^2}{\mu }\cdot\frac{\omega}{M}\cdot \frac{\lambda^p_{\max}}{\lambda^p_{\min}}, 
  \end{equation*}
  where $\lambda^p_{\min} \eqdef p\lambda^{H_B}_{\min} + (1-p)\lambda^{H_A}_{\min}$, $\lambda^p_{\max} \eqdef p\lambda^{H_B}_{\max} + (1-p)\lambda^{H_A}_{\max}$.     
\end{theorem}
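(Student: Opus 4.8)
The plan is to obtain Theorem~\ref{th:B-LORA-QGD-PL} as an immediate corollary of the Polyak--{\L}ojasiewicz convergence of \algname{Bernoulli-LoRA-SGD} (Theorem~\ref{th:B_LORA_SGD_PL}), exactly mirroring how its non-convex counterpart Theorem~\ref{th:B_LoRA_QGD} is deduced from Theorem~\ref{th:B_LoRA_SGD}. The crucial observation is that \algname{Fed-Bernoulli-LoRA-QGD} is a special case of the \algname{Bernoulli-LoRA-SGD} template \eqref{eq:sgd_simple} whose base gradient estimator is the averaged compressed gradient $G^t = \tfrac{1}{M}\sum_{l=1}^M \cQ_l^t(\nabla f_l(W^t))$ from \eqref{eq:est_qgd}. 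Since each compressor $\cQ_l^t$ is unbiased (Definition~\ref{def:unbiased_compression}) and is sampled independently of the other compressors, of the Bernoulli variable $c^t$, and of the sketch matrices $H_A^t, H_B^t$, the estimator $G^t$ is unbiased for $\nabla f(W^t)$ and all conditional-expectation manipulations (in particular the independence splits marked $(\ast)$ and $(\ast\ast)$ in the proof of Theorem~\ref{th:B_LoRA_SGD}) carry over to this setting unchanged.

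\textbf{Step 1: verify expected smoothness.} First I would invoke Lemma~\ref{lem:aux_lemma_qgd}, which shows that under Assumptions~\ref{as:bounded_below}, \ref{as:lipschitz_smoothness}, and \ref{as:function_dissimilairy} this $G^t$ satisfies the expected-smoothness Assumption~\ref{as:ABC_assumption} with explicit constants $A_1 = \tfrac{L\omega}{M}$, $B_1 = 1$, $C_1 = \tfrac{2L\omega\Delta^*}{M}$. (The proof of that lemma uses, for each client, the standard inequality $\sqfnorm{\nabla f_l(W)} \le 2L\,(f_l(W) - f_l^*)$ valid for $L$-smooth lower-bounded $f_l$, together with the per-client variance bound of the compressor.) This is the one place where Assumption~\ref{as:function_dissimilairy} is needed, namely to make $\Delta^*$ well defined; although it is not listed among the hypotheses of Theorem~\ref{th:B-LORA-QGD-PL}, it should be, just as in the non-convex Theorem~\ref{th:B_LoRA_QGD}.

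\textbf{Step 2: substitute into the SGD--P{\L} bound.} Plugging $A_1,B_1,C_1$ into the stepsize requirement of Theorem~\ref{th:B_LORA_SGD_PL}, $0<\gamma\le\min\{\tfrac{\mu\lambda^p_{\min}}{2LA_1\lambda^p_{\max}},\ \tfrac{2}{\mu\lambda^p_{\min}},\ \tfrac{1}{LB_1}(\tfrac{\lambda^p_{\max}}{\lambda^p_{\min}})^{-1}\}$, reproduces the three-term bound in the statement term by term: the first term becomes $\tfrac{\mu}{2L^2(\omega/M)}(\tfrac{\lambda^p_{\max}}{\lambda^p_{\min}})^{-1}$, the third becomes $\tfrac{1}{L}(\tfrac{\lambda^p_{\max}}{\lambda^p_{\min}})^{-1}$, and the second is unchanged. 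Likewise the conclusion $\Exp{f(W^T)-f^*}\le(1-\tfrac{\gamma\mu\lambda^p_{\min}}{2})^T\Delta^0 + \tfrac{\gamma L C_1}{\mu}\cdot\tfrac{\lambda^p_{\max}}{\lambda^p_{\min}}$ of Theorem~\ref{th:B_LORA_SGD_PL}, after substituting $C_1$, produces the neighborhood term $\tfrac{2\gamma L^2\omega\Delta^*}{M\mu}\cdot\tfrac{\lambda^p_{\max}}{\lambda^p_{\min}}$, which is exactly the claimed bound (the displayed form and Table~\ref{tab:comparison} simply suppressing the $\Delta^*$ factor).

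\textbf{Main obstacle.} There is essentially no new analytic content here: the whole argument is a substitution into an already-established theorem. The only points demanding care are (i) confirming the independence structure so that the conditional-expectation factorizations in the \algname{Bernoulli-LoRA-SGD} analysis remain valid for the compressed, federated estimator, and (ii) careful constant bookkeeping in both the stepsize and the neighborhood term, in particular tracking (or, following the paper's convention, suppressing) the factor $\Delta^*$ inherited from $C_1$.
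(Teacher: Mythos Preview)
Your proposal is correct and matches the paper's own proof exactly: the paper simply writes ``By Lemma~\ref{lem:aux_lemma_qgd}, and Theorem~\ref{th:B_LORA_SGD_PL}, we directly obtain the statement of the theorem.'' Your observations about the missing Assumption~\ref{as:function_dissimilairy} and the suppressed $\Delta^*$ factor in the neighborhood term are accurate and reflect minor omissions in the paper's stated theorem.
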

\begin{proof}
    By Lemma~\ref{lem:aux_lemma_qgd}, and Theorem~\ref{th:B_LORA_SGD_PL}, we directly obtain the statement of the theorem. 
\end{proof}

\newpage

\subsection{Analysis of Fed-Bernoulli-LoRA-MARINA }\label{apx:B-LoRA-MARINA}

\begin{algorithm}[H]
\caption{\algname{Fed-Bernoulli-LoRA-MARINA}}\label{alg:Fed-Bernoulli-LoRA-MARINA}
\begin{algorithmic}[1]
\STATE \textbf{Parameters:} pre-trained model $W^0 \in \mathbb{R}^{m \times n}$, $\{G^0_l\}_{l\in [M]} \in \mathbb{R}^{m \times n}$ rank $r \ll \min\{m,n\}$, scaling factor $\alpha > 0$, chain length $T$, sketch distribution $\mathcal{D}_S^B$ or $\mathcal{D}_S^A$, Bernoulli probabilities $p$ and $q$

\FOR{$t = 0, 1, \ldots, T-1$}
    \STATE Sample $c^t \sim \text{Be}(p)$ \hfill{Bernoulli random variable}
    \IF{$c^t = 1$}
        \STATE Sample $B_S^t \sim \mathcal{D}_S^B$ \hfill{Left sketch}
        \STATE $\hat{A}^t = -\eta \rb{\rbtop{B_S^t}B_S^t}^{\dagger}\rbtop{B_S^t} G^t$ 
        \STATE $W^{t+1} = W^t + \frac{\alpha}{r}B_S^t\hat{A}^t$
    \ELSE
        \STATE Sample $A_S^t \sim \mathcal{D}_S^A$ \hfill{Right sketch}
        \STATE $\hat{B}^t = -\eta G^t\rbtop{A_S^t}\rb{A_S^t\rbtop{A_S^t}}^{\dagger}$
        \STATE $W^{t+1} = W^t + \frac{\alpha}{r}\hat{B}^tA_S^t$
    \ENDIF
    \STATE Broadcast $W^{t+1}$ to each client $l \in [M]$
    \STATE Sample $s^t\sim \text{Be}(q)$
    \FOR{any client $l \in [M]$ in parallel}
        \STATE Compute gradient $\nabla f_l(W^{t+1})$
        \STATE $G^{t+1}_l = \begin{cases}
        \nabla f_l(W^{t+1}), & \text{with probability}~~ q\\
        G^t_l + \cQ_l^t\left(\nabla f_{l}(W^{t+1}) - \nabla f_l(W^{t})\right), & \text{with probability}~~ 1-q
    \end{cases} $
        \STATE Send $G^{t+1}_l$ to the server
    \ENDFOR
    \STATE $G^{t+1} = \frac{1}{M}\sum\limits^M_{l=1}G^{t+1}_l$
\ENDFOR
\end{algorithmic}
\end{algorithm}

\algname{MARINA}~\citep{MARINA} is an advanced method that significantly enhances communication efficiency in non-convex distributed learning across heterogeneous datasets. Its core innovation lies in a communication reduction mechanism that compresses the differences between gradients. The communication complexity bounds for \algname{MARINA} are known to be better than those of all previous first-order methods. Non-smooth convex analysis of \algname{MARINA} with different stepsize strategies can be found in~\citep{sokolov2024marinapsuperiorperformancenonsmooth}. This section is devoted to \algname{Fed-Bernoulli-LoRA-MARINA} (Algorithm~\ref{alg:Fed-Bernoulli-LoRA-MARINA}), a method within the \algname{Bernoulli-LoRA} framework, based on \algname{MARINA} algorithm.

In order to start convergence analysis, we rewrite the updates $W^t, G^t$ generated by \algname{Fed-Bernoulli-LoRA-MARINA}(Algorithm~\ref{alg:Fed-Bernoulli-LoRA-MARINA}):
\begin{eqnarray}
    W^{t+1} &=& W^t - \gamma \hat{G}^t, \quad \text{where}\quad \hat{G}^t = \begin{cases}
        H^t_B G^t,& \text{with probability}~~ p\\
        G^t H^t_A,& \text{with probability}~~ 1-p
    \end{cases} \label{eq:grad_step_marina}\\
    G^{t+1}_l &=& \begin{cases}
        \nabla f_l(W^{t+1}), & \text{with probability}~~ q\\
        G^t_l + \cQ_l^t\left(\nabla f_{l}(W^{t+1}) - \nabla f_l(W^{t})\right), & \text{with probability}~~ 1-q
    \end{cases} \label{eq:est_marina_1}\\
    G^{t+1} & =& \frac{1}{M}\sum^M_{l=1} G^{t+1}_l. \label{eq:est_marina_2}
\end{eqnarray}

\begin{lemma}
\label{lem:aux_lem_marina}
    Let Assumption~\ref{as:lipschitz_smoothness} hold. Then iterates generated by \algname{Fed-Bernoulli-LoRA-MARINA } satisfy
    \begin{equation}
         \Exp{\sqfnorm{G^{t+1} - \nabla f(W^{t+1})}} \leq (1-q)\Exp{\sqfnorm{G^{t} - \nabla f(W^{t}) }} +  (1-q)\frac{\omega L^2}{M}\Exp{\sqfnorm{W^{t+1} - W^t}}.
    \end{equation}
\end{lemma}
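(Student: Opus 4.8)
The plan is to mirror the argument of Lemma~\ref{lem:aux_lem_page} for \algname{Bernoulli-LoRA-PAGE}, adapting it to the distributed setting with gradient-difference compression. The crucial structural observation is that the Bernoulli variable $s^t\sim\mathrm{Be}(q)$ in Algorithm~\ref{alg:Fed-Bernoulli-LoRA-MARINA} is sampled once on the server, hence is \emph{shared} by all clients: either every client transmits its full local gradient, or every client transmits a compressed gradient difference. I would therefore condition on the $\sigma$-algebra $\cF^{t+1}$ generated by $W^{t+1}$, $W^t$, and $\{G_l^t\}_{l\in[M]}$, and split the analysis according to the value of $s^t$.

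When $s^t=1$ (probability $q$), equations \eqref{eq:est_marina_1}--\eqref{eq:est_marina_2} give $G^{t+1}=\frac1M\sum_{l=1}^M\nabla f_l(W^{t+1})=\nabla f(W^{t+1})$, so the error term vanishes exactly. When $s^t=0$ (probability $1-q$), using the unbiasedness of each compressor and the telescoping identity $\frac1M\sum_l(\nabla f_l(W^{t+1})-\nabla f_l(W^t))=\nabla f(W^{t+1})-\nabla f(W^t)$, I would rewrite
\[
G^{t+1}-\nabla f(W^{t+1}) = \big(G^t-\nabla f(W^t)\big) + \tfrac1M\textstyle\sum_{l=1}^M\big(\cQ_l^t(\Delta_l^t)-\Delta_l^t\big),\qquad \Delta_l^t\eqdef\nabla f_l(W^{t+1})-\nabla f_l(W^t).
\]
The first summand is $\cF^{t+1}$-measurable and the second has zero conditional mean, so the variance decomposition \eqref{eq:bvd} (applied with the ``random vector'' $G^{t+1}-\nabla f(W^{t+1})$ and offset $0$) yields
\[
\Exp{\sqfnorm{G^{t+1}-\nabla f(W^{t+1})}\mid\cF^{t+1},s^t=0} = \sqfnorm{G^t-\nabla f(W^t)} + \tfrac1{M^2}\textstyle\sum_{l=1}^M\Exp{\sqfnorm{\cQ_l^t(\Delta_l^t)-\Delta_l^t}\mid\cF^{t+1}},
\]
where the off-diagonal terms drop out because the compressors $\{\cQ_l^t\}_l$ are mutually independent and each has zero-mean compression error.

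Next I would bound each summand by $\Exp{\sqfnorm{\cQ_l^t(\Delta_l^t)-\Delta_l^t}\mid\cF^{t+1}}\le\omega\sqfnorm{\Delta_l^t}\le\omega L^2\sqfnorm{W^{t+1}-W^t}$, invoking the compression property \eqref{eq:unbiased_compression} and the $L$-smoothness of each $f_l$ in the form $\sqfnorm{\nabla f_l(W^{t+1})-\nabla f_l(W^t)}\le L^2\sqfnorm{W^{t+1}-W^t}$; this turns the sum into $\tfrac{\omega L^2}{M}\sqfnorm{W^{t+1}-W^t}$. Combining the two cases via the tower property over $s^t$ produces the factor $(1-q)$ in front of both the estimator-error term and the displacement term, and a final application of the tower property removing the conditioning on $\cF^{t+1}$ gives the claimed inequality.

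The main obstacle is purely probabilistic bookkeeping rather than any hard estimate: one must be careful that $s^t$ is a single server-side coin flip (so the branches are not selected per client), that the decomposition isolates a genuinely zero-mean, $\cF^{t+1}$-conditionally-independent fluctuation so that \eqref{eq:bvd} and the cancellation of cross terms apply cleanly, and that the Lipschitz-smoothness hypothesis is used in its per-client form. Each individual step is routine once the conditioning is set up correctly.
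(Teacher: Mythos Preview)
Your proposal is correct and follows essentially the same approach as the paper: condition on the relevant history, split according to the shared server-side coin $s^t$, apply the variance decomposition \eqref{eq:bvd} to separate the carried-over error $G^t-\nabla f(W^t)$ from the zero-mean compression noise, exploit independence of the $\{\cQ_l^t\}_l$ to kill cross terms, then bound each compressor error via \eqref{eq:unbiased_compression} and per-client $L$-smoothness. Your write-up is in fact slightly more careful than the paper's in spelling out that $s^t$ is a single shared coin and in making the conditioning $\sigma$-algebra explicit.
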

\begin{proof}
    Taking the conditional expectation with respect to $W^{t+1}$  and defining $D^{t+1}_l  \eqdef \nabla f_{l}(W^{t+1}) - \nabla f_{l}(W^{t}) $,  $D^{t+1} = \frac{1}{M}\sum^M_{l=1}D^{t+1}_l$, we obtain 
 {\small   \begin{eqnarray*}
        \Exp{\sqfnorm{G^{t+1} - \nabla f(W^{t+1})}|W^{t+1}} &=& (1-q)\Exp{\sqfnorm{G^{t} - \nabla f(W^{t}) + \frac{1}{M}\sum^M_{l=1}\cQ_l^t \left(\nabla f_{l}(W^{t+1}) - \nabla f_{l}(W^{t})\right)}|W^{t+1}}\\
        &\overset{\eqref{eq:bvd}}{=}& (1-q)\sqfnorm{G^{t} - \nabla f(W^{t}) } +  (1-q)\Exp{\sqfnorm{ \frac{1}{M}\sum^M_{l=1}\cQ_l^t \left( D^{t+1}_l\right) - D^{t+1}}  \;|\; W^{t+1}}\\
        &=&  (1-q)\sqfnorm{G^{t} - \nabla f(W^{t}) }+ \frac{1-q}{M^2}\sum^M_{m=1}\Exp{\sqfnorm{\cQ_l^t \left(D^{t+1}_l\right) - D^{t+1}_l}|W^{t+1}}\\
        &\overset{\eqref{eq:unbiased_compression}}{\leq}&  (1-q)\sqfnorm{G^{t} - \nabla f(W^{t}) } +  \frac{(1-q)\omega}{M^2}\sum^M_{l=1}\sqfnorm{\nabla f_{l}(W^{t+1}) - \nabla f_{l}(W^{t})}\\
        &\leq& (1-q)\sqfnorm{G^{t} - \nabla f(W^{t}) } +  \frac{(1-q)\omega L^2}{M}\sqfnorm{W^{t+1} - W^{t}},
    \end{eqnarray*} }
    where in the last inequality we used that the gradient of each $f_l$ is Lipschitz continuous. 
\end{proof}

\subsubsection{Convergence for Smooth Non-Convex Functions}
\begin{theorem}
    Let Assumptions~\ref{as:projection_matrix}, ~\ref{as:bounded_below}, ~\ref{as:lipschitz_smoothness}, and hold, and let the stepsize satisfy $$
        0<\gamma \leq \frac{1}{L\left(1+\sqrt{\lambda^p_{\max}\frac{1-q}{q} \cdot \frac{\omega}{M}}\right)}.$$ Then the iterates of \algname{Fed-Bernoulli-LoRA-MARINA} (Algorithm~\ref{alg:Fed-Bernoulli-LoRA-MARINA}) satisfy
        \begin{equation}
            \Exp{\sqfnorm{\nabla f(\widetilde{W}^T)}} \leq \frac{2\left(f(W^{0}) - f^*\right)}{\gamma\lambda^p_{\min}  T} + \frac{\sqfnorm{G^{0} - \nabla f(W^{0})}}{ q T} \cdot \frac{\lambda^p_{\max}}{\lambda^p_{\min}},
        \end{equation}
        where $\lambda_{\min}^{p} := p\lambda_{\min}^{H_B} + (1-p)\lambda_{\min}^{H_A}$,~$\lambda_{\max}^{p} := p\lambda_{\max}^{H_B} + (1-p)\lambda_{\max}^{H_A}$, and $\widetilde{W}^T$ is drawn uniformly at random from the iterate sequence $\{W^0, W^1, \ldots, W^{T-1}\}$.
\end{theorem}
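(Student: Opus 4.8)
The plan is to follow the proof of Theorem~\ref{th:B_LoRA_PAGE} almost verbatim, since \algname{Fed-Bernoulli-LoRA-MARINA} fits the same template. Its iterate update \eqref{eq:grad_step_marina} is exactly a generic projected gradient step of the form \eqref{eq:general_page_update}, so the descent estimate of Lemma~\ref{lem:descent_lemma_page} applies as stated. Crucially, this lemma never uses unbiasedness of $G^t$ — which matters because the aggregated \algname{MARINA} estimator \eqref{eq:est_marina_2} is biased — relying only on the polarization identity together with the spectral sandwich $\lambda_{\min}(\Exp{H})\sqfnorm{V} \le \Exp{\sqfnorm{HV}} \le \lambda_{\max}(\Exp{H})\sqfnorm{V}$, with the step-$t$ sketch $H^t$ independent of the already-computed pair $(W^t,G^t)$. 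The only \algname{MARINA}-specific ingredient is the contraction of the estimator error in Lemma~\ref{lem:aux_lem_marina}, which is the analogue of Lemma~\ref{lem:aux_lem_page} with $L^2$ replaced by $\nicefrac{\omega L^2}{M}$; the extra factor comes from independence of the per-client compressors $\cQ_l^t$ combined with the $\nicefrac{1}{M}$ averaging, exactly as in its proof.

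With these two ingredients, I would introduce the Lyapunov function
\begin{equation*}
\Phi_t \eqdef f(W^t) - f^* + \frac{\gamma\lambda^p_{\max}}{2q}\sqfnorm{G^t - \nabla f(W^t)},
\end{equation*}
take full expectation in Lemma~\ref{lem:descent_lemma_page}, and add $\frac{\gamma\lambda^p_{\max}}{2q}$ times the inequality of Lemma~\ref{lem:aux_lem_marina}. The coefficient multiplying $\Exp{\sqfnorm{G^t - \nabla f(W^t)}}$ then collapses to $\frac{\gamma\lambda^p_{\max}}{2}\rb{1 + \frac{1-q}{q} - \frac{1}{q}} = 0$, leaving
\begin{equation*}
\Exp{\Phi_{t+1}} \le \Exp{\Phi_t} - \frac{\gamma\lambda^p_{\min}}{2}\Exp{\sqfnorm{\nabla f(W^t)}} - \rb{\frac{1}{2\gamma} - \frac{L}{2} - \frac{\gamma(1-q)\omega L^2\lambda^p_{\max}}{2qM}}\Exp{\sqfnorm{W^{t+1} - W^t}}.
\end{equation*}
Writing $a \eqdef \sqrt{\lambda^p_{\max}\tfrac{1-q}{q}\cdot\tfrac{\omega}{M}}$, the prescribed bound $\gamma \le \frac{1}{L(1+a)}$ yields $\gamma L + \gamma^2\tfrac{(1-q)\omega L^2\lambda^p_{\max}}{qM} = \tfrac{1+a+a^2}{(1+a)^2} \le 1$, which is exactly $\tfrac{1}{2\gamma} - \tfrac{L}{2} - \tfrac{\gamma(1-q)\omega L^2\lambda^p_{\max}}{2qM} \ge 0$, so the last term is nonpositive and $\Exp{\Phi_{t+1}} \le \Exp{\Phi_t} - \tfrac{\gamma\lambda^p_{\min}}{2}\Exp{\sqfnorm{\nabla f(W^t)}}$.

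Finally I would telescope over $t = 0,\dots,T-1$, use $\Phi_T \ge 0$ (Assumption~\ref{as:bounded_below}) to get $\frac{\gamma\lambda^p_{\min}}{2}\sum_{t=0}^{T-1}\Exp{\sqfnorm{\nabla f(W^t)}} \le \Phi_0$, and divide by $\frac{\gamma\lambda^p_{\min}T}{2}$. Since $\widetilde{W}^T$ is drawn uniformly from $\{W^0,\dots,W^{T-1}\}$, the left-hand side equals $\Exp{\sqfnorm{\nabla f(\widetilde{W}^T)}}$, and substituting $\Phi_0 = \Delta^0 + \frac{\gamma\lambda^p_{\max}}{2q}\cG^0$ (with $\cG^0 \eqdef \sqfnorm{G^0 - \nabla f(W^0)}$) gives the claimed bound $\frac{2\Delta^0}{\gamma\lambda^p_{\min}T} + \frac{\cG^0}{qT}\cdot\frac{\lambda^p_{\max}}{\lambda^p_{\min}}$. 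There is no genuine obstacle here: the argument is a clean transfer of the \algname{PAGE} analysis, and the only care needed is (i) keeping the conditioning straight so that Lemma~\ref{lem:descent_lemma_page} remains applicable to the biased \algname{MARINA} estimator, and (ii) tracking the $\nicefrac{\omega}{M}$ factor from Lemma~\ref{lem:aux_lem_marina} through the stepsize condition.
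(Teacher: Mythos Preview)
Your proposal is correct and follows essentially the same approach as the paper's proof: the same Lyapunov function $\Phi_t = f(W^t)-f^* + \tfrac{\gamma\lambda^p_{\max}}{2q}\sqfnorm{G^t-\nabla f(W^t)}$, the same combination of Lemma~\ref{lem:descent_lemma_page} with Lemma~\ref{lem:aux_lem_marina}, the same stepsize verification, and the same telescoping. Your explicit check that the $\sqfnorm{G^t-\nabla f(W^t)}$ coefficient collapses to zero and your remark that Lemma~\ref{lem:descent_lemma_page} does not require unbiasedness of $G^t$ (important since the \algname{MARINA} estimator is biased) are helpful clarifications, but the argument is the same.
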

\begin{proof}
    Denote Lyapunov function $\Phi_t$ as follows 
    \begin{equation}
        \Phi_t = f(W^{t}) - f^* + \frac{\gamma\lambda^p_{\max}}{2q}\sqfnorm{G^{t} - \nabla f(W^{t})}.
    \end{equation}
    By Lemma~\ref{lem:descent_lemma_page} and Lemma~\ref{lem:aux_lem_marina}, we have 
    \begin{eqnarray*}
        \Exp{\Phi_{t+1}} &\leq& \Exp{f(W^t)} - f^* -\frac{\gamma\lambda^p_{\min}}{2}\Exp{\sqfnorm{\nabla f(W^t)}}  - \left(\frac{1}{2\gamma} -\frac{L}{2}\right)\Exp{\sqfnorm{W^{t+1}-W^t}}\\
        &&  +\frac{\gamma \lambda^p_{\max}}{2}\Exp{\sqfnorm{G^t - \nabla f(W^t)}}+ \frac{\gamma(1-q)\lambda^p_{\max}}{2q}\Exp{\sqfnorm{G^{t} - \nabla f(W^{t}) }} \\
        && +  \frac{\gamma(1-q)L^2 \omega\lambda^p_{\max}}{2q M}\Exp{\sqfnorm{W^{t+1} - W^t}}\\
        &\leq& \Exp{\Phi_t} - \frac{\gamma\lambda^p_{\min}}{2}\Exp{\sqfnorm{\nabla f(W^t)}}  - \left(\frac{1}{2\gamma} -\frac{L}{2} - \frac{\gamma(1-q)L^2 \omega \lambda^p_{\max}}{2q M}\right)\Exp{\sqfnorm{W^{t+1}-W^t}}.
    \end{eqnarray*}
    Selecting $0<\gamma \leq \frac{1}{L\left(1+\sqrt{\lambda^p_{\max}\frac{1-q}{q} \cdot \frac{\omega}{M}}\right)}$, we obtain
    \begin{eqnarray*}
        \Exp{\Phi_{t+1}} &\leq& \Exp{\Phi_{t}} - \frac{\gamma\lambda^p_{\min}}{2}\Exp{\sqfnorm{\nabla f(W^t)}}.
    \end{eqnarray*}
    Summing over, we get 
    \begin{eqnarray*}
        \frac{\gamma\lambda^p_{\min}}{2}\sum^{T-1}_{t = 0}\Exp{\sqfnorm{\nabla f(W^t)}} &\leq& \Exp{\Phi_{0}} - \Exp{\Phi_{T}}.
    \end{eqnarray*}
    Finally, we derive 
    \begin{eqnarray*}
        \Exp{\sqfnorm{\nabla f(\widetilde{W}^T)}} &\leq& \frac{2\Phi_{0}}{\lambda^p_{\min} \gamma T} .
    \end{eqnarray*}
    where $\widetilde{W}^T$ is drawn uniformly at random from the iterate sequence $\{W^0, W^1, \ldots, W^{T-1}\}$.
\end{proof}

\subsubsection{Convergence under Polyak-{\L}ojasiewicz Condition}
\begin{theorem}\label{th:B-LORA-MARINA-PL}
    Let Assumptions \ref{as:projection_matrix}, ~\ref{as:bounded_below},~\ref{as:lipschitz_smoothness}, and ~\ref{as:pl_condition} hold, and let the  stepsize satisfy $$
        0<\gamma \leq \min\left\{\frac{1}{L\left(1+\sqrt{2\lambda^p_{\max}\frac{1-q}{q}\cdot \frac{\omega}{M}}\right)}, \frac{q}{2\mu\lambda^p_{\min}} \right\}.$$ Then the iterates of \algname{Fed-Bernoulli-LoRA-MARINA} (Algorithm~\ref{alg:Fed-Bernoulli-LoRA-MARINA}) satisfy
        \begin{equation}
            \Exp{f(W^T) - f^*} \leq (1-\gamma\mu \lambda^p_{\min})^T \Phi_{0},
        \end{equation}
        where $\lambda_{\min}^{p} := p\lambda_{\min}^{H_B} + (1-p)\lambda_{\min}^{H_A}$,  $\lambda_{\max}^{p} := p\lambda_{\max}^{H_B} + (1-p)\lambda_{\max}^{H_A}$, and $\Phi_0 = f(W^{0}) - f^* + \frac{\gamma\lambda^p_{\max}}{q}\sqfnorm{G^{0} - \nabla f(W^{0})}$.
\end{theorem}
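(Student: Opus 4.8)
The plan is to mirror the Lyapunov-function argument already used for the non-convex \algname{Fed-Bernoulli-LoRA-MARINA} result (Theorem~\ref{th:B_LoRA_MARINA}) and for Theorem~\ref{th:B-LORA-PAGE-PL}, but with the variance term in the potential given \emph{twice} the weight of the non-convex case. Concretely I would work with
\begin{equation*}
\Phi_t = f(W^{t}) - f^* + \frac{\gamma\lambda^p_{\max}}{q}\sqfnorm{G^{t} - \nabla f(W^{t})}.
\end{equation*}
Since \algname{Fed-Bernoulli-LoRA-MARINA} performs the same projected gradient step \eqref{eq:grad_step_marina} as every other variant in the framework, the generic descent estimate of Lemma~\ref{lem:descent_lemma_page} applies verbatim, and the drift of the estimator is controlled by Lemma~\ref{lem:aux_lem_marina}. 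Adding $\tfrac{\gamma\lambda^p_{\max}}{q}$ times the inequality of Lemma~\ref{lem:aux_lem_marina} to the inequality of Lemma~\ref{lem:descent_lemma_page} and taking full expectation, the $\sqfnorm{G^t-\nabla f(W^t)}$ contributions combine with total coefficient $\tfrac{\gamma\lambda^p_{\max}}{2} + \tfrac{\gamma\lambda^p_{\max}(1-q)}{q} = \bigl(1-\tfrac{q}{2}\bigr)\tfrac{\gamma\lambda^p_{\max}}{q}$, giving the one-step estimate
\begin{equation*}
\Exp{\Phi_{t+1}} \leq \Exp{f(W^t)-f^*} - \tfrac{\gamma\lambda^p_{\min}}{2}\Exp{\sqfnorm{\nabla f(W^t)}} + \bigl(1-\tfrac{q}{2}\bigr)\tfrac{\gamma\lambda^p_{\max}}{q}\Exp{\sqfnorm{G^t-\nabla f(W^t)}} - c(\gamma)\,\Exp{\sqfnorm{W^{t+1}-W^t}},
\end{equation*}
where $c(\gamma) = \tfrac{1}{2\gamma} - \tfrac{L}{2} - \tfrac{\gamma(1-q)L^2\omega\lambda^p_{\max}}{qM}$.

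Next I would invoke Assumption~\ref{as:pl_condition} in the form $\sqfnorm{\nabla f(W^t)} \ge 2\mu(f(W^t)-f^*)$ to bound $-\tfrac{\gamma\lambda^p_{\min}}{2}\sqfnorm{\nabla f(W^t)} \le -\gamma\mu\lambda^p_{\min}(f(W^t)-f^*)$, and then select $\gamma$ so that the one-step estimate collapses to a pure contraction $\Exp{\Phi_{t+1}} \le (1-\gamma\mu\lambda^p_{\min})\Exp{\Phi_t}$. This needs two stepsize restrictions simultaneously: (i) $c(\gamma)\ge 0$, which, after rewriting as the quadratic $\tfrac{2\gamma^2(1-q)L^2\omega\lambda^p_{\max}}{qM} + L\gamma \le 1$ and applying the elementary implication ``$\gamma\le(a+\sqrt{b})^{-1}\Rightarrow a\gamma+b\gamma^2\le 1$'', is implied by $\gamma \le \bigl(L(1+\sqrt{2\lambda^p_{\max}\tfrac{1-q}{q}\tfrac{\omega}{M}})\bigr)^{-1}$; and (ii) $1-\tfrac{q}{2} \le 1-\gamma\mu\lambda^p_{\min}$, i.e. $\gamma \le \tfrac{q}{2\mu\lambda^p_{\min}}$, which is exactly what makes the variance part of $\Phi_t$ contract by at least the same factor $1-\gamma\mu\lambda^p_{\min}$ per step. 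Both bounds coincide with those in the theorem statement, and under them the $\Exp{\sqfnorm{W^{t+1}-W^t}}$ term is discarded.

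Finally, unrolling $\Exp{\Phi_{t+1}} \le (1-\gamma\mu\lambda^p_{\min})\Exp{\Phi_t}$ from $t=0$ to $t=T-1$ yields $\Exp{\Phi_T} \le (1-\gamma\mu\lambda^p_{\min})^T\Phi_0$, and dropping the nonnegative term $\tfrac{\gamma\lambda^p_{\max}}{q}\sqfnorm{G^T-\nabla f(W^T)}\ge 0$ gives $\Exp{f(W^T)-f^*}\le\Exp{\Phi_T}$, which is the claimed bound (with $\Phi_0$ as stated). I expect the only genuinely delicate point to be the coefficient bookkeeping for $\sqfnorm{G^t-\nabla f(W^t)}$: specifically, recognizing that doubling the Lyapunov weight relative to Theorem~\ref{th:B_LoRA_MARINA} is precisely what produces the factor $1-\tfrac{q}{2}$ that restriction (ii) needs. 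Everything else — the stepsize quadratic, the PL substitution, and the geometric unrolling — is routine, and the whole argument is structurally identical to the proof of Theorem~\ref{th:B-LORA-PAGE-PL}, with $\tfrac{\omega}{M}$ replacing the factor $1$ multiplying $L^2$ in the estimator recursion.
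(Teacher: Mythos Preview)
Your proposal is correct and follows essentially the same approach as the paper: the same Lyapunov function $\Phi_t = f(W^t)-f^* + \tfrac{\gamma\lambda^p_{\max}}{q}\sqfnorm{G^t-\nabla f(W^t)}$, the same combination of Lemma~\ref{lem:descent_lemma_page} with Lemma~\ref{lem:aux_lem_marina}, the PL substitution, and the same two stepsize restrictions yielding the contraction $\Exp{\Phi_{t+1}}\le(1-\gamma\mu\lambda^p_{\min})\Exp{\Phi_t}$. Your write-up is in fact slightly more explicit than the paper's on how the coefficient $(1-\tfrac{q}{2})$ arises and why the doubled Lyapunov weight is needed.
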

\begin{proof}
    Denote Lyapunov function $\Phi_t$ as follows 
    \begin{equation}
        \Phi_t = f(W^{t}) - f^* + \frac{\gamma\lambda^p_{\max}}{q}\sqfnorm{G^{t} - \nabla f(W^{t})}.
    \end{equation}
    By Lemma~\ref{lem:descent_lemma_page} and Lemma~\ref{lem:aux_lem_page}, we have 
    \begin{eqnarray*}
        \Exp{\Phi_{t+1}} &\leq& \Exp{f(W^t)} - f^* -\frac{\gamma\lambda^p_{\min}}{2}\Exp{\sqfnorm{\nabla f(W^t)}}  - \left(\frac{1}{2\gamma} -\frac{L}{2}\right)\Exp{\sqfnorm{W^{t+1}-W^t}}\\
        &&  +\frac{\gamma\lambda^p_{\max}}{2}\Exp{\sqfnorm{G^t - \nabla f(W^t)}}+ \frac{\gamma(1-q)\lambda^p_{\max}}{q}\Exp{\sqfnorm{G^{t} - \nabla f(W^{t}) }} \\
        &&+  \frac{\gamma(1-q)L^2\lambda^p_{\max}}{q}\cdot\frac{\omega}{M}\Exp{\sqfnorm{W^{t+1} - W^t}}\\
        &\leq& (1-\gamma\mu\lambda^p_{\min})\Exp{f(W^t) - f^*} + \left(1-\frac{q}{2}\right)\frac{\gamma \lambda^p_{\max}}{q}\Exp{\sqfnorm{G^t - \nabla f(W^t)}} \\
        && - \left(\frac{1}{2\gamma} -\frac{L}{2} - \frac{\gamma(1-q)L^2\lambda^p_{\max}}{q}\cdot\frac{\omega}{M}\right)\Exp{\sqfnorm{W^{t+1}-W^t}},
    \end{eqnarray*}
    where in the last inequality we used Assumption~\ref{as:pl_condition}.
    Selecting $0<\gamma \leq \min\left\{\frac{1}{L\left(1+\sqrt{\frac{2(1-q)\omega}{qM}\lambda^p_{\max}}\right)}, \frac{q}{2\mu\lambda^p_{\min}} \right\}$, we obtain
    \begin{eqnarray*}
        \Exp{\Phi_{t+1}} &\leq& (1-\gamma\mu\lambda^p_{\min})\Exp{\Phi_{t}}.
    \end{eqnarray*}
    Taking recursion, we have 
    \begin{eqnarray*}
        \Exp{\Phi_{T}} &\leq& (1-\gamma\mu\lambda^p_{\min})^T\Phi_{0}.
    \end{eqnarray*}
\end{proof}

\newpage
\subsection{Analysis of Fed-Bernoulli-LoRA-EF21}\label{apx:B-LoRA-EF21}

\begin{algorithm}[H]
\caption{\algname{Fed-Bernoulli-LoRA-EF21}}\label{alg:Fed-Bernoulli-LoRA-EF21}
\begin{algorithmic}[1]
\STATE \textbf{Parameters:} pre-trained model $W^0 \in \mathbb{R}^{m \times n}$, $\{G^0_l\}_{l\in [M]} \in \mathbb{R}^{m \times n}$ rank $r \ll \min\{m,n\}$, scaling factor $\alpha > 0$, chain length $T$, sketch distribution $\mathcal{D}_S^B$ or $\mathcal{D}_S^A$, Bernoulli probability $p$

\FOR{$t = 0, 1, \ldots, T-1$}
    \STATE Sample $c^t \sim \text{Be}(p)$ \hfill{Bernoulli random variable}
    \IF{$c^t = 1$}
        \STATE Sample $B_S^t \sim \mathcal{D}_S^B$ \hfill{Left sketch}
        \STATE $\hat{A}^t = -\eta \rb{\rbtop{B_S^t}B_S^t}^{\dagger}\rbtop{B_S^t} G^t$ 
        \STATE $W^{t+1} = W^t + \frac{\alpha}{r}B_S^t\hat{A}^t$
    \ELSE
        \STATE Sample $A_S^t \sim \mathcal{D}_S^A$ \hfill{Right sketch}
        \STATE $\hat{B}^t = -\eta G^t\rbtop{A_S^t}\rb{A_S^t\rbtop{A_S^t}}^{\dagger}$
        \STATE $W^{t+1} = W^t + \frac{\alpha}{r}\hat{B}^tA_S^t$
    \ENDIF
    \STATE Broadcast $W^{t+1}$ to each client $l \in [M]$
    \FOR{any client $l \in [M]$ in parallel}
        \STATE Compute gradient $\nabla f_l(W^{t+1})$
        \STATE $G^{t+1}_l = G^t_l +\cC_l^t\left(\nabla f_l(W^{t+1}) - G^t_l\right) $
        \STATE Send $G^{t+1}_l$ to the server
    \ENDFOR
    \STATE $G^{t+1} = \frac{1}{M}\sum\limits^M_{l=1}G^{t+1}_l$
\ENDFOR
\end{algorithmic}
\end{algorithm}

\algname{Error Feedback (EF)}~\citep{Seide2014, stich2018sparsifiedsgdmemory, alistarh2018convergence, richtarik2021ef21, fatkhullin2021ef21, richtarik20223pc, khirirat2024errorfeedbackl0l1smoothnessnormalization}, often referred to as error compensation, is an exceptionally influential mechanism for stabilizing convergence in distributed training of supervised machine learning models, particularly when contractive communication compression techniques are employed. We design \algname{Fed-Bernoulli-LoRA-EF21} within the \algname{Bernoulli-LoRA} framework, based on \algname{EF-21} method. Our theoretical analysis, built on standard assumptions, applies to distributed training in heterogeneous data settings and achieves the best known convergence rates.

Compared to \algname{Fed-Bernoulli-LoRA-MARINA}, in this section we work with the wider class of compression operators called contractive. 
\begin{definition}
    \label{def:contractive_compressor}
    A randomized operator $\cC: \R^{m\times n} \rightarrow \R^{m\times n}$ is called a contractive compression operator (compressor) if it satisfies the following condition: there exists a constant $0< \beta \leq 1$ such that
    \begin{equation}
    \label{eq:contractive_compression}
        \Exp{\sqfnorm{\cC\left(W\right) - W}} \leq (1-\beta) \sqfnorm{W}, \quad \forall~W\in \R^{m\times n}. 
    \end{equation}
\end{definition}

The iterates of \algname{Fed-Bernoulli-LoRA-EF21} can be rewritten as follows 
\begin{eqnarray}
    W^{t+1} &=& W^t - \gamma \hat{G}^t, \quad \text{where}\quad \hat{G}^t = \begin{cases}
        H^t_B G^t,& \text{with probability}~~ p\\
        G^t H^t_A,& \text{with probability}~~ 1-p 
    \end{cases} \\
    G^{t+1}_l &=& G_l^t + \cC_l^t\left(\nabla f_l(W^{t+1}) - G^t_l\right),\quad \forall ~l\in [M] \label{eq:est_ef21_1}\\
    G^{t+1} & =& \frac{1}{M}\sum^M_{l=1} G^{t+1}_l. \label{eq:est_ef21_2}
\end{eqnarray}

\begin{lemma}
\label{lem:aux_lemma_ef21}
    Let Assumption~\ref{as:lipschitz_smoothness} hold. Then for the iterates generated by \algname{Fed-Bernoulli-LoRA-EF21} (Algorithm~\ref{alg:Fed-Bernoulli-LoRA-EF21})satisfy
    \begin{equation*}
        \Exp{\sqfnorm{G^{t+1}_l - \nabla f_l(W^{t+1})}} \leq  \sqrt{1-\beta} \Exp{\sqfnorm{G^{t}_l - \nabla f_l(W^{t})}} + \frac{(1-\beta)L^2}{1-\sqrt{1-\beta}}\Exp{\sqfnorm{W^{t+1} - W^t}}
    \end{equation*}
\end{lemma}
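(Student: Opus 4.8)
The plan is to prove a per-client contraction inequality for the error sequence $E_l^{t} \eqdef \sqfnorm{G_l^{t} - \nabla f_l(W^{t})}$ of \algname{EF21}, following the classical argument of \citet{richtarik2021ef21}. First I would fix a client $l$ and condition on everything up to and including $W^{t+1}$ (so $W^t$, $G_l^t$ and $W^{t+1}$ are all fixed, and the only randomness is the compressor $\cC_l^t$). Using the update rule \eqref{eq:est_ef21_1}, $G_l^{t+1} - \nabla f_l(W^{t+1}) = G_l^t + \cC_l^t(\nabla f_l(W^{t+1}) - G_l^t) - \nabla f_l(W^{t+1}) = \cC_l^t(\nabla f_l(W^{t+1}) - G_l^t) - (\nabla f_l(W^{t+1}) - G_l^t)$, so the contractive property \eqref{eq:contractive_compression} applied with $W = \nabla f_l(W^{t+1}) - G_l^t$ gives
\begin{equation*}
\Exp{\sqfnorm{G_l^{t+1} - \nabla f_l(W^{t+1})} \mid W^{t+1}} \leq (1-\beta)\sqfnorm{\nabla f_l(W^{t+1}) - G_l^t}.
\end{equation*}

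Next I would expand the right-hand side around $W^t$ by writing $\nabla f_l(W^{t+1}) - G_l^t = (G_l^t - \nabla f_l(W^t)) \cdot(-1) \cdot(-1)$ — more precisely, $\nabla f_l(W^{t+1}) - G_l^t = (\nabla f_l(W^t) - G_l^t) + (\nabla f_l(W^{t+1}) - \nabla f_l(W^t))$, and apply Young's inequality in the form $\sqfnorm{X+Y} \leq (1+s)\sqfnorm{X} + (1+s^{-1})\sqfnorm{Y}$ for a parameter $s>0$ to be chosen. This yields
\begin{equation*}
(1-\beta)\sqfnorm{\nabla f_l(W^{t+1}) - G_l^t} \leq (1-\beta)(1+s)\sqfnorm{G_l^t - \nabla f_l(W^t)} + (1-\beta)(1+s^{-1})\sqfnorm{\nabla f_l(W^{t+1}) - \nabla f_l(W^t)},
\end{equation*}
and Assumption~\ref{as:lipschitz_smoothness} bounds the last gradient difference by $L^2\sqfnorm{W^{t+1}-W^t}$. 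The standard \algname{EF21} choice is $s = \frac{1}{\sqrt{1-\beta}} - 1$, so that $(1-\beta)(1+s) = \sqrt{1-\beta}$ and $(1-\beta)(1+s^{-1}) = (1-\beta)\cdot\frac{1}{1-\sqrt{1-\beta}}$ (using $1+s^{-1} = 1 + \frac{\sqrt{1-\beta}}{1-\sqrt{1-\beta}} = \frac{1}{1-\sqrt{1-\beta}}$). Substituting these constants gives exactly the claimed bound conditionally on $W^{t+1}$, and taking the full expectation via the tower property \eqref{eq:tower_property} removes the conditioning.

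The only mild subtlety — and the main obstacle, such as it is — is justifying the coefficient bookkeeping: one must verify that with $s = (1-\beta)^{-1/2}-1 > 0$ (valid for $0<\beta\le 1$, with the edge case $\beta=1$ handled trivially since then $\cC$ is exact and both error terms vanish) the identity $1+s^{-1} = \frac{1}{1-\sqrt{1-\beta}}$ holds, which is a one-line algebraic check. Everything else is a direct application of the contractive compressor definition, Young's inequality, $L$-smoothness, and the tower property, all of which are available in the excerpt. I would present the argument as: (i) the conditional contraction from \eqref{eq:contractive_compression}; (ii) the split-and-Young step; (iii) the smoothness bound; (iv) the explicit choice of $s$ and the resulting constants; (v) taking total expectation. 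No statement about the Lyapunov function or averaging over clients is needed at this stage — that will come in the convergence theorem that uses this lemma.
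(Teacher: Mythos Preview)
Your proposal is correct and follows essentially the same route as the paper: apply the contractive compressor bound \eqref{eq:contractive_compression} to the EF21 update to get $(1-\beta)\sqfnorm{G_l^t - \nabla f_l(W^{t+1})}$, split via Young's inequality with parameter $\theta = (1-\beta)^{-1/2}-1$, and finish with $L$-smoothness and the tower property. The paper's proof is identical step-for-step (it even uses the same parameter choice), so there is nothing to add.
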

\begin{proof}
    For each $l \in [M]$  we have 
    \begin{eqnarray*}
        \Exp{\sqfnorm{G^{t+1}_l- \nabla f_l(W^{t+1})}} &\overset{\eqref{eq:est_ef21_1},\eqref{eq:est_ef21_2}}{=}& \Exp{\Exp{\sqfnorm{\cC_l^t\left(\nabla f_l(W^{t+1}) - G^t_l\right) - \left(\nabla f_l(W^{t+1}) - G^t_l\right)}|G^{t+1}_l, W^{t+1}}}\\
        &\overset{\eqref{eq:contractive_compression}}{\leq}& \left(1-\beta\right) \Exp{\sqfnorm{G^{t}_l - \nabla f_l(W^{t+1})}}\\
        &\leq&  \left(1-\beta\right)\left(1 +\theta\right) \Exp{\sqfnorm{G^{t}_l - \nabla f_l(W^{t})}} \\
        &&+ \left(1-\beta\right)\left(1 +\frac{1}{\theta}\right) \Exp{\sqfnorm{\nabla f_l(W^{t+1}) - \nabla f_l(W^t)}},
    \end{eqnarray*}
    where in the last inequality we used $\sqfnorm{U+V} \leq \left(1+\theta\right)\sqfnorm{U}+\left(1+\frac{1}{\theta}\right)\sqfnorm{V}$ for any constant $\theta >0$, and matrices $U,V \in \R^{m\times n}$. Taking $\theta = \frac{1}{\sqrt{1-\beta}} - 1$, we acquire 
    \begin{eqnarray*}
        \Exp{\sqfnorm{G^{t+1}_l- \nabla f_l(W^{t+1})}} &\leq& \sqrt{1-\beta}\Exp{\sqfnorm{G^{t}_l - \nabla f_l(W^{t})}} + \frac{1-\beta}{1-\sqrt{1-\beta}} \Exp{\sqfnorm{\nabla f_l(W^{t+1}) - \nabla f_l(W^t)}}\\
        &\leq&  \sqrt{1-\beta} \Exp{\sqfnorm{G^{t}_l - \nabla f_l(W^{t})}} + \frac{(1-\beta) L^2}{1-\sqrt{1-\beta}}  \Exp{\sqfnorm{W^{t+1} - W^t}},
    \end{eqnarray*}
    where in the last inequality we used that the gradient of each $f_l$ is Lipschitz continuous. Summing over  $l$ from $1$ to $M$, we finish the proof. 
\end{proof}

\subsubsection{Convergence for Smooth Non-Convex Functions}
\begin{theorem}
    Let Assumptions~\ref{as:projection_matrix},~\ref{as:bounded_below}, and~\ref{as:lipschitz_smoothness} hold, and let the  stepsize satisfy $$0<\gamma \leq \frac{1}{L\left(1+\frac{\sqrt{\lambda^p_{\max} (1-\beta)}}{1-\sqrt{1-\beta}} \right)}.$$  Then the iterates of \algname{Fed-Bernoulli-LoRA-EF21} (Algorithm~\ref{alg:Fed-Bernoulli-LoRA-EF21}) satisfy
        \begin{equation}
            \Exp{\sqfnorm{\nabla f(\widetilde{W}^T)}} \leq \frac{2(f(W^{0}) - f^*)}{\gamma \lambda^p_{\min} T} +  \frac{\cG^0}{ (1-\sqrt{1-\beta} )T}\cdot\frac{\lambda^p_{\max}}{\lambda^p_{\min}},
        \end{equation}
        where $\lambda_{\min}^{p} := p\lambda_{\min}^{H_B} + (1-p)\lambda_{\min}^{H_A}$, and $\lambda_{\max}^{p} := p\lambda_{\max}^{H_B} + (1-p)\lambda_{\max}^{H_A}$,~ $\widetilde{W}^T$ is drawn uniformly at random from the iterate sequence $\{W^0, W^1, \ldots, W^{T-1}\}$, and $\cG^0 \eqdef  \frac{1}{M}\sum^M_{l=1}\sqfnorm{G^{0}_l- \nabla f_l(W^{0})}$.
\end{theorem}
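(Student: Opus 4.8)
The plan is to follow the same Lyapunov argument used for \algname{Bernoulli-LoRA-PAGE} and \algname{Fed-Bernoulli-LoRA-MARINA}, with the \algname{EF21}-specific estimator bound (Lemma~\ref{lem:aux_lemma_ef21}) in place of their counterparts. First, note that the update of \algname{Fed-Bernoulli-LoRA-EF21} has the generic projected-gradient form $W^{t+1} = W^t - \gamma \hat{G}^t$ with base estimator $G^t = \frac{1}{M}\sum_{l=1}^M G_l^t$, so Lemma~\ref{lem:descent_lemma_page} applies directly (Assumptions~\ref{as:projection_matrix} and \ref{as:lipschitz_smoothness}, which also guarantee $\lambda^p_{\min}>0$). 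Taking expectations, it yields
\begin{align*}
\Exp{f(W^{t+1}) - f^*} &\leq \Exp{f(W^t) - f^*} - \frac{\gamma\lambda^p_{\min}}{2}\Exp{\sqfnorm{\nabla f(W^t)}} + \frac{\gamma\lambda^p_{\max}}{2}\Exp{\sqfnorm{G^t - \nabla f(W^t)}} \\
&\quad - \left(\frac{1}{2\gamma} - \frac{L}{2}\right)\Exp{\sqfnorm{W^{t+1} - W^t}}.
\end{align*}
Since $G^t - \nabla f(W^t) = \frac{1}{M}\sum_{l=1}^M (G_l^t - \nabla f_l(W^t))$, convexity of $\sqfnorm{\cdot}$ gives $\sqfnorm{G^t - \nabla f(W^t)} \leq \cG^t$ where $\cG^t \eqdef \frac{1}{M}\sum_{l=1}^M \sqfnorm{G_l^t - \nabla f_l(W^t)}$; I substitute this in so the error term is controlled by the client-averaged quantity. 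Averaging the per-client bound of Lemma~\ref{lem:aux_lemma_ef21} over $l \in [M]$ (the $\sqfnorm{W^{t+1}-W^t}$ term does not depend on $l$) gives the contraction $\Exp{\cG^{t+1}} \leq \sqrt{1-\beta}\,\Exp{\cG^t} + \frac{(1-\beta)L^2}{1-\sqrt{1-\beta}}\Exp{\sqfnorm{W^{t+1}-W^t}}$.

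The next step is to introduce the Lyapunov function $\Phi_t \eqdef f(W^t) - f^* + \frac{\gamma\lambda^p_{\max}}{2(1-\sqrt{1-\beta})}\cG^t$, where the coefficient of $\cG^t$ is chosen precisely so that the error terms telescope: writing $s = \frac{\gamma\lambda^p_{\max}}{2(1-\sqrt{1-\beta})}$, one has $\frac{\gamma\lambda^p_{\max}}{2} + s\sqrt{1-\beta} = s$. Adding $s$ times the contraction inequality to the descent inequality produces
\begin{align*}
\Exp{\Phi_{t+1}} \leq \Exp{\Phi_t} - \frac{\gamma\lambda^p_{\min}}{2}\Exp{\sqfnorm{\nabla f(W^t)}} - \left(\frac{1}{2\gamma} - \frac{L}{2} - \frac{\gamma\lambda^p_{\max}(1-\beta)L^2}{2(1-\sqrt{1-\beta})^2}\right)\Exp{\sqfnorm{W^{t+1}-W^t}}.
\end{align*}
It remains to verify that the parenthesized coefficient is nonnegative under the assumed stepsize bound so the last term may be dropped. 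Setting $a = \gamma L$ and $c = \frac{\sqrt{\lambda^p_{\max}(1-\beta)}}{1-\sqrt{1-\beta}}$, this amounts to $a + a^2 c^2 \leq 1$; the hypothesis $\gamma \leq \frac{1}{L(1+c)}$ gives $a(1+c) \leq 1$, hence $ac \leq 1$, so $a + a^2 c^2 = a + (ac)^2 \leq a + ac = a(1+c) \leq 1$.

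With the negative term discarded, I telescope $\Exp{\Phi_{t+1}} \leq \Exp{\Phi_t} - \frac{\gamma\lambda^p_{\min}}{2}\Exp{\sqfnorm{\nabla f(W^t)}}$ over $t = 0, \ldots, T-1$, use $\Phi_T \geq 0$ (Assumption~\ref{as:bounded_below}), divide by $\frac{\gamma\lambda^p_{\min}T}{2}$, and identify $\frac{1}{T}\sum_{t=0}^{T-1}\Exp{\sqfnorm{\nabla f(W^t)}} = \Exp{\sqfnorm{\nabla f(\widetilde{W}^T)}}$ since $\widetilde{W}^T$ is uniform on $\{W^0,\ldots,W^{T-1}\}$. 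Expanding $\Phi_0 = f(W^0) - f^* + \frac{\gamma\lambda^p_{\max}}{2(1-\sqrt{1-\beta})}\cG^0$ in the resulting bound $\Exp{\sqfnorm{\nabla f(\widetilde{W}^T)}} \leq \frac{2\Phi_0}{\gamma\lambda^p_{\min}T}$ reproduces the claimed inequality exactly. The only genuine obstacle is bookkeeping: passing from the descent lemma's server-side error $\sqfnorm{G^t - \nabla f(W^t)}$ to the client-averaged $\cG^t$ via Jensen, and pinning down $s$ so that $s(1-\sqrt{1-\beta}) = \frac{\gamma\lambda^p_{\max}}{2}$; once these are fixed, the stepsize verification (the $a + (ac)^2 \leq 1$ step) and the telescoping are routine and structurally identical to the \algname{PAGE}/\algname{MARINA} arguments already in the appendix.
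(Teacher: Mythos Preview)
Your proposal is correct and follows essentially the same Lyapunov argument as the paper's proof: the same Lyapunov function $\Phi_t = f(W^t) - f^* + \frac{\gamma\lambda^p_{\max}}{2(1-\sqrt{1-\beta})}\cG^t$, the same combination of Lemma~\ref{lem:descent_lemma_page} with the averaged Lemma~\ref{lem:aux_lemma_ef21}, and the same telescoping. You are in fact more explicit than the paper on two points it leaves implicit---the Jensen step passing from $\sqfnorm{G^t-\nabla f(W^t)}$ to $\cG^t$, and the algebraic verification that the stepsize bound forces the $\sqfnorm{W^{t+1}-W^t}$ coefficient to be nonnegative.
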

\begin{proof}
    Denote Lyapunov function $\Phi_t$ as follows 
    \begin{equation}
        \Phi_t = f(W^{t}) - f^* + \frac{\gamma\lambda^p_{\max}}{2(1-\sqrt{1-\beta})}\cdot\frac{1}{M}\sum^M_{l=1}\sqfnorm{G^{t}_l - \nabla f_l(W^{t}) }.
    \end{equation}
    By Lemma~\ref{lem:descent_lemma_page} and Lemma~\ref{lem:aux_lemma_ef21}, we have 
    \begin{eqnarray*}
        \Exp{\Phi_{t+1}} &\leq& \Exp{f(W^t)} - f^* -\frac{\gamma\lambda^p_{\min}}{2}\Exp{\sqfnorm{\nabla f(W^t)}}  - \left(\frac{1}{2\gamma} -\frac{L}{2}\right)\Exp{\sqfnorm{W^{t+1}-W^t}}\\
        &&  +\frac{\gamma \lambda^p_{\max}}{2}\Exp{\sqfnorm{G^t - \nabla f(W^t)}}+ \frac{\gamma\lambda^p_{\max}\sqrt{1-\beta}}{2(1-\sqrt{1-\beta})}\cdot\frac{1}{M}\sum^M_{l=1}\Exp{\sqfnorm{G^{t}_l - \nabla f_l(W^{t}) }} \\
        && +  \frac{\gamma\lambda^p_{\max} L^2(1-\beta)}{2(1-\sqrt{1-\beta})^2}\Exp{\sqfnorm{W^{t+1} - W^t}}\\
        &\leq& \Exp{\Phi_t} - \frac{\gamma\lambda^p_{\min}}{2}\Exp{\sqfnorm{\nabla f(W^t)}}  - \left(\frac{1}{2\gamma} -\frac{L}{2} - \frac{\gamma\lambda^p_{\max} L^2(1-\beta)}{2(1-\sqrt{1-\beta})^2}\right)\Exp{\sqfnorm{W^{t+1}-W^t}}.
    \end{eqnarray*}
    Selecting $0<\gamma \leq \frac{1}{L\left(1+\frac{\sqrt{\lambda^p_{\max} (1-\beta)}}{1-\sqrt{1-\beta}} \right)}$, we obtain
    \begin{eqnarray*}
        \Exp{\Phi_{t+1}} &\leq& \Exp{\Phi_{t}} - \frac{\gamma\lambda^p_{\min}}{2}\Exp{\sqfnorm{\nabla f(W^t)}}.
    \end{eqnarray*}
    Summing over $t$ from $0$ to $T-1$, we get 
    \begin{eqnarray*}
        \frac{\gamma\lambda^p_{\min}}{2}\sum^{T-1}_{t = 0}\Exp{\sqfnorm{\nabla f(W^t)}} &\leq& \Exp{\Phi_{0}} - \Exp{\Phi_{T}}.
    \end{eqnarray*}
    Finally, dividing both sides by $\frac{\gamma\lambda^p_{\min}}{2}$ yields
    \begin{eqnarray*}
        \Exp{\sqfnorm{\nabla f(\widetilde{W}^T)}} &\leq& \frac{2\Phi_{0}}{\gamma\lambda^p_{\min}  T} .
    \end{eqnarray*}
    where $\widetilde{W}^T$ is drawn uniformly at random from the iterate sequence $\{W^0, W^1, \ldots, W^{T-1}\}$.
\end{proof}

\subsubsection{Convergence under Polyak-{\L}ojasiewicz Condition}
\begin{theorem}\label{th:B-LORA-EF21-PL}
    Let Assumptions \ref{as:projection_matrix}, ~\ref{as:bounded_below},~\ref{as:lipschitz_smoothness}, and ~\ref{as:pl_condition} hold, and let the  stepsize satisfy $$0<\gamma \leq \min\left\{\frac{1}{L\left(1+\frac{\sqrt{2\lambda^p_{\max}(1-\beta)}}{1-\sqrt{1-\beta}}\right)}, \frac{1+\sqrt{1-\beta}}{2\mu\lambda^p_{\min}} \right\}$$. Then the iterates of \algname{Fed-Bernoulli-LoRA-EF21} (Algorithm~\ref{alg:Fed-Bernoulli-LoRA-EF21}) satisfy
        \begin{equation}
            \Exp{f(W^T) - f^*} \leq (1-\gamma\mu \lambda^p_{\min})^T \Phi_{0},
        \end{equation}
        where $\lambda_{\min}^{p} := p\lambda_{\min}^{H_B} + (1-p)\lambda_{\min}^{H_A}$, $\lambda_{\max}^{p} := p\lambda_{\max}^{H_B} + (1-p)\lambda_{\max}^{H_A}$, and $\Phi_0 = f(W^{0}) - f^* + \frac{\gamma\lambda^p_{\max}}{1-\sqrt{1-\beta}}\frac{1}{M}\sum^M_{l=1}\sqfnorm{G^{0}_l - \nabla f_l(W^{0})}.$
\end{theorem}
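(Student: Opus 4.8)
The plan is to follow the template already used for the P\L{} analyses of \algname{Bernoulli-LoRA-PAGE} (Theorem~\ref{th:B-LORA-PAGE-PL}) and \algname{Bernoulli-LoRA-MVR} (Theorem~\ref{th:B-LORA-MVR-PL}), the only genuinely new ingredient being the \algname{EF21} error recursion of Lemma~\ref{lem:aux_lemma_ef21}. First I would note that the $W$-update of \algname{Fed-Bernoulli-LoRA-EF21} is exactly the generic projected-gradient step $W^{t+1}=W^t-\gamma\hat G^t$ with base estimator $G^t=\frac{1}{M}\sum_{l=1}^M G_l^t$, so Lemma~\ref{lem:descent_lemma_page} applies verbatim. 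Combined with the elementary bound $\sqfnorm{G^t-\nabla f(W^t)}=\sqfnorm{\frac{1}{M}\sum_l(G_l^t-\nabla f_l(W^t))}\le\frac{1}{M}\sum_l\sqfnorm{G_l^t-\nabla f_l(W^t)}=:\cE^t$ (convexity of $\sqfnorm{\cdot}$), this yields a one-step descent estimate $\Exp{f(W^{t+1})-f^*}\le\Exp{f(W^t)-f^*}-\frac{\gamma\lambda^p_{\min}}{2}\Exp{\sqfnorm{\nabla f(W^t)}}+\frac{\gamma\lambda^p_{\max}}{2}\Exp{\cE^t}-\bigl(\frac{1}{2\gamma}-\frac{L}{2}\bigr)\Exp{\sqfnorm{W^{t+1}-W^t}}$.

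Next I would introduce the Lyapunov function
\[
\Phi_t \eqdef f(W^t) - f^* + \frac{\gamma\lambda^p_{\max}}{1-\sqrt{1-\beta}}\,\cE^t, \qquad \cE^t \eqdef \frac{1}{M}\sum_{l=1}^M \sqfnorm{G_l^t - \nabla f_l(W^t)},
\]
which is the one appearing in the theorem statement. Averaging Lemma~\ref{lem:aux_lemma_ef21} over $l\in[M]$ gives $\Exp{\cE^{t+1}}\le\sqrt{1-\beta}\,\Exp{\cE^t}+\frac{(1-\beta)L^2}{1-\sqrt{1-\beta}}\Exp{\sqfnorm{W^{t+1}-W^t}}$; scaling this by the Lyapunov weight and adding it to the descent estimate, the two $\cE^t$-contributions merge into $\bigl(\frac{\gamma\lambda^p_{\max}}{2}+\frac{\gamma\lambda^p_{\max}\sqrt{1-\beta}}{1-\sqrt{1-\beta}}\bigr)\Exp{\cE^t}=\frac{1+\sqrt{1-\beta}}{2}\cdot\frac{\gamma\lambda^p_{\max}}{1-\sqrt{1-\beta}}\Exp{\cE^t}$, while the coefficient of $\Exp{\sqfnorm{W^{t+1}-W^t}}$ becomes $\frac{1}{2\gamma}-\frac{L}{2}-\frac{\gamma\lambda^p_{\max}(1-\beta)L^2}{(1-\sqrt{1-\beta})^2}$. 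A short quadratic-in-$\gamma$ computation, identical to the one used in the smooth non-convex proof of \algname{Fed-Bernoulli-LoRA-EF21}, shows that the first stepsize bound $\gamma\le\frac{1}{L\left(1+\frac{\sqrt{2\lambda^p_{\max}(1-\beta)}}{1-\sqrt{1-\beta}}\right)}$ makes that coefficient nonnegative, so the movement term is discarded. Applying the P\L{} inequality (Assumption~\ref{as:pl_condition}) to replace $-\frac{\gamma\lambda^p_{\min}}{2}\sqfnorm{\nabla f(W^t)}$ by $-\gamma\mu\lambda^p_{\min}(f(W^t)-f^*)$ leaves $\Exp{\Phi_{t+1}}\le(1-\gamma\mu\lambda^p_{\min})\Exp{f(W^t)-f^*}+\frac{1+\sqrt{1-\beta}}{2}\cdot\frac{\gamma\lambda^p_{\max}}{1-\sqrt{1-\beta}}\Exp{\cE^t}$; the second stepsize bound forces the residual coefficient $\frac{1+\sqrt{1-\beta}}{2}$ to be dominated by $1-\gamma\mu\lambda^p_{\min}$, so $\Exp{\Phi_{t+1}}\le(1-\gamma\mu\lambda^p_{\min})\Exp{\Phi_t}$, and unrolling from $t=T-1$ down to $0$ together with $f(W^T)-f^*\le\Phi_T$ gives the stated rate.

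I expect the main obstacle to be the joint calibration of the Lyapunov weight $\frac{\gamma\lambda^p_{\max}}{1-\sqrt{1-\beta}}$ and the two stepsize bounds so that all three error sources cancel simultaneously: the weight must be precisely what makes the contracted $\cE^t$-recursion telescope into the descent estimate, the first bound must absorb the extra $\sqfnorm{W^{t+1}-W^t}$ produced by the $\frac{(1-\beta)L^2}{1-\sqrt{1-\beta}}$ smoothness factor in Lemma~\ref{lem:aux_lemma_ef21}, and the second bound must dominate the residual coefficient $\frac{1+\sqrt{1-\beta}}{2}$ by the P\L{} contraction $1-\gamma\mu\lambda^p_{\min}$ — with the contractive compression parameter $\beta$ entering everything through $1-\sqrt{1-\beta}$. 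Everything else is the routine bookkeeping already exercised in Theorems~\ref{th:B-LORA-PAGE-PL} and~\ref{th:B-LORA-MVR-PL}.
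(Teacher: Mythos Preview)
Your proposal is correct and takes essentially the same approach as the paper: the same Lyapunov function $\Phi_t$, the same combination of Lemma~\ref{lem:descent_lemma_page} with the client-averaged \algname{EF21} recursion of Lemma~\ref{lem:aux_lemma_ef21} (using the Jensen bound $\sqfnorm{G^t-\nabla f(W^t)}\le\cE^t$), the same application of the P\L{} inequality, and the same roles for the two stepsize bounds. The paper's proof is precisely the bookkeeping you sketch.
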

\begin{proof}
    Denote Lyapunov function $\Phi_t$ as follows 
    \begin{equation}
        \Phi_t = f(W^{t}) - f^* + \frac{\gamma\lambda^p_{\max}}{1-\sqrt{1-\beta}}\cdot \frac{1}{M}\sum^M_{l=1}\sqfnorm{G^{t}_l - \nabla f_l(W^{t})}.
    \end{equation}
    By Lemma~\ref{lem:descent_lemma_page} and Lemma~\ref{lem:aux_lemma_ef21}, we have 
    \begin{eqnarray*}
        \Exp{\Phi_{t+1}} &\leq& \Exp{f(W^t)} - f^* -\frac{\gamma\lambda^p_{\min}}{2}\Exp{\sqfnorm{\nabla f(W^t)}}  - \left(\frac{1}{2\gamma} -\frac{L}{2}\right)\Exp{\sqfnorm{W^{t+1}-W^t}}\\
        &&  +\frac{\gamma\lambda^p_{\max}}{2}\cdot \Exp{\sqfnorm{G^t - \nabla f(W^t)}}+ \frac{\gamma\lambda^p_{\max}\sqrt{1-\beta}}{1-\sqrt{1-\beta}}\cdot\frac{1}{M}\sum^{M}_{l=1}\Exp{\sqfnorm{G^{t}_l - \nabla f_l(W^{t}) }} \\
        &&+  \frac{\gamma \lambda^p_{\max} (1-\beta)L^2}{(1-\sqrt{1-\beta})^2}\Exp{\sqfnorm{W^{t+1} - W^t}}\\
        &\leq& (1-\gamma\mu\lambda^p_{\min})\Exp{f(W^t) - f^*} + \frac{\gamma \lambda^p_{\max} \left(1+\sqrt{1-\beta}\right)}{2(1-\sqrt{1-\beta})}\cdot\frac{1}{M}\sum^M_{l=1}\Exp{\sqfnorm{G^t_l - \nabla f_l(W^t)}} \\
        && - \left(\frac{1}{2\gamma} -\frac{L}{2} - \frac{\gamma \lambda^p_{\max} (1-\beta)L^2}{(1-\sqrt{1-\beta})^2}\right)\Exp{\sqfnorm{W^{t+1}-W^t}},
    \end{eqnarray*}
    where in the last inequality we used Assumption~\ref{as:pl_condition}.
    Selecting $0<\gamma \leq \min\left\{\frac{1}{L\left(1+\frac{\sqrt{2\lambda^p_{\max}(1-\beta)}}{1-\sqrt{1-\beta}}\right)}, \frac{1+\sqrt{1-\beta}}{2\mu\lambda^p_{\min}} \right\}$, we obtain
    \begin{eqnarray*}
        \Exp{\Phi_{t+1}} &\leq& (1-\gamma\mu\lambda^p_{\min})\Exp{\Phi_{t}}.
    \end{eqnarray*}
    Taking the recursion, we have 
    \begin{eqnarray*}
        \Exp{\Phi_{T}} &\leq& (1-\gamma\mu\lambda^p_{\min})^T\Phi_{0}.
    \end{eqnarray*}
\end{proof}

\newpage

complete it was that from new reps

\section{Experiments: Missing Details}\label{sec:experiments_extra}

In this section, we provide additional details regarding the experimental setting from Section~\ref{sec:exps}. 
\subsection{Linear Regression with Non-convex Regularization}

\textbf{Full gradient setting.}
We begin by evaluating these methods in a standard optimization setting where full gradients are computed at each iteration. In this regime, we compare \algname{Bernoulli-LoRA-GD} and \algname{RAC-LoRA-GD}. 
\begin{figure}[h]
    \centering
    \begin{subfigure}[b]{0.49\textwidth}
        \centering
        \includegraphics[width=\textwidth]{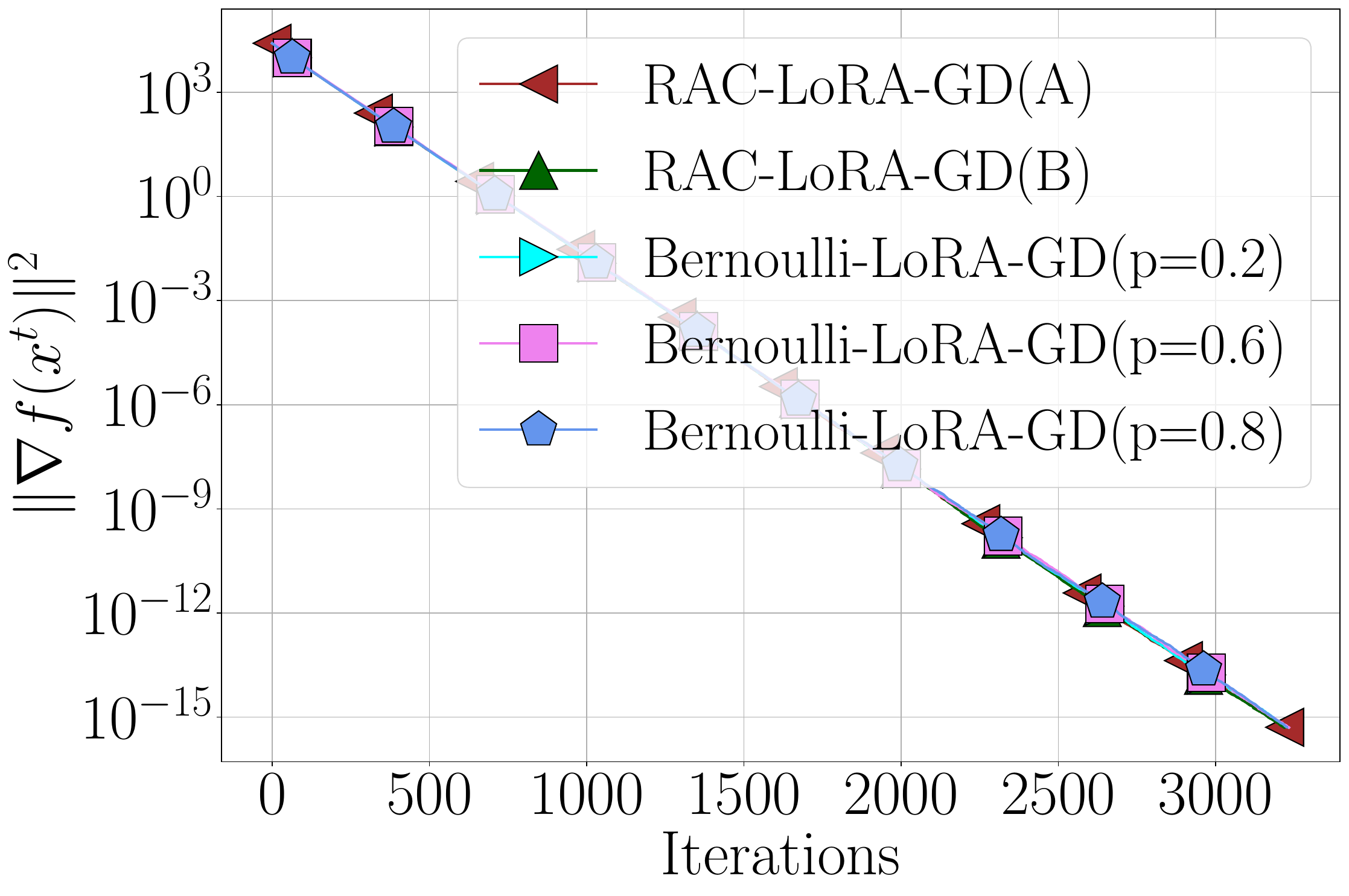}
        \caption{Rank $r=1$.}
    \end{subfigure}
    \begin{subfigure}[b]{0.49\textwidth}
        \centering
        \includegraphics[width=\textwidth]{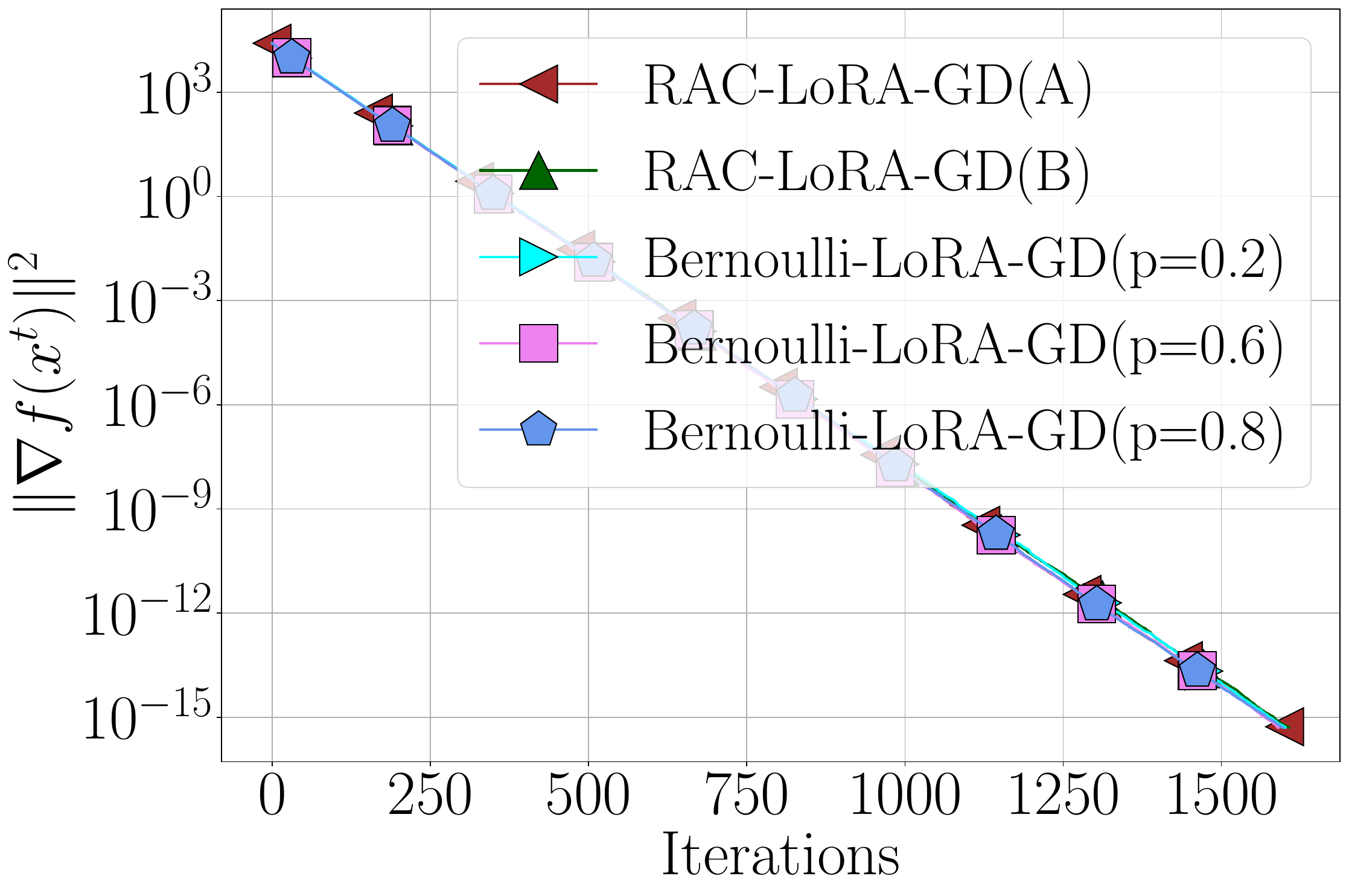}
        \caption{Rank $r=2$.}
    \end{subfigure}
    \caption{Comparison of \algname{RAC-LoRA-GD} and \algname{Bernoulli-LoRA-GD} on linear regression fine-tuning. Curves with $p=0.01,0.2,\dots$ indicate \algname{Bernoulli-LoRA-GD} sampling parameters. \algname{RAC-LoRA-GD(A)} trains $B$ after resampling $A$, while \algname{RAC-LoRA-GD(B)} does the reverse. All methods use $\gamma = \nfr{c}{\hat{L}}$ with $c\in\{1,2\}$ tuned individually.}
    \label{fig:bernoulli_lora_gd}
\end{figure}

Figure~\ref{fig:bernoulli_lora_gd} shows that, across all tested probabilities, \algname{Bernoulli-LoRA-GD} and both variants of \algname{RAC-LoRA-GD} exhibit similar convergence on the linear regression task. This numerical stability suggests that the ratio of updates between $A$ and $B$ has little effect on the performance for this problem. We also observe that higher ranks $r$ produce faster convergence, which aligns with the theoretical $\nfr{r}{n}$ factor in our analysis.

\paragraph{Hardware and Software.}
All algorithms were implemented in Python~3.10 and executed on three different CPU cluster node types: 
\begin{enumerate}
    \item AMD EPYC 7702 64-Core,
    \item Intel(R) Xeon(R) Gold 6148 CPU @ 2.40GHz,
    \item Intel(R) Xeon(R) Gold 6248 CPU @ 2.50GHz.
\end{enumerate}

\paragraph{Implementation Details.}
For each method, we set the stepsize to $\gamma = c / \hat{L}$, where $c$ is a constant multiplier tuned individually for every algorithm. Convergence was monitored by computing the squared norm of the full gradient at each iteration. The algorithms terminated when either a maximum iteration limit was reached or the criterion $\sqnorm{\nabla f(x^t)} \leq 5 \times 10^{-16}$ was satisfied. To ensure reliability, each method was run $20$ times using different random seeds, and all figures show the median performance over these trials.

\paragraph{Datasets.}
The synthetic pre-training dataset $(\widetilde{D}, \widetilde{b})$ was generated using 
\begin{center}
    \texttt{sklearn.datasets.make\_regression}
\end{center}

with moderate noise and a controlled rank structure:
\begin{lstlisting}
wt_D, wt_b = make_regression(n_samples=90000, n_features=4096,
                            n_informative=4096, noise=20.0, 
                            bias=0.0, tail_strength=0.8,
                            effective_rank=64, random_state=42)
\end{lstlisting}
followed by standard scaling. The fine-tuning dataset $(\hat{D}, \hat{b})$ was produced similarly:
\begin{lstlisting}
h_D, h_b = make_regression(n_samples=10000, n_features=4096,
                          n_informative=4096//2, noise=50.0, 
                          bias=10.0, tail_strength=0.9,
                          effective_rank=32, random_state=84)
\end{lstlisting}
and subsequently adjusted with a biased scaling (mean~$1$, standard deviation~$2$).


\end{document}